\documentclass{article} %
\usepackage{times}
\usepackage{eso-pic} %
\setlength{\paperheight}{11in}
\setlength{\paperwidth}{8.5in}

\oddsidemargin .5in    %
\evensidemargin .5in
\marginparwidth 0.07 true in
\topmargin -0.625in
\addtolength{\headsep}{0.25in}
\textheight 9.0 true in       %
\textwidth 5.5 true in        %
\widowpenalty=10000
\clubpenalty=10000
\leftmargin3pc
\leftmargini\leftmargin \leftmarginii 2em
\leftmarginiii 1.5em \leftmarginiv 1.0em \leftmarginv .5em

\newif\ificlrfinal
\iclrfinalfalse
\def\iclrfinalcopy{\iclrfinaltrue}

\flushbottom \sloppy

\usepackage{fancyhdr}
\fancyhf{}

\makeatletter
\def\maketitle{\par
\begingroup
   \def\thefootnote{\fnsymbol{footnote}}
   \def\@makefnmark{\hbox to 0pt{$^{\@thefnmark}$\hss}} %
   \long\def\@makefntext##1{\parindent 1em\noindent
                            \hbox to1.8em{\hss $\m@th ^{\@thefnmark}$}##1}
   \@maketitle \@thanks
\endgroup
\setcounter{footnote}{0}
\let\maketitle\relax \let\@maketitle\relax
\gdef\@thanks{}\gdef\@author{}\gdef\@title{}\let\thanks\relax}
\makeatother

\makeatletter
\def\@maketitle{\vbox{\hsize\textwidth
{\LARGE\sc \@title\par}
\ificlrfinal
    \lhead{Published as a conference paper at ICLR 2022}
    \def\And{\end{tabular}\hfil\linebreak[0]\hfil
            \begin{tabular}[t]{l}\bf\rule{\z@}{24pt}\ignorespaces}%
  \def\AND{\end{tabular}\hfil\linebreak[4]\hfil
            \begin{tabular}[t]{l}\bf\rule{\z@}{24pt}\ignorespaces}%
    \begin{tabular}[t]{l}\bf\rule{\z@}{24pt}\@author\end{tabular}%
\else
       \lhead{Under review as a conference paper at ICLR 2022}
   \def\And{\end{tabular}\hfil\linebreak[0]\hfil
            \begin{tabular}[t]{l}\bf\rule{\z@}{24pt}\ignorespaces}%
  \def\AND{\end{tabular}\hfil\linebreak[4]\hfil
            \begin{tabular}[t]{l}\bf\rule{\z@}{24pt}\ignorespaces}%
    \begin{tabular}[t]{l}\bf\rule{\z@}{24pt}Anonymous authors\\Paper under double-blind review\end{tabular}%
\fi
\vskip 0.3in minus 0.1in}}
\makeatother

\renewenvironment{abstract}{\vskip.075in\centerline{\large\sc
Abstract}\vspace{0.5ex}\begin{quote}}{\par\end{quote}\vskip 1ex}

\makeatletter
\def\section{\@startsection {section}{1}{\z@}{-2.0ex plus
    -0.5ex minus -.2ex}{1.5ex plus 0.3ex
minus0.2ex}{\large\sc\raggedright}}

\def\subsection{\@startsection{subsection}{2}{\z@}{-1.8ex plus
-0.5ex minus -.2ex}{0.8ex plus .2ex}{\normalsize\sc\raggedright}}
\def\subsubsection{\@startsection{subsubsection}{3}{\z@}{-1.5ex
plus      -0.5ex minus -.2ex}{0.5ex plus
.2ex}{\normalsize\sc\raggedright}}
\def\paragraph{\@startsection{paragraph}{4}{\z@}{1.5ex plus
0.5ex minus .2ex}{-1em}{\normalsize\bf}}
\def\subparagraph{\@startsection{subparagraph}{5}{\z@}{1.5ex plus
  0.5ex minus .2ex}{-1em}{\normalsize\sc}}

\makeatother

\footnotesep 6.65pt %
\skip\footins 9pt plus 4pt minus 2pt
\def\footnoterule{\kern-3pt \hrule width 12pc \kern 2.6pt }
\parindent 0pt
\topsep 4pt plus 1pt minus 2pt
\partopsep 1pt plus 0.5pt minus 0.5pt
\itemsep 2pt plus 1pt minus 0.5pt
\parsep 2pt plus 1pt minus 0.5pt
\parskip .5pc

\leftmargin3pc
\leftmargini\leftmargin \leftmarginii 2em
\leftmarginiii 1.5em \leftmarginiv 1.0em \leftmarginv .5em

\def\@listi{\leftmargin\leftmargini}
\def\@listii{\leftmargin\leftmarginii
   \labelwidth\leftmarginii\advance\labelwidth-\labelsep
   \topsep 2pt plus 1pt minus 0.5pt
   \parsep 1pt plus 0.5pt minus 0.5pt
   \itemsep \parsep}
\def\@listiii{\leftmargin\leftmarginiii
    \labelwidth\leftmarginiii\advance\labelwidth-\labelsep
    \topsep 1pt plus 0.5pt minus 0.5pt
    \parsep \z@ \partopsep 0.5pt plus 0pt minus 0.5pt
    \itemsep \topsep}
\def\@listiv{\leftmargin\leftmarginiv
     \labelwidth\leftmarginiv\advance\labelwidth-\labelsep}
\def\@listv{\leftmargin\leftmarginv
     \labelwidth\leftmarginv\advance\labelwidth-\labelsep}
\def\@listvi{\leftmargin\leftmarginvi
     \labelwidth\leftmarginvi\advance\labelwidth-\labelsep}

\abovedisplayskip 7pt plus2pt minus5pt%
\belowdisplayskip \abovedisplayskip
\abovedisplayshortskip  0pt plus3pt%
\belowdisplayshortskip  4pt plus3pt minus3pt%

\makeatletter
\def\normalsize{\@setsize\normalsize{11pt}\xpt\@xpt}
\def\small{\@setsize\small{10pt}\ixpt\@ixpt}
\def\footnotesize{\@setsize\footnotesize{10pt}\ixpt\@ixpt}
\def\scriptsize{\@setsize\scriptsize{8pt}\viipt\@viipt}
\def\tiny{\@setsize\tiny{7pt}\vipt\@vipt}
\def\large{\@setsize\large{14pt}\xiipt\@xiipt}
\def\Large{\@setsize\Large{16pt}\xivpt\@xivpt}
\def\LARGE{\@setsize\LARGE{20pt}\xviipt\@xviipt}
\def\huge{\@setsize\huge{23pt}\xxpt\@xxpt}
\def\Huge{\@setsize\Huge{28pt}\xxvpt\@xxvpt}
\makeatother

\usepackage[utf8]{inputenc} %
\usepackage[T1]{fontenc}    %
\usepackage{hyperref}       %
\usepackage{url}            %
\usepackage{booktabs}       %
\usepackage{amsfonts}       %
\usepackage{nicefrac}       %
\usepackage{microtype}      %
\usepackage{xcolor}         %
\usepackage{natbib}
\usepackage{cancel}
\usepackage{graphicx}       
\usepackage{graphics}
\usepackage{subcaption}
\usepackage{float}

\usepackage[acronym,smallcaps,nowarn,section,nogroupskip,nonumberlist]{glossaries}
\usepackage[nameinlink,capitalise]{cleveref}

\usepackage{amsmath,amsfonts,bm}
\usepackage{graphicx}       
\usepackage{graphics}
\usepackage{float}
\usepackage{hyperref}
\usepackage{xcolor}
\usepackage{wrapfig}
\usepackage[export]{adjustbox}
\usepackage{float}
\usepackage{amssymb}
\usepackage{enumitem}
\usepackage{mathtools}
\usepackage{slashbox}
\usepackage{minitoc}

\usepackage{enumitem}
\usepackage[titletoc,title]{appendix}

\usepackage{amsthm}
\usepackage{thmtools}
\usepackage{thm-restate}
\usepackage{booktabs}
\usepackage{nicefrac}       %
\usepackage{microtype}      %
\usepackage{amsfonts}       %
\usepackage{harpoon}
\usepackage{tabularx}
\usepackage{multirow}
\usepackage{minitoc}

\newcommand{\nocontentsline}[3]{}
\newcommand{\tocless}[2]{\bgroup\let\addcontentsline=\nocontentsline#1{#2}\egroup}

\newcommand{\vheader}{\vspace*{-.165cm}}

\newcommand{\ext}{\text{ext}}

\newcommand{\myoverset}[2]{{\substack{#1 \\ #2}}}

\def\eqref#1{equation~\ref{#1}}

\def\1{\bm{1}}

\DeclareMathAlphabet{\mathsfit}{\encodingdefault}{\sfdefault}{m}{sl}
\SetMathAlphabet{\mathsfit}{bold}{\encodingdefault}{\sfdefault}{bx}{n}

\hypersetup{
    colorlinks,
    linkcolor={blue!50!black},
    citecolor={blue!50!black},
    urlcolor={blue!50!black}
}

\newcommand{\myunderbrace}[2]{\underbrace{#1}_{\text{\small $#2$}}}

\newcommand{\argsalt}{\vx, \vz_{\text{ext}}}
\newcommand{\argsaltcond}{\vz_{\text{ext}}|\vx}
\newcommand{\argsaltZ}{\vz_{\text{ext}}}
\newcommand{\args}{\argsalt}
\newcommand{\argscond}{\argsaltcond}
\newcommand{\argsZ}{\argsaltZ}
\newcommand{\giwaeT}{T_{\phi}}
\newcommand{\idx}{{s}}
\newcommand{\qprop}[1]{q_{\textsc{prop}}^{\textsc{#1}}}

\newcommand{\ptgt}[1]{p_{\textsc{tgt}}^{\textsc{#1}}}

 \newcommand{\udist}{\mathcal{U}}

\newcommand{\mycor}[1]{\hyperref[cor:#1]{Cor.~\ref*{cor:#1}}}
\newcommand{\mycora}[2]{\hyperref[cor:#1]{Cor.~\ref*{cor:#1}(#2)}}
\newcommand{\myeq}[1]{\hyperref[eq:#1]{Eq.~(\ref*{eq:#1})}}
\newcommand{\mysecondeq}[1]{\hyperref[eq:#1]{(\ref*{eq:#1})}}
\newcommand{\mysec}[1]{\hyperref[sec:#1]{Sec.~\ref*{sec:#1}}}
\newcommand{\mytable}[1]{\hyperref[table:#1]{Table~\ref*{table:#1}}}
\newcommand{\myfig}[1]{\hyperref[fig:#1]{Fig.~\ref*{fig:#1}}}
\newcommand{\myfiga}[2]{\hyperref[fig:#1]{Fig.~\ref*{fig:#1}#2}}
\newcommand{\myfigab}[3]{\hyperref[fig:#1]{Fig.~\ref*{fig:#1}#2,#3}}
\newcommand{\myfigabc}[4]{\hyperref[fig:#1]{Fig.~\ref*{fig:#1}#2,#3,#4}}
\newcommand{\myfigabcd}[5]{\hyperref[fig:#1]{Fig.~\ref*{fig:#1}#2,#3,#4,#5}}
\newcommand{\myapp}[1]{\hyperref[app:#1]{App.~\ref*{app:#1}}}
\newcommand{\mylemma}[1]{\hyperref[lemma:#1]{Lemma~\ref*{lemma:#1}}}
\newcommand{\myprop}[1]{\hyperref[prop:#1]{Prop.~\ref*{prop:#1}}}
\newcommand{\mypropa}[2]{\hyperref[prop:#1]{Prop.~\ref*{prop:#1}(#2)}}
\newcommand{\tqzxshort}{\tilde{\pi}} %
\newcommand{\tqzxnoparam}{\tqzxshort(\vz|\vx)}
\newcommand{\baseminef}{\qzx}
\newcommand{\tqzx}{\tqzxshort_{\theta,\phi}(\vz|\vx)}
\newcommand{\tqzxnorm}{\pi_{\theta,\phi}(\vz|\vx)}

\newcommand{\normq}{\mathcal{Z}_{\tqzxshort}(\vx)}

\newcommand{\GKL}{D_{\textsc{GKL}}}
\newcommand{\DKL}{D_{\textsc{KL}}}
\DeclareRobustCommand{\parhead}[1]{\textbf{#1}~}

\newcommand{\indexposterior}{\ptgt{giwae}}

\newcommand{\bb}[1]{\mathbf{#1}}
\newcommand{\bbb}{\bb{b}}
\newcommand{\bx}{\bb{x}}

\newcommand{\bz}{\bb{z}}
\newcommand{\bSigma}{\boldsymbol{\Sigma}}
\newcommand{\bmu}{\boldsymbol{\mu}}

\newcommand{\bzero}{\bb{0}}

\newcommand{\bW}{\bb{W}}

\newcommand{\bWT}{\bb{W}^\intercal}

\newcommand{\bI}{\bb{I}}

\newcommand{\Exp}[2]{\mathbb{E}_{#1}\left[#2\right]}

\newcommand{\be}{\begin{eqnarray} \begin{aligned}}
\newcommand{\ee}{\end{aligned} \end{eqnarray} }
\newcommand{\benn}{\begin{eqnarray*} \begin{aligned}}
\newcommand{\eenn}{\end{aligned} \end{eqnarray*} }

\newcommand{\vx}{\mathbf{x}}
\newcommand{\vz}{\mathbf{z}}

\newcommand{\vy}{\mathbf{y}}
\newcommand{\qzx}{q_\theta(\vz|\vx)}

\newcommand{\qzxi}[1]{\prod \limits_{k=2}^K q_{\theta}(\vz^{(k)}|\vx)}
\newcommand{\qzxiall}[1]{\prod \limits_{k=1}^K q_{\theta}(\vz^{(k)}|\vx)}

\newcommand{\pzx}{p(\vz|\vx)}
\newcommand{\pxandz}{p(\vx,\vz)}

\newcommand{\Ixz}{{I}(\vx;\vz)}

\newcommand{\IBAL}{\textsc{ibal}} %
\newcommand{\ibal}{\IBAL}

\newcommand{\px}{p(\vx)}
\newcommand{\KL}[2]{D_{\text{KL}}[{#1}\|{#2}]}
\newcommand{\Txz}{\giwaeT(\vx,\vz)}%
\newcommand{\fpartial}{\frac{\partial}{\partial \theta}}
\newcommand{\fpartialf}[1]{\frac{\partial}{\partial {#1}}}

\newcommand{\logZmine}{\log \mathcal{Z}_{\theta,\phi}(\vx)}
\newcommand{\logZgiwae}{\log {\mathcal{Z}}_{\gls{GIWAE}}(\vx, K) }
\newcommand{\logZgiwaelb}{{\textsc{lb}(\vx;K)}}%
\newcommand{\logZgiwaeub}{{\textsc{ub}(\vx;K) }} %

\newcommand{\Hxz}{H(\vx|\vz)}
\newcommand{\Hx}{H(\vx)}

\newcommand{\h}{t}

\newcommand{\K}{{K}}
\newcommand{\kk}{{s}}

\newcommand{\jj}{{k}}
\newcommand{\p}{{p}} %
\newcommand{\prop}{{\pi_0}} %
\newcommand{\tpi}{{\tilde{\pi}}}
\newcommand{\dist}{{\pi}} %

\newcommand{\minevar}{\pi}
\newcommand{\minetparam}{\phi}

\newcommand{\tfwd}{{\mathcal{T}_t}}
\newcommand{\trev}{{\tilde{\mathcal{T}}_t}}

\newcommand{\tfwdphi}{\mathcal{T}_t}%
\newcommand{\trevphi}{{\tilde{\mathcal{T}}_t}}%
\newcommand{\tqrev}{\tilde{\mathcal{T}}_t^{q}}
\newcommand{\ptgtpost}{p^{\textsc{approx}}_{\textsc{tgt}}} %
\newcommand{\ptgtenergy}{p^{\textsc{ais}, \pi}_{\textsc{tgt}}}
\newcommand{\qpropenergy}{q^{\textsc{ais},\pi}_{\textsc{prop}}}
\newcommand{\ptgtgiwae}{p_{\textsc{tgt}}^{\textsc{giwae}, {\pi}}}
\newcommand{\argsais}{\vz_{0:T}}

\newcommand{\pz}{{p(\vz)}}

\newcommand{\pxgz}{{p(\vx|\vz)}}

\newcommand{\setofz}{{\vz^{(1:K)}}} %

\newcommand{\scriptveryshortarrow}[1][3pt]{{%
    \hbox{\rule[\scriptratio\dimexpr\fontdimen22\textfont2-.2pt\relax]
               {\scriptratio\dimexpr#1\relax}{\scriptratio\dimexpr.4pt\relax}}%
   \mkern-4mu\hbox{\let\f@size\sf@size\usefont{U}{lasy}{m}{n}\symbol{41}}}}

\definecolor{ashgrey}{rgb}{0.7, 0.75, 0.71}

\newcommand{\kl}{\textsc{kl} }
\newacronym{AIS}{ais}{annealed importance sampling}
\newacronym{BA}{ba}{Barber-Agakov}
\newacronym{BQ}{bq}{Bayesian Quadrature}
\newacronym{AUC}{auc}{area under the curve}
\newacronym{BAR}{bar}{Bennett's Acceptance Ratio}
\newacronym{GAN}{gan}{generative adversarial network}
\newacronym{BDMC}{bdmc}{Bidirectional Monte Carlo}
\newacronym{CLUB}{club}{Contrastive Log-Ratio Upper Bound}
\newacronym{JS}{js}{Jensen-Shannon}
\newacronym{CFT}{cft}{Crooks's Fluctuation Theorem}
\newacronym{ELBO}{elbo}{Evidence Lower Bound}
\newacronym{EUBO}{eubo}{Evidence Upper Bound}
\newacronym{HMC}{hmc}{Hamiltonian Monte Carlo}
\newacronym{IB}{ib}{Information Bottleneck}
\newacronym{IBAL}{ibal}{\textit{Implicit Barber-Agakov Lower bound}}
\newacronym{MI}{mi}{Mutual information}
\newacronym{MINE}{mine}{Mutual Information Neural Estimation}
\newacronym{JE}{je}{Jarzynksi equality}
\newacronym{IS}{is}{importance sampling}
\newacronym{IWAE}{iwae}{importance-weighted autoencoder}
\newacronym{GIWAE}{giwae}{\textit{Generalized} \textsc{iwae}}
\newacronym{RAISE}{raise}{Reverse \gls{AIS} Estimator}
\newacronym{MCMC}{mcmc}{Markov Chain Monte Carlo}
\newacronym{RD}{rd}{rate-distortion}
\newacronym{RWS}{rws}{reweighted wake-sleep}
\newacronym{RBM}{rbm}{Restricted Boltzmann Machines}
\newacronym{SGD}{sgd}{stochastic gradient descent}
\newacronym{SNIS}{snis}{self-normalized importance sampling}
\newacronym{TI}{ti}{thermodynamic integration}
\newacronym{TVI}{tvi}{thermodynamic variational inference}
\newacronym{TVO}{tvo}{thermodynamic variational objective}
\newacronym{VAE}{vae}{variational autoencoders}
\newacronym{VAEc}{vae}{Variational Autoencoders}
\newacronym{VI}{vi}{variational inference}
\newacronym{VIMCO}{vimco}{variational inference for Monte Carlo objectives}
\newacronym{WS}{ws}{wake-sleep}

\setlist[itemize]{leftmargin=*}

\makeatletter
\def\thm@space@setup{%
  \thm@preskip=\parskip \thm@postskip=0pt
}
\makeatother

\newtheorem{theorem}{Theorem}[section]

\newtheorem{proposition}[theorem]{Proposition}
\newtheorem{corollary}[theorem]{Corollary}

 \setlength{\tabcolsep}{4pt}

\title{Improving Mutual Information Estimation \\with Annealed and Energy-Based Bounds
}

\author{Rob Brekelmans\thanks{\small Equal Contribution.  
Correspondence to
 \href{mailto:brekelma@usc.edu; huang@cs.toronto.edu; makhzani@cs.toronto.edu}{brekelma@usc.edu; \{huang, makhzani\}@cs.toronto.edu. 
{A shorter version appeared in the International Conference on Learning Representations (ICLR) 2022, available \href{https://openreview.net/forum?id=T0B9AoM_bFg}{here}.}}
} $^{\, , 1}$ \\
\And
 \hspace{1cm}  Sicong Huang$^{*,2,3}$  \\
\And
 \hspace{1cm}  Marzyeh Ghassemi$^{2,4}$  \\
\And
Greg Ver Steeg$^1$ \\
\And
Roger Grosse$^{2,3}$ \\
\And
Alireza Makhzani$^{2,3}$  \\ \And
  $^1$ \normalfont{Information Sciences Institute, University of Southern California} \\
  $^2$ \normalfont{Vector Institute} \quad $^3$ University of Toronto \quad $^4$ \normalfont{MIT EECS / IMES / CSAIL}
}

\glsdisablehyper{}

\iclrfinalcopy %
\begin{document}
\doparttoc %
\faketableofcontents %
\maketitle

\begin{abstract}
Mutual information (\textsc{mi}) is a fundamental quantity in information theory and machine learning. However, direct estimation of \textsc{mi} is intractable, even if the true joint probability density for the variables of interest is known, as it involves estimating a potentially high-dimensional log partition function. In this work, we present a unifying view of existing \textsc{mi} bounds from the perspective of importance sampling, and propose three novel bounds based on this approach. Since accurate estimation of \textsc{mi} without density information requires a sample size exponential in the true \textsc{mi}, we assume either a single marginal or the full joint density information is known.  In settings where the full joint density is available, we propose Multi-Sample Annealed Importance Sampling (\textsc{ais}) bounds on \textsc{mi}, which we demonstrate can tightly estimate large values of \textsc{mi} in our experiments. In settings where only a single marginal distribution is known, we propose \textit{Generalized} \textsc{iwae} (\textsc{giwae}) and \textsc{mine-ais} bounds. Our \textsc{giwae} bound unifies variational and contrastive bounds in a single framework that generalizes \textsc{InfoNCE}, \textsc{iwae}, and Barber-Agakov bounds. Our \textsc{mine-ais} method improves upon existing energy-based methods such as \textsc{mine-dv} and \textsc{mine-f} by directly optimizing a tighter lower bound on \textsc{mi}. \textsc{mine-ais} uses \textsc{mcmc} sampling to estimate gradients for training and Multi-Sample \textsc{ais} for evaluating the bound. Our methods are particularly suitable for evaluating \textsc{mi} in deep generative models, since explicit forms of the marginal or joint densities are often available. We evaluate our bounds on estimating the \textsc{mi} of \textsc{vae}s and \textsc{gan}s trained on the \textsc{mnist} and \textsc{cifar} datasets, and showcase significant gains over existing bounds in these challenging settings with high ground truth \textsc{mi}.

\end{abstract}
\glsresetall

\section{Introduction}\label{sec:intro}
\gls{MI} is 
among the most general measures of dependence between two random variables.  
Among many other applications in machine learning, mutual information has been used for both training \citep{alemi2016deep, alemi2018fixing, chen2016infogan, zhao2018information} %
and evaluating \citep{alemi2018gilbo, huang2020evaluating} generative models.  
 Furthermore, recent success in neural network function approximation has encouraged a wave of variational or contrastive methods for \gls{MI} estimation from samples alone \citep{belghazi2018mutual, oord2018representation,  poole2019variational}. 
{
However, \citet{mcallester2020formal} have shown that for any estimator that uses direct sampling from the product of marginals and does not have access to the analytical form of at least one marginal distribution, exponential sample complexity in the true \gls{MI} is required to obtain a high confidence lower bound.} This is particularly concerning in applications such as representation learning and generative modeling where we expect the \gls{MI} to be large.  In light of these limitations, we primarily consider settings where a single marginal or the full joint distribution are available.  Even in these cases, evaluating \gls{MI} can be challenging due to the need to estimate a potentially high-dimensional log partition function.

In this work, we view \gls{MI} estimation from the perspective of importance sampling, which sheds light on the limitations of existing estimators and motivates our search for improved proposal distributions 
using \gls{MCMC}.
Using a general approach for constructing extended state space bounds on \gls{MI}, we combine insights from the \gls{IWAE} \citep{burda2016importance, sobolev2019hierarchical} and \gls{AIS} \citep{neal2001annealed} to propose \textit{Multi-Sample} \gls{AIS} bounds in \mysec{ais_estimation}.    We empirically show that this approach can 
tightly estimate large values of \gls{MI} when the full joint distribution is known. 
Our importance sampling perspective also suggests improvements upon several existing energy-based lower bounds on \gls{MI}, where our proposed methods only assume access to joint samples for optimization, but require a single marginal distribution for evaluation.
In \mysec{gen_iwae}, we propose \textit{Generalized} \textsc{iwae} (\textsc{giwae}), which generalizes both \gls{IWAE} and \textsc{InfoNCE} \citep{oord2018representation, poole2019variational} and highlights how variational learning can complement multi-sample contrastive estimation to improve \gls{MI} lower bounds.

Finally, in \mysec{mine-ais}, we propose \textit{\textsc{mine}-\gls{AIS}},
which optimizes a tighter lower bound than \gls{MINE} \citep{belghazi2018mutual} 
using a stable energy-based training procedure.
We denote this bound as the \textit{Implicit Barber Agakov Lower bound} (\textsc{ibal}), and demonstrate that it corresponds to the infinite-sample limit of the \textsc{giwae} lower bound.
However, our training scheme involves only a single `negative' contrastive sample obtained using \gls{MCMC}.
\textsc{mine-ais} then uses Multi-Sample \gls{AIS} to evaluate a lower bound on \gls{MI} for a given energy function and known marginal, and shows notable improvement over existing variational bounds in the challenging setting of \gls{MI} estimation for deep generative models.

\begin{figure}[t]
\centering
\vspace*{-1cm}
\includegraphics[scale=.75]{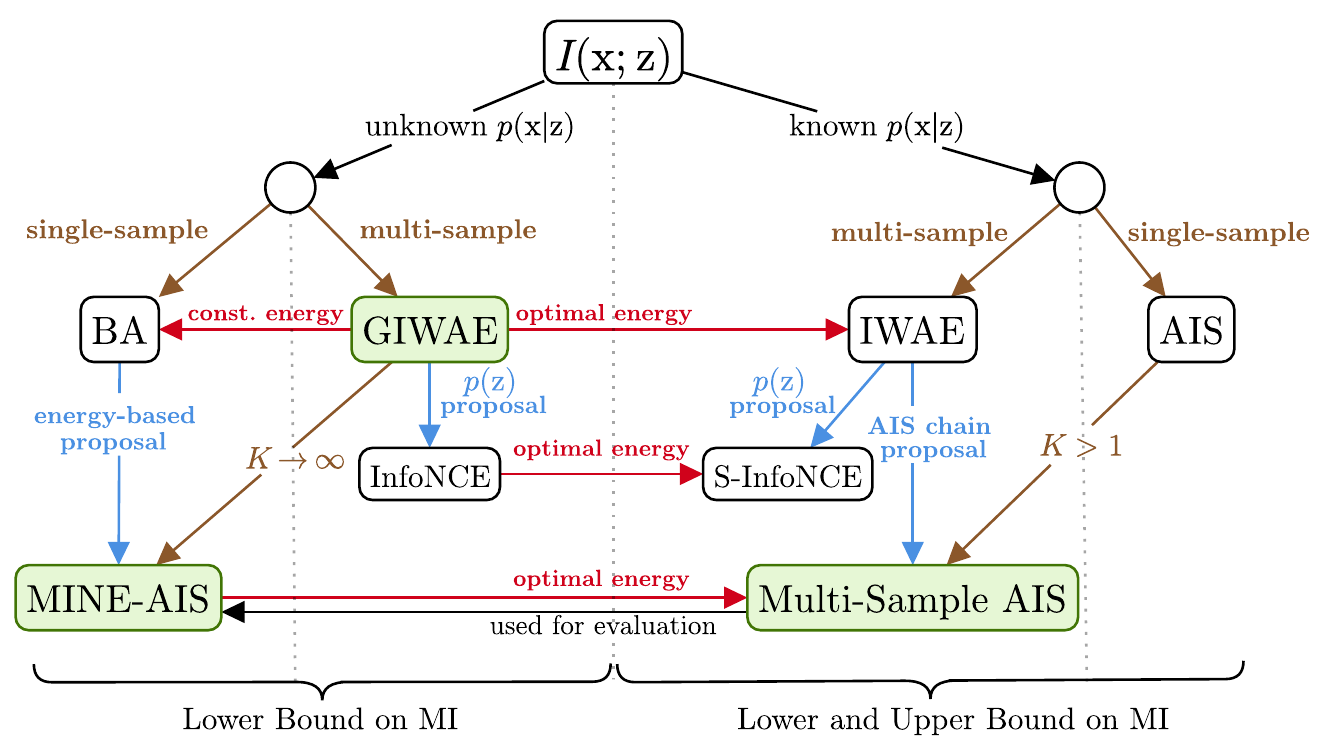} \caption{\label{fig:all_bounds} Schematic of various \gls{MI} bounds discussed in this paper.  Green shading indicates our contributions, while columns and gold labels indicate single- or multi-sample bounds.  Blue arrows indicate special cases using the indicated proposal distribution.  
Several bounds with unknown $p(\vx|\vz)$ use learned energy or critic functions, where the optimal critic function reflects the true $p(\vx|\vz)$.   Relationships based on critic functions are indicated by red arrows.   
Bounds with unknown $p(\vx|\vz)$ provide only lower bounds on \gls{MI}, while we obtain both upper and lower bounds with known $p(\vx|\vz)$.  All bounds require a single known marginal $p(\vz)$ for evaluation, apart from (Structured) \textsc{Info-NCE}.}
\vspace*{-.2cm} 
\end{figure}%

\subsection{Problem Setting}\label{sec:setting}
The mutual information between two random variables $\vx$ and $\vz$ with joint distribution $p(\vx,\vz)$ is %
\begin{align}
 \hspace*{-.2cm} \Ixz = \Exp{p(\vx, \vz)}{\log \frac{p(\vx, \vz)}{p(\vx)p(\vz)}} = \Hx - \Hxz %
    = \mathbb{E}_{p(\vx,\vz)}[\log p(\vx|\vz)] - \mathbb{E}_{p(\vx)}[\log p(\vx)], \label{eq:mi_x_decomposition}
\end{align}
\normalsize
where $\Hxz$ denotes the conditional entropy $-\mathbb{E}_{p(\vx,\vz)}\log p(\vx|\vz)$.
We primarily focus on bounds which assume either a single marginal distribution or the full joint distribution are available.
A natural setting where the full joint distribution is available is estimating \gls{MI} in deep generative models between the latent variables, with a known prior $\vz \sim p(\vz)$, and data $\vx \sim p(\vx)$ simulated from the model \citep{alemi2018gilbo}.\footnote{An alternative, ``encoding'' \gls{MI} between the real data and the latent code is often of interest (see \myapp{representation}), but cannot be directly estimated using our methods due to the unavailability of $p_d(\vx)$ or $q(\vz) =\int p_d(\vx)q(\vz|\vx)d\vx$.} 
Settings where only a single marginal 
is available appear, for example, in simulation-based inference \citep{cranmer2020frontier}, where information about input parameters $\theta$ is known and a simulator can generate $\vx$ for a given $\theta$, but the likelihood $p(\vx|\theta)$ is intractable.

While sampling from the posterior $p(\vz|\vx)$ for an arbitrary $\vx$ is often intractable, we can obtain a single posterior sample for $\vx \sim p(\vx)$ 
in cases where samples from the joint distribution 
$p(\vx)p(\vz|\vx)$ 
are available.  
Throughout this paper, we will refer to bounds which involve only a single posterior sample as \textit{practical}, and those involving multiple posterior samples as \textit{impractical}.

When the conditional $p(\vx|\vz)$ is tractable to sample and evaluate, simple Monte Carlo sampling provides an unbiased, low variance estimate of the conditional entropy term in \myeq{mi_x_decomposition}.   In this case, the difficulty of \gls{MI} estimation reduces to estimating the log partition function, for which \gls{IS} based methods are among the most well studied and successful solutions.

\section{Unifying Mutual Information Bounds via Importance %
Sampling}\label{sec:background}%

In this section, we 
present a unified view of mutual information estimation from the perspective of importance sampling, which yields new insights into existing bounds and will provide the foundation for our contributions in \mysec{ais_estimation} and \mysec{mine-ais}.   

\subsection{A General Approach for Extended State Space Importance Sampling Bounds}\label{sec:general}
Throughout this paper, we use extended state space importance sampling
\citep{finke2015extended, domke2018importance} to derive lower and upper bounds on the log partition function, which translate to bounds on \gls{MI} with known $p(\vx|\vz)$ as in \mysec{setting}.   
This general approach provides a probabilistic interpretation of existing \gls{MI} bounds and will suggest novel extensions in later sections (see \myapp{general}).

In particular, we construct proposal $\qprop{}(\argsaltcond)$ and target $\ptgt{}(\argsalt)$ distributions over an extended state space, such that 
the normalization constant of $\ptgt{}(\argsalt)$ is $\mathcal{Z}_\textsc{tgt} = \int \ptgt{}(\argsalt) d\argsaltZ = p(\vx)$ and the normalization constant of $\qprop{}(\argsalt)$ is 
$\mathcal{Z}_\textsc{prop}=1$. 
Taking expectations of the log importance weight 
$\log \ptgt{}(\argsalt)/\qprop{}(\argsaltcond)$
under the proposal and target, respectively, we obtain lower and upper bounds on the log partition function
\begin{equation}
 \myunderbrace{ \Exp{q_{\textsc{prop}}(\argsaltcond)}{\log \frac{p_\textsc{tgt}(\argsalt)}{q_{\textsc{prop}}(\argsaltcond)} } }{\textsc{elbo}\big(\vx; \qprop{}, \ptgt{}\big)} \leq  
  \log p(\vx)
  \leq \myunderbrace{ \Exp{p_\textsc{tgt}(\argsaltcond)}{\log \frac{p_\textsc{tgt}(\argsalt)}{q_{\textsc{prop}}(\argsaltcond) } } }{ \textsc{eubo}\big(\vx;\qprop{}, \ptgt{}\big)} \,. 
  \label{eq:general_approach_main2}
\end{equation}
\normalsize
These bounds correspond to extended state space versions of the \gls{ELBO} and \gls{EUBO}, respectively.   In particular, the gap in the lower bound is the forward \kl divergence, with $\log \px = \gls{ELBO}\big(\vx;\qprop{}, \ptgt{}\big) + \DKL[\qprop{}(\argsaltcond)\|\ptgt{}(\argsaltcond) ]$ and the gap in the upper bound equal to the reverse \kl divergence $\log \px = \gls{EUBO}\big(\vx;\qprop{}, \ptgt{}\big) - \DKL[\ptgt{}(\argsaltcond) \| \qprop{}(\argsaltcond) ]$.

\subsection{Barber-Agakov Lower and Upper Bounds}\label{sec:ba}
As a first example, consider the standard $\gls{ELBO}(\vx; q_\theta)$ and $\gls{EUBO}(\vx; q_\theta)$ bounds, which are derived from simple importance sampling using a variational distribution $\qzx$ and $\argsaltZ = \vz$ in \cref{eq:general_approach_main2}.
Plugging these 
lower and upper bounds on $\log p(\vx)$ into \myeq{mi_x_decomposition}, we obtain upper and lower bounds on \gls{MI} as
\small
\begin{align}
\hspace*{-.2cm} I_{{\textsc{ba}}_L}(q_\theta) \coloneqq \Exp{\pxandz}{\log \frac{\qzx}{\pz}} \leq \Ixz \leq \Exp{\px \qzx}{\log \frac{\qzx}{\pxandz}} - \Hxz \eqqcolon I_{{\textsc{ba}}_U}(q_\theta) . \label{eq:simple_importance_sampling2}
\end{align}
\normalsize
The left hand side of \myeq{simple_importance_sampling2} is the well-known \gls{BA} bound \citep{barber2003algorithm}, which has a gap of $\mathbb{E}_{p(\vx)}[\KL{\pzx}{\qzx}]$.
We refer to the right hand side as the \gls{BA} upper bound $I_{{\textsc{ba}}_U}(q_\theta)$, with a gap of $\mathbb{E}_{p(\vx)}[\KL{\qzx}{\pzx}]$.  
In contrast to $I_{\gls{BA}_U}(q_\theta)$, note that $I_{\gls{BA}_L}(q_\theta)$ does not require access to the conditional density $p(\vx|\vz)$ to evaluate the bound.   
\subsection{Importance Weighted Autoencoder Lower and Upper Bounds}\label{sec:iwae}\label{sec:partition}\label{sec:std_iwae} 
The \gls{IWAE} lower and upper bounds on $\log p(\vx)$ \citep{burda2016importance, sobolev2019hierarchical} improve upon simple importance sampling by extending the state space using multiple samples $\argsaltZ = \vz^{(1:K)}$ \citep{domke2018importance}. %
Consider a proposal $\qprop{iwae}
(\vz^{(1:K)}|\vx)$ 
with $K$ independent samples from a given variational distribution $\qzx$.   The extended state space target $\ptgt{iwae}
(\vz^{(1:K)}|\vx)$ 
is a mixture distribution involving a single sample from the posterior $p(\vz|\vx)$ or joint $p(\vx, \vz)$ distribution and $K-1$ samples from $\qzx$
\small
\begin{align}
\begin{aligned}
    \qprop{iwae}(\setofz|\vx) 
    \coloneqq 
    \prod \limits_{\kk=1}^{\K} q_{\theta}(\vz^{(\kk)}|\vx) \vphantom{\prod \limits_{\myoverset{k =1}{k \neq \idx}}^{\K}}, \label{eq:iwae_fwdrev2}
\end{aligned}\hspace*{0.5cm}
\begin{aligned}
    \ptgt{iwae}(\vx,\setofz) 
    \coloneqq 
    \frac{1}{\K} \sum \limits_{s=1}^{\K} \, \p(\vx, \vz^{(\idx)}) \, \prod  \limits_{\myoverset{k =1}{k \neq \idx}}^{\K} q_{\theta}(\vz^{(k)}|\vx)  \, . %
\end{aligned}
\end{align}
\normalsize
As in \mysec{general}, taking the expectation of the log importance weight $\log \frac{ \ptgt{iwae}(\vx,\setofz)}{ \qprop{iwae}(\setofz|\vx)}$ under the proposal and target, respectively, yields the \textsc{iwae} lower \citep{burda2016importance} and upper \citep{sobolev2019hierarchical} bounds on $\log \px$,
\normalsize
\begin{equation}
\resizebox{.93\textwidth}{!}{$\myunderbrace{\Exp{\qzxiall{}}{\log \frac{1}{K} \sum_{i=1}^K \frac{p(\vx,\vz^{(k)})}{q_{\theta}(\vz^{(k)}|\vx)}}}{\gls{ELBO}_{\gls{IWAE}}(\vx; q_\theta, K)}
\leq \log\px \leq
\myunderbrace{\Exp{p(\vz^{(1)}|\vx)\qzxi{2:K}}{\log \frac{1}{K} \sum_{i=1}^K \frac{p(\vx,\vz^{(k)})}{q_{\theta}(\vz^{(k)}|\vx)}}}{\gls{EUBO}_{\gls{IWAE}}(\vx; q_\theta, K)}. \,   
$}\label{eq:iwae_ublb}
\end{equation}
See \myapp{iwae} for derivations.  
For simplicity of notation,
we assume
$s=1$ and $\vz^{(1)} \sim \pzx$ when writing the expectation in $\gls{EUBO}_{\gls{IWAE}}(q_\theta, K)$, 
due to invariance of \cref{eq:iwae_fwdrev2} to permutation of the indices.

As for the standard \gls{ELBO} and \gls{EUBO}, the gaps in the lower and upper bounds are $D_{KL}[\qprop{iwae}(\setofz|\vx) \|\ptgt{iwae}(\setofz|\vx) ]$ and $D_{KL}[\ptgt{iwae}(\setofz|\vx) \|\qprop{iwae}(\setofz|\vx) ]$, respectively. 
With known $p(\vx|\vz)$,
the lower and upper bounds on $\log p(\vx)$
translate to upper and lower bounds on \gls{MI}, $I_{\text{IWAE}_U}(q_\theta, K)$ and $I_{\text{IWAE}_L}(q_{\theta},K)$, as in \mysec{setting}. %

\paragraph{Complexity in K}
While it is well-known that increasing $K$ leads to tighter \gls{IWAE} bounds \citep{burda2016importance, sobolev2019hierarchical}, we explicitly characterize the improvement of multi-sample \gls{IWAE} bounds over the single-sample \gls{ELBO} or \gls{EUBO} in the following proposition.  

This proposition lays the foundation 
for similar results throughout the paper.   In particular, any bound which involves expectations under a mixture of one `positive' sample and $K-1$ `negative' samples, such as $\gls{EUBO}_{\gls{IWAE}}(\vx; q_\theta, K)$ in \cref{eq:iwae_ublb} or (\ref{eq:iwae_over_eubo}), will be limited to logarithmic improvement in $K$.
\begin{restatable}[{Improvement of \gls{IWAE} with Increasing $K$}]{proposition}{iwaeelboeubo}\label{prop:iwae_elbo_eubo}
Let $\ptgt{iwae}( \idx |\vx, \vz^{(1:K)})$
$= \frac{p(\vx,\vz^{(\idx)})}{q_{\theta}(\vz^{(\idx)}|\vx)}/\sum_{k=1}^K \frac{p(\vx,\vz^{(k)})}{q_{\theta}(\vz^{(k)}|\vx)}$
denote the normalized importance weights and $\mathcal{U}(s)$ indicate the uniform distribution over $K$ discrete values.  
Then, we can characterize the improvement of $\gls{ELBO}_{\textsc{IWAE}}(\vx; q_\theta,K)$ and $\gls{EUBO}_{\textsc{IWAE}}(\vx; q_\theta,K)$ over $\gls{ELBO}(\vx; q_\theta)$ and $\gls{EUBO}(\vx; q_\theta)$ using \kl divergences, as follows
\begin{align}%
\hspace*{-.15cm} \gls{ELBO}_{\textsc{IWAE}}(\vx; q_\theta,K)
&= \gls{ELBO}(\vx; q_\theta) +  \underbrace{\mathbb{E}_{\qprop{iwae}(\vz^{(1:K)}|\vx)} \bigg[\DKL\big[\, \mathcal{U}(s) \big\| \ptgt{iwae}(\idx |\vz^{(1:K)}, \vx  ) \big] \vphantom{\frac{1}{2} }  \,  \bigg]}
_{\mathclap{\text{\footnotesize $0 \leq  \textsc{kl}  \text{ of uniform from  \gls{SNIS} weights} \leq \DKL[\qzx\|\pzx ]$}}} \, , \label{eq:iwae_over_elbo} \\
\gls{EUBO}_{\textsc{IWAE}}(\vx; q_\theta,K) 
&= \gls{EUBO}(\vx; q_\theta) - 
 \underbrace{\mathbb{E}_{\ptgt{iwae}(\vz^{(1:K)}|\vx)}\bigg[\DKL\big[\ptgt{iwae}(\idx |\vz^{(1:K)}, \vx  ) \, \big\| \, \mathcal{U}(s)\big] \vphantom{\frac{1}{2} }  \,  \bigg]}_{\mathclap{\text{\footnotesize $0 \leq  \textsc{kl}  \text{ of \gls{SNIS} weights from uniform} \leq \log K$}}} \, . \label{eq:iwae_over_eubo}
\end{align}
\normalsize
\end{restatable}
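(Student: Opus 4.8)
The plan is to prove both identities by expanding the index-space \textsc{kl} divergences pointwise and collecting terms, using the fact that the index-conditional $\ptgt{iwae}(\idx | \vz^{(1:K)}, \vx)$ is exactly the vector of self-normalized importance weights $w^{(\idx)}/\sum_{k} w^{(k)}$, where $w^{(k)} \defeq p(\vx, \vz^{(k)})/q_\theta(\vz^{(k)}|\vx)$ as in the statement.

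For \eqref{eq:iwae_over_elbo}, I would first expand the forward divergence pointwise as
\[
\DKL\big[\mathcal{U}(\idx) \,\big\|\, \ptgt{iwae}(\idx | \vz^{(1:K)}, \vx)\big] = \log \frac{1}{K}\sum_{k=1}^{K} w^{(k)} - \frac{1}{K}\sum_{\idx=1}^{K} \log w^{(\idx)}.
\]
Taking $\mathbb{E}_{\qprop{iwae}(\vz^{(1:K)}|\vx)}$ of both sides, the first term is by definition $\gls{ELBO}_{\textsc{IWAE}}(\vx; q_\theta, K)$, while the second term factorizes over the iid product proposal into $\tfrac{1}{K}\sum_{\idx} \mathbb{E}_{q_\theta(\vz^{(\idx)}|\vx)}[\log w^{(\idx)}] = \gls{ELBO}(\vx; q_\theta)$, since each summand is the same single-sample \gls{ELBO}. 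Rearranging yields \eqref{eq:iwae_over_elbo}.

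For \eqref{eq:iwae_over_eubo}, the key preliminary step is to introduce the augmented target $\ptgt{iwae}(\idx, \vz^{(1:K)}|\vx) = \tfrac{1}{K}\, p(\vz^{(\idx)}|\vx) \prod_{k \neq \idx} q_\theta(\vz^{(k)}|\vx)$, and to verify both that its $\vz^{(1:K)}$-marginal recovers $\ptgt{iwae}(\vz^{(1:K)}|\vx)$ and that its index-conditional recovers the normalized weights (obtained by dividing numerator and denominator by $\prod_k q_\theta(\vz^{(k)}|\vx)$). I would then expand the reverse divergence as
\[
\DKL\big[\ptgt{iwae}(\idx | \vz^{(1:K)}, \vx) \,\big\|\, \mathcal{U}(\idx)\big] = \sum_{\idx=1}^{K} \frac{w^{(\idx)}}{\sum_{k} w^{(k)}} \log w^{(\idx)} - \log \frac{1}{K}\sum_{k=1}^{K} w^{(k)}.
\]
Taking $\mathbb{E}_{\ptgt{iwae}(\vz^{(1:K)}|\vx)}$, the second term becomes $\gls{EUBO}_{\textsc{IWAE}}(\vx; q_\theta, K)$ (after the permutation symmetry already invoked in the excerpt), while the first term equals $\mathbb{E}_{\ptgt{iwae}(\idx, \vz^{(1:K)}|\vx)}[\log w^{(\idx)}]$. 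Since $\log w^{(\idx)}$ depends only on $\vz^{(\idx)}$, whose conditional marginal under the augmented target is $p(\vz^{(\idx)}|\vx)$ with $\idx$ uniform, this collapses to $\mathbb{E}_{p(\vz|\vx)}[\log w] = \gls{EUBO}(\vx; q_\theta)$, and rearranging gives \eqref{eq:iwae_over_eubo}.

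It remains to justify the bounds annotating the braces. The lower bounds are immediate from nonnegativity of \textsc{kl}. For the upper bound in \eqref{eq:iwae_over_elbo}, I would use $\gls{ELBO}_{\textsc{IWAE}}(\vx; q_\theta, K) \leq \log p(\vx)$ together with the single-sample gap $\log p(\vx) - \gls{ELBO}(\vx; q_\theta) = \DKL[\qzx \| \pzx]$, so the nonnegative increment is at most $\DKL[\qzx \| \pzx]$. For the upper bound in \eqref{eq:iwae_over_eubo}, I would use the elementary identity $\DKL[P \| \mathcal{U}(\idx)] = \log K - H(P) \leq \log K$, valid for any distribution $P$ over $K$ outcomes. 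The main obstacle is the \gls{EUBO} direction: correctly positing the augmented joint target over the index $\idx$, and checking both that its index-conditional is the self-normalized weight vector and that the cross term collapses back to the single-sample \gls{EUBO}; the \gls{ELBO} direction is comparatively mechanical once the divergence is expanded.
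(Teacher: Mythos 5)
Your proof is correct, and it takes a genuinely more direct route than the paper's. The paper never expands the index-space \kl divergences by hand: it first observes (\mylemma{elbo_eubo_giwae}) that $\gls{ELBO}(\vx;q_\theta)$ and $\gls{EUBO}(\vx;q_\theta)$ are exactly the \gls{GIWAE} bounds with a constant critic, and then specializes a general lemma (\mylemma{giwae_elbo_eubo}) comparing \gls{IWAE} to \gls{GIWAE}, which is itself proven by the chain rule for \kl divergences on the extended state space: the gap of the \gls{GIWAE} bound (a joint \kl over $(\vz^{(1:K)},s)$) splits into the gap of the \gls{IWAE} bound (the marginal \kl over $\vz^{(1:K)}$) plus an expected conditional \kl over the index $s$. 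You instead expand $\DKL[\mathcal{U}(s)\,\|\,\ptgt{iwae}(\idx|\vz^{(1:K)},\vx)]$ and $\DKL[\ptgt{iwae}(\idx|\vz^{(1:K)},\vx)\,\|\,\mathcal{U}(s)]$ pointwise into a log-mean-weight term minus a (uniformly or \gls{SNIS}-weighted) mean-log-weight term, then identify each expectation with the corresponding bound via iid symmetry and the augmented-target factorization; the characterization of bound gaps as extended-state-space \kl divergences is never needed. What the paper's route buys is reuse: the same \gls{GIWAE} lemma immediately yields \myprop{iwae_giwae} and the analogous \gls{AIS} statements (\myprop{ais_elbo_eubo}, \myprop{c_r_multi_sample_ais}). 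What yours buys is a self-contained, elementary argument readable without the \gls{GIWAE} construction. Your justification of the bracketed ranges (nonnegativity of \kl, the identity $\DKL[P\|\mathcal{U}(s)] = \log K - H(P) \leq \log K$, and bounding the \gls{ELBO} increment by the single-sample gap $\DKL[\qzx\|\pzx]$) coincides with the paper's.
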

\myprop{iwae_elbo_eubo} demonstrates that the improvement of the \gls{IWAE} log partition function bounds over its single-sample counterparts is larger for more non-uniform \gls{SNIS} weights.   
Notably, the improvement of $\gls{EUBO}_{\textsc{IWAE}}(\vx;q_\theta,K)$ over the single-sample $\gls{EUBO}(\vx;q_\theta)$ is limited by $\log K$.   
Translating \myprop{iwae_elbo_eubo} to the \gls{IWAE} bounds on \gls{MI} yields the following corollary.

\begin{restatable}{corollary}{iwaemishort}
\label{cor:iwae_mi_short} 
\gls{IWAE} bounds on \gls{MI} improve upon the \gls{BA} bounds with the following relationships:
\begin{align}
\begin{aligned}
   I_{\textsc{BA}_L}(q_\theta) \leq I_{\textsc{IWAE}_L}(q_\theta, K) & \leq I_{\textsc{BA}_L}(q_\theta) + \log K, \label{eq:prop_iwae_lb}
\end{aligned}\qquad
\begin{aligned}
     I_{\textsc{IWAE}_U}(q_\theta, K) & \leq I_{\textsc{BA}_U}(q_\theta).
\end{aligned}
\end{align}
\end{restatable}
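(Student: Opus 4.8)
The plan is to translate \myprop{iwae_elbo_eubo}, which compares the multi- and single-sample bounds on $\log p(\vx)$, into statements about \gls{MI} via the decomposition in \myeq{mi_x_decomposition}. First I would record how log-partition-function bounds become \gls{MI} bounds as in \mysec{setting}. Since $\Ixz = \Hx - \Hxz$ with $\Hx = -\mathbb{E}_{p(\vx)}[\log p(\vx)]$, averaging a \emph{lower} bound $\gls{ELBO}_{\textsc{IWAE}}(\vx; q_\theta, K) \leq \log p(\vx)$ over $p(\vx)$ and negating gives an \emph{upper} bound on $\Hx$, hence $I_{\textsc{IWAE}_U}(q_\theta, K) = -\mathbb{E}_{p(\vx)}[\gls{ELBO}_{\textsc{IWAE}}(\vx; q_\theta, K)] - \Hxz$; symmetrically, the \emph{upper} bound $\gls{EUBO}_{\textsc{IWAE}}(\vx; q_\theta, K) \geq \log p(\vx)$ yields the \emph{lower} bound $I_{\textsc{IWAE}_L}(q_\theta, K) = -\mathbb{E}_{p(\vx)}[\gls{EUBO}_{\textsc{IWAE}}(\vx; q_\theta, K)] - \Hxz$. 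Applying the same translation to the single-sample bounds of \mysec{ba} gives $I_{\textsc{BA}_U}(q_\theta) = -\mathbb{E}_{p(\vx)}[\gls{ELBO}(\vx; q_\theta)] - \Hxz$ and $I_{\textsc{BA}_L}(q_\theta) = -\mathbb{E}_{p(\vx)}[\gls{EUBO}(\vx; q_\theta)] - \Hxz$, which I would confirm match the definitions in \myeq{simple_importance_sampling2} by a one-line cancellation using $p(\vx,\vz) = p(\vx|\vz)p(\vz)$ so that the $\log p(\vx|\vz)$ and $\Hxz$ terms combine into $\log \qzx/\pz$ (lower) and $\log \qzx/\pxandz - \Hxz$ (upper).

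Next I would take expectations of the two identities in \myprop{iwae_elbo_eubo} over $p(\vx)$ and negate. For the upper bound, \myeq{iwae_over_elbo} reads $\gls{ELBO}_{\textsc{IWAE}}(\vx; q_\theta, K) = \gls{ELBO}(\vx; q_\theta) + A(\vx)$, where $A(\vx) \geq 0$ is the stated \kl correction term; substituting into the expression for $I_{\textsc{IWAE}_U}$ gives $I_{\textsc{IWAE}_U}(q_\theta, K) = I_{\textsc{BA}_U}(q_\theta) - \mathbb{E}_{p(\vx)}[A(\vx)]$, and $A(\vx) \geq 0$ immediately yields $I_{\textsc{IWAE}_U}(q_\theta, K) \leq I_{\textsc{BA}_U}(q_\theta)$. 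For the lower bound, \myeq{iwae_over_eubo} reads $\gls{EUBO}_{\textsc{IWAE}}(\vx; q_\theta, K) = \gls{EUBO}(\vx; q_\theta) - B(\vx)$ with $0 \leq B(\vx) \leq \log K$; substituting gives $I_{\textsc{IWAE}_L}(q_\theta, K) = I_{\textsc{BA}_L}(q_\theta) + \mathbb{E}_{p(\vx)}[B(\vx)]$, and since the two-sided bound on $B(\vx)$ is preserved under the $p(\vx)$-average, $0 \leq \mathbb{E}_{p(\vx)}[B(\vx)] \leq \log K$, which is exactly $I_{\textsc{BA}_L}(q_\theta) \leq I_{\textsc{IWAE}_L}(q_\theta, K) \leq I_{\textsc{BA}_L}(q_\theta) + \log K$.

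The only real care needed is sign bookkeeping: because \gls{MI} bounds arise by negating averaged log-partition bounds, the nonnegative \gls{ELBO} correction $A$ \emph{tightens} the \gls{MI} upper bound while the bounded \gls{EUBO} correction $B$ \emph{loosens} the \gls{MI} lower bound, so the $\log K$ cap lands on the lower-bound side with no matching additive slack on the upper-bound side. I expect the main (minor) obstacle to be verifying that the conditional-entropy term $\Hxz$ is the same fixed quantity in all four bounds, so that it cancels cleanly in the differences $I_{\textsc{IWAE}_U} - I_{\textsc{BA}_U}$ and $I_{\textsc{IWAE}_L} - I_{\textsc{BA}_L}$; once that cancellation is confirmed, the corollary follows directly from \myprop{iwae_elbo_eubo}.
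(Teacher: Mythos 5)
Your proposal is correct, and it proves the central inequality exactly as the paper does: translate \myprop{iwae_elbo_eubo} into \gls{MI} statements by averaging over $p(\vx)$, noting that the conditional entropy $\Hxz$ is a fixed property of $p(\vx,\vz)$ (independent of $q_\theta$ and $K$) and therefore cancels in all the differences, so that the $\log K$ cap on the \kl correction in \myeq{iwae_over_eubo} becomes $I_{\textsc{IWAE}_L}(q_\theta,K) \leq I_{\textsc{BA}_L}(q_\theta) + \log K$. Where you diverge slightly is in the remaining two inequalities: you obtain $I_{\textsc{BA}_L}(q_\theta) \leq I_{\textsc{IWAE}_L}(q_\theta,K)$ and $I_{\textsc{IWAE}_U}(q_\theta,K) \leq I_{\textsc{BA}_U}(q_\theta)$ directly from the nonnegativity of the two \kl correction terms $A(\vx)$ and $B(\vx)$ in \myprop{iwae_elbo_eubo}, whereas the paper instead cites the monotonicity-in-$K$ result of \myprop{iwaek} (the classical fact, due to \citet{burda2016importance} and \citet{sobolev2019hierarchical}, that \gls{IWAE} bounds tighten as $K$ increases, with $K=1$ recovering the \gls{BA} bounds). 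Both justifications are valid and both rest on results already established in the paper; your route has the modest advantage of deriving the entire corollary from the single identity-based proposition, making it self-contained and exhibiting the exact gaps $\mathbb{E}_{p(\vx)}[A(\vx)]$ and $\mathbb{E}_{p(\vx)}[B(\vx)]$, while the paper's route is shorter because it reuses the known monotonicity statement rather than re-deriving the sign of the corrections.
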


\mycor{iwae_mi_short}
shows that, in order to obtain a tight bound on \gls{MI}, the \gls{IWAE} lower bound requires exponential sample complexity in $\Exp{p(\vx)}{\DKL[\pzx\|\qzx]}$, which is the gap of either $\gls{EUBO}(\vx; q_\theta)$ or $I_{\gls{BA}_L}(q_\theta)$.
{Although $\gls{ELBO}_{\gls{IWAE}}(\vx;q_\theta, K)$ and $I_{\text{IWAE}_U}(q_\theta, K)$ are not limited to 
logarithmic improvement 
with increasing $K$, 
it has been argued that the same exponential sample complexity,  
$K \propto \exp\big(
\DKL[\pzx \| \qzx]
\big)$,
is required to 
achieve tight importance sampling bounds} 
(see \myapp{iwae_importance}, \citet{chatterjee2018sample}).
These observations motivate our improved \gls{AIS} proposals in \mysec{ais_estimation}, which achieve \textit{linear} bias reduction in the number of intermediate distributions $T$ used to bridge between $\qzx$ and $p(\vz|\vx)$.

\paragraph{Relationship with Structured InfoNCE} 
We can recognize the Structured \textsc{InfoNCE} upper and lower bounds (\myapp{structured}, \citet{poole2019variational}) as corresponding to the standard \gls{IWAE} bounds, with known $\pxgz$ and the marginal $p(\vz)$ used in place of the variational distribution $\qzx$.
In other words, we have
$I_{\textsc{S-InfoNCE}_L}(K) = I_{\textsc{iwae}_L}(p(\vz),K) $ and 
$I_{\textsc{S-InfoNCE}_U}(K) = I_{\textsc{iwae}_U}(p(\vz),K)$.

\subsection{Generalized IWAE}\label{sec:gen_iwae}

In the previous section, we have seen that 
\gls{IWAE}
improves upon 
\gls{BA}
using multiple samples. 
However, \gls{IWAE} leverages knowledge of the full joint density $p(\vx,\vz)$, 
whereas evaluating the \gls{BA} lower bound only requires access to the marginal $p(\vz)$.  
In this section, we consider a family of \gls{GIWAE} lower bounds, which use a contrastive critic function $\giwaeT(\vx,\vz)$ to achieve similar multi-sample improvement as \gls{IWAE} without access to $p(\vx,\vz)$.
While similar bounds appear in \citep{lawson2019energy,sobolevblog}, we 
provide 
empirical
validation 
for \gls{MI} estimation 
(\mysec{exp_giwae}), and discuss theoretical connections with 
\gls{IWAE} and
our proposed \textsc{mine-ais} method (\mysec{mine-ais}).  %

To derive a probabilistic interpretation for \gls{GIWAE}, 
our starting point is to further extend the state space of the \gls{IWAE} target distribution in \myeq{iwae_fwdrev2}, using a uniform index variable $\mathcal{U}(s) = \frac1K \, \forall s$ that specifies which sample $\vz^{(k)}$ is drawn from the posterior $\pzx$.  
This leads to the following joint distribution over $(\vx, \vz^{(1:K)}, s)$ and posterior over the index variable $s$,
\footnotesize
\begin{equation}
  \hspace*{-.2cm} 
  \ptgt{giwae}(\vx, \vz^{(1:K)}, \idx) 
  \coloneqq 
  \dfrac1K \, p(\vx, \vz^{(\idx)}) \prod \limits_{
\myoverset{k=1}{k\neq s}}
^K q_{\theta}(\vz^{(k)}|\vx),  \, \,  \text{resulting in} \, \, \,  \indexposterior( \idx |\vx, \vz^{(1:K)}) =  \dfrac{\frac{p(\vx,\vz^{(\idx)})}{q_{\theta}(\vz^{(\idx)}|\vx)}}{\sum \limits_{k=1}^K \frac{p(\vx,\vz^{(k)})}{q_{\theta}(\vz^{(k)}|\vx)}} .
\label{eq:giwae_reverse_main} %
\end{equation}
\normalsize
Note that marginalization over $s$ in $\ptgt{giwae}(\vz^{(1:K)},s|\vx)$ leads to the \gls{IWAE} target distribution $\ptgt{iwae}(\vz^{(1:K)}|\vx)$ in \myeq{iwae_fwdrev2}. 
The posterior $\indexposterior( \idx |\vx, \vz^{(1:K)})$ over the index variable $s$, which infers the positive sample drawn from $\pzx$ given a set of samples $\vz^{(1:K)}$, corresponds to the normalized importance weights.

For the \gls{GIWAE} extended state space proposal distribution, we consider a categorical index variable $\qprop{giwae}(\idx | \vz^{(1:K)}, \vx)$ 
drawn according to \textit{variational} \gls{SNIS} weights, which are calculated using a learned critic or negative energy function $\giwaeT(\vx,\vz)$
\footnotesize
\begin{align}
\qprop{giwae}(\vz^{(1:K)},\idx|\vx) 
   \coloneqq
   \bigg( \prod \limits_{k=1}^K q_{\theta}(\vz^{(k)}|\vx) \bigg) \qprop{giwae}(\idx |\vz^{(1:K)}, \vx ) ,
  \, \text{where} \, \, 
   \qprop{giwae}(\idx |\vz^{(1:K)}, \vx) \coloneqq \frac{e^{\giwaeT(\vx,\vz^{(\idx)})}}{ \sum \limits_{k=1}^K e^{\giwaeT(\vx,\vz^{(k)})}} .
   \label{eq:giwae_forward}
\end{align}
\normalsize
We can view the \gls{SNIS} distribution $\qprop{giwae}(\idx |\vz^{(1:K)}, \vx)$
as performing variational inference of the posterior $\indexposterior( \idx |\vx, \vz^{(1:K)})$.
We will show in \myprop{iwae_giwae} that the optimal \gls{GIWAE} critic function is $T^{*}(\vx,\vz) = \log \frac{p(\vx,\vz)}{\qzx} + c(\vx)$, in which case \cref{eq:giwae_reverse_main}-(\ref{eq:giwae_forward}) recover the \gls{IWAE} probabilistic interpretation from \citet{domke2018importance} (see \myapp{iwae_prob}). 

We focus on the upper bound on $\log p(\vx)$, obtained by taking the 
expected log importance ratio 
under $\ptgt{giwae}(\vz^{(1:K)},s|\vx)$ as in \mysec{general}.  \footnote{We consider $\gls{ELBO}_{\gls{GIWAE}}(\vx; q_\theta, \giwaeT, K)$ in \myapp{giwae_ub}.  
For \gls{MI} estimation, the corresponding upper bound is always inferior to $I_{\gls{IWAE}_U}(q_\theta, K)$, 
since known $p(\vx|\vz)$ is needed to evaluate $\Hxz$. See \mycor{giwae} and \myapp{giwae_ub}.}   We write the corresponding \gls{GIWAE} lower bound on \gls{MI} as
\begin{equation}%
\resizebox{.93\textwidth}{!}{$I_{\textsc{GIWAE}_L}(q_{\theta},\giwaeT,K)=
\underbrace{\Exp{p(\vx,\vz) \vphantom{\qzxi{2:K}} }{\log \frac{\qzx}{p(\vz)}} \vphantom{\frac{1}{K} \sum_{i=1}^K e^{\giwaeT(\vx,\vz^{(k)})}} }_{\footnotesize I_{\textsc{BA}_L}(q_\theta)} +
\underbrace{\Exp{p(\vx) p(\vz^{(1)}|\vx)\qzxi{2:K}}{\log
\frac{e^{\giwaeT(\vx,\vz^{(1)})}}{\frac{1}{K} \sum_{i=1}^K e^{\giwaeT(\vx,\vz^{(k)})}}}}_{\text{\small contrastive term $\leq \log K$}}. \, 
$}
\label{eq:giwae_lb_mi_main}
\end{equation}
\normalsize
We see that the \gls{GIWAE} lower bound decomposes into the sum of two terms, where the first is the  \gls{BA} variational lower bound for $\qzx$ and the second is a contrastive term which distinguishes negative samples from $\qzx$ and positive samples from $\pzx$.   
Similarly to \textsc{ba}, \gls{GIWAE} requires access to the analytical form of $p(\vz)$ to \emph{evaluate} the bound on \gls{MI}. However, if the goal is to \textit{optimize} mutual information, 
both the \textsc{ba} and \gls{GIWAE} lower bounds can be used even if no marginal distribution is available.   
See \myapp{representation} for more detailed discussion.

\paragraph{Relationship with BA} Choosing constant $\giwaeT(\vx,\vz)=c$ means that the second term in \gls{GIWAE} vanishes, so that we have $I_{\textsc{GIWAE}_L}(q, \giwaeT=c ,K) =I_{\textsc{BA}_L}(q_\theta)$ for all $K$.   Similarly, the single sample $I_{\textsc{GIWAE}_L}(q, \giwaeT , K=1)$ equals the \gls{BA} lower bound for all $\giwaeT(\vx,\vz)$. 

\paragraph{Relationship with InfoNCE} 
When the prior $p(\vz)$ is used in place of $\qzx$, we can recognize the second term in \myeq{giwae_lb_mi_main} as the \textsc{InfoNCE} contrastive lower bound \citep{oord2018representation, poole2019variational}, with $I_{\textsc{InfoNCE}_L}(\giwaeT,K) = I_{\textsc{giwae}_L}(p(\vz), \giwaeT, K)$. 
From this perspective, the \gls{GIWAE} lower bound highlights how variational learning can complement contrastive bounds to improve \gls{MI} estimation beyond the known $\log K$ limitations of \textsc{InfoNCE} (\citet{oord2018representation}, \cref{eq:infonce_k}). 
However, using the prior as the proposal in \textsc{InfoNCE} does allow the critic function to admit a bi-linear implementation 
{$\giwaeT(\vx,\vz)= f_{\phi_\vx}(\vx)^{\mathsf{T}}  f_{\phi_\vz}(\vz)$} ,
which requires only $N+K$ forward passes instead of $NK$ for \gls{GIWAE}, where $N$ is the batch size and $K$ is the total number of positive and negative samples.

\paragraph{Relationship with IWAE} 
The following proposition characterizes the relationship between the \gls{GIWAE} lower bound in \myeq{giwae_lb_mi_main} and the \gls{IWAE} lower bound on \gls{MI} from \mysec{std_iwae}.   
\begin{restatable}[Improvement of \gls{IWAE} over \gls{GIWAE}]{proposition}{iwaegiwae}\label{prop:iwae_giwae}
For a given $\qzx$, the \gls{IWAE} lower bound on \textsc{mi} is tighter than the \gls{GIWAE} lower bound for any $\giwaeT(\vx,\vz)$.
  Their difference %
is the average \textsc{kl} divergence between the normalized importance weights $\indexposterior(\idx |\vz^{(1:K)}, \vx )$ and the variational distribution $\qprop{giwae}(\idx |\vz^{(1:K)}, \vx )$ in \myeq{giwae_forward},
\small
\begin{align}%
I_{\textsc{IWAE}_L}(q_\theta,K) = I_{\textsc{GIWAE}_L}(q_\theta,\giwaeT,K) + 
\Exp{p(\vx)\ptgt{iwae}(\vz^{(1:K)}|\vx)}{\KL{\indexposterior(\idx |\vz^{(1:K)}, \vx )}{\qprop{giwae}(\idx |\vz^{(1:K)}, \vx )}}. 
\nonumber
\end{align}
\normalsize
\end{restatable}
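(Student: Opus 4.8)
The plan is to write both \gls{MI} lower bounds as $I_{\textsc{BA}_L}(q_\theta)$ plus a correction term living on the discrete index variable $\idx$, and then show the two corrections differ by exactly the claimed \textsc{kl} divergence, which is nonnegative and therefore certifies that $I_{\textsc{IWAE}_L}(q_\theta,K)\ge I_{\textsc{GIWAE}_L}(q_\theta,\giwaeT,K)$.

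First, I would convert \myprop{iwae_elbo_eubo} into a statement about \gls{MI}. With known $p(\vx|\vz)$ the lower bound on \gls{MI} is $I_{\textsc{IWAE}_L}(q_\theta,K) = -\Hxz - \Exp{p(\vx)}{\gls{EUBO}_{\gls{IWAE}}(\vx;q_\theta,K)}$, and likewise $I_{\textsc{BA}_L}(q_\theta) = -\Hxz - \Exp{p(\vx)}{\gls{EUBO}(\vx;q_\theta)}$, so the shared conditional-entropy term cancels when we take the difference. Hence \myeq{iwae_over_eubo} gives directly
\[
I_{\textsc{IWAE}_L}(q_\theta,K) = I_{\textsc{BA}_L}(q_\theta) + \Exp{p(\vx)\ptgt{iwae}(\vz^{(1:K)}|\vx)}{\KL{\indexposterior(\idx|\vz^{(1:K)},\vx)}{\mathcal{U}(\idx)}} .
\]

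Second, I would recast the \gls{GIWAE} contrastive term in \myeq{giwae_lb_mi_main} into the same index-space form. Using \myeq{giwae_forward} I rewrite its log-ratio as $\log\big(K\,\qprop{giwae}(\idx{=}1|\vz^{(1:K)},\vx)\big)=\log\frac{\qprop{giwae}(\idx{=}1|\cdot)}{\mathcal{U}(\idx)}$, and then exploit that both the sampling distribution $p(\vz^{(1)}|\vx)\prod_{k\neq1}q_\theta(\vz^{(k)}|\vx)$ and the softmax weights are exchangeable under relabeling of the $K$ samples. This replaces the asymmetric ``positive-at-index-$1$'' expectation by an expectation over the symmetric joint target $\ptgt{giwae}(\vz^{(1:K)},\idx|\vx)$ of \myeq{giwae_reverse_main}, whose $\vz$-marginal is $\ptgt{iwae}(\vz^{(1:K)}|\vx)$ and whose $\idx$-conditional is exactly the normalized importance weights $\indexposterior(\idx|\vz^{(1:K)},\vx)$, yielding
\[
I_{\textsc{GIWAE}_L}(q_\theta,\giwaeT,K) = I_{\textsc{BA}_L}(q_\theta) + \Exp{p(\vx)\ptgt{iwae}(\vz^{(1:K)}|\vx)}{\Exp{\indexposterior(\idx|\cdot)}{\log\frac{\qprop{giwae}(\idx|\cdot)}{\mathcal{U}(\idx)}}} .
\]

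Finally, subtracting the two displays cancels $I_{\textsc{BA}_L}(q_\theta)$, and applying the pointwise identity $\Exp{\indexposterior}{\log(\qprop{giwae}/\mathcal{U})} = \KL{\indexposterior}{\mathcal{U}} - \KL{\indexposterior}{\qprop{giwae}}$ cancels the two divergences from the uniform reference, leaving $\Exp{p(\vx)\ptgt{iwae}}{\KL{\indexposterior(\idx|\cdot)}{\qprop{giwae}(\idx|\cdot)}}$ as required; nonnegativity of this \textsc{kl} then gives the tightness claim. The main obstacle is the second step: I must carefully justify that the index-$1$ expectation coincides with the symmetric expectation under $\ptgt{giwae}$ whose index-conditional is the \emph{true} importance-weight posterior $\indexposterior$ — i.e.\ that averaging the softmax log-weight over $\indexposterior$ reproduces the contrastive term — after which everything reduces to routine algebra on the simplex.
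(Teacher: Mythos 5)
Your proof is correct, but it is organized quite differently from the paper's. The paper proves \mylemma{giwae_elbo_eubo} at the level of log partition function bounds: the gap of $\gls{EUBO}_{\gls{GIWAE}}(\vx;q_\theta,\giwaeT,K)$ is the joint reverse \textsc{kl} divergence $\DKL[\ptgt{giwae}(\vz^{(1:K)},\idx|\vx)\,\|\,\qprop{giwae}(\vz^{(1:K)},\idx|\vx)]$, and the chain rule over $(\vz^{(1:K)},\idx)$ splits this into the marginal \textsc{kl} over $\vz^{(1:K)}$ (exactly the \gls{IWAE} gap) plus the expected index-conditional \textsc{kl}; the proposition then follows by converting \gls{EUBO} differences into \gls{MI} differences, and \myprop{iwae_elbo_eubo} is recovered afterwards as the constant-critic special case. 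You invert this logical order: you anchor both bounds at $I_{\textsc{BA}_L}(q_\theta)$, take \myprop{iwae_elbo_eubo} as given, use exchangeability to rewrite the \gls{GIWAE} contrastive term as an expectation under the symmetric target factorized as $\ptgt{iwae}(\vz^{(1:K)}|\vx)\,\indexposterior(\idx|\vz^{(1:K)},\vx)$, and finish with a pointwise identity on the simplex --- in effect deriving the general statement from its own special case plus elementary algebra. The two arguments share the same core facts (the marginal-times-index-posterior factorization and a \textsc{kl} chain-rule cancellation), so the mathematical content overlaps, but the bookkeeping is genuinely different. One caveat: the paper's only stated proof of \myprop{iwae_elbo_eubo} is as a corollary of \mylemma{giwae_elbo_eubo}, so citing it verbatim makes your argument circular within the paper's own dependency graph; this is easily repaired, since the exchangeability/re-factorization step from your second paragraph, applied to the \gls{IWAE} contrastive term in \myeq{iwae_two_terms_main}, directly proves $I_{\textsc{IWAE}_L}(q_\theta,K)=I_{\textsc{BA}_L}(q_\theta)+\Exp{p(\vx)\ptgt{iwae}(\vz^{(1:K)}|\vx)}{\KL{\indexposterior(\idx|\vz^{(1:K)},\vx)}{\mathcal{U}(\idx)}}$, making your proof self-contained. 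As for what each route buys: the paper's lemma delivers the \gls{ELBO} and \gls{EUBO} relations simultaneously and characterizes the gaps of all the bounds via the extended-state-space machinery of \mysec{general}, whereas yours is more elementary (no appeal to that gap characterization) at the cost of yielding only the \gls{MI} lower-bound statement.
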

\begin{restatable}[Optimal \gls{GIWAE} Critic Function yields \gls{IWAE}]{corollary}{giwae}
\label{cor:giwae} %
For a given $\qzx$ and $K>1$, the optimal \gls{GIWAE} critic function is equal to the true log importance weight up to an arbitrary constant: $T^*(\vx,\vz) = \log \frac{p(\vx,\vz)}{\qzx} + c(\vx)$.   With this choice of $T^*(\vx,\vz)$, we have
\begin{align}
    I_{\textsc{GIWAE}_L}(q_\theta, T^{*}, K) &=I_{\textsc{IWAE}_L}(q_\theta, K) \, . 
\end{align}
\normalsize
\end{restatable}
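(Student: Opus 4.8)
The plan is to leverage \myprop{iwae_giwae}, which already identifies the gap between the two bounds as a single \textsc{kl} divergence over the index variable $\idx$. Since $I_{\textsc{IWAE}_L}(q_\theta, K)$ does not depend on the critic $\giwaeT$, maximizing $I_{\textsc{GIWAE}_L}(q_\theta, \giwaeT, K)$ over $\giwaeT$ is equivalent to minimizing the nonnegative quantity
\[
\Exp{p(\vx)\ptgt{iwae}(\vz^{(1:K)}|\vx)}{\KL{\indexposterior(\idx |\vz^{(1:K)}, \vx )}{\qprop{giwae}(\idx |\vz^{(1:K)}, \vx )}} .
\]
The optimal critic is therefore any $\giwaeT$ that drives this divergence to zero, i.e.\ one for which the variational index distribution $\qprop{giwae}$ equals the true index posterior $\indexposterior$ almost everywhere, and the resulting bound equality is immediate once this divergence vanishes.

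First I would verify that the proposed $T^*(\vx,\vz) = \log \frac{p(\vx,\vz)}{\qzx} + c(\vx)$ attains this minimum. Substituting $e^{T^*(\vx,\vz)} = \frac{p(\vx,\vz)}{\qzx}\, e^{c(\vx)}$ into the softmax definition of $\qprop{giwae}$ in \myeq{giwae_forward}, the factor $e^{c(\vx)}$ is common to every summand and cancels between numerator and denominator. The softmax then reduces exactly to the normalized importance weights $\indexposterior(\idx|\vx,\vz^{(1:K)})$ of \myeq{giwae_reverse_main}, so the inner \textsc{kl} vanishes pointwise and \myprop{iwae_giwae} collapses to $I_{\textsc{GIWAE}_L}(q_\theta, T^*, K) = I_{\textsc{IWAE}_L}(q_\theta, K)$, which is the claimed equality.

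Next I would argue optimality and the ``up to a constant'' characterization. Optimality is immediate, since $T^*$ achieves the least possible value ($0$) of a nonnegative gap. For uniqueness I would use the hypothesis $K \geq 2$: if $\qprop{giwae}$ matches $\indexposterior$ for all tuples $\vz^{(1:K)}$, then comparing the ratio of any two softmax coordinates forces $\giwaeT(\vx,\vz) - \log\frac{p(\vx,\vz)}{\qzx}$ to be independent of $\vz$, hence equal to some $c(\vx)$ depending on $\vx$ alone. This simultaneously pins down the optimizer and explains why the additive term in the statement is arbitrary.

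The only delicate point I anticipate is the softmax invariance together with the ``almost everywhere'' qualifier: the critic is recoverable only on the support of $\qzx$ where the importance weights are defined, and only up to the per-$\vx$ additive term that the softmax cannot observe. This is a mild measure-theoretic caveat rather than a genuine obstacle, since $\giwaeT$ enters the bound exclusively through the softmax, so the equality of the two bounds follows directly from the vanishing of the inner \textsc{kl}.
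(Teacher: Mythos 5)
Your proof is correct and follows essentially the same route as the paper's: both invoke \myprop{iwae_giwae} to reduce the problem to making the index-variable \textsc{kl} vanish, verify that the softmax of $T^*$ cancels $e^{c(\vx)}$ and recovers the normalized importance weights, and establish the ``up to $c(\vx)$'' characterization by the same ratio/cross-multiplication argument (the paper writes $T = \log\frac{p(\vx,\vz)}{\qzx} + g(\vx,\vz)$ and forces $g$ constant in $\vz$, which is your two-coordinate comparison). The only cosmetic difference is that the paper additionally substitutes $T^*$ into the \gls{GIWAE} objective and simplifies term-by-term to re-derive $I_{\textsc{IWAE}_L}(q_\theta,K)$, whereas you obtain the equality directly from the vanishing gap, which is equally valid.
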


\begin{restatable}[{Logarithmic Improvement of \gls{GIWAE}}]{corollary}{giwaelogk}\label{cor:giwae_logk}
Suppose the critic function $T_{\phi}(\vx,\vz)$ is parameterized by $\phi$, and
$\exists \, \phi_0 \, s.t. \,  \forall \, (\vx,\vz), \,  T_{\phi_0}(\vx,\vz) = \text{const}$.  
For a given $\qzx$, let $T_{\phi^*}(\vx,\vz)$ denote the critic function that maximizes the \gls{GIWAE} lower bound.  Using \mycor{iwae_mi_short} and \mycor{giwae}, we have
\begin{align}
    I_{\textsc{BA}_L}(q_\theta)  \leq I_{\textsc{GIWAE}_L}(q_\theta,T_{\phi^*},K) \leq
     I_{\textsc{IWAE}_L}(q_\theta,K) \leq  I_{\textsc{BA}_L}(q_\theta)  +\log K \, .
\end{align}
\end{restatable}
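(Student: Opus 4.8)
The plan is to obtain the three-term chain by assembling results already established in this section, since the corollary is essentially a bookkeeping exercise combining \myprop{iwae_giwae}, \mycor{giwae}, and \mycor{iwae_mi_short}. I would handle the three inequalities separately and then concatenate them.

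For the left inequality $I_{\textsc{BA}_L}(q_\theta) \leq I_{\textsc{GIWAE}_L}(q_\theta, T_{\phi^*}, K)$, the key ingredient is the \emph{Relationship with BA} observation: a constant critic $\giwaeT(\vx,\vz)=c$ makes the contrastive term in \myeq{giwae_lb_mi_main} vanish, so that $I_{\textsc{GIWAE}_L}(q_\theta, \giwaeT=c, K) = I_{\textsc{BA}_L}(q_\theta)$. The hypothesis that there exists $\phi_0$ with $T_{\phi_0}(\vx,\vz)=\text{const}$ guarantees this constant critic lies inside the parametric family. Since $T_{\phi^*}$ maximizes the \gls{GIWAE} lower bound over that family, I would conclude $I_{\textsc{GIWAE}_L}(q_\theta, T_{\phi^*}, K) \geq I_{\textsc{GIWAE}_L}(q_\theta, T_{\phi_0}, K) = I_{\textsc{BA}_L}(q_\theta)$.

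For the middle inequality I would invoke \myprop{iwae_giwae}: for \emph{any} critic function whatsoever, in particular $T_{\phi^*}$, the difference $I_{\textsc{IWAE}_L}(q_\theta, K) - I_{\textsc{GIWAE}_L}(q_\theta, \giwaeT, K)$ equals an expected \textsc{kl} divergence between the true \gls{SNIS} weights $\indexposterior(\idx |\vz^{(1:K)}, \vx)$ and the variational weights $\qprop{giwae}(\idx |\vz^{(1:K)}, \vx)$, which is non-negative; substituting $\giwaeT = T_{\phi^*}$ yields $I_{\textsc{GIWAE}_L}(q_\theta, T_{\phi^*}, K) \leq I_{\textsc{IWAE}_L}(q_\theta, K)$. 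Finally, the right inequality $I_{\textsc{IWAE}_L}(q_\theta, K) \leq I_{\textsc{BA}_L}(q_\theta) + \log K$ is exactly the second half of \myeq{prop_iwae_lb} in \mycor{iwae_mi_short}, so no further work is required.

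The only genuine subtlety, and the step I would state most carefully, is the distinction between maximizing over the parametric family $\{T_\phi\}$ versus over all functions. \mycor{giwae} identifies the \emph{global} optimizer $T^*(\vx,\vz) = \log\frac{p(\vx,\vz)}{\qzx} + c(\vx)$, which attains $I_{\textsc{IWAE}_L}$, but the corollary's $T_{\phi^*}$ is merely a maximizer within the family, which need not contain $T^*$. \myprop{iwae_giwae} is precisely what lets me bound the parametric maximizer from above without assuming the family is expressive enough to realize $T^*$, while the constant-critic assumption supplies the matching lower bound. I would therefore make explicit that both bounds hold for the restricted maximizer, so that the chain is valid for any parameterization satisfying the stated hypothesis.
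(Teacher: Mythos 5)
Your proof is correct and follows essentially the same route as the paper's: the constant-critic assumption gives the left inequality via $I_{\textsc{GIWAE}_L}(q_\theta,T_{\phi_0},K)=I_{\textsc{BA}_L}(q_\theta)$, \myprop{iwae_giwae} (equivalently \mycor{giwae}) gives the middle one, and \mycor{iwae_mi_short} gives the $\log K$ bound. Your explicit remark distinguishing the parametric maximizer $T_{\phi^*}$ from the unconstrained optimum $T^*$ is exactly the point the paper's proof also relies on, just stated more carefully.
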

See \myapp{pf_iwaegiwae}-\ref{app:pf_giwae} for proofs. 
Note that \myprop{iwae_elbo_eubo}, which relates the \gls{IWAE} and \gls{BA} lower bounds, can be seen as a special case of \myprop{iwae_giwae}, since $I_{\gls{BA}_L}(q_\theta)$ is a special case of \gls{GIWAE}. 
\mycor{giwae} shows that when the full joint density $p(\vx,\vz)$ is available, \gls{IWAE} is always preferable to \gls{GIWAE}.
 Finally, \cref{eq:giwae_lb_mi_main} suggests a similar decomposition of $I_{\textsc{IWAE}_L}(q_\theta,K)$ into a \gls{BA} term and a contrastive term (see \myapp{iwae_log_improvement})
\small
\begin{align}%
I_{\textsc{IWAE}_{L}}(q_{\theta},K)=
\underbrace{\Exp{p(\vx,\vz)}{\log \frac{\qzx}{p(\vz)}}
\vphantom{\Exp{p(\vx) p(\vz^{(1)}|\vx)\qzxi{2:K}}{\log
\frac{\frac{p(\vx,\vz^{(1)})}{q_{\theta}(\vz^{(1)}|\vx)}}{\frac{1}{K} \sum_{i=1}^K \frac{p(\vx,\vz^{(k)})}{q_{\theta}(\vz^{(k)}|\vx)}}}}
}_{I_\textsc{BA}(q)} +
\underbrace{\Exp{p(\vx) p(\vz^{(1)}|\vx)\qzxi{2:K}}{\log
\frac{\frac{p(\vx,\vz^{(1)})}{q_{\theta}(\vz^{(1)}|\vx)}}{\frac{1}{K} \sum_{i=1}^K \frac{p(\vx,\vz^{(k)})}{q_{\theta}(\vz^{(k)}|\vx)}}}}_{\hspace{.5cm} 0\leq \, \text{contrastive term} \, \leq \log K} .
 \label{eq:iwae_two_terms_main}
\end{align}
\normalsize

\paragraph{Relationship with Structured InfoNCE}
\textsc{InfoNCE} and Structured \textsc{InfoNCE} 
are special cases of \gls{GIWAE} and \gls{IWAE}, respectively, which use $p(\vz)$ as the variational distribution.   Since $I_{\gls{BA}_L}(p(\vz))=0$, \mycor{giwae_logk} suggests the following
relationship
\begin{align}
    0 \leq I_{\textsc{InfoNCE}_L}(T_{\phi},K) \leq  I_{\textsc{S-InfoNCE}_L}(K)  \leq \log K \, . 
    \label{eq:infonce_k}
\end{align}
From the \gls{GIWAE} perspective, we can interpret the $\log K$ limitations of (Structured) \textsc{InfoNCE} as arising from 
 the $\log K$ improvement 
results 
for 
$I_{\gls{GIWAE}_L}(q_\theta, \giwaeT, K)$ and $I_{\gls{IWAE}_L}(q_\theta, K)$ in
\mycor{giwae_logk}.
\section{Multi-Sample AIS Bounds for Estimating Mutual Information}\label{sec:ais_estimation}
\gls{AIS} \citep{neal2001annealed} is considered the gold standard for obtaining unbiased and low variance estimates of the partition function, while \gls{BDMC} \citep{grosse2015sandwiching, grosse2016measuring} provides lower and upper bounds on the log partition function using forward and reverse \gls{AIS} chains. 
In this section, we propose various \textit{Multi-Sample} \gls{AIS} bounds, which highlight that extending the state space over multiple samples, as in \gls{IWAE}, is complementary to extending the state space over intermediate distributions as in \gls{AIS}.  
Our approach includes \gls{BDMC} bounds as special cases, and we obtain a novel upper bound which can match or improve upon the performance of \gls{BDMC}.
We derive novel probabilistic interpretations of Multi-Sample \gls{AIS} bounds
which, perhaps surprisingly, suggest that the practical sampling schemes used in \gls{BDMC} do not correspond to the same probabilistic interpretation. 

We present our Multi-Sample \gls{AIS} bounds of $\log \px$ within the context of estimating the generative mutual information \citep{alemi2018gilbo}, as in \mysec{setting}.  {However, our methods are equally applicable for other applications, including evaluating the marginal likelihood \citep{wu2016quantitative, grosse2015sandwiching} or rate-distortion curve \citep{huang2020evaluating} of generative models. }

We first review background on \gls{AIS}, before describing Multi-Sample \gls{AIS} log partition function bounds in Sec. \ref{sec:multi-sample-ais} and \ref{sec:coupled_rev}. 
For all bounds in this section, we assume that \textit{both} the true marginal $p(\vz)$ and conditional  $p(\vx|\vz)$ densities are known.

\subsection{Annealed Importance Sampling Background} \label{sec:ais}
\gls{AIS} \citep{neal2001annealed} constructs a sequence of intermediate distributions $\{ \pi_t(\vz) \}_{t=0}^T$, which bridge between a normalized initial distribution $\pi_0(\vz|\vx)$ and target distribution $\pi_T(\vz|\vx) = p(\vz|\vx)$.   The target has an unnormalized density $\pi_T(\vx,\vz) = p(\vx,\vz)$ and normalizing constant $\mathcal{Z}_T(\vx) = p(\vx)$.  A common choice for intermediate distributions is the geometric path parameterized by $\{ \beta_t \}_{t=0}^T$:
\begin{align}
    \dist_t (\vz|\vx) \coloneqq \frac{{\pi}_0(\vz|\vx)^{1-\beta_t} \, {\pi}_T( \vx,\vz)^{\beta_t}}{\mathcal{Z}_{t}(\vx)}\,,  \quad \text{where} \quad \mathcal{Z}_{t}(\vx) \coloneqq \int {\pi}_0(\vz|\vx)^{1-\beta_t} \, {\pi}_T(\vx,\vz)^{\beta_t} d\vz \,. \label{eq:geopath0}
\end{align}
In the probabilistic interpretation of \gls{AIS}, we consider an extended state space proposal $\qprop{ais}(\vz_{0:T}|\vx)$, obtained by sampling from the initial $\prop(\vz|\vx)$ and constructing transitions 
$\tfwd(\vz_{t}|\vz_{t-1})$ which leave $\dist_{t-1}(\vz|\vx)$ invariant. The target distribution $\ptgt{ais}(\vz_{0:T}|\vx)$ is given by running the reverse transitions $\trev(\vz_{t-1}|\vz_{t})$ starting from a target or posterior sample $\pi_T(\vz|\vx)$, as shown in \myfig{ais},
\small
\begin{align}
    \hspace*{-.2cm} \qprop{ais}(\vz_{0:T}|\vx)  \coloneqq  \prop(\vz_0|\vx) \prod \limits_{t=1}^{T} \tfwd(\vz_{t} | \vz_{t-1}) \,, \quad \, \ptgt{ais}(\vx, \vz_{0:T}) \coloneqq  \pi_T( \vx, \vz_T) \prod \limits_{t=1}^{T} \trev(\vz_{t-1} | \vz_{t}) \,.  \label{eq:ais_fwd_rev}
\end{align}
\normalsize

As in \mysec{general}, 
taking expectations of the log importance weights under the proposal and target yields a lower and upper bound on the log partition function $\log \px$   
\begin{align}
    \myunderbrace{\Exp{\vz_{0:T} \sim \qprop{ais}}{ \log  \frac{\ptgt{ais}(\vx,\vz_{0:T})}{\qprop{ais}(\vz_{0:T}|\vx)} }}{\gls{ELBO}_{\textsc{ais}}(\vx; \pi_0,T)}  \leq \log \px \,   \leq \myunderbrace{\Exp{\vz_{0:T} \sim \ptgt{ais}}{  \log  \frac{\ptgt{ais}(\vx,\vz_{0:T})} {\qprop{ais}(\vz_{0:T}|\vx)}}}{\gls{EUBO}_{\textsc{ais}}(\vx; \pi_0,T)}. \label{eq:elbo_ais}
\end{align} 
These single-chain lower and upper bounds translate to upper and lower bounds on \gls{MI}, $I_{\gls{AIS}_U}(\pi_0,T)$ and $I_{\gls{AIS}_L}(\pi_0,T)$, which were suggested 
for \gls{MI} estimation in the blog post of \citet{sobolevblog}.  
See \myapp{ind-multi-sample-ais} for detailed derivations.

To characterize the bias reduction for \gls{AIS} with increasing $T$, we prove the following proposition.
\begin{restatable}[{Complexity in $T$}]{proposition}{aisprop}
\label{prop:ais} 
Assuming perfect transitions and a 
geometric annealing path with linearly-spaced $\{\beta_t\}_{t=1}^T$, 
the gap of the
\gls{AIS} upper and lower bounds (\cref{eq:elbo_ais})
reduces \textit{linearly} with 
increasing 
$T$,
\small
\begin{align}
\gls{EUBO}_{\textsc{ais}}(\vx; \pi_0,T) - \gls{ELBO}_{\textsc{ais}}(\vx; \pi_0,T) = \frac{1}{T} \Big( \DKL[\pi_T(\vz|\vx)\|\pi_0(\vz|\vx)] + \DKL[\pi_0(\vz|\vx)\|{\pi_T(\vz|\vx)}]  \Big). \label{eq:symm_kl_ineq}  %
\end{align}
\normalsize
\end{restatable}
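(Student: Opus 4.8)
The plan is to compute the gap $\gls{EUBO}_{\textsc{ais}}(\vx;\pi_0,T) - \gls{ELBO}_{\textsc{ais}}(\vx;\pi_0,T)$ directly. By \myeq{elbo_ais}, both bounds are expectations of the \emph{same} log importance weight $\log w \coloneqq \log\big(\ptgt{ais}(\vx,\argsais)/\qprop{ais}(\argsais|\vx)\big)$, taken under $\ptgt{ais}$ for the upper bound and under $\qprop{ais}$ for the lower bound, so the gap equals $\Exp{\ptgt{ais}}{\log w} - \Exp{\qprop{ais}}{\log w}$. Everything then reduces to (i) a clean expression for $\log w$, and (ii) the marginal law of each $\vz_t$ under the two chains.

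First I would derive a telescoping expression for $\log w$. Since $\trev$ in \myeq{ais_fwd_rev} is the detailed-balance reversal of $\tfwd$ with respect to $\pi_{t-1}(\vz|\vx)$, the ratio $\trev(\vz_{t-1}|\vz_t)/\tfwd(\vz_t|\vz_{t-1}) = \pi_{t-1}(\vz_{t-1}|\vx)/\pi_{t-1}(\vz_t|\vx)$ depends only on the \emph{unnormalized} densities $f_t(\vz) \coloneqq \pi_0(\vz|\vx)^{1-\beta_t}\pi_T(\vx,\vz)^{\beta_t}$, as the normalizer $\mathcal{Z}_{t-1}(\vx)$ cancels. Substituting into $w$ and telescoping (using $f_0 = \pi_0(\vz|\vx)$, $f_T = \pi_T(\vx,\vz)$, and $\mathcal{Z}_0(\vx)=1$), the weight collapses to $w = \prod_{t=1}^T f_t(\vz_t)/f_{t-1}(\vz_t)$. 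For the geometric path each factor simplifies, since $\log f_t(\vz) - \log f_{t-1}(\vz) = (\beta_t - \beta_{t-1})\, g(\vz)$ where $g(\vz) \coloneqq \log\big(\pi_T(\vx,\vz)/\pi_0(\vz|\vx)\big) = \log\big(p(\vx,\vz)/\pi_0(\vz|\vx)\big)$, and linear spacing with $\beta_0=0,\beta_T=1$ gives $\beta_t-\beta_{t-1} = 1/T$. Hence $\log w = \frac{1}{T}\sum_{t=1}^T g(\vz_t)$.

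Next I would identify the marginal of each $\vz_t$. Under perfect transitions $\tfwd(\vz_t|\vz_{t-1}) = \pi_{t-1}(\vz_t|\vx)$ independently of $\vz_{t-1}$, and the corresponding reversal is $\trev(\vz_{t-1}|\vz_t) = \pi_{t-1}(\vz_{t-1}|\vx)$. Propagating along the proposal chain (started at $\vz_0\sim\pi_0$) then gives $\vz_t \sim \pi_{t-1}(\vz|\vx)$ for $t\ge 1$, whereas propagating along the target chain (started at $\vz_T\sim\pi_T$) gives $\vz_t\sim\pi_t(\vz|\vx)$ for all $t$. This one-step index shift is exactly what makes the gap telescope:
\[
\gls{EUBO}_{\textsc{ais}} - \gls{ELBO}_{\textsc{ais}} = \frac{1}{T}\sum_{t=1}^T \Big( \Exp{\pi_t(\vz|\vx)}{g(\vz)} - \Exp{\pi_{t-1}(\vz|\vx)}{g(\vz)} \Big) = \frac{1}{T}\Big( \Exp{\pi_T(\vz|\vx)}{g(\vz)} - \Exp{\pi_0(\vz|\vx)}{g(\vz)} \Big).
\]
Finally, writing $g(\vz) = \log\big(p(\vz|\vx)/\pi_0(\vz|\vx)\big) + \log p(\vx)$ and using $\pi_T(\vz|\vx)=p(\vz|\vx)$, the two surviving expectations evaluate to $\Exp{\pi_T(\vz|\vx)}{g(\vz)} = \DKL[\pi_T(\vz|\vx)\|\pi_0(\vz|\vx)] + \log \px$ and $\Exp{\pi_0(\vz|\vx)}{g(\vz)} = -\DKL[\pi_0(\vz|\vx)\|\pi_T(\vz|\vx)] + \log \px$; the $\log\px$ terms cancel, leaving the claimed symmetric \textsc{kl}.

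The main obstacle I anticipate is bookkeeping rather than conceptual: getting the telescoping of $w$ right and, above all, correctly tracking the off-by-one marginal indexing ($\vz_t\sim\pi_{t-1}$ under the proposal versus $\vz_t\sim\pi_t$ under the target). It is precisely this shift, combined with the constant increment $\beta_t-\beta_{t-1}=1/T$ from linear spacing, that converts the per-step differences into a clean telescoping sum and produces the overall $1/T$ factor. I would also state explicitly that it is the perfect-transition assumption that lets these intermediate marginals be read off in closed form; without it the marginals would not equal the $\pi_t$, and only inequalities rather than the exact identity would survive.
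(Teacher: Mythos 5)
Your proposal is correct and follows essentially the same route as the paper's proof: collapse the log importance weight to $\frac{1}{T}\sum_{t=1}^T \log\big(\pi_T(\vx,\vz_t)/\pi_0(\vz_t|\vx)\big)$ via the geometric path with linear spacing, use perfect transitions to read off the marginals ($\vz_t\sim\pi_{t-1}$ under the proposal, $\vz_t\sim\pi_t$ under the target), telescope the resulting sum of expectations, and evaluate the two surviving endpoint terms as the symmetrized \textsc{kl} with the $\log p(\vx)$ terms cancelling. The only presentational difference is that you make the detailed-balance cancellation of the transition kernels explicit, a step the paper performs implicitly when passing to its telescoped form.
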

See \myapp{ais_pf} for a proof.
Our proposition generalizes Thm. 1 of \citet{grosse2013annealing}, which holds for the case of $T \rightarrow \infty$ instead of finite $T$ as above.  
In our experiments in \mysec{experiment}, we will find that this linear bias reduction in $T$ is crucial for achieving tight \gls{MI} estimation when both $p(\vz)$ and $p(\vx|\vz)$ are known. 
We can further tighten the single-sample \gls{AIS} bounds with multiple 
annealing chains ($K > 1$) using two different approaches, which we present in the following sections.

\begin{figure}[t]
\vspace*{-1cm}
\centering
\includegraphics[scale=0.48]{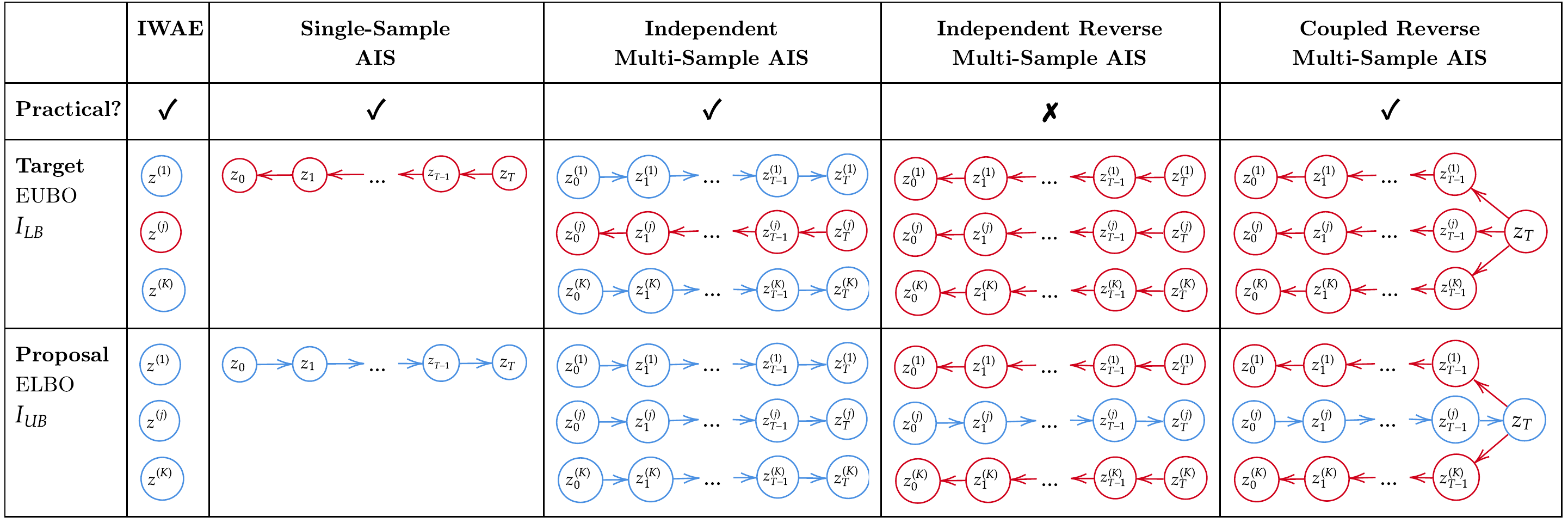}
\vspace{-.2cm}
\caption{\label{fig:ais} Extended state-space probabilistic interpretations of Multi-Sample \gls{AIS} bounds. Forward chains are colored in blue, and backward chains are colored in red. Note that \gls{ELBO}s and \gls{EUBO}s are obtained by taking the expectation of the 
log unnormalized importance weights $\log \ptgt{}(\cdot)/\qprop{}(\cdot)$ 
under either the proposal or target distribution, and can then be translated to \gls{MI} bounds.}
\end{figure}

\subsection{Independent Multi-Sample AIS Bounds}\label{sec:multi-sample-ais}
In our first approach, \textit{Independent} Multi-Sample \gls{AIS} (\textsc{im-ais}), we construct an extended state space proposal by running $K$ independent \gls{AIS} forward chains $\vz_{0:T}^{(k)} \sim \qprop{ais}$ in parallel.  Similarly to the \gls{IWAE} upper bound (\myeq{iwae_ublb}), the extended state space target involves selecting an index $s$ uniformly at random, and running a backward \gls{AIS} chain $\vz_{0:T}^{(s)} \sim \ptgt{ais}$ starting from a true posterior sample $\vz_T \sim p(\vz|\vx)$.  The remaining $K-1$ samples are obtained by running forward \gls{AIS} chains, as visualized in \myfig{ais}
\footnotesize
\begin{align}
        \hspace*{-.2cm} \qprop{im-ais}(\vz^{(1:K)}_{0:T}| \vx) \coloneqq  \prod \limits_{k=1}^{K} \qprop{ais}(\vz^{(k)}_{0:T}|\vx)\,, %
    \quad  \ptgt{im-ais}(\vx, \vz^{(1:K)}_{0:T}) \coloneqq  \dfrac{1}{K} \sum \limits_{s=1}^K  \ptgt{ais}(\vx, \vz^{(s)}_{0:T}) \prod \limits_{\myoverset{k=1}{k\neq s}}^{K}  \qprop{ais}(\vz^{(k)}_{0:T}| \vx)\,, 
\end{align}
\normalsize
where  $\qprop{ais}$ and $\ptgt{ais}$ were defined in \myeq{ais_fwd_rev}.
Note that sampling from the extended state space target distribution is \textit{practical}, as it only requires one sample from the true posterior distribution.

Taking the expectation of the log unnormalized density ratio under the proposal and target yields lower and upper bounds on $\log \px$, respectively,
\small
\begin{align}
      \myunderbrace{\mathbb{E}_{\myoverset{\vz^{(1:K)}_{0:T} \sim \qprop{ais}}{\phantom{\vz^{(1:K)}_{0:T} \sim \qprop{ais}}} } \bigg[ \log  \frac{1}{K} \sum \limits_{k =1}^{K}  \frac{\ptgt{ais}(\vx,\vz^{(k)}_{0:T})}{\qprop{ais}(\vz^{(k)}_{0:T}|\vx)} \bigg] }{ \gls{ELBO}_{\textsc{im-ais}}(\vx;\pi_0,K,T)}  \leq  \log p(\vx)\leq
 \myunderbrace{ \mathbb{E}_{\myoverset{\vz^{(1)}_{0:T} \sim \ptgt{ais}}{\vz^{(2:K)}_{0:T} \sim \qprop{ais} }} \bigg[ \log  \frac{1}{K} \sum \limits_{k=1}^{K}    \frac{\ptgt{ais}(\vx,\vz^{(k)}_{0:T})}{\qprop{ais}(\vz^{(k)}_{0:T}|\vx)} \bigg]}{\gls{EUBO}_{\textsc{im-ais}}(\vx;\pi_0,K,T)}. \label{eq:im_ais_ublb}
\end{align}
\normalsize
which again have \textsc{kl} divergences as the gap in their bounds.  
Independent Multi-Sample \gls{AIS} reduces to \gls{IWAE} for $T=1$, and reduces to single-sample \gls{AIS} for $K=1$.  Both the upper and lower bounds become tight as $K\rightarrow \infty$ or $T \rightarrow \infty$.
The \textsc{im-ais} upper and lower bounds on $\log p(\vx)$ translate to lower and upper bounds on \gls{MI}, respectively, as in \mysec{setting}.

Since Independent Multi-Sample \gls{AIS} can be interpreted as \gls{IWAE} with an \gls{AIS} proposal, we can use similar arguments as \citep{burda2016importance, sobolev2019hierarchical} to show that the multi-sample bounds are tighter than their single-sample counterparts.
In \myapp{ais_logarithmic_pf}, we show a similar result to \myprop{iwae_elbo_eubo}, which characterizes the improvement of Independent Multi-Sample \gls{AIS} with increasing $K$. In particular, the lower bound on \gls{MI} is limited to logarithmic improvement over single-sample \gls{AIS}, with $I_{\textsc{im-ais}_L}(\pi_0, K, T) \leq  I_{\textsc{ais}_L}(\pi_0, T) + \log K$.

\subsection{Coupled Reverse Multi-Sample AIS Bounds}\label{sec:coupled_rev}
 We can exchange the role of the forward and backward annealing chains in Independent Multi-Sample \gls{AIS} to obtain alternative bounds on the log partition function.  
We define \emph{Independent Reverse Multi-Sample \gls{AIS}} (\textsc{ir-ais}) using the following proposal and target distribution, as shown in \myfig{ais}
\small
 \begin{align*}
    \qprop{ir-ais}(\vx, \vz^{(1:K)}_{0:T}) &\coloneqq   
    \frac{1}{K} \sum \limits_{s=1}^K \qprop{ais}(\vz^{(s)}_{0:T}|\vx) \prod \limits_{\myoverset{k=1}{k\neq \idx}}^{K}  \ptgt{ais}( \vx, \vz^{(k)}_{0:T}) \,,
    \qquad \, \, \ptgt{ir-ais}(\vx,\vz^{(1:K)}_{0:T}) \coloneqq 
     \prod \limits_{k=1}^{K}  \ptgt{ais}(\vx, \vz^{(k)}_{0:T}) \,.
\end{align*}
\normalsize
We use a similar approach as in \mysec{general} to 
derive lower and upper bounds on $\log p(\vx)$ and \gls{MI} from these proposal and target distributions in \myapp{reverse_ms_ais}.    
Just as Independent Multi-Sample \gls{AIS} extends \gls{IWAE} using an \gls{AIS} proposal, we can view Independent Reverse Multi-Sample \gls{AIS} as an extension of \textit{Reverse} \gls{IWAE}, which we introduce in \myapp{rev_iwae}. 
However, these bounds will be impractical in most settings since both the proposal and target  distributions require multiple true posterior samples.  %

To address this impracticality, we propose \textit{Coupled} Reverse Multi-Sample \gls{AIS} (\textsc{cr-ais}), which uses backward annealing chains but requires only a \textit{single} posterior sample.   In particular, the extended state space target distribution in \cref{fig:ais} initializes $K$ backward chains from a single $\vz_T \sim \pi_T(\vz|\vx)$, with  
remaining transitions denoted by $\ptgt{ais}(\vz_{0:T-1}|\vz_T)$, since they are identical to standard \gls{AIS},
\begin{align}
     \ptgt{cr-ais}(\vx, \vz^{(1:K)}_{0:T-1}, \vz_T) 
     &\coloneqq 
     \pi_T(\vx, \vz_T) \prod \limits_{k=1}^{K}  \ptgt{ais}(\vz^{(k)}_{0:T-1}| \vz_T, \vx)\,.
\end{align}
 The extended state space proposal is obtained by selecting an index $s$ uniformly at random and running a single forward \gls{AIS} chain.   We then run $K-1$ backward chains, all starting from the last state of the selected forward chain, as visualized in \myfig{ais},
\begin{align}
    \qprop{cr-ais}(\vz^{(1:K)}_{0:T-1},\vz_T|\vx) 
    &\coloneqq 
    \frac{1}{K} \sum \limits_{s=1}^K \qprop{ais}(\vz^{(s)}_{0:T-1}, \vz_{T}|\vx) \prod \limits_
    {\myoverset{k=1}{k\neq s}}^{K}  
    \ptgt{ais}( \vz^{(k)}_{0:T-1} | \vz_T, \vx) \,.
\end{align}
Taking the expected log ratio under the proposal and target yields lower and upper bounds on $\log p(\vx)$,
\scriptsize
\begin{align*}
   \myunderbrace{- \mathbb{E}_{\myoverset{\vz^{(1)}_{0:T-1}, \vz_T \sim \qprop{ais}(\vz_{0:T}|\vx)}{\vz^{(2:K)}_{0:T-1} \sim \ptgt{ais}(\vz_{0:T-1}|\vz_T, \vx)}} \bigg[ \log  \frac{1}{K} \sum \limits_{k =1}^{K}  \frac{\qprop{ais}(\cdot)%
   }{\ptgt{ais}(\cdot)
   } \bigg]}{\gls{ELBO}_{\textsc{cr-ais}}(\vx; \pi_0,K,T)} %
   \leq  \log p(\vx)  \leq
     \myunderbrace{-\mathbb{E}_{\myoverset{ \vphantom{\qprop{ais}(\vz_{0:T-1}^{(1)})} \vz_{T} \sim \pi_T(\vz_T|\vx)}{\vz^{(1:K)}_{0:T-1} \sim \ptgt{ais}(\vz_{0:T-1}|\vz_T, \vx)}}
     \bigg[ \log  \frac{1}{K} \sum \limits_{k=1}^{K}  \frac{\qprop{ais}(\cdot)%
     }{\ptgt{ais}(\cdot)%
     } \bigg]}{\gls{EUBO}_{\textsc{cr-ais}}(\vx;\pi_0,K,T)}.
\end{align*}
\normalsize
We show in \myapp{coupled_k_pf} that Coupled Reverse Multi-Sample \gls{AIS} bounds get tighter with increasing $K$.   In \myapp{coupled_pf}, we also show that
the Coupled Reverse Multi-Sample \gls{AIS} \textit{upper} bound on \gls{MI} is limited to logarithmic improvement over single-sample \gls{AIS}, with $I_{\textsc{cr-ais}_U}(\pi_0, K, T) \geq  I_{\textsc{ais}_U}(\pi_0, T) - \log K$.

\subsection{Discussion}\label{sec:discussion}

\paragraph{Relationship with BDMC}

Bidirectional Monte Carlo (\gls{BDMC}) \citep{grosse2015sandwiching, grosse2016measuring} was the first to propose multi-sample log partition function bounds using \gls{AIS} chains.   In particular, the
multi-sample \gls{BDMC} lower and upper bounds \citep{grosse2015sandwiching, grosse2016measuring} on the log partition function correspond to the lower bound of Independent Multi-Sample \gls{AIS} and upper bound of Coupled Reverse Multi-Sample \gls{AIS}, respectively. 
Our probabilistic interpretations provide novel perspective on these existing \gls{BDMC} bounds. 
Perhaps surprisingly, we found that the multi-sample \gls{BDMC} lower bound (\myfig{ais} Col. 4,  Row 4) and upper bound  (\myfig{ais} Col. 6, Row 3) do not arise from the same probabilistic interpretation (i.e. are not in the same row in \myfig{ais}).   In other words, the gap in their log partition function bounds do not correspond to the forward and reverse \textsc{kl} divergences between the same pair of extended state space proposal and target distributions.
We experimentally compare all Multi-Sample \gls{AIS} bounds, including \gls{BDMC}, in \mysec{experiment_multi_comparison} and provide recommendations for which bounds to use in practice.
\paragraph{Effect of $K$ and $T$} 
Our Independent and Coupled Reverse Multi-Sample \gls{AIS} approaches both inherit the linear bias reduction of single-sample \gls{AIS} in \myprop{ais}, 
although this computation must be done in serial fashion.  %
While increasing $K$ involves parallel computation, its bias reduction is often only \textit{logarithmic}, as we have shown for $I_{\gls{IWAE}_L}(q_\theta, K)$ (\mycor{iwae_mi_short}), $I_{\textsc{im-ais}_L}(\pi_0, K, T)$ (\myapp{ais_logarithmic_pf} \myprop{ais_elbo_eubo}), and $I_{\textsc{cr-ais}_U}(\pi_0, K, T)$ (\myapp{coupled_pf} 
\myprop{c_r_multi_sample_ais}).
Based on these arguments, we recommend increasing $K$ until computation can no longer be parallelized on given hardware and allocating all remaining resources to increasing $T$.

\section{MINE-AIS Estimation of Mutual Information}\label{sec:mine-ais}

In this section, we present 
energy-based bounds
which are designed for settings where the true conditional density $p(\vx|\vz)$ is unknown.    We first review \gls{MINE} \citep{belghazi2018mutual} in \mysec{gmine}, and present probabilistic interpretations which allow us to derive \textit{Generalized} \gls{MINE} lower bounds with a variational $\qzx$ as the base distribution instead of the marginal $p(\vz)$.  
Our main contribution in this section is the \textsc{mine-ais} method, which optimizes a tighter lower bound on \gls{MI} than the Generalized \gls{MINE} lower bound and
extends Multi-Sample \gls{AIS} evaluation to the case where $p(\vx|\vz)$ is unknown.

\subsection{Generalized Mutual Information Neural Estimation}\label{sec:gmine}
To derive a probabilistic interpretation for \gls{MINE} \citep{belghazi2018mutual}, consider an energy based approximation to the joint distribution $p(\vx,\vz)$.  {To extend \gls{MINE}, we consider a general base variational distribution $\qzx$ in place of the marginal $p(\vz)$} 
\begin{align}
    \pi_{\theta,\phi}(\vx,\vz) \coloneqq 
    \frac{1}{\mathcal{Z}_{\theta,\phi}} p(\vx) \qzx e^{\giwaeT(\vx,\vz)}, \quad \text{where} \quad \mathcal{Z}_{\theta,\phi} = \mathbb{E}_{p(\vx) \qzx} \left[ e^{\giwaeT(\vx, \vz)} \right] \,,  \label{eq:energy_pi_gmine}
\end{align}
\normalsize
where $\giwaeT(\vx,\vz)$ is the negative energy function and the partition function $\mathcal{Z}_{\theta,\phi}$ integrates over both $\vx$ and $\vz$.  
Subtracting a joint \kl divergence $\DKL[p(\vx,\vz) \| \pi_{\theta, \minetparam}(\vx,\vz)]$ from $\Ixz$, we obtain the \textit{Generalized} \textsc{mine-dv} lower bound on \gls{MI}
\begin{align}
     \hspace*{-.25cm}  
   \resizebox{.9\textwidth}{!}{$
     \Ixz \geq I_{\textsc{gmine-dv}}(q_{\theta}, T_\phi) \coloneqq
     \myunderbrace{ \Exp{p(\vx,\vz)}{\log \frac{\qzx}{p(\vz)}} }{\text{\small $I_{\textsc{BA}_L}(q_{\theta})$}} + \myunderbrace{ \Exp{p(\vx,\vz)}{\giwaeT(\vx,\vz)} - \log \Exp{p(\vx)\qzx}{ e^{T_{\minetparam}(\vx, \vz)} } \vphantom{\frac{\qzx}{p(\vz)}}}
     { \text{\small contrastive term}}.
    $}
     \label{eq:gen_minedv_main}
\end{align}
\normalsize
Note that the standard \textsc{mine-dv} lower bound \citep{belghazi2018mutual} corresponds to Generalized \textsc{mine-dv} using the prior $p(\vz)$ as the proposal, $I_{\textsc{mine-dv}}(T_\phi)  = I_{\textsc{gmine-dv}}(p(\vz), T_\phi)$.  
In \myapp{mine_dv_dual}, we interpret the contrastive term in \cref{eq:gen_minedv_main} 
as arising from 
the dual representation of the \textsc{kl} divergence $\DKL[p(\vx,\vz)\|p(\vx)\qzx]$.

The Generalized \textsc{mine-f} bound 
is a looser lower bound than Generalized \textsc{mine-dv}, which
can be derived from \cref{eq:gen_minedv_main} using the inequality $\log u \leq \frac{u}{e}$,
\begin{align}
  \resizebox{.99\textwidth}{!}{
  $
    I_{\textsc{gmine-dv}}(q_\theta, \giwaeT) \geq I_{\textsc{gmine-f}}(q_\theta, \giwaeT) \coloneqq      \Exp{p(\vx,\vz)}{\log \frac{\qzx}{p(\vz)}} + \Exp{p(\vx,\vz)}{\giwaeT(\vx,\vz)} -  \Exp{p(\vx)\qzx}{e^{\giwaeT(\vx, \vz)-1}} .  %
    $
    }\nonumber
\end{align}
\normalsize
Generalized \textsc{mine-f} reduces to standard \textsc{mine-f} when the base distribution is the prior, with $I_{\textsc{mine-f}}(T_\phi) = I_{\textsc{gmine-f}}(p(\vz), T_\phi)$.
 We provide a probabilistic interpretation 
 of Generalized \textsc{mine-f} 
 in \myapp{mine_f_prob}, and a conjugate duality interpretation in 
\myapp{mine_f_dual}.

Despite the intractability of the log partition function term $\log \mathcal{Z}_{\theta,\phi}$ in \cref{eq:gen_minedv_main},  \citet{belghazi2018mutual} use direct sampling inside the logarithm for training $\giwaeT$.
Our \textsc{mine-ais} method will both improve the training and evaluation schemes of (Generalized) \textsc{mine} and optimize a tighter lower bound. 

\subsection{MINE-AIS Estimation of Mutual Information}\label{sec:mine-ais-sub}

In this section, we present \textsc{mine-ais}, which is inspired by Generalized \gls{MINE} but optimizes a tighter lower bound on \gls{MI} that involves an intractable log partition function.
We show in \myprop{mine-ais-limit} that this bound corresponds to the limiting behavior of the \gls{GIWAE} lower bound as $K\rightarrow \infty$.   However, we present a qualitatively different training scheme inspired by contrastive divergence learning of energy-based models \citep{hinton2002training} in \mysec{mine_ais_training}.
In \mysec{mine_ais_eval}, we use Multi-Sample \gls{AIS} to evaluate the intractable log partition function, and show that \textsc{mine-ais} reduces to the methods in \mysec{ais_estimation} for the optimal critic function or known $p(\vx|\vz)$.

To begin, we 
consider a 
flexible energy-based distribution $\pi_{\theta,\phi}(\vz|\vx)$ as 
an approximation to the posterior $\pzx$
\citep{poole2019variational, arbel2020generalized}
\begin{align}
    \pi_{\theta,\phi}(\vz|\vx) \coloneqq 
    \frac{1}{\mathcal{Z}_{\theta,\phi}(\vx)} \qzx e^{\giwaeT(\vx,\vz)}, \quad \text{where} \quad \mathcal{Z}_{\theta,\phi}(\vx) = \mathbb{E}_{\qzx} \left[ e^{\giwaeT(\vx, \vz)} \right]  , \label{eq:energy_pi}
\end{align}
where $\qzx$ is a base variational distribution which can be tractably sampled from and evaluated and $\giwaeT(\vx,\vz)$ is the negative energy or critic function.

Plugging $\pi_{\theta,\phi}(\vz|\vx)$ into the \gls{BA} lower bound, we denote the resulting bound as the \gls{IBAL}.   We use the term ``implicit'', since it is often difficult to explicitly evaluate the \gls{IBAL} due to the intractable log partition function term.
After simplifying in \myapp{mine-ais_deriv}, 
\small
\begin{align}
   \hspace*{-.15cm} \Ixz \geq \ibal(q_\theta,T_\phi) \coloneqq I_{\gls{BA}_L}(\pi_{\theta, \phi}) 
   &= \Exp{\pxandz}{\log \frac{\qzx}{p(\vz)}} + \Exp{\pxandz}{\Txz} - \Exp{\px}{\log \mathcal{Z}_{\phi, \theta}(\vx)}  \nonumber \\ %
    &= \myunderbrace{\Exp{p(\vx,\vz)}{\log \frac{\qzx}{p(\vz)}} \vphantom{{\log \frac{e^{\giwaeT(\vx,\vz)}}{\Exp{\qzx}{e^{\giwaeT(\vx,\vz)}}}}}}{I_{\gls{BA}_L}(q_\theta)} + \myunderbrace{\Exp{p(\vx,\vz)}{\log \frac{e^{\giwaeT(\vx,\vz)}}{\Exp{\qzx}{e^{\giwaeT(\vx,\vz)}}}} \vphantom{{\log \frac{e^{\giwaeT(\vx,\vz)}}{\Exp{\qzx}{e^{\giwaeT(\vx,\vz)}}}}}}{ \mathclap{\text{ \scriptsize contrastive term $\leq \Exp{\px}{\DKL[p(\vz|\vx)\|\qzx]}$}  }}, \label{eq:mine-ais}
\end{align}
\normalsize
where the gap of the \gls{IBAL} is 
$\mathbb{E}_{\px}[\KL{\pzx)}{\pi_{\theta,\phi}(\vz|\vx)}]$. 
Note that the \gls{IBAL} generalizes the Unnormalized Barber-Agakov bound from \citet{poole2019variational}, with $I_{\textsc{uba}}(\giwaeT) = \gls{IBAL}(p(\vz), \giwaeT)$.

\begin{restatable}{proposition}{mineaisopt}
\label{prop:mine-ais-optimal}
For a given $\qzx$, the optimal \gls{IBAL} critic function equals the log importance weights up to a
constant $T^*(\vx,\vz) = \log \frac{p(\vx,\vz)}{\qzx} + c(\vx)$.   
For this $T^*$, we have $\ibal(q_\theta,T^{*}) = \Ixz$.
\end{restatable}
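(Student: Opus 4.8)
The plan is to leverage the fact that, by definition, $\ibal(q_\theta,T_\phi) = I_{\gls{BA}_L}(\pi_{\theta,\phi})$ is simply the Barber--Agakov lower bound evaluated at the energy-based posterior approximation $\pi_{\theta,\phi}(\vz|\vx)$ of \cref{eq:energy_pi}. The excerpt already records that the gap of this bound is $\mathbb{E}_{\px}\big[\KL{\pzx}{\pi_{\theta,\phi}(\vz|\vx)}\big]$, which is nonnegative and vanishes if and only if $\pi_{\theta,\phi}(\vz|\vx) = \pzx$ for $\px$-almost every $\vx$. Consequently $\ibal(q_\theta,T_\phi) \leq \Ixz$ for every critic $T_\phi$, and maximizing the $\ibal$ over critic functions is exactly the problem of choosing $T_\phi$ so that $\pi_{\theta,\phi}(\vz|\vx)$ coincides with the true posterior.

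First I would solve this matching condition for $T$. Writing out $\pi_{\theta,\phi}(\vz|\vx) = \qzx\,e^{\Txz}/\mathcal{Z}_{\theta,\phi}(\vx)$ and setting it equal to $\pzx = \pxandz/\px$, then taking logarithms and rearranging, yields $\Txz = \log\frac{\pxandz}{\qzx} + \log\frac{\mathcal{Z}_{\theta,\phi}(\vx)}{\px}$. Since the final term does not depend on $\vz$, this is precisely the claimed form $T^*(\vx,\vz) = \log\frac{\pxandz}{\qzx} + c(\vx)$ with $c(\vx)$ an arbitrary function of $\vx$.

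To confirm sufficiency, I would substitute this $T^*$ back into \cref{eq:energy_pi}. Then $\qzx\,e^{T^*(\vx,\vz)} = e^{c(\vx)}\,\pxandz$, so the partition function is $\mathcal{Z}_{\theta,\phi}(\vx) = e^{c(\vx)}\int \pxandz\,d\vz = e^{c(\vx)}\px$; the factor $e^{c(\vx)}$ cancels, giving $\pi_{\theta,\phi}(\vz|\vx) = \pzx$. Hence the gap KL divergence vanishes and $\ibal(q_\theta,T^*) = \Ixz$, as claimed.

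The one step requiring care — more conceptual than technical — is the role of $c(\vx)$, which is also where the main obstacle lies. Because $\pi_{\theta,\phi}$ depends on $T$ only through the $\vz$-normalized exponential, any additive term that is constant in $\vz$ leaves $\pi_{\theta,\phi}(\vz|\vx)$, and therefore the $\ibal$, unchanged; this is exactly why the optimum is pinned down only up to $c(\vx)$. It also resolves the apparent circularity whereby $c(\vx) = \log\big(\mathcal{Z}_{\theta,\phi}(\vx)/\px\big)$ seems to reference $\mathcal{Z}_{\theta,\phi}$, which itself depends on $T$: the direct verification in the previous paragraph shows that any choice of $c(\vx)$ is self-consistent and achieves the bound.
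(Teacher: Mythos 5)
Your proof is correct and follows essentially the same route as the paper's: both identify the gap of the \gls{IBAL} as $\mathbb{E}_{\px}\big[\KL{\pzx}{\pi_{\theta,\phi}(\vz|\vx)}\big]$, reduce optimality to the matching condition $\pi_{\theta,\phi}(\vz|\vx) = \pzx$, and verify that $T^*(\vx,\vz) = \log\frac{\pxandz}{\qzx} + c(\vx)$ achieves it with the $e^{c(\vx)}$ factor absorbed into the normalizer. Your explicit discussion of the $c(\vx)$ degree of freedom and the apparent circularity is a nice clarification, but the underlying argument is the same.
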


\paragraph{Relationship with MINE} $\gls{IBAL}(q_{\theta}, \giwaeT)$ provides a tighter lower bound than Generalized \gls{MINE}, as we discuss in detail in \myapp{mine} and summarize in 
\myfig{gmine_bounds}.
In particular, we have
\begin{align}
    \Ixz \geq \gls{IBAL}(q_{\theta}, \giwaeT) \geq I_{\textsc{gmine-dv}}(q_{\theta}, T_\phi) \geq I_{\textsc{gmine-f}}(q_{\theta}, T_\phi) .
\end{align}
Our \textsc{mine-ais} method will optimize and evaluate the intractable $\gls{IBAL}$ directly.

\paragraph{Relationship with GIWAE} 
The \gls{IBAL} lower bound resembles the \gls{GIWAE} lower bound in that both improve upon the \gls{BA} variational bound using a contrastive term.   Further, \myprop{mine-ais-optimal} and \mycor{giwae} show that, in both cases, the optimal critic function equals the true log importance weights plus a constant.
The following proposition shows that $\IBAL(q_\theta, \giwaeT)$
can be viewed the limiting behavior of the \gls{GIWAE} lower bound as $K\rightarrow \infty$.
\begin{restatable}[\gls{IBAL} as Limiting Behavior of \gls{GIWAE}]{proposition}{mineaislimit}
\label{prop:mine-ais-limit}
For given $\qzx$ and $T_{\phi}(\vx,\vz)$, we have
\begin{align}
    \lim \limits_{K\rightarrow \infty} I_{\textsc{GIWAE}_L}( q_{\theta}, T_{\phi}, K) =  \ibal(q_\theta,T_{\phi}) \, .
\end{align}
\end{restatable}
See \myapp{mine-ais-limit-pf} for proof.
To gain intuition for \myprop{mine-ais-limit}, we consider a closely related result involving the \textit{marginal \textsc{\textit{SNIS}} distribution} of \gls{GIWAE}  $\qprop{giwae}(\vz|\vx; K)$, which results from 
sampling $K$ times from $\qzx$ and returning the sample in index $s \sim \qprop{giwae}(s|\vx, \vz^{(1:K)}) \propto e^{\giwaeT(\vx,\vz^{(s)})}$.
In \myapp{convergence}, we show that in the limit as $K \rightarrow \infty$,
the marginal \gls{SNIS} distribution of \gls{GIWAE}
matches the \textsc{mine-ais} energy-based posterior $\pi_{\theta,\phi}(\vz|\vx) \propto \qzx e^{\giwaeT(\vx,\vz)}$.  
In both cases, the energy or critic function `modulates' the base distribution $\qzx$ to better approximate the true posterior.   

The contrastive term in the \gls{GIWAE} bound is tractable, as it only requires a single posterior sample and $K-1$ samples from the base distribution.
As shown in \mycor{giwae}, this limits the improvement of $I_{\gls{GIWAE}_L}(q_{\theta}, T)$  over $I_{\gls{BA}_L}(q_{\theta})$ to $\log K$ nats, even for the optimal critic function.
By contrast, in the following proposition we show that the contrastive term in $\ibal(q_{\theta}, \giwaeT)$ 
is expressive enough to potentially close the gap in the \gls{BA} bound.   This improved contrastive term comes at the cost of tractability, as $\gls{IBAL}(q_\theta, \giwaeT)$ involves an intractable log partition function $\log \mathbb{E}_{\qzx}[e^{\giwaeT(\vx,\vz)}]$. 
\begin{restatable}{proposition}{mineaisproperties}
\label{prop:mineaisproperties}
Suppose the critic function $T_{\phi}(\vx,\vz)$ is parameterized by $\phi$, and that $\exists \, \phi_0 \, \, s.t. \,    T_{\phi_0}(\vx,\vz) = \text{const}$. For a given $\qzx$,
let $T_{\phi^*}(\vx,\vz)$ denote the critic function that maximizes $\ibal(q_\theta,T_{\phi})$. %
Then,
\begin{align}
    I_{\textsc{BA}_L}(q_{\theta})  \leq \ibal(q_\theta,T_{\phi^*}) \leq \Ixz =  I_{\textsc{BA}_L}(q_{\theta})  + \Exp{\px}{\DKL[p(\vz|\vx)\|\qzx]}.
\end{align}
In particular, the contrastive term in \myeq{mine-ais} is upper bounded by $\Exp{\px}{\DKL[p(\vz|\vx)\|\qzx]}$.  
\end{restatable}
See  \myapp{mine-ais-properties} for proof.
In the following sections, we present our \textsc{mine-ais} method, which overcomes the intractability of $\gls{IBAL}(q_\theta, \giwaeT)$ using an \gls{MCMC}-based training scheme and \gls{AIS}-based evaluation.

\subsubsection{Energy-Based Training of the IBAL}\label{sec:mine_ais_training}
Although the log partition function $\logZmine$ in the \gls{IBAL} is intractable to evaluate,
we only require an unbiased estimator of its gradient for training.   Differentiating \cref{eq:mine-ais} with respect to the parameters $\theta$ and $\phi$ of the variational and energy function, respectively, we obtain 
\begin{align}
\fpartialf{\theta} \ibal(q_\theta,T_\phi) &= \Exp{\pxandz}{\fpartialf{\theta} \log \qzx} - \Exp{\px \pi_{\theta,\phi}(\vz|\vx) }{\fpartial \log \qzx}.  \label{eq:mine-asi-derv}\\
\fpartialf{\phi}  \ibal(q_\theta,T_\phi) &=  \Exp{\pxandz}{\fpartialf{\phi} \Txz} - \Exp{\px \pi_{\theta,\phi}(\vz|\vx)}{\fpartialf{\phi} \Txz}.  
\label{eq:mine-asi-derv2}
\end{align}

\myeq{mine-asi-derv} indicates that in order to maximize the \gls{IBAL} as a function of $\theta$ and $\phi$,
we need to increase the value of the energy function $\giwaeT$ or score function $\log \qzx$ on the \emph{real} samples of the true joint $\pxandz$ and decrease the value on \emph{fake} samples from the approximate $\px \pi_{\theta,\phi}(\vz|\vx)$.   However, 
as is common in training energy-based models, it is difficult to draw samples from $\pi_{\theta,\phi}$. 

In order to sample from $\pi_{\theta,\phi}(\vz|\vx)$, a natural approach is to initialize \gls{MCMC} chains from a sample of the base distribution $\qzx$, using \gls{HMC} transition kernels~\citep{neal2011mcmc}, for example.   However, we may require infeasibly long \gls{MCMC} chains when the base distribution is far from desired energy-based model $\pi_{\theta,\phi}(\vz|\vx)$.
Instead, we can choose to initialize the chains from the true posterior sample $\vz \sim p(\vz|\vx)$ for a simulated data point $\vx \sim p(\vx)$.
Letting $\mathcal{T}_{1:M}$ indicate the composition of $M$ transition steps, the approximate energy function gradient becomes
\begin{align}
\fpartialf{\phi}  \ibal(q_\theta,T_\phi) &\approx \Exp{\pxandz}{\fpartialf{\phi} \Txz} - \Exp{p(\vx,\vz_0) \mathcal{T}_{1:M}(\vz|\vz_0,\vx)}{\fpartialf{\phi} T_\phi(\vx,\vz)} ,  
\label{eq:mine-asi-derv3}
\end{align}
We can use an identical modification for the gradient with respect to $\theta$.

This initialization greatly reduces the computational cost and variance of the estimated gradient and enables training energy functions 
in high dimensional latent spaces, as shown in our experiments.
This approach is in spirit similar to Contrastive Divergence learning of energy-based models \citep{hinton2002training}, where 
one starts an \textsc{mcmc} chain from the true data distribution to obtain a lower variance gradient estimate.

\subsubsection{Multi-Sample AIS Evaluation of the IBAL}\label{sec:mine_ais_eval}
After learning an critic function using our improved \textsc{mine-ais} training scheme, we still need to evaluate the $\ibal$ lower bound on \gls{MI}.  %
Although we have an unbiased, low variance estimate for $\Exp{\pxandz}{\Txz}$, the log partition function $\logZmine$ is intractable.   We use our Multi-Sample \gls{AIS} bounds from \mysec{ais_estimation} to provide both an upper bound and approximate lower bound on $\gls{IBAL}(q_\theta, \giwaeT).$

\parhead{Multi-Sample \textsc{ais} upper bound on $\ibal$.}
We first consider estimating a lower bound on the log partition function $\logZmine$ of $\pi_{\theta, \phi}(\vz|\vx) = \frac{1}{\mathcal{Z}_{\theta, \phi}(\vx)} \qzx e^{\Txz}$. In order to do so, 
we can use a Multi-Sample \gls{AIS} lower bound with a base distribution $\qzx$, intermediate distributions $\pi_t(\vz|\vx)\propto \qzx e^{  \beta_t \, \giwaeT(\vx,\vz)}$, and final distribution $\pi_{\theta, \phi}(\vz|\vx)$ at $\beta_T=1$.
This yields a stochastic lower bound on $\logZmine$, which translates to a stochastic upper bound for estimating $\gls{IBAL}(q_\theta, \giwaeT)$.    

\parhead{Multi-Sample \textsc{ais} approximate lower bound on $\ibal$.} %
In order to obtain a lower bound on $\ibal$ and preserve a lower bound on $\Ixz$, we would like to estimate an upper bound on $\logZmine$ using Multi-Sample \gls{AIS}.   However, note that this requires true samples from $\pi_{\theta, \phi}(\vz|\vx)$ to initialize backward chains.  Since these samples are unavailable, we argue in
\myapp{reverse_annealing_ais} \myprop{mine-ais-sampling-ais} that, under mild conditions,
Multi-Sample \gls{AIS} 
can preserve an approximate upper bound on $\logZmine$ and lower bound on $\gls{IBAL}(q_\theta, \giwaeT)$ by initializing reverse chains from the true posterior $\pzx$ instead of $\pi_{\theta, \phi}(\vz|\vx)$.  %
With the optimal critic function,  we have $\pi_{\theta, \phi}(\vz|\vx) = p(\vz|\vx)$, and 
our approximate lower bound becomes a true stochastic lower bound on $\ibal(q_{\theta}, T^{*}) = \Ixz$.
If the full joint density $p(\vx,\vz)$ is known, 
the \textsc{mine-ais} upper bound and approximate lower bound on $\ibal(q_{\theta}, \giwaeT)$ 
reduce to the Multi-Sample \gls{AIS} upper and lower bounds on $\Ixz$.
\section{Experiments}\label{sec:experiment}
\newcommand{\scalefig}{.28}

In this section, we evaluate our proposed Multi-Sample \gls{AIS}, \gls{MINE}-\gls{AIS}, and \textsc{giwae} bounds for estimating the mutual information of deep generative models such as \textsc{vae}s and \textsc{gan}s trained on \textsc{mnist} and \textsc{cifar} datasets.
We chose deep generative models as they reflect complex relationships between latent variables and high-dimensional images, and allow comparison across methods with different assumptions. 

We first compare our various Multi-Sample \gls{AIS} bounds in \mysec{experiment_multi_comparison} with the goal of making recommendations for which bound to use in practice.    We then compare Multi-Sample \gls{AIS} and \gls{IWAE} bounds in \mysec{exp_ais} 
for estimating large ground truth values of \gls{MI}.   
We will show that Multi-Sample \gls{AIS} can tightly bound the ground truth \gls{MI}, assuming access to the true $p(\vx|\vz)$.  
This ground truth information is valuable for evaluating the bias of our energy-based lower bounds, including \gls{GIWAE} and \textsc{mine-ais}, which are applicable even when $p(\vx|\vz)$ is unknown.
We provide a detailed comparison of these energy-based \gls{MI} bounds in \mysec{exp_giwae}.
We describe experimental details and link to our publicly available code in \myapp{experiment_details}.

\begin{figure}[t]
\vspace*{-.5cm}
\centering
\subcaptionbox{Linear VAE (K=10000)}{
\includegraphics[scale = \scalefig]{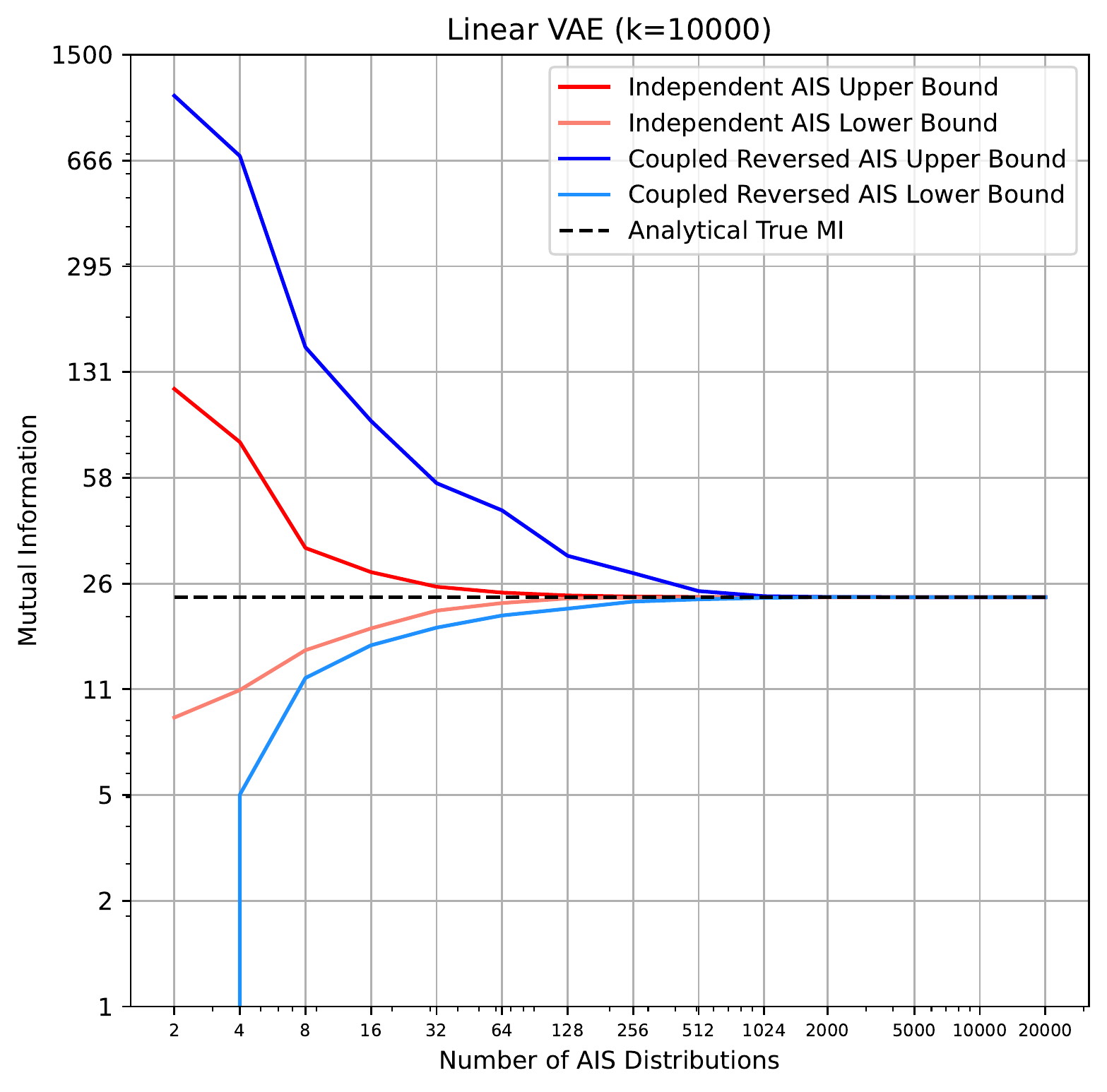}} %
\subcaptionbox{MNIST VAE20 (K=1000)}{
\includegraphics[scale=\scalefig]{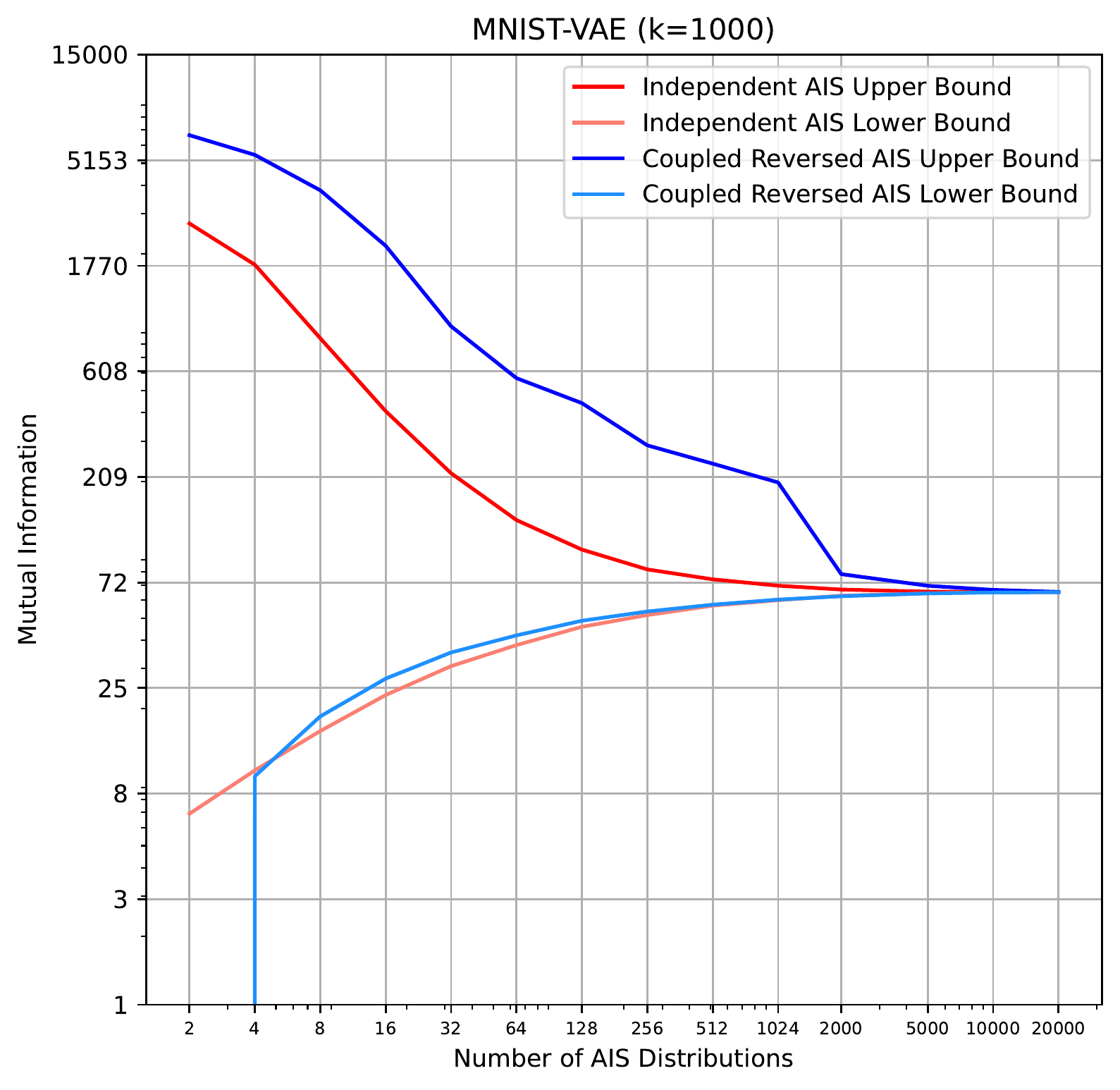}}
\subcaptionbox{MNIST GAN20 (K=1000)}{
\includegraphics[scale=\scalefig]{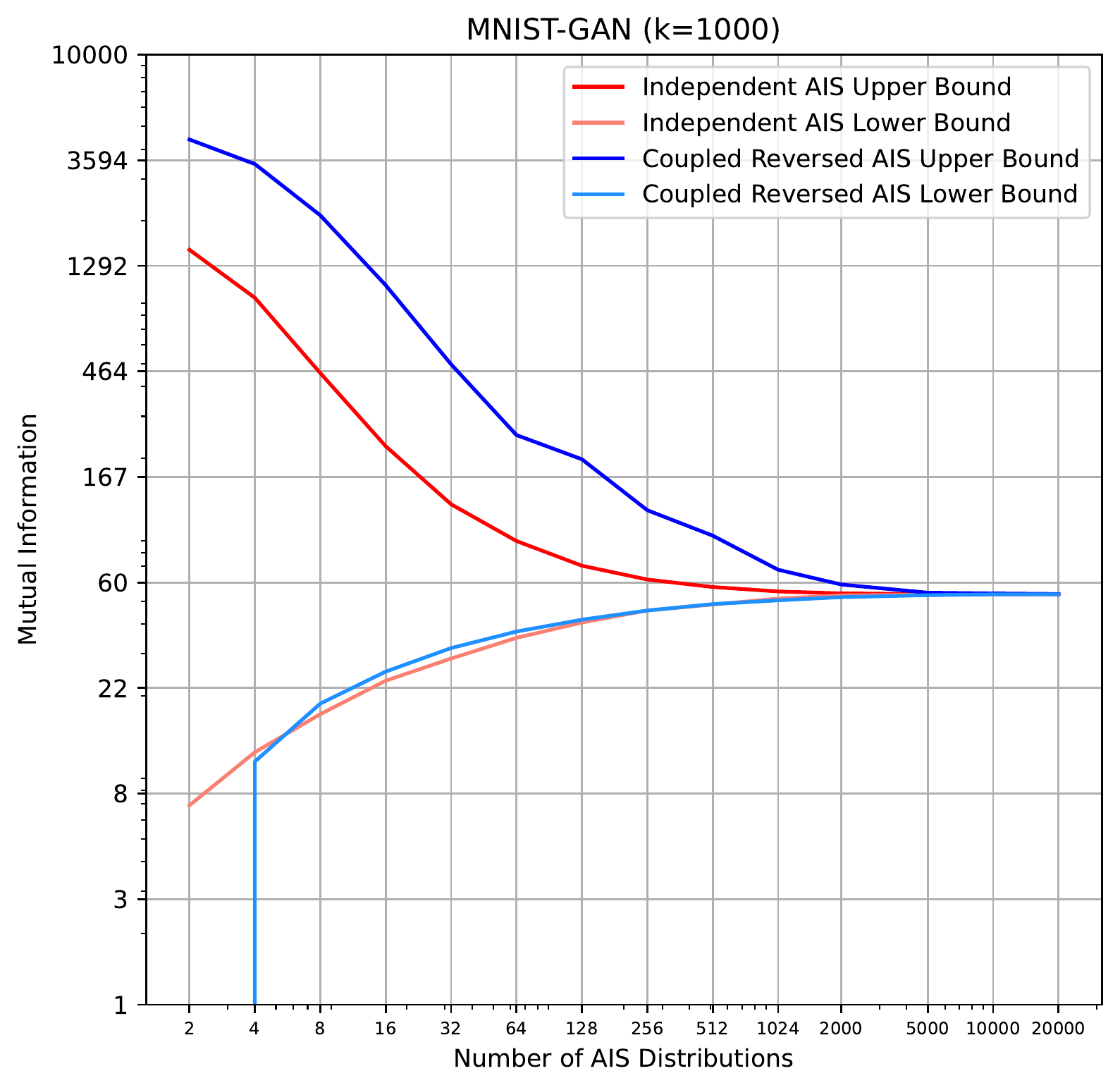}}
\vspace*{-.2cm} \caption{\label{fig:multiple-ais-small} Comparing 
Multi-Sample \gls{AIS} sandwich bounds for 
varying number of \textsc{ais} distributions.}
\end{figure}

\vspace*{-.1cm} 
\subsection{Comparison of Multi-Sample AIS Bounds}\label{sec:experiment_multi_comparison}
\vspace*{-.1cm}
In \myfig{multiple-ais-small}, we compare the performance of our various Multi-Sample \gls{AIS} bounds
for \gls{MI} estimation of a Linear \textsc{vae} with 10 latent variables and random weights, and \textsc{vae} and \textsc{gan} models with 20 latent variables trained on \textsc{mnist}.
We provide more extensive results in \myapp{multi-sample-ais-comparison}.

To obtain an upper bound on \gls{MI}, we recommend using the Independent Multi-Sample \gls{AIS} \textsc{elbo} for log partition function estimation.   This corresponds to the forward direction of \gls{BDMC} and achieves the best performance in all cases.   
This upper bound uses independent samples and is not limited to $\log K$ improvement, 
in contrast to 
the Coupled Reverse Multi-Sample \gls{AIS} upper bound on \gls{MI} (\myprop{c_r_multi_sample_ais}).

The results are less conclusive for the Multi-Sample \gls{AIS} lower bounds on \gls{MI}, where either the Independent Multi-Sample \gls{AIS} \gls{EUBO} or Coupled Reverse \gls{AIS} \gls{EUBO} may be preferable for log partition function estimation.  
Recall that these bounds have different sources of stochasticity that provide improvement over single-chain \gls{AIS}.  
The stochasticity in the Independent Multi-Sample \gls{AIS} lower bound on \gls{MI}
comes from $K-1$ independent negative forward chains which, by 
\myapp{ais_logarithmic_pf} \myprop{ais_elbo_eubo},
can only lead to $\log K$ improvement over single-sample \gls{AIS}.  However, these gains are easily attained for low values of $T$. For example, 
with 
two total \textsc{ais} distributions, which corresponds to simple importance sampling with $\pi_0(\vz)=p(\vz)$,
 the Independent Multi-Sample \gls{AIS} 
 lower bound on \gls{MI} 
reduces to Structured \textsc{Info-NCE} and saturates to $\log K$.
This may be useful for quickly estimating values of \gls{MI} at a similar order of magnitude as $\log K$.

The stochasticity in the Coupled Reverse \gls{AIS} lower bound on \gls{MI} is induced by \gls{MCMC} transitions in $K$ coupled backward chains.   
While this does not formally limit the improvement over single-sample \gls{AIS}, we see in \myfig{multiple-ais-small} that 
at least moderate values of $T$ may be needed to match or marginally improve upon Independent Multi-Sample \gls{AIS}.  These observations suggest that the
preferred lower bounds on \gls{MI} may vary based on the scale of the true \gls{MI} and the amount of computation available.
\captionsetup[table]{position=top}
\begin{figure*}[t]
\begin{center}
\vspace*{-1cm}
\captionof{table}{\gls{MI} Estimation on \textsc{mnist} and \textsc{cifar-10} with \gls{IWAE} (with varying number of samples $K$), and Multi-Sample \gls{AIS} (with varying number of intermediate distributions $T$).  Bounds with a gap of less than 2 nats from the ground truth \gls{MI} are in bold.}\label{table:mnist-cifar}
 \vspace*{-.1cm}
\resizebox{14cm}{!}{
\begin{tabular}{c|c|ccc|ccc}%
\toprule
\textbf{\small Method}&\small \small \textbf{Proposal}&\textbf{MNIST VAE2}&\textbf{MNIST VAE10}&\textbf{MNIST VAE100}&\textbf{MNIST GAN2}&\textbf{MNIST GAN10}&\textbf{MNIST GAN100}\\%
\cmidrule{1-8}%
\multirow[c]{2}{*}{\shortstack[c]{\textbf{AIS} \\ (T=1)}}&\textbf{$ p(\vz)$}&$(0.00, 249.82)$&$(0.00, 1929.84)$&$(0.00, 5830.52)$&$(0.00, 726.27)$&$(0.00, 786.12)$&$(0.00, 861.38)$\\%

\cmidrule{2-8}%
&\textbf{$q(\vz|\vx)$}&$\textbf{(7.59, 9.24)}$&$(21.06, 63.00)$&$(34.49, 362.13)$&$(7.21, 19.13)$&$(3.67, 314.72)$&$(2.61, 513.33)$\\%

\cmidrule{1-8}
\multirow[c]{2}{*}{\shortstack[c]{\textbf{AIS} \\ (T=500)}}&\textbf{$ p(\vz)$}&$\textbf{(8.63, 9.12)}$&$(34.05, 39.09)$&$(\textbf{79.90}, 95.17)$&$\textbf{(9.21, 10.83)}$&$\textbf{(21.57, 22.47)}$&$\textbf{(25.86, 27.55)}$\\%

\cmidrule{2-8}%
&\textbf{$q(\vz|\vx)$}&$\textbf{(9.09, 9.09)}$&$\textbf{(34.16, 34.29)}$&$\textbf{(80.19, 82.34)}$&$\textbf{(10.69, \textbf{11.06})}$&$\textbf{(21.60, 23.06)}$&$(\textbf{25.58}, 29.53)$\\%
 
\cmidrule{1-8}%
\multirow[c]{2}{*}{\shortstack[c]{\textbf{AIS} \\ (T=30K)}}&\textbf{$ p(\vz)$}&$\textbf{(8.98, 9.09)}$&$\textbf{(34.21, 34.21)}$&$\textbf{(80.78, 80.84)}$&$\textbf{(10.56, 10.81)}$&$\textbf{(21.97, 22.02)}$&$\textbf{(26.47, 26.52)}$\\%

\cmidrule{2-8}
&\textbf{$q(\vz|\vx)$}&$\textbf{(9.09, 9.09)}$&$\textbf{(34.21, 34.21)}$&$\textbf{(80.77, 80.80)}$&$\textbf{(10.80, 10.81)}$&$\textbf{(22.01, 22.01)}$&$\textbf{(26.53, 26.54)}$\\%

\cmidrule{1-8}%
\morecmidrules
\cmidrule{1-8}%

\multirow[c]{2}{*}{\shortstack[c]{\textbf{IWAE} \\ (K=1)}}&\textbf{$ p(\vz)$}&$(0.00, 799.55)$&$(0.00, 3827.58)$&$(0.00, 11501.92)$&$(0.00, 1638.10)$&$(0.00, 1630.00)$&$(0.00, 1740.39)$\\%

\cmidrule{2-8}%
&\textbf{$q(\vz|\vx)$}&$\textbf{(8.63, 9.19)}$&$(25.20, \textbf{35.34})$&$(44.54, 95.63)$&$(8.83, 17.58)$&$(4.23, 57.47)$&$(3.23, 260.87)$\\%

\cmidrule{1-8}%
\multirow[c]{2}{*}{\shortstack[c]{\textbf{IWAE} \\ (K=1K)}}&\textbf{$ p(\vz)$}&$(6.81, 29.40)$&$(6.91, 1197.75)$&$(6.91, 4234.19)$&$(6.88, 121.89)$&$(6.91, 446.80)$&$(6.91, 494.73)$\\%

\cmidrule{2-8}%
&\textbf{$q(\vz|\vx)$}&$\textbf{(9.09, 9.10)}$&$(31.69, \textbf{34.24})$&$(51.44, 85.30)$&$\textbf{(10.74, 11.40)}$&$(11.14, 52.73)$&$(10.14, 201.18)$\\%

\cmidrule{1-8}%
\multirow[c]{2}{*}{\shortstack[c]{\textbf{IWAE} \\ (K=1M)}}&\textbf{$ p(\vz)$}&$\textbf{(9.09, 9.09)}$&$(13.82, 376.89)$&$(13.82, 2247.73)$&$\textbf{(10.76, 10.99)}$&$(13.81, 81.51)$&$(13.82, 114.01)$\\%

\cmidrule{2-8}%
&\textbf{$q(\vz|\vx)$}&$\textbf{(9.09, 9.09)}$&$\textbf{(34.10, 34.22)}$&$(58.35, 83.39)$&$\textbf{(10.81, 10.81)}$&$(17.76, 30.88)$&$(16.98, 58.04)$\\%
\bottomrule 
\end{tabular}
}
\label{table:mnist}
\end{center}
\vspace*{.2cm}
\centering
\centering
\resizebox{9.5cm}{!}{

\begin{tabular}{c|c|ccc}%
\toprule
\textbf{\small Method}&\textbf{\small \small Proposal}&\textbf{CIFAR GAN5}&\textbf{CIFAR GAN10}&\textbf{CIFAR GAN100}\\%
\cmidrule{1-5}%
\multirow[c]{2}{*}{\shortstack[c]{\textbf{AIS} \\T=1}}&{\textbf{$ p(\vz)$}}&$(0.00
, 3601256.00)$&$(0.00, 4035635.75)$&$(0.00, 4853410.50)$\\%
\cmidrule{2-5}%
&{\textbf{$q(\vz|\vx)$}}&$(10.80, 157378.25)$&$(13.02, 758075.75)$&$(12.47, 2724562.25)$\\%

\cmidrule{1-5}%
\multirow[c]{2}{*}{\shortstack[c]{\textbf{AIS} \\T=500}}&{\textbf{$ p(\vz)$}}&$(18.37, 259.27)$&$(29.52, 33089.90)$&$(104.51, 63290.40)$\\%
\cmidrule{2-5}%
&{\textbf{$q(\vz|\vx)$}}&$(32.47, 69.54)$&$(48.16, 136.15)$&$(145.19, 2786.53)$\\%

\cmidrule{1-5}%
\multirow[c]{2}{*}{\shortstack[c]{\textbf{AIS} \\T=100K}}&{\textbf{$ p(\vz)$}}&$\textbf{(39.58, 41.06)}$&$\textbf{(71.87, 73.98)}$&$(480.26, 488.07)$\\%
\cmidrule{2-5}%
&{\textbf{$q(\vz|\vx)$}}&$\textbf{(39.22, 40.05)}$&$\textbf{(72.85, 73.54)}$&$(479.27, 484.84)$\\%

\cmidrule{1-5}\morecmidrules\cmidrule{1-5}%
\multirow[c]{2}{*}{\shortstack[c]{\textbf{IWAE} \\K=1}}&{\textbf{$ p(\vz)$}}&$(0.00, 7095384.00)$&$(0.00, 7765695.50)$&$(0.00, 9916102.00)$\\%
\cmidrule{2-5}%
&{\textbf{$q(\vz|\vx)$}}&$(14.53, \textbf{40.31})$&$(17.45, 77.52)$&$(20.00, 5346.85)$\\%

\cmidrule{1-5}%
\multirow[c]{2}{*}{\shortstack[c]{\textbf{IWAE} \\K=1K}}&{\textbf{$ p(\vz)$}}&$(6.91, 1065552.25)$&$(6.91, 2044170.75)$&$(6.91, 2856714.50)$\\%
\cmidrule{2-5}%
&{\textbf{$q(\vz|\vx)$}}&$(21.43, \textbf{39.73})$&$(24.06, \textbf{74.00})$&$(26.98, 5283.13)$\\%

\cmidrule{1-5}%
\multirow[c]{2}{*}{\shortstack[c]{\textbf{IWAE} \\K=1M}}&{\textbf{$ p(\vz)$}}&$(13.82, 96698.10)$&$(13.82, 710511.63)$&$(13.82, 1903854.50)$\\%
\cmidrule{2-5}%
&{\textbf{$q(\vz|\vx)$}}&$(28.34, \textbf{39.71})$&$(30.73, \textbf{73.36})$&$(33.81, 5271.56)$\\%
\bottomrule

\end{tabular}
}
 \vspace*{-.4cm}
  \label{table:cifar}
\end{figure*}

\vheader
\subsection{Multi-Sample AIS Estimation of Mutual Information}\label{sec:exp_ais}
\vheader
We compare Multi-Sample \gls{AIS} \gls{MI} estimation against \gls{IWAE}, since both methods assume the full joint distribution is available.
For the initial distribution of \gls{AIS} or variational distribution of \gls{IWAE}, we can use any distribution that is tractable to sample and evaluate.
We experiment using both the prior $\pz$ and a learned Gaussian $\qzx$. 
\mytable{mnist-cifar}
summarizes our results.

\parhead{IWAE}
As described in \mysec{partition}, \gls{IWAE} bounds encompass a wide range of \gls{MI} estimators. 
The $K=1$ bounds with learned $\qzx$ correspond to \gls{BA} bounds, 
while for $K>1$ and $p(\vz)$ as the proposal, we obtain Structured \textsc{InfoNCE}.
While the \gls{IWAE} upper bound on \gls{MI}, which uses the $\log p(\vx)$ lower bound with independent sampling from $\qzx$, is tight for certain models, we can see that the improvement of the \gls{IWAE} lower bound on \gls{MI} is limited to $\log K$, as expected from \myprop{iwae_elbo_eubo}.
In particular,  exponentially large sample size 
is required to 
to close the gap from the \gls{BA} lower bound ($K=1$) to the true \gls{MI}.
For example, 
on  \textsc{cifar} \textsc{gan100}, at least  $e^{460}$
total samples are required
to match the lower bound estimated by \gls{AIS}.  In \myapp{iwae_log_improvement}, we explicitly decompose $I_{\gls{IWAE}_L}(q_\theta, K)$ into an $I_{\gls{BA}_L}(q_\theta)$ term and a contrastive term to validate this logarithmic improvement with increasing $K$.

\parhead{Multi-Sample AIS} 
We evaluate Multi-Sample \gls{AIS} with $K=48$ chains 
on \textsc{mnist} and $K=12$ on \textsc{cifar},
and a varying number of intermediate distributions $T$.  
We show results for 
the Independent Multi-Sample \gls{AIS} upper bound on \gls{MI} and Coupled Reverse Multi-Sample \gls{AIS} lower bound on \gls{MI} in Table \ref{table:mnist}. 
Using large enough values of $T$, Multi-Sample \gls{AIS} can tightly sandwich large values of ground truth \gls{MI} for all models and datasets considered. 
This is in stark contrast to the exponential sample complexity required for the \gls{IWAE} \gls{MI} lower bound, and highlights that increasing $T$ in Multi-Sample \gls{AIS} is a practical way to reduce bias using additional computation.  We provide runtime details in \myapp{runtime}. 

\vheader
\subsection{Energy-Based Estimation of Mutual Information}\label{sec:mine-ais-experiments}\label{sec:exp_giwae}
\vheader

In this section, we evaluate the performance of our \gls{GIWAE} and \textsc{mine}-\gls{AIS} methods, which assume access to a known marginal $p(\vz)$ but not the conditional $p(\vx|\vz)$.  %
In \mycor{giwae_logk}, we have shown that for a \textit{fixed} $\qzx$, we have $I_{\textsc{BA}_L}(q_{\theta})  \leq I_{\textsc{GIWAE}_L}(q_{\theta},T_{\phi^*},K) \leq I_{\textsc{IWAE}_L}(q_{\theta},K) \leq  I_{\textsc{BA}_L}(q_{\theta})  +\log K$.
Although we perform separate optimizations and obtain different $\qzx$ for each entry in \cref{fig:a}, we find that these relationships hold in almost all cases.
We summarize the gaps in these bounds and their relationships in \cref{fig:b}.

\parhead{BA, IWAE, and GIWAE Bounds}
Recall that $I_{\textsc{GIWAE}_L}(q_\theta,\giwaeT,K)$ (\myeq{giwae_lb_mi_main}) and $I_{\textsc{IWAE}_L}(q_\theta,K)$ (\myeq{iwae_two_terms_main}) can be decomposed into the sum of a \gls{BA} lower bound and a contrastive term.   
We report the contribution of each term along with the overall lower bound in \myfiga{different-bounds}{a}. 
Despite the fact that \gls{GIWAE} uses a learned $\giwaeT(\vx,\vz)$ instead of the optimal critic in \gls{IWAE} (\mycor{giwae}), we observe that \gls{GIWAE} can approach the performance of \gls{IWAE}.   We can also confirm that both bounds improve upon the \gls{BA} lower bound.
In all cases, (Structured) \textsc{InfoNCE} bounds, which use $\qzx = p(\vz)$, saturate to $\log K$.

\begin{figure}[!t]
\vspace*{-1.3cm}
\centering
\subcaptionbox{Linear VAE}{
\includegraphics[scale=.33]{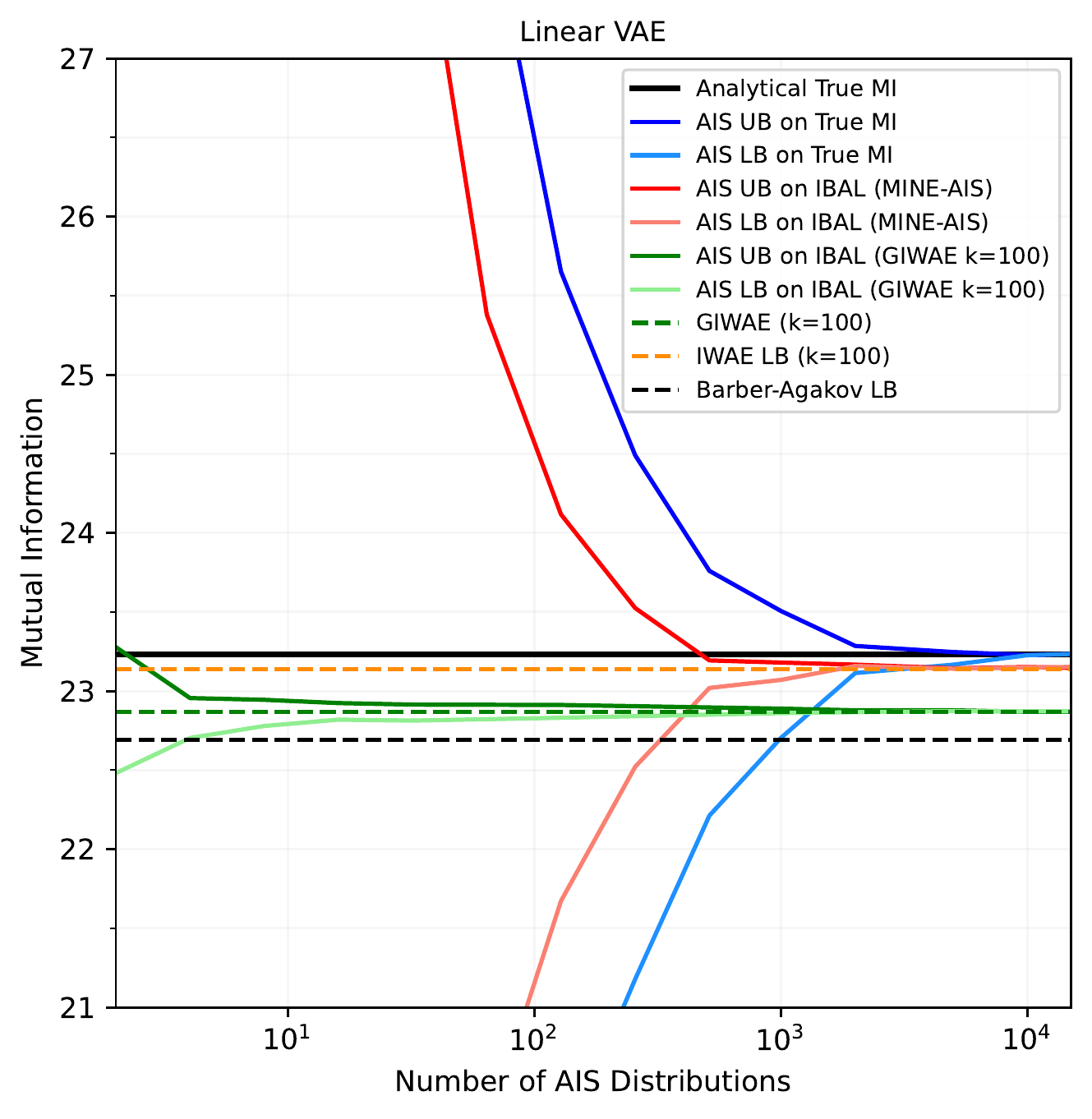}}
\subcaptionbox{MNIST VAE}{
\includegraphics[scale=.33]{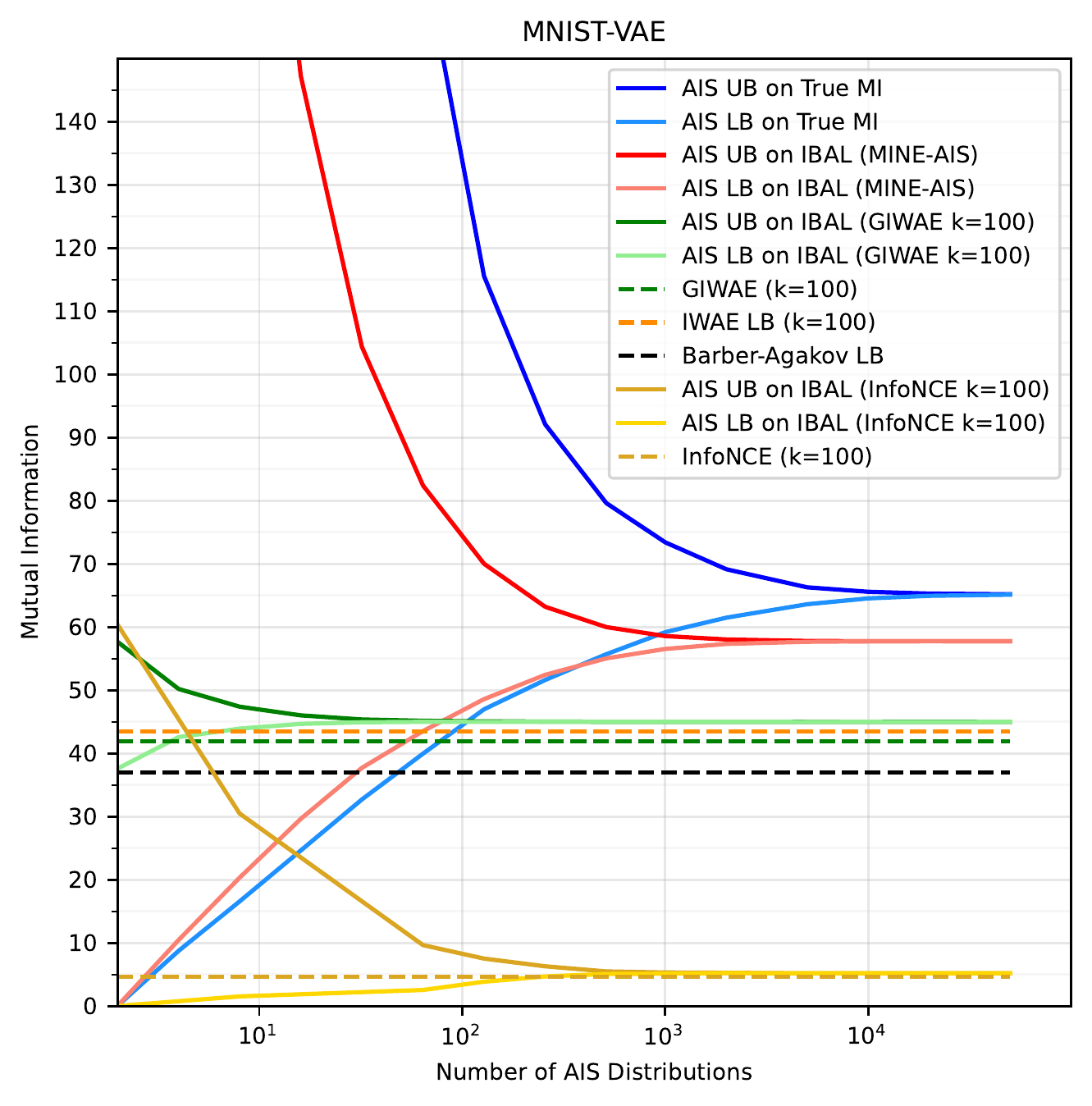}}
\subcaptionbox{MNIST GAN}{
\includegraphics[scale=.33]{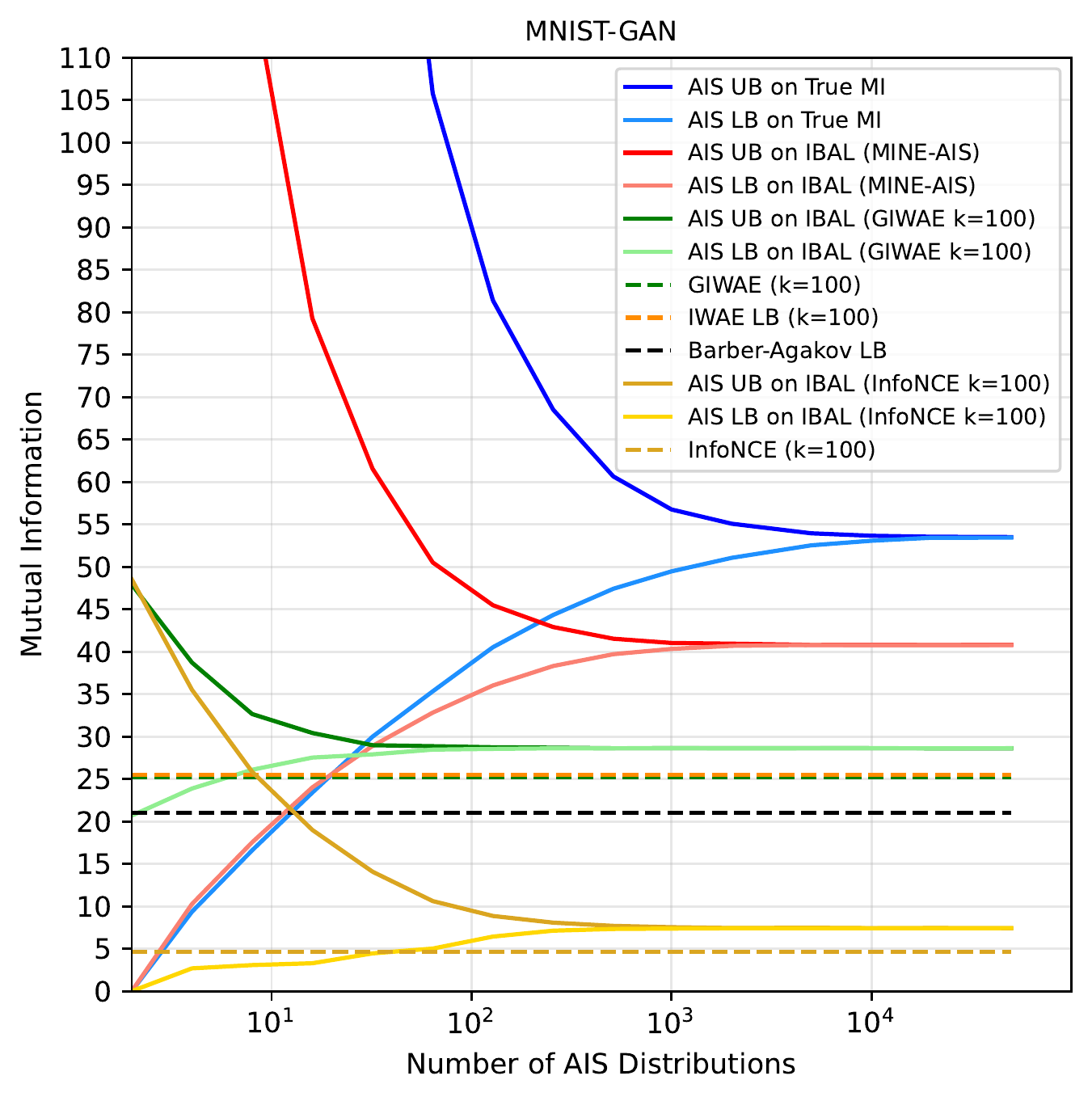}}
\vspace*{-.2cm} \caption{\label{fig:mine-ais} Estimating 
\textsc{ibal} using Multi-Sample \gls{AIS} for various methods of critic function training.}
\vspace*{.2cm} 
\end{figure}
\begin{figure*}[t]
\begin{minipage}{0.775\textwidth}
\subcaptionbox{\label{fig:a}}{
\centering
\resizebox{\textwidth}{!}{
\begin{tabular}{p{0.12\textwidth}|l|c|c|c}
\toprule 
\multirow{2}{2cm}{\small Input \\[.25ex]
Used} & \hspace{1.75cm}\backslashbox{Bound}{Model}  & 
\multirow{2}{2cm}{Linear \\[.25ex] VAE10} &  \multirow{2}{2cm}{MNIST \\[.25ex] VAE20} &  \multirow{2}{2cm}{MNIST \\[.25ex] GAN20} \\
\\
\midrule 
 $p(\vx,\vz)$ & Analytical & 23.23 & N/A & N/A \\
\midrule 
 \multirow{5}{2cm}{$p(\vx,\vz)$ \\[.25ex] \text{\small Joint} \\ \text{\small Samples} } & AIS Bound on True MI & $(23.23, 23.23)$ & $(65.11,65.17)$  & $(53.43,53.50)$ \\
\cmidrule{2-5} 
& IWAE LB ($K=1000$)&  $20.53+2.66=23.19$ & $38.21+6.90=45.11$ & $20.97+6.91=27.88$ \\
\cmidrule{2-5} 
& IWAE LB ($K=100$)  & $21.64+1.50=23.14$  &  $38.86+4.61=43.47$  &   $20.86+4.60=25.46$ \\
\cmidrule{2-5} 
  & Structured InfoNCE LB ($K=1000$)  & $6.91$ & $6.91$ & $6.91$  \\
\cmidrule{2-5} 
    & Structured InfoNCE LB ($K=100$) & $4.61$ & $4.61$ & $4.61$ \\
\midrule 
 \multirow{3}{3cm}{$\pz$ \\[.25ex] \text{\small Joint} \\ \text{\small Samples}} & AIS Bound on IBAL (MINE-AIS) & $(23.15,23.15)$  & $(57.72,57.74)$ &  $(40.79,40.79)$ \\
\cmidrule{2-5} 
  & AIS Bound on IBAL (GIWAE $K=100$)  &  $(22.87,22.87)$ & $(44.97,44.97)$ &  $(28.61,28.62)$ \\ \cmidrule{2-5} 
    & AIS Bound on IBAL (InfoNCE $K=100$)  &  $(11.38,11.39)$ & $(5.18,5.18)$ &  $(7.42,7.42)$ \\ \cmidrule{2-5} 
& Generalized IWAE LB ($K=1000$)  & $22.31+0.38=22.69$ & $37.23+6.55=43.78$ &  $20.50+6.72=27.22$ \\
\cmidrule{2-5} 
  & Generalized IWAE LB ($K=100$)  &  $22.48+0.39=22.87$ & $37.56+4.34=41.90$ &  $20.68+4.57=25.25$ \\
 \cmidrule{2-5} 
& Barber-Agakov LB ($K=1$) & $22.69$ &  $37.92$ &  $21.42$\\
\midrule 
 \multirow{2}{3cm}{\text{\small Joint} \\ \text{\small Samples} }
  & InfoNCE LB ($K=1000$)  & $6.91$ & $6.91$ & $6.91$  \\
\cmidrule{2-5} 
  & InfoNCE LB ($K = 100$) & $4.61$ & $4.61$ & $4.61$ \\
\bottomrule 
\end{tabular}
}
}
\vspace*{-.2cm}
\end{minipage}
\begin{minipage}{.22\textwidth}
\centering
\subcaptionbox{\label{fig:b} }{
\includegraphics[scale=1.26]{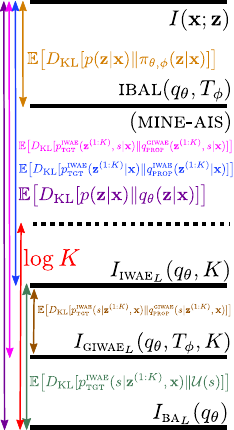}%
}
\end{minipage}
\caption{\label{fig:different-bounds} %
(a) Comparison of energy-based bounds (\gls{GIWAE} and \textsc{mine-ais}) with other \gls{MI} bounds. \\
(b) Visualizing the gaps of various energy-based lower bounds and their relationships.} 
\vspace*{-.2cm}
\end{figure*}

We can further analyze the contribution of the contrastive term across different models.
For the Linear \textsc{vae}, the true posterior is in the Gaussian variational family and $I_{\textsc{BA}_L}(q_{\theta})$ is close to the analytical \gls{MI}.  In this case, the contrastive term provides much less than $\log K$ improvement for \gls{GIWAE} and \gls{IWAE}, since even the optimal critic function cannot distinguish between $\qzx$ and $\pzx$. 
As $K$ increases, we learn a worse $q_{\theta}$ in almost all cases, as measured by a lower \gls{BA} term.  This allows 
the contrastive term to achieve closer to its full potential $\log K$, resulting in a higher overall bound.\footnote{
A similar observation can be made in training \textsc{vae}s with the $\gls{ELBO}_{\gls{IWAE}}$ objective and a restricted variational family, where increasing $K$ results in a worse $\qzx$ (in forward \textsc{kl} divergence) but a better overall bound.
}
For more complex \textsc{vae} and \textsc{gan} posteriors, there is 
a reduced
tradeoff between the terms.   In these cases, the variational family is far enough from the true posterior (in reverse \textsc{kl} divergence) that either \gls{GIWAE} or \gls{IWAE} %
critic functions can approach $\log K$ improvement 
without significantly lowering the \gls{BA} term.

\vspace{-.3cm}
\paragraph{MINE-AIS Bounds}  
We now discuss results for \textsc{mine-ais}, where we have used fixed standard Gaussian $p(\vz)$ as the base variational distribution, energy-based training for $\giwaeT(\vx,\vz)$, and Multi-Sample \gls{AIS} evaluation of $\gls{IBAL}(p(\vz),\giwaeT)$.
We can see in \myfiga{different-bounds}{a} that \textsc{mine}-\gls{AIS} improves over \gls{BA} due to its flexible, energy-based variational family.
To evaluate the quality of the learned $\giwaeT(\vx,\vz)$, we also compare the \gls{IBAL} to the Multi-Sample \gls{AIS} lower bound, which assumes access to $p(\vx|\vz)$ and corresponds to the optimal critic (\myprop{mine-ais-optimal}).   We find that \textsc{mine-ais} 
underestimates the ground truth \gls{MI}
by $11\%$ and $24\%$ on \textsc{mnist}-\textsc{vae} and \textsc{mnist}-\textsc{gan}, respectively.

We also observe that \textsc{mine-ais} outperforms \gls{GIWAE} by $32\%$ and $50\%$ on \textsc{mnist}-\textsc{vae} and \textsc{mnist}-\textsc{gan}, respectively.  
We now investigate whether this improvement is due to a more costly Multi-Sample \gls{AIS} evaluation or our energy-based training scheme for the critic $\giwaeT$.   In \myfig{mine-ais} and \myfig{different-bounds}{a}, we use Multi-Sample \gls{AIS} to evaluate the $\gls{IBAL}$ corresponding to $(q_\theta, \giwaeT)$, which are learned by optimizing the \textsc{giwae} (with $K=100$) and \textsc{InfoNCE} (with $K=100$ and $q_\theta=p(\vz)$) lower bounds.   As argued in \myprop{mine-ais-limit}, the \gls{IBAL} corresponds to the limiting behavior of \gls{GIWAE} as $K\rightarrow \infty$.
We observe that the \gls{AIS} evaluation of the \gls{IBAL}, corresponding to \gls{GIWAE} or \textsc{InfoNCE} critic functions, only marginally improves upon 
evaluation of the original \gls{GIWAE} or \textsc{InfoNCE} lower bounds.
This indicates that the improvement of \textsc{mine-ais} over \gls{GIWAE} or \textsc{InfoNCE} can be primarily attributed to learning a better critic function using energy-based training.

Recall that the \gls{IWAE} and Structured \textsc{InfoNCE} critics use the true log importance weights $T^{*}(\vx,\vz) = \log \frac{p(\vx,\vz)}{\qzx} + c(\vx)$ (\mycor{giwae}), and with this optimal critic, $\gls{IBAL}(q_\theta, T^*) = \Ixz$ regardless of $\qzx$ (\myprop{mine-ais-optimal}).    We would thus expect Multi-Sample \gls{AIS} evaluation of the  $\gls{IBAL}(q_\theta, \giwaeT)$ to tightly bound the true \gls{MI} if the critic were optimal.
We instead find that \textsc{ais} evaluation 
for a learned \gls{GIWAE} critic falls short of the
true \gls{MI} by $31\%$ and $46\%$ on \textsc{mnist}-\textsc{vae} and \textsc{mnist}-\textsc{gan}, respectively.
This is despite the fact that the \gls{GIWAE} critic came close to matching 
the performance of \gls{IWAE} and saturating the contrastive term at $\log K$ in the original objective.
These observations highlight a further shortcoming of the contrastive bounds used in \gls{GIWAE} and \textsc{InfoNCE}: 
beyond their $\log K$ limitations for evaluation (\mycor{giwae_logk} and \mycor{infonce}),
these bounds may not be conducive to learning the true log importance weights. 

\paragraph{Validation of Approximate Reverse Annealing for IBAL Lower Bound} 
As discussed in \mysec{mine_ais_eval}, obtaining a lower bound on the \gls{IBAL} is intractable due to the need for exact samples from $\pi_{\theta,\phi}(\vz|\vx)$.   In \myfig{mine-ais}, we confirm that our approximate reverse annealing procedure described in \myapp{ais_eval_ibal} 
underestimates
the \gls{IBAL} for all \textsc{mine-ais}, \textsc{giwae} and \textsc{InfoNCE} experiments, although we cannot mathematically guarantee this procedure provides a lower bound in general. 
With detailed discussion in  \myapp{ais_eval_ibal}, we
derive a sufficient condition under which our approximate reverse annealing procedure is guaranteed to provide a lower bound.

We can use our Multi-Sample \gls{AIS} upper bound on the \gls{IBAL} to validate the convergence of our estimation procedure, despite the fact that this quantity does not lower or upper bound the true \gls{MI} in general.  In other words, we can conclude that the true $\gls{IBAL}(q_{\theta},T_{\phi})$ has been obtained when its lower and upper bounds converge to the same estimate for a large number of intermediate distributions $T$.

\paragraph{MINE-DV and MINE-F} 
We do not include results for \textsc{mine-dv} and \textsc{mine-f}, as they are highly unstable 
in large \textsc{mi} settings 
due to the difficulty of direct Monte Carlo estimation of $\Exp{p(\vx)p(\vz)}{e^{\giwaeT(\vx,\vz)}}$ \citep{mcallester2020formal}.
We expect Generalized \gls{MINE} to suffer from the same challenges and instead recommend using \textsc{mine-ais}. 
\section{Conclusion}\label{sec:conclusion}

In this work, we have provided a unifying view of mutual information estimation from the perspective of importance sampling.
We derived probabilistic interpretations of each bound, which shed light on the limitations of existing estimators
and motivated our novel \gls{GIWAE},  Multi-Sample \gls{AIS}, and \textsc{mine}-\gls{AIS} bounds.
When the conditional is not known, our \gls{GIWAE} bounds highlight how variational bounds can complement contrastive learning to improve lower bounds on \gls{MI} beyond known $\log K$ limitations.
When the full joint distribution is known, we show that our Multi-Sample \gls{AIS} bounds can tightly estimate large values of \gls{MI} without exponential sample complexity, and thus should be considered the gold standard for \gls{MI} estimation in these settings.
Finally, \textsc{mine}-\gls{AIS} extends Multi-Sample \gls{AIS} evaluation to unknown conditional densities, and
can be viewed as the infinite-sample behavior of \gls{GIWAE} and existing contrastive bounds.
Our \textsc{mine}-\gls{AIS} and Multi-Sample \gls{AIS} methods highlight how \gls{MCMC} techniques can be used to improve \gls{MI} estimation when a single analytic marginal or conditional density is available.

\clearpage
\vspace*{-.4cm}
\section*{Acknowledgements}

The authors thank Shengyang Sun, Guodong Zhang, Vaden Masrani, and Umang Gupta for helpful comments on drafts of this work.    We also thank the anonymous reviewers whose comments greatly improved the presentation and encouraged us to derive several additional propositions.  
MG, RG and AM acknowledge support from the Canada CIFAR AI Chairs program.

\bibliography{iclr2022_conference}

\begin{thebibliography}{47}
\providecommand{\natexlab}[1]{#1}
\providecommand{\url}[1]{\texttt{#1}}
\expandafter\ifx\csname urlstyle\endcsname\relax
  \providecommand{\doi}[1]{doi: #1}\else
  \providecommand{\doi}{doi: \begingroup \urlstyle{rm}\Url}\fi

\bibitem[Alemi et~al.(2018)Alemi, Poole, Fischer, Dillon, Saurous, and
  Murphy]{alemi2018fixing}
Alexander Alemi, Ben Poole, Ian Fischer, Joshua Dillon, Rif~A Saurous, and
  Kevin Murphy.
\newblock Fixing a broken {ELBO}.
\newblock In \emph{International Conference on Machine Learning}, pp.\
  159--168, 2018.

\bibitem[Alemi \& Fischer(2018)Alemi and Fischer]{alemi2018gilbo}
Alexander~A Alemi and Ian Fischer.
\newblock {GILBO}: One metric to measure them all.
\newblock In \emph{Advances in Neural Information Processing Systems}, 2018.

\bibitem[Alemi et~al.(2017)Alemi, Fischer, Dillon, and Murphy]{alemi2016deep}
Alexander~A Alemi, Ian Fischer, Joshua~V Dillon, and Kevin Murphy.
\newblock Deep variational information bottleneck.
\newblock In \emph{International Conference on Learning Representations}, 2017.

\bibitem[Amari(2009)]{amari2009alpha}
Shun-Ichi Amari.
\newblock Alpha-divergence is unique, belonging to both f-divergence and
  {Bregman} divergence classes.
\newblock \emph{IEEE Transactions on Information Theory}, 55\penalty0
  (11):\penalty0 4925--4931, 2009.

\bibitem[Arbel et~al.(2021)Arbel, Zhou, and Gretton]{arbel2020generalized}
Michael Arbel, Liang Zhou, and Arthur Gretton.
\newblock Generalized energy based models.
\newblock In \emph{International Conference on Learning Representations}, 2021.

\bibitem[Barber \& Agakov(2003)Barber and Agakov]{barber2003algorithm}
David Barber and Felix Agakov.
\newblock The {IM} algorithm: A variational approach to information
  maximization.
\newblock In \emph{Proceedings of the 16th International Conference on Neural
  Information Processing Systems}, pp.\  201--208, 2003.

\bibitem[Belghazi et~al.(2018)Belghazi, Baratin, Rajeshwar, Ozair, Bengio,
  Courville, and Hjelm]{belghazi2018mutual}
Mohamed~Ishmael Belghazi, Aristide Baratin, Sai Rajeshwar, Sherjil Ozair,
  Yoshua Bengio, Aaron Courville, and Devon Hjelm.
\newblock Mutual information neural estimation.
\newblock In \emph{International Conference on Machine Learning}, pp.\
  531--540. PMLR, 2018.

\bibitem[Burda et~al.(2015)Burda, Grosse, and Salakhutdinov]{burda2015accurate}
Yuri Burda, Roger Grosse, and Ruslan Salakhutdinov.
\newblock Accurate and conservative estimates of {MRF} log-likelihood using
  reverse annealing.
\newblock In \emph{Artificial Intelligence and Statistics}, pp.\  102--110.
  PMLR, 2015.

\bibitem[Burda et~al.(2016)Burda, Grosse, and
  Salakhutdinov]{burda2016importance}
Yuri Burda, Roger~B Grosse, and Ruslan Salakhutdinov.
\newblock Importance weighted autoencoders.
\newblock In \emph{International Conference on Learning Representations}, 2016.

\bibitem[Chatterjee et~al.(2018)Chatterjee, Diaconis,
  et~al.]{chatterjee2018sample}
Sourav Chatterjee, Persi Diaconis, et~al.
\newblock The sample size required in importance sampling.
\newblock \emph{The Annals of Applied Probability}, 28\penalty0 (2):\penalty0
  1099--1135, 2018.

\bibitem[Chen et~al.(2016)Chen, Duan, Houthooft, Schulman, Sutskever, and
  Abbeel]{chen2016infogan}
Xi~Chen, Yan Duan, Rein Houthooft, John Schulman, Ilya Sutskever, and Pieter
  Abbeel.
\newblock {InfoGAN}: Interpretable representation learning by information
  maximizing generative adversarial nets.
\newblock In \emph{Proceedings of the 30th International Conference on Neural
  Information Processing Systems}, pp.\  2180--2188, 2016.

\bibitem[Cichocki \& Amari(2010)Cichocki and Amari]{cichocki2010families}
Andrzej Cichocki and Shun-ichi Amari.
\newblock Families of alpha-beta-and gamma-divergences: Flexible and robust
  measures of similarities.
\newblock \emph{Entropy}, 12\penalty0 (6):\penalty0 1532--1568, 2010.

\bibitem[Cranmer et~al.(2020)Cranmer, Brehmer, and Louppe]{cranmer2020frontier}
Kyle Cranmer, Johann Brehmer, and Gilles Louppe.
\newblock The frontier of simulation-based inference.
\newblock \emph{Proceedings of the National Academy of Sciences}, 117\penalty0
  (48):\penalty0 30055--30062, 2020.

\bibitem[Cremer et~al.(2017)Cremer, Morris, and
  Duvenaud]{cremer2017reinterpreting}
Chris Cremer, Quaid Morris, and David Duvenaud.
\newblock Reinterpreting importance-weighted autoencoders.
\newblock \emph{arXiv preprint arXiv:1704.02916}, 2017.

\bibitem[Dembo \& Zeitouni(2009)Dembo and Zeitouni]{DZ10}
Amir Dembo and Ofer Zeitouni.
\newblock \emph{Large deviations techniques and applications}.
\newblock Springer, 2009.

\bibitem[Domke \& Sheldon(2018)Domke and Sheldon]{domke2018importance}
Justin Domke and Daniel~R Sheldon.
\newblock Importance weighting and variational inference.
\newblock In \emph{Advances in neural information processing systems}, pp.\
  4470--4479, 2018.

\bibitem[Doucet et~al.(2022)Doucet, Grathwohl, Matthews, and
  Strathmann]{doucet2022annealed}
Arnaud Doucet, Will~Sussman Grathwohl, Alexander G de~G Matthews, and Heiko
  Strathmann.
\newblock Annealed importance sampling meets score matching.
\newblock In \emph{ICLR Workshop on Deep Generative Models for Highly
  Structured Data}, 2022.

\bibitem[Finke(2015)]{finke2015extended}
Axel Finke.
\newblock \emph{On extended state-space constructions for Monte Carlo methods}.
\newblock PhD thesis, University of Warwick, 2015.

\bibitem[Grosse et~al.(2013)Grosse, Maddison, and
  Salakhutdinov]{grosse2013annealing}
Roger~B Grosse, Chris~J Maddison, and Russ~R Salakhutdinov.
\newblock Annealing between distributions by averaging moments.
\newblock \emph{Advances in Neural Information Processing Systems}, 26, 2013.

\bibitem[Grosse et~al.(2015)Grosse, Ghahramani, and
  Adams]{grosse2015sandwiching}
Roger~B Grosse, Zoubin Ghahramani, and Ryan~P Adams.
\newblock Sandwiching the marginal likelihood using bidirectional monte carlo.
\newblock \emph{arXiv preprint arXiv:1511.02543}, 2015.

\bibitem[Grosse et~al.(2016)Grosse, Ancha, and Roy]{grosse2016measuring}
Roger~B Grosse, Siddharth Ancha, and Daniel~M Roy.
\newblock Measuring the reliability of {MCMC} inference with bidirectional
  {Monte Carlo}.
\newblock In \emph{Proceedings of the 30th International Conference on Neural
  Information Processing Systems}, pp.\  2459--2467, 2016.

\bibitem[Hinton(2002)]{hinton2002training}
Geoffrey~E Hinton.
\newblock Training products of experts by minimizing contrastive divergence.
\newblock \emph{Neural computation}, 14\penalty0 (8):\penalty0 1771--1800,
  2002.

\bibitem[Huang et~al.(2020)Huang, Makhzani, Cao, and
  Grosse]{huang2020evaluating}
Sicong Huang, Alireza Makhzani, Yanshuai Cao, and Roger Grosse.
\newblock Evaluating lossy compression rates of deep generative models.
\newblock In \emph{International Conference on Machine Learning}. PMLR, 2020.

\bibitem[Kingma \& Ba(2014)Kingma and Ba]{adam}
Diederik~P Kingma and Jimmy Ba.
\newblock Adam: A method for stochastic optimization.
\newblock \emph{arXiv preprint arXiv:1412.6980}, 2014.

\bibitem[Krizhevsky(2009)]{krizhevsky2009learning}
Alex Krizhevsky.
\newblock Learning multiple layers of features from tiny images.
\newblock 2009.

\bibitem[Lawson et~al.(2019)Lawson, Tucker, Dai, and
  Ranganath]{lawson2019energy}
Dieterich Lawson, George Tucker, Bo~Dai, and Rajesh Ranganath.
\newblock Energy-inspired models: learning with sampler-induced distributions.
\newblock In \emph{Proceedings of the 33rd International Conference on Neural
  Information Processing Systems}, pp.\  8501--8513, 2019.

\bibitem[LeCun et~al.(1998)LeCun, Bottou, Bengio, Haffner,
  et~al.]{lecun1998gradient}
Yann LeCun, L{\'e}on Bottou, Yoshua Bengio, Patrick Haffner, et~al.
\newblock Gradient-based learning applied to document recognition.
\newblock \emph{Proceedings of the IEEE}, 86\penalty0 (11):\penalty0
  2278--2324, 1998.

\bibitem[Maddison et~al.(2017)Maddison, Lawson, Tucker, Heess, Norouzi, Mnih,
  Doucet, and Teh]{maddison2017filtering}
Chris~J Maddison, Dieterich Lawson, George Tucker, Nicolas Heess, Mohammad
  Norouzi, Andriy Mnih, Arnaud Doucet, and Yee~Whye Teh.
\newblock Filtering variational objectives.
\newblock In \emph{Proceedings of the 31st International Conference on Neural
  Information Processing Systems}, pp.\  6576--6586, 2017.

\bibitem[Makhzani et~al.(2015)Makhzani, Shlens, Jaitly, Goodfellow, and
  Frey]{makhzani2015adversarial}
Alireza Makhzani, Jonathon Shlens, Navdeep Jaitly, Ian Goodfellow, and Brendan
  Frey.
\newblock Adversarial autoencoders.
\newblock \emph{arXiv preprint arXiv:1511.05644}, 2015.

\bibitem[McAllester \& Stratos(2020)McAllester and
  Stratos]{mcallester2020formal}
David McAllester and Karl Stratos.
\newblock Formal limitations on the measurement of mutual information.
\newblock In \emph{International Conference on Artificial Intelligence and
  Statistics}, pp.\  875--884. PMLR, 2020.

\bibitem[Mohamed \& Rezende(2015)Mohamed and Rezende]{mohamed2015variational}
Shakir Mohamed and Danilo~Jimenez Rezende.
\newblock Variational information maximisation for intrinsically motivated
  reinforcement learning.
\newblock \emph{Advances in Neural Information Processing Systems},
  28:\penalty0 2125--2133, 2015.

\bibitem[Neal(2001)]{neal2001annealed}
Radford~M Neal.
\newblock Annealed importance sampling.
\newblock \emph{Statistics and computing}, 11\penalty0 (2):\penalty0 125--139,
  2001.

\bibitem[Neal(2011)]{neal2011mcmc}
Radford~M Neal.
\newblock {MCMC} using hamiltonian dynamics.
\newblock \emph{Handbook of Markov Chain Monte Carlo}, 2011.

\bibitem[Nguyen et~al.(2010)Nguyen, Wainwright, and
  Jordan]{nguyen2010estimating}
XuanLong Nguyen, Martin~J Wainwright, and Michael~I Jordan.
\newblock Estimating divergence functionals and the likelihood ratio by convex
  risk minimization.
\newblock \emph{IEEE Transactions on Information Theory}, 56\penalty0
  (11):\penalty0 5847--5861, 2010.

\bibitem[Nowozin et~al.(2016)Nowozin, Cseke, and Tomioka]{nowozin2016f}
Sebastian Nowozin, Botond Cseke, and Ryota Tomioka.
\newblock f-{GAN}: Training generative neural samplers using variational
  divergence minimization.
\newblock \emph{Advances in Neural Information Processing Systems}, 29, 2016.

\bibitem[Polyanskiy \& Wu(2022)Polyanskiy and Wu]{polyanskiy2022information}
Yury Polyanskiy and Yihong Wu.
\newblock Information theory: From coding to learning, 2022.

\bibitem[Poole et~al.(2019)Poole, Ozair, Van Den~Oord, Alemi, and
  Tucker]{poole2019variational}
Ben Poole, Sherjil Ozair, Aaron Van Den~Oord, Alex Alemi, and George Tucker.
\newblock On variational bounds of mutual information.
\newblock In \emph{International Conference on Machine Learning}, 2019.

\bibitem[Radford et~al.(2015)Radford, Metz, and
  Chintala]{radford2015unsupervised}
Alec Radford, Luke Metz, and Soumith Chintala.
\newblock Unsupervised representation learning with deep convolutional
  generative adversarial networks.
\newblock \emph{arXiv preprint arXiv:1511.06434}, 2015.

\bibitem[Rassoul-Agha \& Sepp{\"a}l{\"a}inen(2015)Rassoul-Agha and
  Sepp{\"a}l{\"a}inen]{RS15}
Firas Rassoul-Agha and Timo Sepp{\"a}l{\"a}inen.
\newblock \emph{A course on large deviations with an introduction to Gibbs
  measures}, volume 162.
\newblock American Mathematical Soc., 2015.

\bibitem[Salimans et~al.(2016)Salimans, Goodfellow, Zaremba, Cheung, Radford,
  and Chen]{salimans2016improved}
Tim Salimans, Ian Goodfellow, Wojciech Zaremba, Vicki Cheung, Alec Radford, and
  Xi~Chen.
\newblock Improved techniques for training {GANs}.
\newblock In \emph{Advances in Neural Information Processing Systems}, pp.\
  2234--2242, 2016.

\bibitem[Sobolev(2019)]{sobolevblog}
Artem Sobolev.
\newblock Thoughts on mutual information estimation: More estimators.
\newblock \emph{Blog post}, 2019.
\newblock URL
  \url{http://artem.sobolev.name/posts/2019-08-10-thoughts-on-mutual-information-more-estimators.html}.

\bibitem[Sobolev \& Vetrov(2019)Sobolev and Vetrov]{sobolev2019hierarchical}
Artem Sobolev and Dmitry~P Vetrov.
\newblock Importance weighted hierarchical variational inference.
\newblock In \emph{Advances in Neural Information Processing Systems},
  volume~32, 2019.

\bibitem[Song \& Ermon(2019)Song and Ermon]{song2019understanding}
Jiaming Song and Stefano Ermon.
\newblock Understanding the limitations of variational mutual information
  estimators.
\newblock In \emph{International Conference on Learning Representations}, 2019.

\bibitem[Tishby et~al.(2000)Tishby, Pereira, and Bialek]{tishby2000information}
Naftali Tishby, Fernando~C Pereira, and William Bialek.
\newblock The information bottleneck method.
\newblock \emph{arXiv preprint physics/0004057}, 2000.

\bibitem[van~den Oord et~al.(2018)van~den Oord, Li, and
  Vinyals]{oord2018representation}
Aaron van~den Oord, Yazhe Li, and Oriol Vinyals.
\newblock Representation learning with contrastive predictive coding.
\newblock \emph{arXiv preprint arXiv:1807.03748}, 2018.

\bibitem[Wu et~al.(2017)Wu, Burda, Salakhutdinov, and
  Grosse]{wu2016quantitative}
Yuhuai Wu, Yuri Burda, Ruslan Salakhutdinov, and Roger Grosse.
\newblock On the quantitative analysis of decoder-based generative models.
\newblock In \emph{International Conference on Learning Representations}, 2017.

\bibitem[Zhao et~al.(2018)Zhao, Song, and Ermon]{zhao2018information}
Shengjia Zhao, Jiaming Song, and Stefano Ermon.
\newblock The information autoencoding family: A {Lagrangian} perspective on
  latent variable generative models.
\newblock In \emph{Proc. 34th Conference on Uncertainty in Artificial
  Intelligence}, 2018.

\end{thebibliography}
\bibliographystyle{iclr2022_conference}

\appendix
\clearpage
\vspace*{-3cm}
\part{Appendix} %
\parttoc
\section{A General Approach for Deriving Extended State Space Bounds on Log Partition Functions}\label{app:multi-sample}\label{app:general} %
In this section, we give a short proof that the gap in our general extended state space bounds from \mysec{general} corresponds to a forward or reverse \kl divergence.   We derive various upper and lower bounds on $\log p(\vx)$ using this approach throughout the paper and appendix, and we provide a visual summary in \myfig{multi-sample-ais}.

First, we consider an extended state space target $\ptgt{}(\args)$ and proposal $\qprop{}(\args)$ distributions.  For all cases discussed in this work, we will choose our target and proposal distributions such that $\log \frac{\mathcal{Z}_\textsc{tgt}(\vx)}{\mathcal{Z}_\textsc{prop}(\vx)} = \log p(\vx)$.   For example, a common construction is to have $\mathcal{Z}_\textsc{tgt}(\vx) = \int \ptgt{}(\args) d\argsZ = p(\vx)$ and $\mathcal{Z}_\textsc{prop}(\vx) = \int \qprop{}(\args) d\argsZ = 1$.  Our `reverse' importance sampling bounds \myapp{rev_iwae}-\ref{app:reverse_ms_ais} construct target and proposal such that $\mathcal{Z}_\textsc{tgt}(\vx) = p(\vx)^K$ and $\mathcal{Z}_\textsc{prop}(\vx) = p(\vx)^{K-1}$, which still yields $\mathcal{Z}_\textsc{tgt}(\vx)/\mathcal{Z}_\textsc{prop}(\vx) = p(\vx)$.   

Each pair of extended state-space proposal and target distributions provides both an upper and lower bound on the log partition function.
Taking the expected log ratio of unnormalized densities under either the proposal or target distribution, we have
\begin{align}\label{eq:general}
   \Exp{q_{\textsc{prop}(\argscond)}}{\log \frac{p_\textsc{tgt}(\args)}{q_{\textsc{prop}}(\args)} } \leq \log \frac{\mathcal{Z}_\textsc{tgt}(\vx)}{\mathcal{Z}_\textsc{prop}(\vx)} \leq \Exp{p_\textsc{tgt}(\argscond)}{\log \frac{p_\textsc{tgt}(\args)}{q_{\textsc{prop}}(\args)} } . %
\end{align}
To confirm that these are indeed lower and upper bounds for any $\qprop{}$ and $\ptgt{}$, we can show that the gap in the lower bound in \myeq{general} is the forward KL divergence $\DKL[\qprop{} \|\ptgt{}]$, and the gap in the upper bound is the reverse KL divergence, $\DKL[\ptgt{}\|\qprop{}]$
\small
\begin{align}
    \Exp{\qprop{}(\argscond)}{\frac{p_\textsc{tgt}(\args)}{q_{\textsc{prop}}(\args)}}
    = \myunderbrace{\log \frac{\mathcal{Z}_\textsc{tgt}(\vx)}{\mathcal{Z}_\textsc{prop}(\vx)} - \DKL[\qprop{}(\argscond) \| \ptgt{}(\argscond) ]}{\gls{ELBO}(\vx;\qprop{}, \ptgt{})} \leq \log \frac{\mathcal{Z}_\textsc{tgt}(\vx)}{\mathcal{Z}_\textsc{prop}(\vx)} \label{eq:fwd_kl_expand}
\end{align}
\begin{align}
    \Exp{\ptgt{}(\argscond)}{\frac{p_\textsc{tgt}(\args)}{q_{\textsc{prop}}(\args)}}
    = \myunderbrace{\log \frac{\mathcal{Z}_\textsc{tgt}(\vx)}{\mathcal{Z}_\textsc{prop}(\vx)} + \DKL[\ptgt{}(\argscond) \| \qprop{}(\argscond) ]}{\gls{EUBO}(\vx;\qprop{}, \ptgt{})} \geq \log \frac{\mathcal{Z}_\textsc{tgt}(\vx)}{\mathcal{Z}_\textsc{prop}(\vx)} \label{eq:rev_kl_expand} .
\end{align}
\normalsize
Thus, the bounds in \myeq{general} directly generalize the standard \gls{ELBO} ($\log p(\vx) - \DKL[\qzx\|\pzx]$) and \gls{EUBO} ($\log p(\vx) + \DKL[\pzx\|\qzx]$), which appear as special cases when $K=1$, $T=1$, the proposal distribution is $\qprop{}(\vz|\vx) = \qzx$, and the target distribution is $\ptgt{}(\vz|\vx) = \pzx$. 
In what follows, our extended state space proposal or target distributions may include $\qzx$ as an initial or base variational distribution, with the posterior $\pzx$ often appearing within target distributions $\ptgt{}(\vz_{\ext}|\vx)$.

In \myfig{multi-sample-ais}, we summarize various extended state space proposal (third column) and target distributions (fourth column).
We emphasize that the base variational distribution $\qzx$ (blue circles) and posterior distribution $\pzx$ (red circles) may be used multiple times, in either or both of the extended state space proposal and target distributions.  Similarly, forward \gls{AIS} chains (blue circles) starting from the initial distribution and the backward \gls{AIS} chains (shown in red circles) starting from the posterior
may be used repeatedly in both the proposal or target.
In the next sections, we proceed to derive each of the bounds in \myfig{multi-sample-ais} as special cases of this general approach, thus interpreting each importance sampling bound in terms of probabilistic inference in an extended state space.

\begin{figure}[!t]
\centering
\includegraphics[scale=0.6]{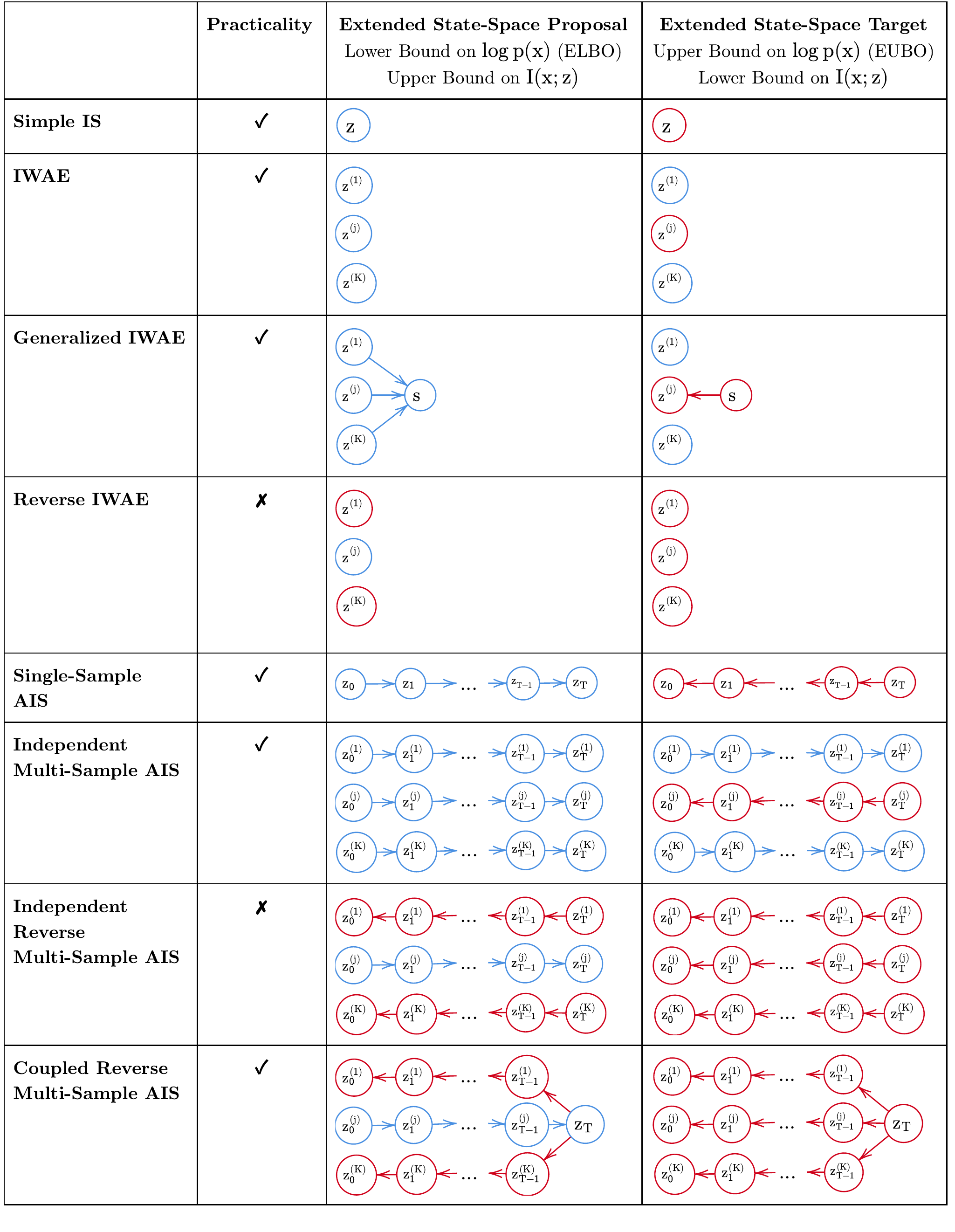}%
\vspace{-.3cm}
\caption{\label{fig:multi-sample-ais} Comparison of the probabilistic extended state-space interpretations of different multi-sample bounds. Forward chains of \textsc{ais} and variational distributions in \textsc{is} / \textsc{iwae} are colored in blue. Backward chains in \textsc{ais} or posterior distributions in \textsc{is} / \textsc{iwae} are colored in red. Note that a lower bound on $\log p(\vx)$, translates to an upper bound on \textsc{mi}, and vice versa.}
\end{figure}

\section{Importance Weighted Autoencoder (IWAE)}\label{app:iwae}
\subsection{Probabilistic Interpretation and Bounds}\label{app:iwae_prob}
Consider a $K$-sample proposal distribution $ \qprop{iwae}(\setofz)$ using independent draws from an initial distribution $q_{\theta}(\vz|\vx)$.  The target distribution
is defined as a uniform mixture of $K$ components, where each component replaces the initial sample $q_{\theta}(\vz^{(k)}|\vx)$ in index $k$ with a sample from the target distribution $\pzx$.  %
\begin{align}
\begin{aligned}
    \qprop{iwae}(\setofz| \vx) = \prod \limits_{\kk=1}^{\K} q_{\theta}(\vz^{(\kk)}|\vx) \label{eq:iwae_fwdrev},
\end{aligned}\hspace*{1cm}
\begin{aligned}
    \ptgt{iwae}(\vx, \setofz) = \frac{1}{\K} \sum \limits_{\kk=1}^{\K} \, \p(\vx, \vz^{(s)}) \, \prod  \limits_{\substack{k =1 \\ k \neq s}}^{K} q_{\theta}(\vz^{(s)}|\vx)  \, . %
\end{aligned}
\end{align}
Note that the normalizing constant of $\ptgt{iwae}(\vx, \setofz)$, or $\int \ptgt{iwae}(\vx, \setofz) d\setofz$, equals $p(\vx)$ since $q_{\theta}(\vz|\vx)$ is normalized and $\int \p(\vx, \vz) d\vz= p(\vx)$.
To connect this with the \gls{IWAE} bound, we show that the $\log$ importance ratio reduces to
\begin{align} 
 \log \frac{\ptgt{iwae}(\vx, \setofz) }{\qprop{iwae}(\setofz|\vx)}  &= \log \frac{ \dfrac{1}{K} \sum \limits_{s=1}^K p(\vx, \vz^{(s)}) \prod \limits_{\myoverset{k=1}{k \neq s}}^K q_{\theta}(\vz^{(k)}|\vx) }{ \prod \limits_{k=1}^K q_{\theta}(\vz^{(k)}|\vx) } \nonumber \\
    &= \log \frac{1}{K} \sum \limits_{k=1}^K \dfrac{ p(\vx, \vz^{(k)})}{ q_{\theta}(\vz^{(k)}|\vx)} . \label{eq:iwae_importance}
\end{align}
As in \cref{eq:general}, we can obtain lower and upper bounds on $\log p(\vx)$ by taking expectations under the proposal and target distributions, respectively.

\paragraph{Alternative Probabilistic Interpretation}
We now present an alternative probabilistic interpretation of \gls{IWAE}, which is similar to \citet{domke2018importance} and will be used as the foundation for our \gls{GIWAE} bounds in \myapp{giwae}.   Consider the following extended state space target distribution,
\begin{align}
\ptgt{iwae}(\vx, \vz^{(1:K)}, \idx) = \frac1K \, p(\vx,\vz^{(\idx)}) \prod \limits_{\substack{k=1 \\ k \neq \idx}}^K q(\vz^{(k)}|\vx) \, . \label{eq:giwae_reverse}
\end{align}
Note that
marginalization over $s$ leads to the \gls{IWAE} mixture target in \myeq{iwae_fwdrev2} or \myeq{iwae_fwdrev}.    
We consider the extended state space proposal
\small
\begin{align}
  \qprop{iwae}(\vz^{(1:K)}, \idx |\vx) &= \bigg( \prod \limits_{k=1}^K q_{\theta}(\vz^{(k)}|\vx) \bigg) \qprop{iwae}(\idx |\vz^{(1:K)}, \vx)\,,
  \label{eq:iwae_s_forward_app}
\end{align}
\normalsize
where we have defined
$\qprop{iwae}(\idx |\vz^{(1:K)}, \vx )= 
{\frac{p(\vx,\vz^{(s)})}{q_{\theta}(\vz^{(s)}|\vx)}} / { \sum \limits_{k=1}^K \frac{p(\vx,\vz^{(k)})}{q_{\theta}(\vz^{(k)}|\vx)}}$.

As desired, the log importance weight match \cref{eq:iwae_importance}
\footnotesize
\begin{align} 
 \log \frac{\ptgt{iwae}(\vx, \setofz, \idx) }{\qprop{iwae}(\setofz, \idx|\vx)}  &= \log \frac{ \dfrac{1}{K} p(\vx, \vz^{(s)}) \prod \limits_{\myoverset{k=1}{k \neq s}}^K q_{\theta}(\vz^{(k)}|\vx) }{ \prod \limits_{k=1}^K q_{\theta}(\vz^{(k)}|\vx)  \frac{\frac{p(\vx,\vz^{(s)})}{q_{\theta}(\vz^{(s)}|\vx)}}{ \sum \limits_{k=1}^K \frac{p(\vx,\vz^{(k)})}{q_{\theta}(\vz^{(k)}|\vx)} }}  
    = \log \dfrac{1}{K} \sum \limits_{k=1}^K \frac{p(\vx,\vz^{(k)})}{q_{\theta}(\vz^{(k)}|\vx)}  \, \label{eq:iwae_importance2}.
\end{align}
\normalsize
\paragraph{Lower Bound on $\log \px$ and Upper Bound on MI}
Using the general approach in \myapp{general}, taking expectations under $\qprop{iwae}$ leads to a lower bound on $\log \px$ 
\begin{align}
    \gls{ELBO}_{\gls{IWAE}}(\vx; q_\theta, K)\coloneqq \mathbb{E}_{\prod \limits_{k=1}^K q_{\theta}(\vz^{(k)}|\vx)} \bigg[\log \frac{1}{K} \sum \limits_{k=1}^K \dfrac{ p(\vx, \vz^{(k)})}{ q_{\theta}(\vz^{(k)}|\vx)} \bigg] \,.
\end{align}
This corresponds to the following upper bound on \gls{MI} for known $p(\vx|\vz)$
\small
\begin{align}
    \Ixz \leq I_{\gls{IWAE}_U}(q_\theta, K) &= \Exp{p(\vx,\vz)}{\log p(\vx|\vz)} - \mathbb{E}_{p(\vx)}\big[\gls{ELBO}_{\gls{IWAE}}(\vx; q_\theta, K) \big]\\
    &= \Exp{p(\vx,\vz)}{\log p(\vx|\vz)} - \mathbb{E}_{p(\vx) \prod \limits_{k=1}^K q_{\theta}(\vz^{(k)}|\vx)} \bigg[\log \frac{1}{K} \sum \limits_{k=1}^K \dfrac{ p(\vx, \vz^{(k)})}{ q_{\theta}(\vz^{(k)}|\vx)} \bigg]
\end{align}
\normalsize

\paragraph{Upper Bound on $\log \px$ and Lower Bound on MI}
Similarly, with expectations under $\ptgt{iwae}( \setofz | \vx)$, we obtain an upper bound on $\log \px$.
Since, for independent draws, the uniform mixture $\ptgt{iwae}$ is invariant to index permutations, we may choose the target sample to be $\vz^{(1)}$ and obtain the upper bound of \citet{sobolev2019hierarchical}:
\begin{align}
 \gls{EUBO}_{\gls{IWAE}}(\vx; q_\theta, K) \coloneqq\Exp{p(\vz^{(1)}|\vx)\qzxi{2:K}}{\log \frac{1}{K} \sum_{i=1}^K \frac{p(\vx,\vz^{(k)})}{q(\vz^{(k)}|\vx)}}   \geq \log\px .
\end{align}
Translating this to a lower bound on \gls{MI} with known $p(\vx|\vz)$,
\small
\begin{align}
    \Ixz \geq I_{\gls{IWAE}_L}&(q_\theta, K) = \Exp{p(\vx,\vz)}{\log p(\vx|\vz)} - \mathbb{E}_{p(\vx)}\big[\gls{EUBO}_{\gls{IWAE}}(\vx; q_\theta, K)\big] \\
    &= \Exp{p(\vx,\vz)}{\log p(\vx|\vz)} - \mathbb{E}_{p(\vx) {p(\vz^{(1)}|\vx)\qzxi{2:K}}} \bigg[\log \frac{1}{K} \sum \limits_{k=1}^K \dfrac{ p(\vx, \vz^{(k)})}{ q_{\theta}(\vz^{(k)}|\vx)} \bigg]\label{eq:iwae_ub_app} .
\end{align}
\normalsize

\subsection{Proof of Logarithmic Improvement in $K$ for IWAE EUBO}\label{app:pf_iwae} %
We first recall results that \gls{IWAE} bounds tighten with increasing $K$. %
\begin{restatable}[]{proposition}{iwaek}\label{prop:iwaek}
For given $\qzx$,  $\gls{ELBO}_{\gls{IWAE}}(\vx; q_\theta, K+1) \geq \gls{ELBO}_{\gls{IWAE}}(\vx; q_\theta, K)$ \citep{burda2016importance} and $\gls{EUBO}_{\gls{IWAE}}(\vx; q_\theta, K+1) \leq \gls{EUBO}_{\gls{IWAE}}(\vx; 
q_\theta, K)$ \citep{sobolev2019hierarchical}.
\end{restatable}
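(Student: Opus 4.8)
The plan is to prove both monotonicity statements from a single averaging identity. First I would fix $\vx$, write the importance weights $w_k = p(\vx,\vz^{(k)})/q_\theta(\vz^{(k)}|\vx)$, and set $\hat{Z}_K = \frac1K\sum_{k=1}^K w_k$, so that the log importance weight in \myeq{iwae_importance} is exactly $\log \hat{Z}_K$. The single combinatorial fact driving everything is that, for any fixed values $w_1,\dots,w_{K+1}$, drawing a uniformly random size-$K$ subset $S\subseteq\{1,\dots,K+1\}$ gives $\mathbb{E}_S[\hat{Z}_K^{(S)}] = \hat{Z}_{K+1}$, where $\hat{Z}_K^{(S)} = \frac1K\sum_{k\in S} w_k$; this holds because there are $K+1$ such subsets and each index lies in exactly $K$ of them.

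For the \textsc{elbo}, the weights are i.i.d.\ draws from $q_\theta(\vz|\vx)$. Applying the concave function $\log$ and Jensen's inequality to the identity above gives $\log\hat{Z}_{K+1} = \log\mathbb{E}_S[\hat{Z}_K^{(S)}] \geq \mathbb{E}_S[\log\hat{Z}_K^{(S)}]$ pointwise in the $w$'s. Taking the expectation over the i.i.d.\ weights and noting that each fixed size-$K$ sub-block of i.i.d.\ draws is distributed identically to a fresh block of $K$ draws, the right-hand side becomes $\gls{ELBO}_{\gls{IWAE}}(\vx; q_\theta, K)$ and the left-hand side is $\gls{ELBO}_{\gls{IWAE}}(\vx; q_\theta, K+1)$, which is the desired inequality.

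The \textsc{eubo} direction is the one requiring a twist, and it will be the main obstacle: under the target sampling distribution one sample is drawn from $p(\vz|\vx)$ rather than $q_\theta$, so the subset argument cannot be applied directly, since a random subset may or may not retain the posterior-drawn index and that index carries asymmetric weight. The fix is to first push the expectation back onto the symmetric proposal via the change of measure from \myapp{general}: since $\ptgt{iwae}(\setofz|\vx)/\qprop{iwae}(\setofz|\vx) = \hat{Z}_K/p(\vx)$, we have $\gls{EUBO}_{\gls{IWAE}}(\vx; q_\theta, K) = \frac{1}{p(\vx)}\,\mathbb{E}_{\prod_{k} q_\theta(\vz^{(k)}|\vx)}[\,\hat{Z}_K \log \hat{Z}_K\,]$. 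Now all $K$ weights are again i.i.d.\ under $q_\theta$, and the integrand is the convex function $g(z) = z\log z$. Applying Jensen's inequality to $g$ in the convex direction, $g(\hat{Z}_{K+1}) = g(\mathbb{E}_S[\hat{Z}_K^{(S)}]) \leq \mathbb{E}_S[g(\hat{Z}_K^{(S)})]$, and then taking the expectation over the i.i.d.\ weights exactly as before yields $\mathbb{E}[g(\hat{Z}_{K+1})] \leq \mathbb{E}[g(\hat{Z}_K)]$, i.e.\ $\gls{EUBO}_{\gls{IWAE}}(\vx; q_\theta, K+1) \leq \gls{EUBO}_{\gls{IWAE}}(\vx; q_\theta, K)$.

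I would close by remarking that the two proofs are genuinely the same argument, namely the combinatorial averaging identity plus Jensen, differing only in that the \textsc{elbo} uses concavity of $\log$ directly while the \textsc{eubo} first symmetrizes to a proposal expectation and then uses convexity of $z\log z$. The only places where care is needed are verifying the reweighting identity $\ptgt{iwae}(\setofz|\vx)/\qprop{iwae}(\setofz|\vx) = \hat{Z}_K/p(\vx)$ and the fact that each size-$K$ sub-block of i.i.d.\ samples is itself distributed as $q_\theta^{\otimes K}$, both of which are immediate.
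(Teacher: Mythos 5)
You give a correct and complete proof; note, however, that the paper itself contains no proof of \myprop{iwaek} at all --- it simply recalls the two monotonicity facts and defers to \citet{burda2016importance} for the \gls{ELBO} direction and to \citet{sobolev2019hierarchical} for the \gls{EUBO} direction, so any self-contained argument is by construction ``a different route.'' Your \gls{ELBO} half reproduces the Burda et al.\ argument exactly (uniform size-$K$ subsets, the identity $\mathbb{E}_S[\hat{Z}_K^{(S)}]=\hat{Z}_{K+1}$, concavity of $\log$), so there the two routes coincide. The \gls{EUBO} half is where your write-up adds value: the change of measure $\ptgt{iwae}(\vz^{(1:K)}|\vx)=\tfrac{\hat{Z}_K}{p(\vx)}\,\qprop{iwae}(\vz^{(1:K)}|\vx)$ --- which is just \myeq{iwae_importance} normalized by $p(\vx)$, combined with the permutation invariance the paper invokes when fixing $s=1$ --- converts the asymmetric target expectation into a symmetric proposal expectation of $\hat{Z}_K\log\hat{Z}_K$, after which the same subset identity plus convexity of $z\mapsto z\log z$ closes the argument. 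This unifies both halves into a single combinatorial identity plus one application of Jensen per bound, and it makes transparent why the two bounds tighten in opposite directions (concavity of $\log$ on the proposal side versus convexity of $z\log z$ on the reweighted side); what it costs is only the two ``immediate'' verifications you flag, and both are indeed immediate: the reweighting identity is the paper's own computation, and any fixed size-$K$ sub-block of i.i.d.\ draws from $\qzx$ is distributed as $K$ fresh draws. The one point I would state explicitly if this were inserted into the paper is that identifying $\gls{EUBO}_{\gls{IWAE}}(\vx;q_\theta,K)$ with the expectation under the mixture target $\ptgt{iwae}$ uses the permutation invariance of $\log\hat{Z}_K$, since the paper's definition in \myeq{iwae_ublb} places the posterior sample in index $1$; Jensen is applied pointwise in the weights before integrating, so no additional integrability caveats are needed.
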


\iwaeelboeubo*
\begin{proof}
We first note the single-sample $\gls{ELBO}(\vx; q_\theta)$ and $\gls{EUBO}(\vx; q_\theta)$ are special cases of 
$\gls{ELBO}_{\gls{GIWAE}}(\vx; q_\theta, \giwaeT, K)$ and $\gls{EUBO}_{\gls{GIWAE}}(\vx; q_\theta,\giwaeT, K)$ 
(\mylemma{elbo_eubo_giwae}) with $\giwaeT = \text{const}$.   
As a result, the gap between $\gls{ELBO}(\vx; q_\theta)$ and $\gls{ELBO}_{\gls{IWAE}}(\vx; q_\theta, K)$, for example, follows as a special case of the gap between $\gls{ELBO}_{\gls{GIWAE}}(\vx; q_\theta, \giwaeT, K)$ and $\gls{ELBO}_{\gls{IWAE}}(\vx; q_\theta, K)$, which we characterize in
\cref{lemma:giwae_elbo_eubo} (\myapp{pf_iwaegiwae}).  The result in \myprop{iwae_elbo_eubo} follows directly.

We now justify the range of the \kl divergences in \cref{eq:iwae_over_elbo} and \cref{eq:iwae_over_eubo} referenced in the underbraces.

\textit{Improvement of $\gls{EUBO}_{\gls{IWAE}}$}:
Note $ \DKL[\indexposterior(\idx|\vz^{(1:K)}, \vx) \| \mathcal{U}(s) ] = \log K - H( \indexposterior(\idx|\vz^{(1:K)}, \vx)) $ is bounded above by $\log K$ since the entropy of a discrete random variable is nonnegative.  

Thus, the improvement of $\gls{EUBO}_{\gls{IWAE}}(\vx; q_\theta, K)$ over $\gls{EUBO}(\vx; q_\theta)$  is limited to $\log K$.   We prove similar results in 
\myprop{iwae_giwae} and \mycor{giwae_logk}  (\myapp{pf_iwaegiwae}),
\myprop{ais_elbo_eubo}
(\myapp{ais_logarithmic_pf}) and 
\myprop{c_r_multi_sample_ais} (\myapp{coupled_pf}).

\textit{Improvement of $\gls{ELBO}_{\gls{IWAE}}$}:
On the other hand, the \kl divergence $\DKL[\mathcal{U}(s)\| \indexposterior(\idx|\vz^{(1:K)}, \vx) ]$ is not limited by $\log K$.   However, we do know that the improvement of $\gls{ELBO}_{\gls{IWAE}}(\vx; q_\theta, K)$ over $\gls{ELBO}(\vx; q_\theta)$ will be limited by $\DKL[\qzx\|\pzx]$, the gap of $\gls{ELBO}(\vx; q_\theta)$.

\end{proof}

\myprop{iwae_elbo_eubo} shows that both \gls{IWAE} upper and lower bounds on $\log p(\vx)$ improve upon their single-sample counterparts, with the improvement of $\gls{EUBO}_{\gls{IWAE}}(\vx; q_{\theta}, K)$ over $\gls{EUBO}(\vx; q_\theta)$ limited by $\log K$.
\mycor{iwae_mi_short} follows directly by translating these results to bounds on \gls{MI}.

\iwaemishort*
\begin{proof}
Recall from \mysec{setting} and \mysec{ba} that
$I_{\gls{BA}_L}(q_{\theta}) = \mathbb{E}_{p(\vx, \vz)}[\log p(\vx|\vz)] - \mathbb{E}_{p(\vx)}[\gls{EUBO}(\vx;q_{\theta})]$, %
and $I_{\gls{IWAE}_L}(q_{\theta}, K) = \mathbb{E}_{p(\vx, \vz)}[\log p(\vx|\vz)] - \mathbb{E}_{p(\vx)}[\gls{EUBO}_{\gls{IWAE}}(\vx; q_{\theta}, K)]$. %
Using \myprop{iwae_elbo_eubo}, the fact that $\gls{EUBO}(\vx; q_{\theta}) - \log K \leq \gls{EUBO}_{\gls{IWAE}}(\vx; q_{\theta}, K)$ for any $\vx$ implies that 
\begin{align}
    I_{\gls{IWAE}_L}(q_{\theta}, K) - I_{\gls{BA}_L}(q_{\theta}) =  \mathbb{E}_{p(\vx)}[\gls{EUBO}(\vx; q_{\theta}) -\gls{EUBO}_{\gls{IWAE}}(\vx; q_{\theta}, K)] \leq \log K ,
\end{align}
which results in $I_{\gls{IWAE}_L}(q_{\theta}, K) \leq  I_{\gls{BA}_L}(q_{\theta}) + \log K$, as desired.  

$I_{\gls{BA}_L}(q_{\theta}) \leq I_{\gls{IWAE}_L}(q_{\theta}, K)$ and $ I_{\gls{IWAE}_U}(q_{\theta}, K) \leq I_{\gls{BA}_U}(q_{\theta})$ follow from the fact that \gls{IWAE} bounds tighten with increasing $K$ in \myprop{iwaek}.
\end{proof}

\subsection{Experimental Results showing Logarithmic Improvement for $I_{\gls{IWAE}_L}(q_{\theta}, K)$} \label{app:iwae_log_improvement}  %
In \mysec{gen_iwae}, we showed that \gls{IWAE} is a special case of \gls{GIWAE}, which decomposes into the sum of a variational \gls{BA} lower bound and a $K$-sample contrastive term.
This suggests that the \gls{IWAE} lower bound on \gls{MI}, which arises from an upper bound on $\log p(\vx)$, may be written as
\small
\begin{align}%
I_{\textsc{IWAE}_{L}}(q_{\theta},K)=
\underbrace{\Exp{p(\vx,\vz)}{\log \frac{\qzx}{p(\vz)}}
\vphantom{\Exp{p(\vx) p(\vz^{(1)}|\vx)\qzxi{2:K}}{\log
\frac{\frac{p(\vx,\vz^{(1)})}{q_{\theta}(\vz^{(1)}|\vx)}}{\frac{1}{K} \sum_{i=1}^K \frac{p(\vx,\vz^{(k)})}{q_{\theta}(\vz^{(k)}|\vx)}}}}
}_{I_\textsc{BA}(q)} +
\underbrace{\Exp{p(\vx) p(\vz^{(1)}|\vx)\qzxi{2:K}}{\log
\frac{\frac{p(\vx,\vz^{(1)})}{q_{\theta}(\vz^{(1)}|\vx)}}{\frac{1}{K} \sum_{i=1}^K \frac{p(\vx,\vz^{(k)})}{q_{\theta}(\vz^{(k)}|\vx)}}}}_{\hspace{.5cm} 0\leq \, \text{contrastive term} \, \leq \log K} .
 \label{eq:iwae_two_terms}
\end{align}
\normalsize
This way of writing $I_{\textsc{IWAE}_L}(q_\theta, K)$ provides additional intuition for the result in \myprop{iwae_elbo_eubo} and \mycor{iwae_mi_short}.   In particular, the improvement of $I_{\textsc{IWAE}_L}(q_\theta, K)$ over $I_{\gls{BA}_L}(q_\theta)$ is simply the contrastive term, which is limited to $\log K$.

\begin{table}[h!]
\resizebox{8cm}{!}{
\begin{tabularx}{\textwidth}{c|c|c|ccc|ccc}%
\cmidrule{1-9}
\textbf{\small Method}&\boldmath$\log K$&\small \textbf{Proposal} &\textbf{VAE2}&\textbf{VAE10}&\textbf{VAE100}&\textbf{GAN2}&\textbf{GAN10}&\textbf{GAN100}\\%
\cmidrule{1-9}%
\multirow{2}{*}{\shortstack[c]{\textbf{IWAE} \\ (K=1)}}&\multirow{2}{*}{$0$}&\textbf{$ p(\vz)$}&$0+0=0$&$0+0=0$&$0+0=0$&$0+0=0$&$0+0=0$&$0+0=0$\\%

\cmidrule{3-9}%
&&\textbf{$q(\vz|\vx)$}& $8.63+0=8.63$&$25.20+0=25.20$&$44.54+0=44.54$&$8.83+0=8.83$&$4.23+0=4.23$&$3.23+0=3.23$\\%
\cmidrule{1-9}%

\multirow{2}{*}{\shortstack[c]{\textbf{IWAE} \\ (K=1K)}}&\multirow{2}{*}{$6.91$} &\textbf{$ p(\vz)$}&$0+6.81=6.81$&$0+6.91=6.91$&$0+6.91=6.91$&$0+6.88=6.88$&$0+6.91=6.91$&$0+6.91=6.91$\\%
\cmidrule{3-9}%
&&\textbf{$q(\vz|\vx)$}&$7.29+1.8=9.09$&$25.20+6.49=31.69$&$44.54+6.90=51.44$&$7.82+2.92=10.74$&$4.23+6.91=11.14$&$3.23+6.91=10.14$\\%
\cmidrule{1-9}%

\multirow{2}{*}{\shortstack[c]{\textbf{IWAE} \\ (K=1M)}}&\multirow{2}{*}{$13.82$}&\textbf{$ p(\vz)$}&  $0+9.09=9.09$&$0+13.82=13.82$&$0+13.82=13.82$&$0+10.76=10.76$&$0+13.81=13.81$&$0+13.82=13.82$\\%

\cmidrule{3-9}%
&&\textbf{$q(\vz|\vx)$}&$3.78+5.31=9.09$&$25.20+8.90=34.10$&$44.54+13.81=58.35$&$6.02+4.79=10.81$&$4.23+13.53=17.76$&$3.17+13.81=16.98$\\%
\cmidrule{1-9}%

\end{tabularx} 
}
\caption{
Decomposition of $I_{\gls{IWAE}_L}(q_\theta, K)$ into \gls{BA} term and contrastive term ($<\log K$) on \textsc{mi} estimation for \textsc{vae} and \textsc{gan} models trained on \textsc{mnist}.}
\label{table:de-mnist}
\end{table}

\begin{table}[h]
\centering
\resizebox{9.5cm}{!}{
\begin{tabular}{c|c|c|ccc}%
\toprule
\textbf{\small Method}&\boldmath$\log K$&\textbf{\small \small Proposal}&\textbf{GAN5}&\textbf{GAN10}&\textbf{GAN100} \\%
\cmidrule{1-6}
\multirow[c]{2}{*}{\shortstack[c]{\textbf{IWAE} \\K=1}}&\multirow[c]{2}{*}{$0$}&{\textbf{$ p(\vz)$}}&$0+0=0$&$0+0=0$&$0+0=0$\\%
\cmidrule{3-6}%
&&{\textbf{$q(\vz|\vx)$}}&$14.53+0=14.53$&$17.45+0=17.45$&$20.00+0=20.00$\\%

\cmidrule{1-6}%
\multirow[c]{2}{*}{\shortstack[c]{\textbf{IWAE} \\K=1k}}&\multirow[c]{2}{*}{$6.91$}&{\textbf{$ p(\vz)$}}&$0+6.91=6.91$&$0+6.91=6.91$&$0+6.91=6.91$\\%
\cmidrule{3-6}%
&&{\textbf{$q(\vz|\vx)$}}&$14.53+6.90=21.43$&$16.68+6.9=23.58$&$20.07 +6.91=26.98$\\%

\cmidrule{1-6}%
\multirow[c]{2}{*}{\shortstack[c]{\textbf{IWAE} \\K=1M}}&\multirow[c]{2}{*}{$13.82$}&{\textbf{$ p(\vz)$}}&$0+13.82=13.82$&$0+13.82=13.82$&$0+13.82=13.82$\\%
\cmidrule{3-6}%
&&{\textbf{$q(\vz|\vx)$}}&$14.53+13.81=28.34$&$16.92+13.81=30.73$&$20.00+13.81=33.81$\\%
   \cmidrule{1-6}
\end{tabular}
}
\caption{
Decomposition of $I_{\gls{IWAE}_L}(q_\theta, K)$ into \gls{BA} term and contrastive term ($<\log K$) on \textsc{mi} estimation for \textsc{gan} models trained on \textsc{cifar}-10.}
\label{table:de-cifar}
\end{table}

\mytable{de-mnist} and \mytable{de-cifar} show the \gls{IWAE} objective decomposition to the \gls{BA} term and the contrastive term, on \textsc{vae}s and \textsc{gan}s trained \textsc{mnist} and \textsc{cifar}-10 dataset.  We can see that in all the experiments, the contribution of the contrastive term is always less than or equal to $\log K$. 

For \textsc{mnist} \textsc{vae}s and \textsc{gan}s with two dimensional latent spaces, the contrastive term may contribute notably less than $\log K$.  In these cases, even the optimal critic function $T^{*}(\vx,\vz)=\log \frac{p(\vx,\vz)}{\qzx} + c(\vx)$, as used in \gls{IWAE}, has difficulty distinguishing posterior and variational samples.   

 However, the contribution of the contrastive term is almost exactly $\log K$ for higher dimensional \textsc{vae} and \textsc{gan} models, where the posterior $\pzx$ is more complex and is more easily distinguishable from the variational $\qzx$. These results highlight the inherent exponential sample complexity of the \gls{IWAE} lower bound on \gls{MI}.

\subsection{Bias Reduction in $K$ for IWAE Lower Bound on $\log \px$ / Upper Bound on MI}\label{app:iwae_importance}
We have seen in \myapp{pf_iwae} \myprop{iwae_elbo_eubo}  
that the improvement of $\gls{EUBO}_{\gls{IWAE}}(\vx; q_\theta, K)$ and $I_{\gls{IWAE}_L}(\vx; q_\theta, K)$ over the single-sample $\gls{EUBO}(\vx; q_\theta)$ and $I_{\gls{BA}_L}(\vx; q_\theta)$ is limited by $\log K$.   This suggests that exponential sample complexity in the gap of the \gls{EUBO}, $K \propto \exp\{\DKL[p(\vz|\vx)\|\qzx] \}$, is required to obtain a tight lower bound.   

The quantity $\DKL[\mathcal{U}(s) \| \ptgt{iwae}(s|\vz^{(1:K)},\vx) ] $, which measures  the improvement of the \gls{IWAE} lower bound on $\log p(\vx)$ over the \gls{ELBO}, is not explicitly limited by $\log K$.   
However, \citet{chatterjee2018sample} suggests that the same exponential sample complexity, $K \propto \exp\{\DKL[p(\vz|\vx)\|\qzx] \}$, is required for accurate importance sampling estimation with the proposal $\qzx$.   \citet{maddison2017filtering, domke2018importance} find that the bias of the \gls{IWAE} lower bound in the limit of $K\rightarrow \infty$ reduces at the rate of $\mathcal{O}(\frac{1}{2K} \text{Var}[\frac{p(\vz|\vx)}{\qzx}])$, although the $\text{Var}[\frac{p(\vx, \vz)}{\qzx}]$ term is at least exponential in  $\DKL[p(\vz|\vx)\|\qzx]$ \citep{song2019understanding}. 

This exponential sample complexity for exact estimation of $\log p(\vx)$ or $I(\vx,\vz)$ is usually impractical for complex target distributions and limited variational families, where $\DKL[p(\vz|\vx)\|\qzx ]$ may be large.
This motivates our improved, Multi-Sample \gls{AIS} proposals in \mysec{multi-sample-ais}, which achieve more favorable (linear) bias reduction by introducing \gls{MCMC} transition kernels to bridge between $\qzx$ and $p(\vz|\vx)$.

\subsection{Relationship with Structured InfoNCE}\label{app:structured}
We can recognize the Structured \textsc{InfoNCE} upper and lower bounds for known $\pxgz$ (\citet{poole2019variational} Sec. 2.5) as simply applying the standard \gls{IWAE} bounds, using the marginal $p(\vz)$ in place of the variational $q_{\theta}(\vz|\vx)$
\footnotesize
\begin{align*}
\mathbb{E}_{p(\vx,\vz^{(1)}) \prod \limits_{k=2}^K p(\vz^{(k)})} \bigg[ \log \frac{p(\vx|\vz^{(1)})}{\frac{1}{K} \sum \limits_{k=1}^{K} p(\vx|\vz^{(k)})} \bigg] \leq \Ixz \leq  \mathbb{E}_{\px \prod \limits_{k=1}^K p(\vz^{(k)})}\bigg[\log \frac{1} {\frac{1}{K} \sum \limits_{k=1}^{K} p(\vx|\vz^{(k)})} \bigg] - \Hxz .
\end{align*}
\normalsize
We refer to the lower bound as $I_{\textsc{S-InfoNCE}_L}(K)$ and the upper bound as $I_{\textsc{S-InfoNCE}_U}(K)$.
From \mycor{iwae_mi_short}, we obtain an alternative proof that the Structured \textsc{InfoNCE} lower bound is upper bounded by $\log K$.   Since $I_{\textsc{ba}_L}(p(\vz)) \leq I_{\gls{IWAE}_L}(p(\vz), K) \leq I_{\textsc{ba}_L}(p(\vz)) + \log K$ and the \gls{BA} bound with a prior proposal equals $0$ from \myeq{simple_importance_sampling2}, we have that $0 \leq I_{\textsc{S-InfoNCE}_L}(K) \leq \log K$.

\section{Generalized IWAE}\label{app:giwae}
\subsection{Probabilistic Interpretation and Bounds}\label{app:giwae_prob}
To derive a probabilistic interpretation for \gls{GIWAE}, 
our starting point is to further extend the state space of the \gls{IWAE} target distribution in \myeq{iwae_fwdrev2}, using a uniform index variable $p(s) = \frac1K \, \forall s$ that specifies which sample $\vz^{(k)}$ is drawn from the posterior $\pzx$.  This is shown in \myfig{multi-sample-ais}, and leads to a joint distribution over $(\vx, \vz^{(1:K)}, s)$ as
\begin{align}
\ptgt{giwae}(\vx, \vz^{(1:K)}, \idx) = \frac1K \, p(\vx,\vz^{(\idx)}) \prod \limits_{\myoverset{k=1}{k \neq \idx}}^K q(\vz^{(k)}|\vx) \, , \label{eq:giwae_reverse_app}
\end{align}
with marginalization over $s$ leading to the mixture in \myeq{iwae_fwdrev2}.
The posterior over the index variable $s$, which infers the `positive' sample drawn from  $\pzx$ given a set of samples $\vz^{(1:K)}$, corresponds to the normalized importance weights
\begin{align}
    \ptgt{giwae}( \idx |\vx, \vz^{(1:K)}) =  \frac{\frac{p(\vx,\vz^{(\idx)})}{q_{\theta}(\vz^{(\idx)}|\vx)}}{\sum_{k=1}^K \frac{p(\vx,\vz^{(k)})}{q_{\theta}(\vz^{(k)}|\vx)}} , \label{eq:giwae_posterior}
\end{align}
which 
can be derived from
$\ptgt{giwae}( \idx |\vx, \vz^{(1:K)}) = \frac{\ptgt{giwae}( \vx, \vz^{(1:K)}, \idx)}{\ptgt{giwae}( \vx, \vz^{(1:K)})} = \frac{\frac{1}{K} p(\vx,\vz^{(\idx)}) \prod_{k\neq \idx} q_\phi(\vz^{(k)}|\vx)}{\frac{1}{K}\sum_{j=1}^K p(\vx,\vz^{(j)}) \prod_{k\neq j} q_\phi(\vz^{(k)}|\vx) }$. 

For the \gls{GIWAE} extended state space proposal distribution, we consider a categorical index variable $\qprop{giwae}(\idx | \vz^{(1:K)}, \vx)$ drawn according \gls{SNIS}, with weights calculated using the critic function $\giwaeT$.
\small
\begin{align}
    \hspace*{-.1cm} \qprop{giwae}(\vz^{(1:K)}, \idx |\vx) &= \left( \prod \limits_{k=1}^K q_{\theta}(\vz^{(k)}|\vx) \right) \qprop{giwae}(\idx |\vz^{(1:K)}, \vx ), \,\,  \label{eq:giwae_forward_app} \\ \text{where} \quad \qprop{giwae}(\idx |\vz^{(1:K)}, \vx ) &= \dfrac{e^{\giwaeT(\vx,\vz^{(\idx)})}}{ \sum \limits_{k=1}^K e^{\giwaeT(\vx,\vz^{(k)})}}. \label{eq:giwae_snis}
\end{align}
\normalsize
Similarly to \citet{lawson2019energy}, the variational \gls{SNIS} distribution $\qprop{giwae}(\idx |\vz^{(1:K)}, \vx )$ is approximating
the true \gls{SNIS} distribution $\ptgt{giwae}( \idx |\vx, \vz^{(1:K)})$.
As we show in \myapp{pf_iwaegiwae},  the optimal critic function is $\giwaeT(\vx,\vz) = \log \frac{p(\vx,\vz)}{q(\vz|\vx)} + c(\vx)$, which recovers the \gls{IWAE} probabilistic interpretation (see \myapp{iwae_prob}). %

\paragraph{Log Importance Ratio}
To derive bounds on the log partition function, we first calculate the log unnormalized density ratio 
\begin{align}
    \log \frac{\ptgt{giwae}(\vz^{(1:K)}, \idx, \vx)}{\qprop{giwae}(\vz^{(1:K)}, \idx |\vx) }& = \log \frac{\frac{1}{K} p(\vx, \vz^{(\idx)})  \prod \limits_{\myoverset{k=1}{k \neq \idx}}^{K} q(\vz^{(k)}|\vx) }{ q(\idx |\vz^{(1:K)}, \vx ) \prod \limits_{k=1}^K q(\vz^{(k)}|\vx) }  \\
    &= \log \frac{1}{K} \dfrac{\sum_k e^{T(\vx,\vz^{(k)})}}{e^{T(\vx,\vz^{(\idx)})}} \frac{p(\vx, \vz^{(\idx)})}{q(\vz^{(\idx)}|\vx)} \\
    &= \log \frac{p(\vx, \vz^{(\idx)})}{q(\vz^{(\idx)}|\vx)}  -  T(\vx, \vz^{(\idx)})  + \log \frac{1}{K} \sum \limits_{k=1}^K e^{T(\vx,\vz^{(k)})}\,.
\end{align}

Taking the expectation of the log unnormalized density ratio under the proposal or target distribution yields a lower or upper bound, respectively, on $\log \px$
\begin{align}
\myunderbrace{\Exp{\qprop{giwae}}{\log \frac{\ptgt{giwae}(\vz^{(1:K)},  \idx,\vx)}{\qprop{giwae}(\vz^{(1:K)}, \idx |\vx)}}}{\gls{ELBO}_{\gls{GIWAE}}(\vx;q_\theta, \giwaeT, K) } \leq \log \px \leq \myunderbrace{\Exp{\ptgt{giwae}}{\log \frac{\ptgt{giwae}(\vz^{(1:K)}, \idx,\vx)}{\qprop{giwae}(\vz^{(1:K)}, \idx |\vx)}}}{\gls{EUBO}_{\gls{GIWAE}}(\vx;q_\theta, \giwaeT, K)}. \label{eq:giwae_px}
\end{align}
As in \mysec{general} and \myapp{general}, the gap in the lower and upper bounds 
can be derived as \textsc{kl} divergences in the extended state space.
\paragraph{Upper Bound on $\log \px$ and Lower Bound on MI}
To derive an explicit form for the \gls{GIWAE} upper bound on $\log \px$, we write
\begin{align}
    \gls{EUBO}_{\gls{GIWAE}}(&q_\theta, \giwaeT, K) = \Exp{\ptgt{giwae}(\vz^{(1:K)}, s|\vx)}{\log \frac{p(\vx, \vz^{(\idx)})}{q(\vz^{(\idx)}|\vx)}  -  T(\vx, \vz^{(\idx)})  + \log \frac{1}{K} \sum \limits_{k=1}^K e^{T(\vx,\vz^{(k)})}} \nonumber \\
    &=
    \Exp{p(\vz|\vx)}{\log \frac{p(\vx, \vz)}{q(\vz|\vx)}} - 
    \Exp{p(\vz^{(1)}|\vx) \prod \limits_{k =2} q(\vz^{(k)}|\vx)}{ \log \frac{e^{T(\vx, \vz^{(1)})} }{\frac{1}{K} \sum \limits_{k=1}^K e^{T(\vx,\vz^{(k)})}}} . \label{eq:giwae_ub_logpx}
\end{align}
where, in the first term of the second line, we note that $\ptgt{giwae}(\vz^{(1:K)}, s|\vx)$ specifies that $\vz^{(\idx)}\sim p(\vz|\vx)$.   Since $s \sim p(s) = \frac{1}{K}$ is sampled uniformly, we can assume $s=1$ in the second term due to permutation invariance.

Translating this into a lower bound on \gls{MI}, we consider $\Ixz = -\Exp{p(\vx)}{\log p(\vx)} - \Hxz \geq - \mathbb{E}_{p(\vx)}[\gls{EUBO}_{\gls{GIWAE}}(q_\theta, \giwaeT, K)] -\Hxz$.   Writing the conditional entropy term over the index $s$,
\begin{align}
\Ixz & \geq \mathbb{E}_{p(\vx,\vz)}[\log p(\vx|\vz)] \\
&\phantom{\mathbb{E}_{p}}- \bigg( \mathbb{E}_{p(\vx)p(\vz|\vx)}\left[\log \frac{p(\vx, \vz)}{q(\vz|\vx)}\right]  -  \mathbb{E}_{p(\vx) p(\vz^{(1)}|\vx) \prod \limits_{k=2}^K q(\vz^{(k)}|\vx)} \bigg[  \log \frac{e^{\giwaeT(\vx, \vz^{(1)})}}{ \frac{1}{K} \sum \limits_{k=1}^K e^{\giwaeT(\vx,\vz^{(k)})}}  \bigg] \bigg)  \nonumber \\[1.5ex]
&= \Exp{p(\vx,\vz)}{\log \frac{q(\vz|\vx)}{p(\vz)}} +
\Exp{p(\vx) p(\vz^{(1)}|\vx)\qzxi{2:K}}{ \log 
\frac{e^{\giwaeT(\vx,\vz^{(1)})}}{\frac{1}{K} \sum_{i=1}^K e^{\giwaeT(\vx,\vz^{(k)})}}}
 \,, \label{eq:giwae_info_lb}
\end{align}
which matches \myeq{giwae_lb_mi_main} from the main text.

\subsection{GIWAE Upper Bound on MI Does Not Provide Benefit Over IWAE}\label{app:giwae_ub}
To derive an explicit form for the \gls{GIWAE} lower bound on $\log \px$,
\footnotesize
\begin{align}
&\gls{ELBO}_{\gls{GIWAE}}(\vx; q_\theta, \giwaeT, K) = \Exp{\qprop{giwae}(\vz^{(1:K)}, s|\vx)}
    {\log \frac{p(\vx, \vz^{(\idx)})}{q(\vz^{(\idx)}|\vx)}  -  \giwaeT(\vx, \vz^{(\idx)})  + \log \frac{1}{K} \sum \limits_{k=1}^K e^{\giwaeT(\vx,\vz^{(k)})}} \label{eq:giwae_lb_logpx}
\end{align}
\normalsize
Translating this to an upper bound on \gls{MI} yields 
\footnotesize
\begin{align}
\Ixz & \leq \mathbb{E}_{p(\vz^{(s)}|\vx)}\bigg[ \log p(\vx|\vz^{(s)}) \bigg] - \bigg( \mathbb{E}_{\prod \limits_{k=1}^K q(\vz^{(k)}|\vx)} \bigg[ \sum \limits_{s=1}^K  \frac{e^{\giwaeT(\vx,\vz^{(s)})} }{ \sum \limits_{k=1}^K e^{\giwaeT(\vx,\vz^{(k)})} } \log \frac{p(\vx, \vz^{(s)})}{q(\vz^{(s)}|\vx)} \bigg] \label{eq:giwae_info_ub} \\
&\phantom{\geq \mathbb{E}_{p(\vz^{(s)}|\vx)} [\log p(\vx|\vz^{(s)})] -  \mathbb{E}} -  \mathbb{E}_
{\qprop{giwae}(\vz^{(1:K)}, s|\vx)}
\bigg[  \log \frac{e^{T(\vx, \vz^{(s)})}}{ \frac{1}{K} \sum \limits_{k=1}^K e^{T(\vx,\vz^{(k)})}}  \bigg] \bigg)  
\nonumber
\end{align}
\normalsize
Thus, in \cref{eq:giwae_info_ub}, knowledge of the full joint density is required to evaluate both the conditional entropy and $\log \frac{p(\vx,\vz^{(s)})}{q(\vz^{(s)}|\vx)}$ terms. 
 If both $p(\vz)$ and $p(\vx|\vz)$ are known, then we will show in \mycor{giwae} below that the optimal critic or negative energy function in \gls{GIWAE} yields the true importance weights, and the resulting \gls{MI} or $\log p(\vx)$ bounds matches the \gls{IWAE} bounds. 

We thus conclude that the \gls{GIWAE} upper bound on \gls{MI} and 
lower bound on $\log p(\vx)$
does not provide any benefit over \gls{IWAE} in practice.   However, 
$\gls{ELBO}_{\gls{GIWAE}}(\vx; q_\theta, \giwaeT, K)$ is still useful for analysis, as our proof of \mylemma{elbo_eubo_giwae} below allows us characterize the gap between $\gls{ELBO}_{\gls{IWAE}}(\vx; q_\theta, K)$ and $\gls{ELBO}(\vx; q_\theta)$ in \myprop{iwae_elbo_eubo}.

\subsection{ELBO and EUBO are Special Cases of GIWAE Log Partition Function Bounds}
\label{app:elbo_eubo_giwae}
\begin{restatable}{lemma}{elbo_eubo_giwae}\label{lemma:elbo_eubo_giwae}
The single-sample \gls{ELBO} and \gls{EUBO} are special cases of \gls{GIWAE}, with
\begin{align}
    \gls{ELBO}(\vx;q_\theta)&= \gls{ELBO}_{\gls{GIWAE}}(\vx;q_\theta, T_{\phi_0} = \text{const}, K) \nonumber , \\
    \qquad
    \gls{EUBO}(\vx;q_\theta)&= \gls{EUBO}_{\gls{GIWAE}}(\vx;q_\theta, T_{\phi_0} = \text{const}, K) \nonumber .
\end{align}
\normalsize
In both cases, the \gls{SNIS} sampling distribution (\cref{eq:giwae_snis}) is uniform $\qprop{giwae}(\vz^{(1:K)}, \idx | \vx) = \frac{1}{K} = \mathcal{U}(s)$.
\end{restatable}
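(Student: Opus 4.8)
The plan is to substitute the constant critic $\giwaeT \equiv c$ (i.e.\ $T_{\phi_0} = \text{const}$) directly into the \gls{GIWAE} proposal and target distributions from \cref{eq:giwae_reverse_app} and \cref{eq:giwae_snis}, and track how both the \gls{SNIS} sampling distribution and the log importance ratio simplify. First I would observe that when $\giwaeT \equiv c$, the \gls{SNIS} weights in \cref{eq:giwae_snis} become $\qprop{giwae}(\idx|\vz^{(1:K)},\vx) = e^c / \sum_{k=1}^K e^c = 1/K = \mathcal{U}(s)$, which already establishes the final claim of the lemma. I would then plug $\giwaeT \equiv c$ into the log unnormalized density ratio computed in \myapp{giwae_prob}, namely $\log \frac{p(\vx,\vz^{(\idx)})}{q(\vz^{(\idx)}|\vx)} - \giwaeT(\vx,\vz^{(\idx)}) + \log \frac{1}{K}\sum_{k=1}^K e^{\giwaeT(\vx,\vz^{(k)})}$, and note that the last two terms collapse to $-c + \log e^{c} = 0$. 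Hence the log ratio reduces to $\log \frac{p(\vx,\vz^{(\idx)})}{q(\vz^{(\idx)}|\vx)}$, which depends only on the single sample at the selected index $\idx$.

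Given this collapse, the second step is to evaluate the \gls{ELBO}, i.e.\ the expectation of the reduced ratio under $\qprop{giwae}$. Under the proposal the $K$ samples are drawn i.i.d.\ from $\qzx$ and $\idx$ is then chosen by the (now uniform) \gls{SNIS} distribution, so the marginal law of $\vz^{(\idx)}$ is exactly $\qzx$. Therefore $\gls{ELBO}_{\gls{GIWAE}}(\vx; q_\theta, c, K) = \Exp{\qzx}{\log \frac{p(\vx,\vz)}{\qzx}} = \gls{ELBO}(\vx; q_\theta)$.

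For the \gls{EUBO}, I would take the same reduced log ratio in expectation under $\ptgt{giwae}$. By construction of the target in \cref{eq:giwae_reverse_app}, conditioned on the index $\idx$ the sample $\vz^{(\idx)}$ is drawn from the posterior $\pzx$ while the remaining $K-1$ samples come from $\qzx$; since the integrand only involves $\vz^{(\idx)}$, these extra samples integrate out, giving $\gls{EUBO}_{\gls{GIWAE}}(\vx; q_\theta, c, K) = \Exp{\pzx}{\log \frac{p(\vx,\vz)}{\qzx}} = \gls{EUBO}(\vx; q_\theta)$. There is no genuine obstacle here, since the cancellation is exact; the only point to argue cleanly rather than merely assert is the marginalization step, namely that under each extended-state-space measure the selected coordinate $\vz^{(\idx)}$ has the claimed marginal ($\qzx$ for the proposal, $\pzx$ for the target). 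This follows from the i.i.d.\ structure of the proposal together with the exchangeability and permutation symmetry already invoked when writing the \gls{IWAE} bounds in \myapp{iwae_prob}.
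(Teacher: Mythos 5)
Your proof is correct and follows essentially the same route as the paper's: both reduce the constant-critic \gls{SNIS} weights to $\mathcal{U}(s)$, observe that the \gls{GIWAE} log importance ratio collapses to the single-sample ratio $\log \frac{p(\vx,\vz^{(\idx)})}{q_\theta(\vz^{(\idx)}|\vx)}$ via exact cancellation of the critic terms, and then take expectations under the proposal and target, using that the selected coordinate has marginal $\qzx$ and $\pzx$ respectively. The only cosmetic difference is that you substitute into the precomputed general \gls{GIWAE} ratio while the paper rewrites the extended-state-space densities and simplifies the fraction directly; these are the same computation.
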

\begin{proof}

We consider the \gls{GIWAE} probabilistic interpretation (\myapp{giwae_prob}) for $T_{\phi_0} = \text{const}$.    We refer to this extended state space proposal as $\qprop{ba}$, since it leads to $I_{\gls{BA}_L}(q_\theta)$ and $I_{\gls{BA}_U}(q_\theta)$ bounds on \gls{MI}.
\begin{align}
      \qprop{ba}(\vz^{(1:K)}, \idx | \vx) &= \bigg( \prod \limits_{k=1}^K q_\theta(\vz^{(k)}|\vx) \bigg) \qprop{ba}(s|\vz^{(1:K)}, \vx) \,, \label{eq:qba_proposal} \\
      \text{where} \,\, \qprop{ba}(s|\vz^{(1:K)}, \vx) &= \frac{e^{T_{\phi_0}(\vx,\vz^{(s)})}}{\sum_{k=1}^K e^{T_{\phi_0}(\vx,\vz^{(k)})}} = \mathcal{U}(s)=  \frac{1}{K} \,. \label{eq:qba}
\end{align}
Note that the \gls{SNIS} sampling distribution $\qprop{ba}(s|\vz^{(1:K)}, \vx)$ will be uniform, which matches $p(s) = \frac{1}{K}$ in the \gls{GIWAE} extended state space target distribution  
 \begin{align}
     \ptgt{giwae} (\vz^{(1:K)},\idx|\vx) = \frac{1}{K} p(\vz^{(\idx)}|\vx) \prod \limits_{\myoverset{k=1}{k\neq \idx}}^K q_\theta(\vz^{(k)}|\vx)
 \end{align}
Now, taking the log unnormalized importance weights, we obtain
\begin{align}
   \log \frac{\ptgt{giwae} (\vz^{(1:K)},\idx|\vx)}{\qprop{ba}(\vz^{(1:K)}, \idx | \vx)} = \log \frac{\frac{1}{K} p(\vz^{(\idx)}|\vx) \prod \limits_{\myoverset{k=1}{k\neq \idx}}^K q_\theta(\vz^{(k)}|\vx)}{\frac{1}{K}\prod \limits_{k=1}^K q_\theta(\vz^{(k)}|\vx) } 
   = \log \frac{p(\vx, \vz^{(\idx)})}{q_\theta(\vz^{(\idx)}|\vx)} 
\end{align}
Taking expectations with respect to $\qprop{ba}(\vz^{(1:K)}, \idx | \vx)$ or $\ptgt{giwae}(\vz^{(1:K)},\idx|\vx)$ leads to $\gls{ELBO}(\vx;q_\theta)$ and $\gls{EUBO}(\vx;q_\theta)$ respectively, as in \mysec{background}-\ref{sec:ba}.
\end{proof}

\subsection{Proof of Relationship between IWAE and GIWAE Probabilistic Interpretations (\myprop{iwae_giwae}) }\label{app:pf_iwaegiwae} %

We first prove a lemma which relates both the \gls{GIWAE} lower and upper bounds on $\log p(\vx)$ to the respective \gls{IWAE} bounds.    From this lemma, \myprop{iwae_elbo_eubo} follows directly and relates the \gls{ELBO} and \gls{EUBO} (which are special cases of $\gls{ELBO}_{\gls{GIWAE}}$ and $\gls{EUBO}_{\gls{GIWAE}}$) to $\gls{ELBO}_{\gls{IWAE}}$ and $\gls{EUBO}_{\gls{IWAE}}$.

\begin{restatable}{lemma}{giwae_elbo_eubo}\label{lemma:giwae_elbo_eubo}
We can characterize the difference between \gls{IWAE} and \gls{GIWAE} bounds on $\log p(\vx)$ using \kl divergences between their respective \gls{SNIS} distributions.
\scriptsize
\begin{align}%
\hspace*{-.15cm} \gls{ELBO}_{\textsc{IWAE}}(\vx; q_\theta,K)
&= \gls{ELBO}_{\textsc{GIWAE}}(\vx; q_\theta, \giwaeT, K)+  
{\mathbb{E}_{\qprop{giwae}(\vz^{(1:K)}|\vx)} \bigg[\DKL[\, \qprop{giwae}(\idx |\vz^{(1:K)}, \vx  )\| \ptgt{iwae}(\idx |\vz^{(1:K)}, \vx  ) ] \vphantom{\frac{1}{2} }  \,  \bigg]}
\label{eq:iwae_over_giwaelb} \\
\gls{EUBO}_{\textsc{IWAE}}(\vx; q_\theta,K) 
&= \gls{EUBO}_{\textsc{GIWAE}}(\vx; q_\theta, \giwaeT, K) - 
 {\mathbb{E}_{\ptgt{iwae}(\vz^{(1:K)}|\vx)}\bigg[\DKL[\ptgt{iwae}(\idx |\vz^{(1:K)}, \vx  ) \, \| \, \qprop{giwae}(\idx |\vz^{(1:K)}, \vx  ) ] \vphantom{\frac{1}{2} }  \,  \bigg]} \label{eq:iwae_over_giwaeub}
\end{align}
\end{restatable}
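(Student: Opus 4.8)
The plan is to exploit the fact that the \gls{GIWAE} and \gls{IWAE} extended state space interpretations share the \emph{same} target distribution and the \emph{same} marginal proposal over $\setofz$, so that their bounds differ only through the conditional law of the index $\idx$. First I would invoke the general identities from \myapp{general}: since both interpretations satisfy $\log(\mathcal{Z}_\textsc{tgt}/\mathcal{Z}_\textsc{prop}) = \log p(\vx)$, \cref{eq:fwd_kl_expand} and \cref{eq:rev_kl_expand} give $\gls{ELBO}_{\gls{GIWAE}}(\vx; q_\theta, \giwaeT, K) = \log p(\vx) - \DKL[\qprop{giwae}\|\ptgt{giwae}]$ and $\gls{EUBO}_{\gls{GIWAE}}(\vx; q_\theta, \giwaeT, K) = \log p(\vx) + \DKL[\ptgt{giwae}\|\qprop{giwae}]$, and likewise for \gls{IWAE}. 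Subtracting then reduces the claim to comparing the relevant \textsc{kl} divergences, with the $\log p(\vx)$ terms cancelling.

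The key structural observations are: (i) the \gls{GIWAE} target \myeq{giwae_reverse_app} and the \gls{IWAE} target \myeq{giwae_reverse} are literally identical, hence $\ptgt{giwae}(\setofz|\vx) = \ptgt{iwae}(\setofz|\vx)$ and $\ptgt{giwae}(\idx|\vz^{(1:K)},\vx) = \ptgt{iwae}(\idx|\vz^{(1:K)},\vx) = \indexposterior(\idx|\vx,\vz^{(1:K)})$; (ii) both proposals \myeq{giwae_forward_app} and \myeq{iwae_s_forward_app} share the marginal $\prod_{k} q_\theta(\vz^{(k)}|\vx)$ over $\setofz$; and (iii) the \gls{IWAE} proposal conditional over the index is exactly the normalized importance weights, so $\qprop{iwae}(\idx|\vz^{(1:K)},\vx) = \ptgt{iwae}(\idx|\vz^{(1:K)},\vx)$, making the \gls{IWAE} index-level \textsc{kl} vanish.

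Next I would apply the chain rule for \textsc{kl} divergence with $A = \setofz$ and $B = \idx$. For the lower bound direction,
\begin{align}
\DKL[\qprop{}(\vz^{(1:K)},\idx|\vx)\|\ptgt{}(\vz^{(1:K)},\idx|\vx)] = \DKL[\qprop{}(\setofz|\vx)\|\ptgt{}(\setofz|\vx)] + \Exp{\qprop{}(\setofz|\vx)}{\DKL[\qprop{}(\idx|\vz^{(1:K)},\vx)\|\ptgt{}(\idx|\vz^{(1:K)},\vx)]} \nonumber
\end{align}
holds for both interpretations. By (i) and (ii) the first term on the right is identical for \gls{GIWAE} and \gls{IWAE}, so it cancels in the difference $\DKL[\qprop{giwae}\|\ptgt{giwae}] - \DKL[\qprop{iwae}\|\ptgt{iwae}]$; by (iii) the \gls{IWAE} conditional \textsc{kl} is zero. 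What remains is $\Exp{\qprop{giwae}(\setofz|\vx)}{\DKL[\qprop{giwae}(\idx|\vz^{(1:K)},\vx)\|\ptgt{iwae}(\idx|\vz^{(1:K)},\vx)]}$, which upon substituting $\gls{ELBO}_{\gls{IWAE}} - \gls{ELBO}_{\gls{GIWAE}} = \DKL[\qprop{giwae}\|\ptgt{giwae}] - \DKL[\qprop{iwae}\|\ptgt{iwae}]$ yields \cref{eq:iwae_over_giwaelb}. The upper bound \cref{eq:iwae_over_giwaeub} follows symmetrically: applying the chain rule to $\DKL[\ptgt{}\|\qprop{}]$, the shared target marginal (i) makes the $\setofz$-level \textsc{kl} cancel and the \gls{IWAE} index \textsc{kl} vanish by (iii), leaving $-\Exp{\ptgt{iwae}(\setofz|\vx)}{\DKL[\ptgt{iwae}(\idx|\vz^{(1:K)},\vx)\|\qprop{giwae}(\idx|\vz^{(1:K)},\vx)]}$.

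The proof is essentially bookkeeping once observations (i)--(iii) are in place; the main thing to be careful about is not conflating the two distinct extended-state-space interpretations. In particular, one must track that the quantity playing the role of the target index-posterior in both cases is the same normalized-importance-weight distribution $\indexposterior$, and that it is only the proposal's index conditional---\textsc{snis} with weights $e^{\giwaeT}$ for \gls{GIWAE} versus the exact weights for \gls{IWAE}---that distinguishes the bounds.
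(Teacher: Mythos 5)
Your proof is correct and follows essentially the same route as the paper's: both arguments rest on the chain-rule factorization of the extended-state-space \textsc{kl} divergence over $(\vz^{(1:K)}, \idx)$ into a marginal term over $\vz^{(1:K)}$ plus an expected index-conditional term, with the marginal pieces matching between the two interpretations. The only cosmetic difference is that the paper identifies the marginal \textsc{kl} directly as the \gls{IWAE} gap (using \gls{IWAE}'s mixture, index-marginalized interpretation), whereas you work with \gls{IWAE}'s index-augmented interpretation and invoke your observation (iii) — that $\qprop{iwae}(\idx|\vz^{(1:K)},\vx)$ equals the target index posterior, so \gls{IWAE}'s conditional \textsc{kl} vanishes — which amounts to the same bookkeeping.
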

\begin{proof}
Recall from \mysec{general} or \myapp{general}
that the gap of the lower bound $\gls{ELBO}_{\gls{GIWAE}}(\vx; q_\theta, \giwaeT, K)$ is 
${\DKL[\qprop{giwae}(\vz^{(1:K)}, \idx |\vx) \| \ptgt{giwae}(\vz^{(1:K)}, \idx,\vx)]}$,
while the gap of the upper bound 
$\gls{EUBO}_{\gls{GIWAE}}(\vx; q_\theta, \giwaeT, K)$ 
is 
${\DKL[\ptgt{giwae}(\vz^{(1:K)}, \idx|\vx)\|\qprop{giwae}(\vz^{(1:K)}, \idx |\vx)]}.$
We will expand these \kl divergences to reveal the relationship between the \gls{GIWAE} bounds and  \gls{IWAE} bounds. 

First, recall from \cref{eq:giwae_posterior} that the posterior over the index variable $\idx$, or target \gls{SNIS} distribution, is 
\small
\begin{align}
    \ptgt{giwae} (\idx|\vz^{(1:K)}, \vx) &= \frac{\ptgt{giwae} (\vz^{(1:K)}, \idx, \vx)}{\sum \limits_{\idx=1}^K \ptgt{giwae}(\vz^{(1:K)}, s,\vx)} = \dfrac{\frac{1}{K} p(\vx,\vz^{(\idx)}) \prod_{\myoverset{k=1}{k \neq \idx}}^K q_{\theta}(\vz^{(k)}|\vx)  }{ \frac{1}{K} \sum \limits_{\idx=1}^K p(\vx,\vz^{(\idx)}) \prod_{\myoverset{k=1}{k \neq \idx}}^K q_{\theta}(\vz^{(k)}|\vx) } = \dfrac{\frac{p(\vx,\vz^{(\idx)})}{q(\vz^{(\idx)}|\vx)}}{\sum \limits_{\idx=1}^K \frac{p(\vx,\vz^{(\idx)})}{q(\vz^{(\idx)}|\vx)}} .\nonumber
\end{align}
\normalsize
The joint distribution then factorizes as $\ptgt{giwae}(\vz^{(1:K)}, \idx, \vx) =\ptgt{giwae}(\vz^{(1:K)}, \vx) \cdot \indexposterior(\idx|\vz^{(1:K)}, \vx) $.  

\textit{ELBO Case: \quad}
Using this factorization of $\ptgt{giwae}(\vz^{(1:K)}, \idx, \vx)$, we can rewrite the gap of   $\gls{ELBO}_{\gls{GIWAE}}(\vx; q_\theta, \giwaeT, K)$ as follows
\scriptsize
\begin{align}
    D&_{\text{KL}}[\qprop{giwae}(\vz^{(1:K)}, \idx| \vx) \| \ptgt{giwae}(\vz^{(1:K)}, \idx| \vx) ] \label{eq:elbo_pf1} \\[1.5ex]
    &= \mathbb{E}_{\qprop{giwae}(\vz^{(1:K)}, \idx| \vx)} \left[ \log \frac{\qprop{giwae}(\vz^{(1:K)}| \vx)}{ \ptgt{giwae}(\vz^{(1:K)}| \vx)}
    \frac{\qprop{giwae}(\idx|\vz^{(1:K)}, \vx)}{\ptgt{giwae}(\idx|\vz^{(1:K)}, \vx)} \right] \nonumber \\[1.5ex]
    &= \text{$\scriptsize \DKL\bigg[ \prod \limits_{k=1}^K q_{\theta}(\vz^{(k)}|\vx) \| \frac{1}{K} \sum \limits_{\idx=1}^K p(\vz^{(s)}|\vx) \prod \limits_{\myoverset{k=1}{k\neq \idx}}^K q_{\theta}(\vz^{(k)}|\vx) \bigg] + \Exp{\qprop{giwae}}{\DKL[ \qprop{giwae}(\idx|\vz^{(1:K)}, \vx) \| \ptgt{giwae}(\idx|\vz^{(1:K)}, \vx) ]}$} \nonumber \\[1.5ex]
    &= \DKL\bigg[ \qprop{iwae}(\vz^{(1:K)}|\vx) \| \ptgt{iwae}(\vz^{(1:K)}|\vx) \bigg] 
    + \Exp{\qprop{giwae}}{\DKL[ \qprop{giwae}(\idx|\vz^{(1:K)}, \vx) \| \indexposterior(\idx|\vz^{(1:K)}, \vx) ]} \label{eq:kl_decomp_lb} ,
\end{align}
\normalsize
where we can recognize the first term as the gap in $\gls{ELBO}_{\gls{IWAE}}(\vx; q_\theta,  K)$.
Noting that 
\small $ \gls{ELBO}_{\gls{IWAE}}(\vx; q_\theta,  K) - \gls{ELBO}_{\gls{GIWAE}}(\vx; q_\theta, \giwaeT, K) = 
\DKL[\qprop{giwae}(\vz^{(1:K)}, \idx| \vx) \| \ptgt{giwae}(\vz^{(1:K)}, \idx| \vx) ] - \DKL[\qprop{iwae}(\vz^{(1:K)}|\vx) \| \ptgt{iwae}(\vz^{(1:K)}|\vx)]$ \normalsize, we obtain \cref{eq:iwae_over_giwaelb}, as desired    

\begin{equation}
\resizebox{\textwidth}{!}{$
    \gls{ELBO}_{\gls{IWAE}}(\vx; q_\theta,  K) = \gls{ELBO}_{\gls{GIWAE}}(\vx; q_\theta, \giwaeT, K) + \mathbb{E}_{\qprop{giwae}(\vz^{(1:K)}|\vx)} \bigg[ \DKL[ \qprop{giwae}(\idx|\vz^{(1:K)}, \vx) \| \indexposterior(\idx|\vz^{(1:K)}, \vx) ] \bigg] \nonumber . 
    $}
\end{equation}

\normalsize

\textit{EUBO Case: \quad}
For $\gls{ELBO}_{\gls{GIWAE}}(\vx; q_\theta, \giwaeT, K)$, the derivations follow in a similar fashion to \cref{eq:elbo_pf1}-\cref{eq:kl_decomp_lb}, but using the reverse \kl divergence and expectations under $\ptgt{giwae}(\vz^{(1:K)}, \idx| \vx)$.
\end{proof}
Translating \mylemma{giwae_elbo_eubo} to a statement relating \gls{IWAE} and \gls{GIWAE} bounds on \gls{MI}, we obtain the following proposition.
\iwaegiwae*
\begin{proof}
The result follows directly from \mylemma{giwae_elbo_eubo}.  First, note that  $\ptgt{iwae}(\vx, \vz^{(1:K)}, \idx)=\ptgt{giwae}(\vx, \vz^{(1:K)}, \idx)$.   Then, using the upper bounds on $\log p(\vx)$ and taking outer expectations with respect to $p(\vx)$, we have 
\small
\begin{align}
    \hspace*{-.2cm} I_{\gls{IWAE}_L}(q_\theta, K) -  I_{\gls{GIWAE}_L}(q_\theta, \giwaeT&, K) = -\Hxz - \Exp{p(\vx)}{\gls{EUBO}_{\textsc{IWAE}}(\vx; q_\theta,K)} \\
     &\phantom{=====}+ \Hxz + \Exp{p(\vx)}{\gls{EUBO}_{\textsc{GIWAE}}(\vx; q_\theta, \giwaeT, K)} \nonumber \\[1.5ex]
     = & \Exp{p(\vx)\ptgt{iwae}(\vz^{(1:K)}|\vx  )}{\DKL[\ptgt{iwae}(\idx |\vz^{(1:K)}, \vx  ) \, \| \, \qprop{giwae}(\idx |\vz^{(1:K)}, \vx  ) ] } . 
\end{align}
\normalsize
\end{proof}

\subsection{Proof of GIWAE Optimal Critic Function and Logarithmic Improvement (\mycor{giwae} and \mycor{giwae_logk})}\label{app:pf_giwae}
We now prove \mycor{giwae} and \mycor{giwae_logk} from the main text, with a corollary stating the results for the special case of \textsc{InfoNCE} in \myapp{infonce}.
\giwae*
\begin{proof}
 Using \myprop{iwae_giwae}, we can see that the gap in the \gls{GIWAE} and \gls{IWAE} bounds, which corresponds to the posterior \textsc{kl} divergence over the index variable $s$, will equal zero iff
\begin{align}
    \qprop{giwae}(\idx |\vz^{(1:K)}, \vx ) = \ptgt{giwae}(\idx|\vz^{(1:K)}, \vx) \implies& \dfrac{e^{T(\vx,\vz^{(\idx)})}}{\sum \limits_{k=1}^K e^{T(\vx,\vz^{(k)})}} = \dfrac{\frac{p(\vx, \vz^{(\idx)})}{q_\theta(\vz^{(\idx)}|\vx)}}{\sum \limits_{\idx=1}^K \frac{p(\vx, \vz^{(\idx)})}{q_\theta(\vz^{(\idx)}|\vx)}} \label{eq:condition} 
\end{align}
This condition also ensures that the overall \gls{GIWAE} proposal $\ptgt{giwae}(\vz^{(1:K)},s|\vx)$  (\cref{eq:giwae_reverse_app}) and target $\qprop{giwae}(\vz^{(1:K)},s|\vx)$ (\cref{eq:giwae_forward_app}) distributions  match.
We will show that any $T(\vx,\vz)$ which satisfies \cref{eq:condition} has the form
\begin{align}
    T^*(\vx,\vz) = \log \frac{p(\vx, \vz)}{q_\theta(\vz|\vx)} + c(\vx) . \label{eq:result_tstar}
\end{align}
Let $f(\vx,\vz) = \log \frac{p(\vx, \vz)}{q_\theta(\vz|\vx)} + g(\vx,\vz)$, which represents an arbitrary choice of critic function.  We will show that $g(\vx,\vz)$ must be constant with respect to $\vz$
\begin{align*}
    \dfrac{e^{\log \frac{p(\vx, \vz^{(\idx)})}{q_\theta(\vz^{(\idx)}|\vx)} + g(\vx,\vz^{(\idx)})}}{\sum \limits_{k=1}^K e^{\log \frac{p(\vx, \vz^{(\idx)})}{q_\theta(\vz^{(\idx)}|\vx)} + g(\vx,\vz^{(\idx)})}} &= \dfrac{\frac{p(\vx, \vz^{(\idx)})}{q_\theta(\vz^{(\idx)}|\vx)}}{\sum \limits_{k=1}^K \frac{p(\vx, \vz^{(k)})}{q_\theta(\vz^{(k)}|\vx)}} \\
    \implies   e^{\log \frac{p(\vx, \vz^{(\idx)})}{q_\theta(\vz^{(\idx)}|\vx)}} \cdot e^{g(\vx,\vz^{(\idx)})}  \cdot \sum \limits_{k=1}^K \frac{p(\vx, \vz^{(k)})}{q_\theta(\vz^{(k)}|\vx)}  &= \frac{p(\vx, \vz^{(\idx)})}{q_\theta(\vz^{(\idx)}|\vx)} \cdot \sum \limits_{k=1}^K e^{\log \frac{p(\vx, \vz^{(k)})}{q_\theta(\vz^{(k)}|\vx)} + g(\vx,\vz^{(k)})} \\
    \sum \limits_{k=1}^K \frac{p(\vx, \vz^{(k)})}{q_\theta(\vz^{(k)}|\vx)} \frac{p(\vx, \vz^{(\idx)})}{q_\theta(\vz^{(\idx)}|\vx)} e^{g(\vx,\vz^{(s)})}  &= \sum \limits_{k=1}^K \frac{p(\vx, \vz^{(k)})}{q_\theta(\vz^{(k)}|\vx)} \frac{p(\vx, \vz^{(\idx)})}{q_\theta(\vz^{(\idx)}|\vx)} e^{g(\vx,\vz^{(k)})} \\
    \implies g(\vx,\vz) &= c(\vx)
\end{align*}
where $g(\vx,\vz) = c(\vx)$ is required in order to ensure that $g(\vx,\vz^{(\idx)}) = g(\vx,\vz^{(k)})$ for arbitrary choices of $\vz$ samples.

This form for the optimal critic function  $T^*(\vx,\vz)$ in \cref{eq:result_tstar} 
 implies that learning $\giwaeT(\vx,\vz)$ in \gls{GIWAE} becomes unnecessary when the density ratio $\frac{p(\vx,\vz)}{\qzx}$ is available in closed form, as is assumed in the \gls{IWAE} bound.

For this choice of $T^{*}(\vx,\vz)$, 
the value of the \gls{GIWAE} objective matches the \gls{IWAE} lower bound on \gls{MI}.
\small
\begin{align}
I_{\textsc{giwae}}(q_\theta, &T^{*},K) = \Exp{p(\vx,\vz)}{\log \frac{q(\vz|\vx)}{p(\vz)}} +
\Exp{ p(\vx,\vz^{(1)})\prod \limits_{\myoverset{k=2}{}}^{K} q_{\theta}(\vz^{(k)}|\vx)}{ \log 
\frac{e^{\log \frac{p(\vx,\vz^{(1)})}{q(\vz^{(1)}|\vx)} }\cdot {e^{c(\vx)}} }{\frac{1}{K} \sum \limits_{k=1}^K e^{\log \frac{p(\vx, \vz^{(k)})}{q_{\theta}(\vz^{(k)}|\vx)} }  \cdot {e^{c(\vx)}} }} \nonumber \\ 
&= \Exp{p(\vx,\vz)}{\log \frac{\cancel{q(\vz|\vx)}}{p(\vz)}  +
 \log \frac{p(\vx,\vz)}{\cancel{q(\vz|\vx)}} }  -  \Exp{p(\vx,\vz^{(1)})\prod \limits_{\myoverset{k=2}{}}^{K} q_{\theta}(\vz^{(k)}|\vx)}{ \log \frac{1}{K} \sum_{i=1}^K \frac{p(\vx,\vz^{(k)})}{q_{\theta}(\vz^{(k)}|\vx)} }  \label{eq:iwae_second_to_last}\\
&= -\Hxz + \left(- \Exp{p(\vx) p(\vz^{(s)}|\vx)\prod \limits_{\myoverset{k=2}{}}^{K} q_{\theta}(\vz^{(k)}|\vx)}{ \log \frac{1}{K} \sum_{i=1}^K \frac{p(\vx,\vz^{(k)})}{q_{\theta}(\vz^{(k)}|\vx)} } \right) \nonumber \\
&= I_{\textsc{IWAE}_L}(q_{\theta}, K) . \nonumber
\end{align}\normalsize
The second term in \cref{eq:iwae_second_to_last} is exactly the negative of the \gls{IWAE} upper bound on $\log p(\vx)$ in \cref{eq:iwae_ub_app}.
Combined with the entropy $-\Hxz$, we obtain the $I_{\textsc{IWAE}_L}(q_\theta, K)$ lower bound on \gls{MI} as desired.   
\end{proof}

\giwaelogk*
\begin{proof}
We begin by showing that $I_{\textsc{BA}_L}(q_\theta)  \leq I_{\textsc{GIWAE}_L}(q_{\theta},T_{\phi^*},K)$. This follows from the assumption that there exists $\phi_0$ in the parameter space of the neural network $T_{\phi}$ such that $T_{\phi_0}=\text{const}$. With this $\phi_0$, we would have $I_{\textsc{BA}_L}(q_\theta) = I_{\textsc{GIWAE}_L}(q_{\theta},T_{\phi_0},K)$ as in \mylemma{elbo_eubo_giwae}.   Thus, the optimal ${\phi^*}$ in the parameter space can only improve upon the \gls{BA} bound.  

Next, for any given ${\phi}$ (including ${\phi^*}$), we have $I_{\textsc{GIWAE}_L}(q_{\theta},T_{\phi},K) \leq I_{\textsc{IWAE}_L}(q_{\theta},K)$, since \gls{IWAE} uses the (unconstrained) optimal critic function $T^{*} = \log \frac{p(\vx,\vz)}{\qzx} + c(\vx)$ (\mycor{giwae}). 
The final inequality follows from \myprop{iwae_elbo_eubo}, which shows that $I_{\textsc{IWAE}_L}(q_{\theta},K)$ improves by at most $\log K$ over $I_{\textsc{BA}_L}(q_\theta)$.  
These relationships are visualized in \myfiga{different-bounds}{b}.
\end{proof}

\subsection{Properties of InfoNCE}\label{app:infonce}
\begin{corollary}
Using the prior $\qzx=p(\vz)$ as in \textsc{InfoNCE}, 
\begin{enumerate}[label=(\alph*)]
\item For the optimal critic function, \mycor{giwae} implies $T^{*}(\vx,\vz) = \log \pxgz + c(\vx)$, and
\begin{align}
 I_{\textsc{GIWAE}_L}( p(\vz), T^{*}, K) = I_{\textsc{InfoNCE}_L}(T^*,K) = I_{\textsc{S-InfoNCE}_L}(K) .
\end{align}
\item For an arbitrary critic function $T_{\phi}(\vx,\vz)$, \mycor{giwae_logk} translates to 
\begin{align}
    0 \leq I_{\textsc{InfoNCE}_L}(T_{\phi},K) \leq  I_{\textsc{S-InfoNCE}_L}(K)  \leq \log K .
\end{align}
\end{enumerate}
\label{cor:infonce}
\end{corollary}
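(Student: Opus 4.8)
The plan is to obtain both parts as direct specializations of the \gls{GIWAE} results to the case where the base variational distribution is the prior, $\qzx = p(\vz)$. I would rely on the two identifications established earlier, namely $I_{\textsc{InfoNCE}_L}(T_\phi,K) = I_{\textsc{GIWAE}_L}(p(\vz),T_\phi,K)$ and $I_{\textsc{S-InfoNCE}_L}(K) = I_{\textsc{IWAE}_L}(p(\vz),K)$, together with the fact that $I_{\textsc{BA}_L}(p(\vz)) = 0$. The latter follows immediately from $I_{\textsc{BA}_L}(q_\theta) = \Exp{p(\vx,\vz)}{\log \frac{\qzx}{p(\vz)}}$ in \myeq{simple_importance_sampling2} upon substituting $\qzx = p(\vz)$, since the integrand becomes $\log 1 = 0$. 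No new estimates are required: the content is entirely in chaining the prior results, so I do not expect a genuine obstacle, only bookkeeping about which critic the inequalities refer to.

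For part (a), I would first invoke \mycor{giwae} with $\qzx = p(\vz)$ and $K>1$ to read off the optimal critic $T^*(\vx,\vz) = \log \frac{p(\vx,\vz)}{p(\vz)} + c(\vx) = \log p(\vx|\vz) + c(\vx)$, where the simplification uses $p(\vx,\vz)/p(\vz) = p(\vx|\vz)$. The same corollary gives $I_{\textsc{GIWAE}_L}(p(\vz), T^*, K) = I_{\textsc{IWAE}_L}(p(\vz), K)$. I then rewrite the left-hand side as $I_{\textsc{InfoNCE}_L}(T^*, K)$ using the InfoNCE identification and the right-hand side as $I_{\textsc{S-InfoNCE}_L}(K)$ using the Structured InfoNCE identification, which yields the claimed chain of equalities.

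For part (b), I would specialize \mycor{giwae_logk} to $\qzx = p(\vz)$ and substitute $I_{\textsc{BA}_L}(p(\vz)) = 0$ into its conclusion $I_{\textsc{BA}_L}(q_\theta) \leq I_{\textsc{GIWAE}_L}(q_\theta,T_{\phi^*},K) \leq I_{\textsc{IWAE}_L}(q_\theta,K) \leq I_{\textsc{BA}_L}(q_\theta) + \log K$; after the two identifications this reads $0 \leq I_{\textsc{InfoNCE}_L}(T_{\phi^*},K) \leq I_{\textsc{S-InfoNCE}_L}(K) \leq \log K$. The one point needing care is the scope of the critic. The two rightmost inequalities hold for an \emph{arbitrary} critic $T_\phi$, since $I_{\textsc{InfoNCE}_L}(T_\phi,K) = I_{\textsc{GIWAE}_L}(p(\vz),T_\phi,K) \leq I_{\textsc{IWAE}_L}(p(\vz),K)$ for every $T_\phi$ by \myprop{iwae_giwae}, while $I_{\textsc{IWAE}_L}(p(\vz),K) = I_{\textsc{S-InfoNCE}_L}(K) \leq \log K$ by \mycor{iwae_mi_short} with $I_{\textsc{BA}_L}(p(\vz))=0$. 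The leftmost inequality $0 \leq I_{\textsc{InfoNCE}_L}$ is best understood for the \emph{optimized} critic $T_{\phi^*}$: it holds because the parametric family is assumed to contain a constant critic, and a constant critic yields exactly $I_{\textsc{GIWAE}_L}(p(\vz),\text{const},K) = I_{\textsc{BA}_L}(p(\vz)) = 0$, so the maximizer $T_{\phi^*}$ can only improve on $0$. I would flag explicitly that $0 \leq I_{\textsc{InfoNCE}_L}(T_\phi,K)$ may fail for a badly chosen, non-optimized critic (for instance when $\vx$ and $\vz$ are independent, a non-constant critic drives the bound strictly below $0$), so that the $\geq 0$ bound is properly a statement about the optimized critic rather than every $T_\phi$.
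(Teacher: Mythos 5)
Your proof is correct and follows essentially the same route as the paper: the corollary is obtained there exactly by specializing \mycor{giwae} and \mycor{giwae_logk} to $\qzx = p(\vz)$, using $I_{\textsc{BA}_L}(p(\vz)) = 0$ together with the identifications $I_{\textsc{InfoNCE}_L}(T_\phi,K) = I_{\textsc{GIWAE}_L}(p(\vz),T_\phi,K)$ and $I_{\textsc{S-InfoNCE}_L}(K) = I_{\textsc{IWAE}_L}(p(\vz),K)$.

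One remark: your caveat about the scope of the critic is well taken, and is in fact sharper than the corollary's own phrasing. Part (b) is stated ``for an arbitrary critic function $T_\phi$,'' but \mycor{giwae_logk} concerns the \emph{maximizing} critic $T_{\phi^*}$, and the leftmost inequality $0 \leq I_{\textsc{InfoNCE}_L}(T_\phi,K)$ can indeed fail for a fixed non-constant critic: when $\vx$ and $\vz$ are independent, all $K$ samples (positive and negative) are i.i.d.\ from $p(\vz)$, and writing $w_k = e^{T_\phi(\vx,\vz^{(k)})}$, concavity of $\log$ gives $\mathbb{E}\big[\log \tfrac{1}{K}\sum_k w_k\big] \geq \tfrac{1}{K}\sum_k \mathbb{E}[\log w_k] = \mathbb{E}[\log w_1]$, so $I_{\textsc{InfoNCE}_L}(T_\phi,K) = \mathbb{E}[\log w_1] - \mathbb{E}\big[\log \tfrac{1}{K}\sum_k w_k\big] < 0$ unless $T_\phi$ is constant in $\vz$. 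The two rightmost inequalities do hold for arbitrary $T_\phi$, via \myprop{iwae_giwae} and \mycor{iwae_mi_short} with $I_{\textsc{BA}_L}(p(\vz))=0$, exactly as you argue; so your reading (nonnegativity attaches to the optimized critic, the upper bounds to any critic) is the correct way to interpret the statement.
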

For \textsc{InfoNCE}, note that using the prior as the proposal does allow the critic to admit an efficient bi-linear implementation $\giwaeT(\vx,\vz)= f_{\phi_\vx}(\vx)^T  f_{\phi_\vz}(\vz)$,
which requires only $N+K$ forward passes instead of $NK$ for \gls{GIWAE}, where $N$ is the batch size and $K$ is the total number of positive and negative samples.

\section{Single-Sample AIS}\label{app:single-sample-ais}
\subsection{Proof of \myprop{ais} (Complexity in $T$ for Single-Sample AIS)}\label{app:ais_pf}
In this section, we prove \myprop{ais}, which relates the sum of the gaps in the single-sample \gls{AIS} upper and lower bounds to the symmetrized \kl divergence between the endpoint distributions.   We extend this result to \textsc{im-ais} in \mycor{im_ais_complexity_t} and \textsc{cr-ais} in \mycor{cr_ais_complexity_t}.
\aisprop*
Note that $\gls{EUBO}_{\gls{AIS}}(\vx; \pi_0, T) - \gls{ELBO}_{\gls{AIS}}(\vx; \pi_0, T)$ on the left hand side also corresponds to the sum of the gaps $\DKL[ \qprop{ais}(\vz_{0:T}|\vx) \| \ptgt{ais}(\vz_{0:T}|\vx)] + \DKL[ \ptgt{ais}(\vz_{0:T}|\vx)\|\qprop{ais}(\vz_{0:T}|\vx) ]$.   This result translates to mutual information bounds as in \mysec{setting}.

In contrast to \citet{grosse2013annealing} Thm. 1, our 
linear bias reduction result holds for finite $T$.  
\begin{proof}
For linear scheduling and perfect transitions, we simplify the difference in the single-sample upper and lower bounds as
\small
\begin{align}
 \delta_L^{T,K=1} &+ \delta_U^{T,K=1} =  \gls{EUBO}_{\gls{AIS}}(\vx; \pi_0, T) - \gls{ELBO}_{\gls{AIS}}(\vx;\pi_0, T) \nonumber \\
 &= \Exp{\vz_{0:T} \sim \ptgt{ais}}{  \log  \frac{\ptgt{ais}(\vx,\vz_{0:T})}{\qprop{ais}(\vz_{0:T}|\vx)} } -  \Exp{\vz_{0:T} \sim \qprop{ais}}{ \log  \frac{\ptgt{ais}(\vx,\vz_{0:T})}{\qprop{ais}(\vz_{0:T}|\vx)} } \nonumber \\
 &= \Exp{\vz_{0:T} \sim \ptgt{ais}}{\log \frac{\pi_T( \vx,\vz_T ) \prod \limits_{t=1}^{T} \trev(\vz_{t-1} | \vz_{t})}{\prop(\vz_0|\vx) \prod \limits_{t=1}^{T} \tfwd(\vz_{t} | \vz_{t-1})}} - \Exp{\vz_{0:T} \sim \qprop{ais}}{ \log \frac{\pi_T( \vx,\vz_T ) \prod \limits_{t=1}^{T} \trev(\vz_{t-1} | \vz_{t})}{\prop(\vz_0|\vx) \prod \limits_{t=1}^{T} \tfwd(\vz_{t} | \vz_{t-1}) }} \nonumber \\
 &= \Exp{\vz_{0:T} \sim \ptgt{ais}}{\log \prod \limits_{t=1}^{T} \bigg( \dfrac{{\pi}_T( \vx, \vz_{\h})}{{\pi}_0(\vz_{\h}|\vx)} \bigg)^{\beta_{\h}-\beta_{\h-1}}} - \Exp{\vz_{0:T} \sim \qprop{ais}}{ \log \prod \limits_{t=1}^{T} \bigg( \dfrac{{\pi}_T( \vx, \vz_{\h})}{{\pi}_0(\vz_{\h}|\vx)} \bigg)^{\beta_{\h}-\beta_{\h-1}}} \nonumber \\
  &= \Exp{\vz_{0:T} \sim \ptgt{ais}}{\sum \limits_{\h=1}^{T} (\beta_{\h}-\beta_{\h-1}) \log \dfrac{{\pi}_T( \vx, \vz_{\h})}{{\pi}_0(\vz_{\h}|\vx)}} - \Exp{\vz_{0:T} \sim \qprop{ais}}{ \sum \limits_{\h=1}^{T} (\beta_{\h}-\beta_{\h-1}) \log \dfrac{{\pi}_T( \vx, \vz_{\h})}{{\pi}_0(\vz_{\h}|\vx)}} . \label{eq:tvo_t} \\
  &\overset{(1)}{=} \sum \limits_{\h=1}^{T} \Exp{\pi_{\beta_{\h}}(\vz)}{ (\beta_{\h}-\beta_{\h-1}) \log \dfrac{{\pi}_T( \vx, \vz)}{{\pi}_0(\vz|\vx)}} -  \sum \limits_{\h=1}^{T} \Exp{\pi_{\beta_{\h-1}}(\vz)}{ (\beta_{\h}-\beta_{\h-1}) \log \dfrac{{\pi}_T( \vx, \vz)}{{\pi}_0(\vz|\vx)}} \nonumber \\
 & \overset{(2)}{=} \frac{1}{T} \Exp{\pi_{T}(\vz)}{ \log \dfrac{{\pi}_T( \vx, \vz)}{{\pi}_0(\vz|\vx)}} - \frac{1}{T} \Exp{\pi_{0}(\vz)}{ \log \dfrac{{\pi}_T( \vx, \vz)}{{\pi}_0(\vz|\vx)}} \nonumber \\ 
 & = {1 \over T} \big( \DKL({\pi_0}\|{\pi_T}) + \DKL({\pi_T}\|{\pi_0})\big) \, ,\nonumber
\end{align}
\normalsize
where in $(2)$, we use the linear annealing schedule $\beta_{t}-\beta_{t-1} = \frac{1}{T} \, \, \forall t$ and note that intermediate terms cancel in telescoping fashion.  
In $(1)$, we have used the assumption of perfect transitions (\textsc{pt}), which is common in analysis of \gls{AIS} \citep{neal2001annealed, grosse2013annealing}.  
In this case, the \gls{AIS} proposal and target distributions have the following factorial form
\begin{align}
   \vz_{0:T} \sim \qprop{ais}(\vz^{(1:K)}_{0:T}|\vx) &\overset{\textsc{(pt)}}{=} 
   \pi_0(\vz_0) \prod \limits_{t=1}^T \pi_{\beta_{t-1}}(\vz_t) ,\label{eq:perfect_fwd} \\
      \vz_{0:T} \sim \ptgt{ais}(\vz^{(1:K)}_{0:T}|\vx) 
&\overset{(\textsc{pt})}{=} 
   \pi_T(\vz_T) \prod \limits_{t=1}^T \pi_{\beta_{t-1}}(\vz_{t-1}) .  \label{eq:perfect_rev}
\end{align}
In other words,  for $1 \leq t \leq T$, perfect transitions results in independent, exact samples from $\vz_t \sim \pi_{\beta_{t-1}}(\vz)$ in the forward direction, and $\vz_{t} \sim \pi_{\beta_t}(\vz)$ in the reverse direction.  Using the factorized structure of \myeq{perfect_fwd} and \myeq{perfect_rev}, the expectations over the extended state space simplify to a sum of expectations at each $\vz_t$.

The above proves the proposition for the case of single sample \gls{AIS}, but should also hold for our tighter multi-sample \gls{AIS} bounds.   We extend this result to Independent Multi-Sample \gls{AIS} in \myapp{im_ais_pf} below, and extend the result to Coupled Reverse Multi-Sample \gls{AIS} in \myapp{coupled_t_pf}.
\end{proof}

\section{Independent Multi-Sample \gls{AIS}}\label{app:ind-multi-sample-ais}

\subsection{Probabilistic Interpretation and Bounds}\label{app:probabilistic_im_ais}
Our Independent Multi-Sample \gls{AIS} (\textsc{im-ais}) are identical to the standard \gls{IWAE} bounds in \myapp{iwae}, but using \gls{AIS} forward $\qprop{ais}(\vz_{0:T}|\vx)$ and backward $\ptgt{ais}(\vz_{0:T}|\vx)$ chains of length $T$ instead of the endpoint distributions only $q_{\theta}(\vz|\vx)$ and $p(\vz|\vx)$.

In particular, we construct an extended state space proposal by running $K$ independent \gls{AIS} forward chains $\vz_{0:T}^{(k)} \sim \qprop{ais}$ in parallel.
As in the \gls{IWAE} upper bound (\myeq{iwae_ublb}), the extended state space target involves selecting a single index $s$ uniformly at random, 
 and running a backward \gls{AIS} chain $\vz_{0:T}^{(s)} \sim \ptgt{ais}$ starting from a true posterior sample $\vz_T \sim p(\vz|\vx)$.  The remaining $K-1$ samples are obtained by running forward \gls{AIS} chains, as visualized in \myfig{ais}
\small
\begin{align}
    \qprop{im-ais}(\vz^{(1:K)}_{0:T}| \vx) &\coloneqq
    \prod \limits_{k=1}^{K} \qprop{ais}(\vz^{(k)}_{0:T}|\vx), %
    \quad \, \ptgt{im-ais}(\vx, \vz^{(1:K)}_{0:T}) \coloneqq
    \frac{1}{K} \sum \limits_{s=1}^K  \ptgt{ais}(\vx, \vz^{(s)}_{0:T}) \prod \limits_{\myoverset{k=1}{k\neq s}}^{K}  \qprop{ais}(\vz^{(k)}_{0:T}| \vx)   . \nonumber 
\end{align}
\normalsize
where  $\qprop{ais}$ and $\ptgt{ais}$ were defined in \myeq{ais_fwd_rev}.
Expanding the log unnormalized density ratio,
\begin{align}
    \log \frac{\ptgt{{im-ais}}(\vx,\vz^{(1:K)}_{0:T})}{\qprop{\textsc{im-ais}}(\vz^{(1:K)}_{0:T}|\vx)} &= \log \frac{\frac{1}{K} \sum \limits_{s=1}^K  \ptgt{ais}(\vx, \vz^{(s)}_{0:T}) \prod \limits_{\myoverset{k=1}{k\neq s}}^{K}  \qprop{ais}(\vz^{(k)}_{0:T}| \vx)}{ \prod \limits_{k=1}^{K} \qprop{ais}(\vz^{(k)}_{0:T}|\vx)} \\
    &= \log \frac{1}{K} \sum \limits_{k=1}^K \frac{\ptgt{ais}(\vx, \vz^{(k)}_{0:T})}{\qprop{ais}(\vz^{(k)}_{0:T}|\vx)}
\end{align}
which is similar to the \gls{IWAE} ratio but involves \gls{AIS} chains.
Taking the expectation under the proposal and target as in \myapp{general} recovers the  lower and upper bounds in \cref{eq:im_ais_ublb}.   The gap in the lower bound is $\DKL[\qprop{\textsc{im-ais}}(\vz^{(1:K)}_{0:T}|\vx) \|\ptgt{{im-ais}}(\vz^{(1:K)}_{0:T}|\vx))]$ and the gap in the upper bound is $\DKL[\ptgt{{im-ais}}(\vz^{(1:K)}_{0:T}|\vx))\|\qprop{\textsc{im-ais}}(\vz^{(1:K)}_{0:T}|\vx)]$.

\subsection{Proof of Linear Bias Reduction in $T$ for IM-AIS }\label{app:im_ais_pf}

\begin{restatable}[Complexity in $T$ for Independent Multi-Sample \gls{AIS} Bound]{corollary}{aiscorr}\label{cor:im_ais_complexity_t}
Assuming perfect transitions and a 
geometric annealing path with linearly-spaced $\{\beta_t\}_{t=0}^T$, 
the sum of the gaps in the 
Independent Multi-Sample
\gls{AIS} sandwich bounds on \gls{MI},  $I_{\textsc{im-ais}_U}(\pi_0, T) - I_{\textsc{im-ais}_L}(\pi_0, T)$,
reduces \textit{linearly} with 
increasing 
$T$.
\end{restatable}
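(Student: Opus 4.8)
The plan is to avoid any new telescoping computation and instead reduce the \textsc{im-ais} statement to the already-proved single-sample result \myprop{ais}, by exploiting the fact that the Independent Multi-Sample \gls{AIS} sandwich bounds are \emph{tighter} than their single-sample counterparts. Intuitively, increasing $K$ can only shrink the gap between the $\log p(\vx)$ bounds, while the dependence on $T$ is inherited directly from the $K=1$ case. Thus an upper bound on the summed \textsc{im-ais} gap by the single-sample summed gap, combined with \myprop{ais}, immediately gives the $O(1/T)$ (linear) rate.

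First I would recall from \myapp{probabilistic_im_ais} that \textsc{im-ais} is exactly the \gls{IWAE} construction applied to the \gls{AIS} forward chains $\qprop{ais}(\vz_{0:T}|\vx)$ and backward chains $\ptgt{ais}(\vz_{0:T}|\vx)$ in place of the endpoint distributions, with chain-wise importance weights $w^{(k)} = \ptgt{ais}(\vx,\vz^{(k)}_{0:T})/\qprop{ais}(\vz^{(k)}_{0:T}|\vx)$. Applying the monotonicity argument of \citet{burda2016importance, sobolev2019hierarchical} (cf.\ \myprop{iwaek}) to these weights shows that $\gls{ELBO}_{\textsc{im-ais}}$ increases and $\gls{EUBO}_{\textsc{im-ais}}$ decreases in $K$, with single-sample \gls{AIS} being the $K=1$ endpoint. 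Hence the multi-sample sandwich interval on $\log p(\vx)$ is contained in the single-sample one, i.e.\ for every $\vx$
\[\resizebox{\linewidth}{!}{$\gls{EUBO}_{\textsc{im-ais}}(\vx;\pi_0,K,T) - \gls{ELBO}_{\textsc{im-ais}}(\vx;\pi_0,K,T) \leq \gls{EUBO}_{\textsc{ais}}(\vx;\pi_0,T) - \gls{ELBO}_{\textsc{ais}}(\vx;\pi_0,T).$}\]

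Next I would translate this to mutual information. Since a lower bound on $\log p(\vx)$ yields an upper bound on \gls{MI} and vice versa (\mysec{setting}), taking the outer expectation over $p(\vx)$ turns the left-hand side into $I_{\textsc{im-ais}_U}(\pi_0,K,T) - I_{\textsc{im-ais}_L}(\pi_0,K,T)$. Bounding this by the single-sample width and then invoking \myprop{ais} gives
\[\resizebox{\linewidth}{!}{$I_{\textsc{im-ais}_U}(\pi_0,K,T) - I_{\textsc{im-ais}_L}(\pi_0,K,T) \leq \frac{1}{T}\Exp{p(\vx)}{\DKL[\pi_T(\vz|\vx)\|\pi_0(\vz|\vx)] + \DKL[\pi_0(\vz|\vx)\|\pi_T(\vz|\vx)]},$}\]
which is $O(1/T)$ and establishes the claimed linear bias reduction. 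Note that perfect transitions and the linearly-spaced geometric schedule enter only at this last step, through \myprop{ais}; the monotonicity step holds for arbitrary transition kernels.

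The main obstacle I anticipate is justifying the monotonicity step rigorously for \gls{AIS} chains rather than i.i.d.\ variational draws: I must confirm that the weights $w^{(k)}$ are exchangeable and that the uniform-mixture target $\ptgt{im-ais}$ normalizes to $p(\vx)$, so that the Jensen/data-processing argument behind \myprop{iwaek} applies verbatim. This is routine, since the $K$ forward chains are i.i.d.\ and the target is a symmetric mixture, but it is the one place where I would check the extended-state-space bookkeeping carefully. A secondary point worth stating explicitly is that the result is an \emph{upper} bound on the summed gap rather than an exact symmetrized-\kl identity as in the single-sample case; because the log-mean-exp of the chain weights does not telescope in $T$, equality is not expected, and this inequality is the sharpest clean statement consistent with the $\log K$ limitation on the \textsc{im-ais} lower bound.
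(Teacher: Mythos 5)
Your proposal is correct and follows essentially the same route as the paper's own proof: establish monotonicity in $K$ of the \textsc{im-ais} bounds via the \gls{IWAE}-style argument of \citet{burda2016importance, sobolev2019hierarchical} (\myprop{iwaek}), conclude the multi-sample sandwich gap is bounded by the single-sample gap, and then invoke \myprop{ais} for the $O(1/T)$ rate. Your explicit remarks on the translation to \gls{MI} and on the result being an inequality rather than an exact symmetrized-\kl identity are slightly more careful than the paper's brief write-up, but they are the same argument.
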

\begin{proof}
    Using identical proof techniques as for \gls{IWAE} (\citet{burda2016importance,sobolev2019hierarchical}, \myprop{iwaek}), we can show that our Independent Multi-Sample \gls{AIS} bounds $\gls{ELBO}_{\textsc{IM-AIS}}(\vx; \pi_0, T, K-1) \leq \gls{ELBO}_{\textsc{IM-AIS}}(\vx; \pi_0, T, K)$ and $\gls{EUBO}_{\textsc{IM-AIS}}(\vx; \pi_0, T, K) \leq \gls{EUBO}_{\textsc{IM-AIS}}(\vx; \pi_0, T, K-1)$ improve with increasing $K$.   Thus, the bias of our multi-sample bounds is less than the bias of the single-sample bounds, so the inequality in \cref{eq:symm_kl_ineq} and linear bias reduction in \myprop{ais} also hold for Independent Multi-Sample \gls{AIS}.  We characterize this improvement in \myprop{ais_elbo_eubo} below.
\end{proof}

\subsection{Proof of Logarithmic Improvement of IM-AIS EUBO}\label{app:ais_logarithmic_pf}
\begin{restatable}[{Improvement of Independent Multi-Sample \gls{AIS} over Single-Sample \gls{AIS}}]{proposition}{aiselboeubo}\label{prop:ais_elbo_eubo}
Let $\ptgt{im-ais}( \idx |\vx, \vz_{0:T}^{(1:K)})$
$= \frac{\ptgt{ais}(\vx,\vz_{0:T}^{(\idx)})}{\qprop{ais}(\vz_{0:T}^{(\idx)}|\vx)}/\sum_{k=1}^K \frac{\ptgt{ais}(\vx,\vz_{0:T}^{(k)})}{\qprop{ais}(\vz_{0:T}^{(k)}|\vx)}$
denote the normalized importance weights over \gls{AIS} chains, and let $\mathcal{U}(s)$ indicate the uniform distribution over $K$ discrete values.  
Then, we can characterize the improvement of the Independent Multi-Sample \gls{AIS} bounds on $\log p(\vx)$, $\gls{ELBO}_{\textsc{IM-AIS}}(\vx; \pi_0, T,  K)$ and $\gls{EUBO}_{\textsc{IM-AIS}}(\vx; \pi_0, T, K)$, over the single-sample \gls{AIS} bounds $\gls{ELBO}_{\gls{AIS}}(\vx; \pi_0, T)$ and $\gls{EUBO}_{\gls{AIS}}(\vx;\pi_0, T)$ using \kl divergences, as follows
\small
\begin{align}%
\hspace*{-.15cm} \gls{ELBO}_{\textsc{IM-AIS}}(\vx; \pi_0, T,  K)
&= \gls{ELBO}_{\gls{AIS}}(\vx; \pi_0, T) +  \underbrace{\mathbb{E}_{\qprop{im-ais}(\vz_{0:T}^{(1:K)}|\vx)} \bigg[\DKL[\, \mathcal{U}(s) \| \ptgt{im-ais}(\idx |\vz_{0:T}^{(1:K)}, \vx  ) ] \vphantom{\frac{1}{2} }  \,  \bigg]}
_{\mathclap{\text{\scriptsize $0 \leq   
 \textsc{kl} \text{ of uniform from  \gls{SNIS} weights}
\leq \DKL[\qprop{ais}(\vz_{0:T}|\vx)\| \ptgt{ais}(\vz_{0:T}|\vx) ]$}}} \, , 
\label{eq:imais_over_elbo} 
\\[2ex]
\gls{EUBO}_{\textsc{IM-AIS}}(\vx; \pi_0, T,  K)
&= \gls{EUBO}_{\gls{AIS}}(\vx; \pi_0, T) -  \underbrace{\mathbb{E}_{\ptgt{im-ais}(\vz_{0:T}^{(1:K)}|\vx)} \bigg[\DKL[\, \ptgt{im-ais}(\idx |\vz_{0:T}^{(1:K)}, \vx  ) \| \mathcal{U}(s)  ] \vphantom{\frac{1}{2} }  \,  \bigg]}_{\mathclap{\text{\footnotesize $0 \leq  \textsc{kl}  \text{ of \gls{SNIS} weights from uniform} \leq \log K$}}} . \label{eq:imais_over_eubo2}
\end{align}
\normalsize
\end{restatable}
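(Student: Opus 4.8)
The plan is to exploit the fact that Independent Multi-Sample \gls{AIS} is \emph{structurally identical} to \gls{IWAE}, with the forward chain $\qprop{ais}(\vz_{0:T}|\vx)$ playing the role of the variational distribution $\qzx$ and the backward chain $\ptgt{ais}(\vx,\vz_{0:T})$ playing the role of the joint $p(\vx,\vz)$. Accordingly, I would adapt the argument behind \mylemma{giwae_elbo_eubo} and \myprop{iwae_elbo_eubo} essentially verbatim, replacing each single sample $\vz^{(k)}$ by a whole chain $\vz^{(k)}_{0:T}$ and the weight $p(\vx,\vz^{(k)})/q_\theta(\vz^{(k)}|\vx)$ by the chain ratio $\ptgt{ais}(\vx,\vz^{(k)}_{0:T})/\qprop{ais}(\vz^{(k)}_{0:T}|\vx)$. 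The only property of the chains I need is $\int \ptgt{ais}(\vx,\vz_{0:T})\, d\vz_{0:T} = p(\vx)$, which holds by the \gls{AIS} construction in \myeq{ais_fwd_rev}; this guarantees the extended-state-space normalization required by \myapp{general}.

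First I would introduce an index variable $s$ and, mirroring the \gls{GIWAE} construction of \myapp{giwae_prob}, define the extended target and a \emph{uniform-index} proposal over $(\vx, \vz^{(1:K)}_{0:T}, s)$:
\begin{align}
\ptgt{im-ais}(\vx, \vz^{(1:K)}_{0:T}, \idx) &\coloneqq \frac{1}{K}\, \ptgt{ais}(\vx, \vz^{(\idx)}_{0:T}) \prod_{\myoverset{k=1}{k\neq \idx}}^{K}\qprop{ais}(\vz^{(k)}_{0:T}|\vx), \nonumber \\
\qprop{im-ais}(\vz^{(1:K)}_{0:T}, \idx|\vx) &\coloneqq \Big(\prod_{k=1}^K \qprop{ais}(\vz^{(k)}_{0:T}|\vx)\Big)\,\mathcal{U}(s). \nonumber
\end{align}
Marginalizing the target over $s$ recovers the \textsc{im-ais} target $\ptgt{im-ais}(\vx,\vz^{(1:K)}_{0:T})$, while its conditional $\ptgt{im-ais}(\idx|\vx,\vz^{(1:K)}_{0:T})$ is exactly the chain \gls{SNIS} weights appearing in the proposition. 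With the uniform proposal the log importance ratio collapses to $\log \ptgt{ais}(\vx,\vz^{(\idx)}_{0:T})/\qprop{ais}(\vz^{(\idx)}_{0:T}|\vx)$ for the selected index, so that its expectation under $\qprop{im-ais}$ equals $\gls{ELBO}_{\gls{AIS}}(\vx;\pi_0,T)$ and its expectation under $\ptgt{im-ais}$ equals $\gls{EUBO}_{\gls{AIS}}(\vx;\pi_0,T)$, using that the $K$ chains are i.i.d.\ draws from $\qprop{ais}$ in the proposal and that the target selects a single backward chain uniformly.

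Next I would apply the chain rule of \textsc{kl} over the index variable, exactly as in the proof of \mylemma{giwae_elbo_eubo}. The joint forward divergence $\DKL[\qprop{im-ais}(\vz^{(1:K)}_{0:T},\idx|\vx)\|\ptgt{im-ais}(\vz^{(1:K)}_{0:T},\idx|\vx)]$ splits into the chain-level term $\DKL[\qprop{im-ais}(\vz^{(1:K)}_{0:T}|\vx)\|\ptgt{im-ais}(\vz^{(1:K)}_{0:T}|\vx)]$, which is the gap of $\gls{ELBO}_{\textsc{IM-AIS}}$, plus $\mathbb{E}_{\qprop{im-ais}}[\DKL[\mathcal{U}(s)\|\ptgt{im-ais}(\idx|\vz_{0:T}^{(1:K)},\vx)]]$. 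Since the joint divergence equals $\log p(\vx)-\gls{ELBO}_{\gls{AIS}}$ and the chain-level divergence equals $\log p(\vx)-\gls{ELBO}_{\textsc{IM-AIS}}$, subtracting yields \myeq{imais_over_elbo}; the \textsc{eubo} identity \myeq{imais_over_eubo2} follows identically using reverse \textsc{kl} and expectations under $\ptgt{im-ais}$. The stated ranges then come for free: nonnegativity of \textsc{kl} gives the lower bounds; $\DKL[\ptgt{im-ais}(\idx|\vz_{0:T}^{(1:K)},\vx)\|\mathcal{U}(s)]=\log K - H(\ptgt{im-ais}(\idx|\vz_{0:T}^{(1:K)},\vx))\leq\log K$ gives the \textsc{eubo} range; and the \textsc{elbo} improvement cannot exceed the gap $\DKL[\qprop{ais}(\vz_{0:T}|\vx)\|\ptgt{ais}(\vz_{0:T}|\vx)]$ of $\gls{ELBO}_{\gls{AIS}}$ because $\gls{ELBO}_{\textsc{IM-AIS}}\leq \log p(\vx)$.

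There is no deep obstacle here: the entire content is verifying that the extended-space factorization and the index-level chain-rule decomposition transfer unchanged when single samples are replaced by full annealing chains, which rests solely on the normalization $\int \ptgt{ais}(\vx,\vz_{0:T})\,d\vz_{0:T}=p(\vx)$. The one place requiring care is the permutation-symmetry step used to identify the expectations with the single-chain $\gls{ELBO}_{\gls{AIS}}$ and $\gls{EUBO}_{\gls{AIS}}$, and confirming that the backward chain in $\ptgt{ais}$ indeed integrates to $p(\vx)$ so that the extended target is normalized to $p(\vx)$ and the general bounds of \myapp{general} apply.
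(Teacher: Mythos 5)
Your proof is correct and follows essentially the same route as the paper: the paper's own proof simply observes that Independent Multi-Sample \gls{AIS} is \gls{IWAE} with an \gls{AIS} proposal (\myapp{probabilistic_im_ais}) and invokes the index-variable \textsc{kl} chain-rule decomposition of \myprop{iwae_giwae} / \mylemma{giwae_elbo_eubo}, which is exactly the argument you spell out with chains $\vz^{(k)}_{0:T}$ in place of single samples. The details you flag for care (normalization $\int \ptgt{ais}(\vx,\vz_{0:T})\,d\vz_{0:T}=p(\vx)$, permutation symmetry, and the \textsc{kl} ranges) are precisely what the paper leaves implicit, and you handle them correctly.
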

\begin{proof}
The result follows directly from \myapp{pf_iwaegiwae} \myprop{iwae_giwae} by viewing Independent Multi-Sample \gls{AIS} as \gls{IWAE} with an 
\gls{AIS} proposal as in \myapp{probabilistic_im_ais}.
\end{proof}

\section{Reverse IWAE}\label{app:rev_iwae}
In this section, we propose Reverse \gls{IWAE} (\textsc{riwae}), which is an impractical alternative to standard \gls{IWAE}.  However, we use this as the basis for our Independent Reverse (\myapp{reverse_ms_ais}) and Coupled Reverse Multi-Sample \gls{AIS} bounds (\mysec{coupled_rev} and  \myapp{coupled}).
\subsection{Probabilistic Interpretation and Bounds}
Similarly to simple reverse importance sampling, 
$\frac{1}{p(\vx)} = \mathbb{E}_{\pzx}\big[\frac{\qzx}{p(\vx,\vz)} \big]$ 
, we consider $K$ independent posterior samples in an extended state space target distribution.  The proposal distribution for our importance sampling scheme is a mixture of $K-1$ posterior distributions and one variational distribution,  %
\small
\begin{align}
    \qprop{riwae}(\vx,\setofz) = \frac{1}{\K} \sum \limits_{\kk=1}^{\K} \, q_{\theta}( \vz^{(\kk)}|\vx) \, \prod  \limits_{\myoverset{\jj =1}{\jj \neq \kk}}^{\K} p(\vx, \vz^{(\jj)})  \,, \qquad
   \ptgt{riwae}(\vx,\setofz) =  %
   \prod \limits_{k=1}^{\K} p(\vx,\vz^{(k)}) \, . \label{eq:riwae_fwdrev2}
\end{align}
\normalsize
Note that we have normalization constants of $ \int \ptgt{riwae}(\vx,\vz^{(1:K)}) d\vz^{(1:K)} =  p(\vx)^{K}$ and $ \int \qprop{riwae}(\vx,\vz^{(1:K)})  d\vz^{(1:K)} = p(\vx)^{K-1}$ since the transition kernels of the reverse chains do not change the normalization.

We visualize this sampling scheme in \myfig{multi-sample-ais}.   
Similarly to \myeq{iwae_importance}, the log unnormalized density ratio simplifies to
\begin{align}
    \log \frac{\ptgt{riwae}(\vx,\setofz)}{\qprop{riwae}(\vx,\setofz) }  &= \log \frac{ \prod \limits_{k=1}^K p(\vx, \vz^{(k)}) }{ \dfrac{1}{K} \sum \limits_{s=1}^K  q_{\theta}(\vz^{(s)}|\vx) \prod \limits_{\myoverset{k=1}{k \neq s}}^K p(\vx, \vz^{(k)})  } \label{eq:riwae_deriv} \\
    &= -\log \frac{1}{K} \sum \limits_{k=1}^K \dfrac{ q_{\theta}(\vz^{(k)}|\vx)}{ p(\vx, \vz^{(k)})} \label{eq:rev_logw} .
\end{align}
Taking the expectation under the proposal and target distributions yield lower and upper bounds on $\log \px$.   These translate to upper and lower bounds on $\log \px$ which are different that standard \gls{IWAE} bounds in \myeq{iwae_ublb},
  \small
\begin{align*}%
 \Exp{q_{\theta}(\vz^{(1)}|\vx) \prod \limits_{k=2}^K p(\vz^{(k)}|\vx)}{-\log \frac{1}{K} \sum \limits_{k=1}^K \frac{q_{\theta}(\vz^{(k)}|\vx)}{p(\vx,\vz^{(k)})}}
\leq \log\px \leq
\Exp{\prod \limits_{k=1}^K p(\vz^{(k)}|\vx)}{- \log \frac{1}{K} \sum \limits_{k=1}^K \frac{q_{\theta}(\vz^{(k)}|\vx)}{p(\vx,\vz^{(k)})}} .
\end{align*}
\normalsize
Note these bounds are \emph{impractical} as they would require more than one true posterior sample from $p(\vz|\vx)$. However, we use them in \mysec{coupled_rev} and \myapp{coupled} to derive practical multi-sample reverse \textsc{ais} bounds. 

\subsection{Improvement of RIWAE over ELBO and EUBO}
\begin{restatable}[{Improvement of Reverse \gls{IWAE} over \gls{ELBO} and \gls{EUBO}}]{proposition}{riwaeelboeubo}\label{prop:riwae}
Let $ \qprop{riwae}( \idx |\vx, \vz^{(1:K)})= \frac{q_{\theta}(\vz^{(\idx)}|\vx)}{p(\vx,\vz^{(\idx)}|\vx)}/{\sum \limits_{k=1}^K \frac{q_{\theta}(\vz^{(k)}|\vx)}{p(\vx,\vz^{(k)}|\vx)} }$
denote the normalized reverse importance sampling weights, or the posterior over the index variable in \cref{eq:riwae_fwdrev2}.  Let $\mathcal{U}(s)$ indicate the uniform distribution over $K$ discrete values.  
Then, we can characterize the improvement of the Reverse \gls{IWAE} bounds on $\log p(\vx)$, $\gls{ELBO}_{\textsc{riwae}}(\vx; q_\theta, K)$ and $\gls{EUBO}_{\textsc{riwae}}(\vx; q_\theta, K)$, over the single-sample 
bounds $\gls{ELBO}(\vx; q_\theta)$ and $\gls{EUBO}(\vx; q_\theta)$ using \kl divergences, as follows
\small
\begin{align}%
\hspace*{-.15cm} \gls{ELBO}_{\textsc{riwae}}(\vx; q_\theta, K)
&= \gls{ELBO}(\vx; q_\theta) +  \underbrace{\mathbb{E}_{\qprop{riwae}(\vz^{(1:K)}|\vx)} \bigg[\DKL[ \qprop{riwae}( \idx |\vx, \vz^{(1:K)}) \| \mathcal{U}(s) ] \vphantom{\frac{1}{2} }  \,  \bigg]} \nonumber
_{\mathclap{\text{\footnotesize $0 \leq  \textsc{kl}  \text{ of uniform from  \gls{SNIS} weights} \leq \log K$}}} \, , \label{eq:imais_over_elbo} \\[2ex]
\gls{EUBO}_{\textsc{riwae}}(\vx; q_\theta, K)
&= \gls{EUBO}(\vx; q_\theta) - \underbrace{\mathbb{E}_{\ptgt{riwae}(\vz^{(1:K)}|\vx)} \bigg[\DKL[ \mathcal{U}(s) \| \qprop{riwae}( \idx |\vx, \vz^{(1:K)})   ] \vphantom{\frac{1}{2} }  \,  \bigg]}_{\mathclap{\text{\footnotesize $0 \leq  \textsc{kl}  \text{ of \gls{SNIS} weights from uniform} \leq \DKL[ p(\vz|\vx) \| q_{\theta}(\vz|\vx) ] $}}} \, . \nonumber 
\end{align}
\normalsize
\end{restatable}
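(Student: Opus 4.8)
The plan is to mirror the proof of \mylemma{giwae_elbo_eubo} (and its special case \myprop{iwae_elbo_eubo}), but with the roles of the variational distribution $q_\theta(\vz|\vx)$ and the posterior $p(\vz|\vx)$ exchanged, reflecting that Reverse \gls{IWAE} reverses the direction of importance sampling. First I would lift the proposal and target of \cref{eq:riwae_fwdrev2} to an extended state space carrying an explicit index variable $\idx$, by setting
\[
\qprop{riwae}(\vx,\setofz,\idx) \coloneqq \tfrac{1}{K}\, q_\theta(\vz^{(\idx)}|\vx)\prod_{j\neq \idx}p(\vx,\vz^{(j)}), \qquad \ptgt{riwae}(\vx,\setofz,\idx)\coloneqq \tfrac{1}{K}\prod_{k=1}^K p(\vx,\vz^{(k)}).
\]
Marginalizing $\idx$ recovers \cref{eq:riwae_fwdrev2}, and the induced index-posteriors are exactly the reverse \gls{SNIS} weights $\qprop{riwae}(\idx|\vx,\setofz)$ stated in the proposition (for the proposal) and the uniform $\ptgt{riwae}(\idx|\vx,\setofz)=\mathcal{U}(s)$ (for the target, since $\prod_k p(\vx,\vz^{(k)})$ is symmetric in the samples). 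The normalization constants are unchanged, $\mathcal{Z}_\textsc{prop}=p(\vx)^{K-1}$ and $\mathcal{Z}_\textsc{tgt}=p(\vx)^{K}$, so by the general approach of \myapp{general} this extended pair still sandwiches $\log p(\vx)$.

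The second step is to evaluate the extended bounds. A one-line cancellation gives
\[
\log \frac{\ptgt{riwae}(\vx,\setofz,\idx)}{\qprop{riwae}(\vx,\setofz,\idx)} = \log \frac{p(\vx,\vz^{(\idx)})}{q_\theta(\vz^{(\idx)}|\vx)}.
\]
Taking its expectation under the extended proposal (where $\vz^{(\idx)}\sim q_\theta$) yields precisely $\gls{ELBO}(\vx;q_\theta)$, and under the extended target (where $\vz^{(\idx)}\sim p(\vz|\vx)$) yields precisely $\gls{EUBO}(\vx;q_\theta)$. Thus the \emph{extended} bounds equal the single-sample bounds, whereas the \emph{marginalized} bounds are the tighter $\gls{ELBO}_{\textsc{riwae}}$ and $\gls{EUBO}_{\textsc{riwae}}$ of \cref{eq:rev_logw}.

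The third step is the chain-rule decomposition. Writing the gap of each extended bound as a forward or reverse \textsc{kl} between the extended proposal and target, and factoring each extended distribution as (marginal over $\setofz$) $\times$ (index-posterior), the chain rule for \textsc{kl} divergence splits each gap into the gap of the marginalized \textsc{riwae} bound plus an index term, giving the two claimed identities
\[
\gls{ELBO}_{\textsc{riwae}}(\vx;q_\theta,K)-\gls{ELBO}(\vx;q_\theta) = \mathbb{E}_{\qprop{riwae}(\setofz|\vx)}\big[\DKL[\qprop{riwae}(\idx|\vx,\setofz)\,\|\,\mathcal{U}(s)]\big],
\]
\[
\gls{EUBO}(\vx;q_\theta)-\gls{EUBO}_{\textsc{riwae}}(\vx;q_\theta,K) = \mathbb{E}_{\ptgt{riwae}(\setofz|\vx)}\big[\DKL[\mathcal{U}(s)\,\|\,\qprop{riwae}(\idx|\vx,\setofz)]\big].
\]

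The final step is the range of each index term. For the \gls{ELBO} side, $\DKL[\qprop{riwae}(\idx|\vx,\setofz)\,\|\,\mathcal{U}(s)] = \log K - H(\qprop{riwae}(\idx|\vx,\setofz)) \in [0,\log K]$, since the entropy of a $K$-valued variable is nonnegative. For the \gls{EUBO} side the lower bound is immediate, and the upper bound follows because $\gls{EUBO}_{\textsc{riwae}}(\vx;q_\theta,K)\geq \log p(\vx)$ (it is an upper bound), so the reduction cannot exceed the full single-sample gap $\gls{EUBO}(\vx;q_\theta)-\log p(\vx) = \DKL[p(\vz|\vx)\,\|\,q_\theta(\vz|\vx)]$. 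I do not expect a genuine obstacle here; the only point requiring care is bookkeeping the reversal of the $q/p$ roles, so that the index term appears as $\DKL[\text{\gls{SNIS}}\,\|\,\mathcal{U}]$ on the \gls{ELBO} side but $\DKL[\mathcal{U}\,\|\,\text{\gls{SNIS}}]$ on the \gls{EUBO} side (opposite to the \gls{IWAE} case in \myprop{iwae_elbo_eubo}), together with confirming that the extended bounds evaluate to the single-sample \gls{ELBO}/\gls{EUBO} rather than to some other marginalized quantity.
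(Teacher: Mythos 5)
Your proposal is correct and takes essentially the same route as the paper: the paper's own proof is a one-line pointer to the technique of \myprop{iwae_elbo_eubo} and \cref{lemma:giwae_elbo_eubo}, namely introducing an index variable so that the single-sample bound appears as the index-extended bound, then splitting the extended-state-space \textsc{kl} via the chain rule into the marginal (\textsc{riwae}) gap plus an expected index-posterior \textsc{kl}. Your writeup simply supplies the details the paper leaves implicit, including the correct bookkeeping that the $q/p$ role reversal places $\DKL[\text{\gls{SNIS}}\|\mathcal{U}]$ on the \gls{ELBO} side and $\DKL[\mathcal{U}\|\text{\gls{SNIS}}]$ on the \gls{EUBO} side.
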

\begin{proof}
The proof follows similarly as \myprop{iwae_elbo_eubo} or \cref{lemma:giwae_elbo_eubo}.
\end{proof}

\section{Independent Reverse Multi-Sample AIS}\label{app:reverse_ms_ais}

\subsection{Probabilistic Interpretation and Bounds}
In similar fashion to Reverse \gls{IWAE}, we now use $K$ independent reverse \gls{AIS} chains to form an extended state space target distribution $\ptgt{ir-ais}(\vz^{(1:K)}_{0:T}, \vx)$, and a mixture proposal distribution $\qprop{ir-ais}(\vz^{(1:K)}_{0:T}, \vx)$ which includes a single forward \gls{AIS} chain (see \myfig{ais}) 
\begin{align}
    \qprop{ir-ais}(\vx, \vz^{(1:K)}_{0:T}) &=  \frac{1}{K} \sum \limits_{s=1}^K \qprop{ais}(\vz^{(s)}_{0:T}|\vx) \prod \limits_{\myoverset{k=1}{k\neq s}}^{K}  \ptgt{ais}( \vx, \vz^{(k)}_{0:T}) , \nonumber
    \\
    \ptgt{ir-ais}( \vx, \vz^{(1:K)}_{0:T}) &=  \prod \limits_{k=1}^{K}  \ptgt{ais}(\vx, \vz^{(k)}_{0:T}).
\end{align}
Similarly to \myeq{riwae_deriv}-(\ref{eq:rev_logw}), the log unnormalized density ratio becomes
\begin{align}
    \log \frac{\ptgt{ir-ais}(\vz^{(1:K)}_{0:T}, \vx)}{\qprop{ir-ais}(\vz^{(1:K)}_{0:T}, \vx) } &= 
    \log \frac{\prod \limits_{k=1}^{K}  \ptgt{ais}(\vx,\vz^{(k)}_{0:T})}{\frac{1}{K} \sum \limits_{k=1}^K \qprop{ais}(\vz^{(k)}_{0:T}|\vx) \prod \limits_{t=1}^{T}  \ptgt{ais}(\vx,\vz^{(k)}_{0:T})} \\
    &= -\log \frac{1}{K} \sum \limits_{k=1}^K \dfrac{ \qprop{ais}(\vz^{(k)}_{0:T}, \vx)}{\ptgt{ais}(\vx,\vz^{(k)}_{0:T})} \label{eq:rev_ais_logw} .
\end{align}
Taking expectations of the log unnormalized density ratio under the proposal and target, respectively, yield lower and upper bounds on $\log p(\vx)$
\begin{align}
      \myunderbrace{
      \mathbb{E}_{\myoverset{\vz^{(1)}_{0:T} \sim \qprop{ais}}{\vz^{(2:K)}_{0:T} \sim \ptgt{ais}}}
        \bigg[ -  \log  \frac{1}{K} \sum \limits_{k =1}^{K}
            \frac{\qprop{ais}(\vz^{(k)}_{0:T}|\vx)}
                 {\ptgt{ais}(\vx,\vz^{(k)}_{0:T})}
        \bigg] }
        {\gls{ELBO}_{\textsc{ir-ais}}(\vx;\pi_0,K,T)}  \leq  \log p(\vx)\leq
 \myunderbrace{
    \mathbb{E}_{\myoverset{\vz^{(1:K)}_{0:T} \sim \ptgt{ais}}{\phantom{\vz^{(1:K)}_{0:T} \sim \ptgt{ais}}} } 
        \bigg[
            -  \log  \frac{1}{K} \sum \limits_{k=1}^{K}
            \frac{\qprop{ais}(\vz^{(k)}_{0:T}|\vx)}
                 {\ptgt{ais}(\vx,\vz^{(k)}_{0:T})}
        \bigg]
}
{\gls{EUBO}_{\textsc{ir-ais}}(\vx; \pi_0,K,T)} . \label{eq:ri_ais_ublb2}
\end{align}

However, Independent Reverse Multi-Sample \gls{AIS} may be impractical in common settings, 
since it is infeasible to have access to more than one true posterior sample.
\subsection{Proof of Logarithmic Improvement in $K$ for Independent Reverse AIS} 

\begin{restatable}[{Improvement of Independent Reverse Multi-Sample \gls{AIS} over Single-Sample \gls{AIS}}]{proposition}{iraiselboeubo}\label{prop:i_r_multi_ais_ub}
Let $ \qprop{ir-ais}( \idx |\vx, \vz_{0:T}^{(1:K)})= \frac{\qprop{ais}( \vz_{0:T}^{(\idx)}|\vx)}{\ptgt{ais}(\vx,\vz_{0:T}^{(\idx)})}/{\sum \limits_{k=1}^K \frac{\qprop{ais}(\vz_{0:T}^{(k)}|\vx)}{\ptgt{ais}(\vx,\vz_{0:T}^{(k)})} }$
denote the normalized reverse importance sampling weights over \gls{AIS} chains, and let $\mathcal{U}(s)$ indicate the uniform distribution over $K$ discrete values.  
Then, we can characterize the improvement of the Independent Reverse Multi-Sample \gls{AIS} bounds on $\log p(\vx)$, $\gls{ELBO}_{\textsc{ir-ais}}(\vx; \pi_0, K,T)$ and $\gls{EUBO}_{\textsc{ir-ais}}(\vx; \pi_0, K,T)$, over the single-sample $\gls{ELBO}_{\textsc{ais}}(\vx; \pi_0, T)$ and $\gls{ELBO}_{\textsc{ais}}(\vx; \pi_0,T)$  using \kl divergences, as follows
\small
\begin{align}%
\hspace*{-.2cm} 
\gls{ELBO}_{\textsc{ir-ais}}(\vx;\pi_0, K,T)
&= \gls{ELBO}_{\textsc{ais}}(\vx;\pi_0,T) +  \underbrace{\mathbb{E}_{\qprop{ir-ais}(\vz_{0:T}^{(1:K)}|\vx)} \bigg[\DKL[ \qprop{ir-ais}( \idx |\vx, \vz_{0:T}^{(1:K)}) \| \mathcal{U}(s) ] \vphantom{\frac{1}{2} }  \,  \bigg]} \nonumber
_{\mathclap{\text{\footnotesize $0 \leq  \textsc{kl}  \text{ of uniform from  \gls{SNIS} weights} \leq \log K$}}} \, ,  \\[2ex]
\gls{EUBO}_{\textsc{ir-ais}}(\vx;\pi_0, K,T)
&= \gls{EUBO}_{\textsc{ais}}(\vx;\pi_0,T) -  \underbrace{\mathbb{E}_{\ptgt{ir-ais}(\vz_{0:T}^{(1:K)}|\vx)} \bigg[\DKL[ \mathcal{U}(s) \| \qprop{ir-ais}( \idx |\vx, \vz_{0:T}^{(1:K)})   ] \vphantom{\frac{1}{2} }  \,  \bigg]}_{\mathclap{\text{\scriptsize $0 \leq \textsc{kl}  \text{ of \gls{SNIS} weights from uniform} \leq \DKL[ \ptgt{ais}(\vz_{0:T}|\vx) \| \qprop{ais}(\vz_{0:T}|\vx) ] $}}} \, . \nonumber 
\end{align}
\normalsize
\end{restatable}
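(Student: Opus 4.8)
The plan is to exploit the observation made in \mysec{coupled_rev} that Independent Reverse Multi-Sample \gls{AIS} is exactly Reverse \gls{IWAE} (\myapp{rev_iwae}) with the single-sample variational and posterior distributions replaced by forward and backward \gls{AIS} chains: formally, one substitutes $q_\theta(\vz|\vx) \mapsto \qprop{ais}(\vz_{0:T}|\vx)$ and the unnormalized $p(\vx,\vz) \mapsto \ptgt{ais}(\vx,\vz_{0:T})$ throughout \myprop{riwae}. Consequently the entire argument of \myprop{riwae} transfers verbatim, and I would organize it as an index augmentation followed by a two-way chain-rule decomposition of a \textsc{kl} divergence, mirroring \cref{lemma:giwae_elbo_eubo}.

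First I would introduce a uniform index variable $\idx$ into both extended state spaces, writing $\qprop{ir-ais}(\vx,\vz^{(1:K)}_{0:T},\idx) = \tfrac{1}{K}\,\qprop{ais}(\vz^{(\idx)}_{0:T}|\vx)\prod_{k\neq \idx}\ptgt{ais}(\vx,\vz^{(k)}_{0:T})$ and $\ptgt{ir-ais}(\vx,\vz^{(1:K)}_{0:T},\idx) = \tfrac{1}{K}\prod_{k}\ptgt{ais}(\vx,\vz^{(k)}_{0:T})$, so that marginalizing over $\idx$ recovers the distributions of \myapp{reverse_ms_ais}. A direct computation then shows that the proposal's index posterior is the normalized reverse importance weight $\qprop{ir-ais}(\idx|\vx,\vz^{(1:K)}_{0:T})$ from the statement, while the target's index posterior is uniform, $\ptgt{ir-ais}(\idx|\vx,\vz^{(1:K)}_{0:T}) = \mathcal{U}(\idx)$. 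Both also have uniform index \emph{marginals} $\qprop{ir-ais}(\idx|\vx) = \ptgt{ir-ais}(\idx|\vx) = 1/K$.

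The crux is to evaluate the gaps of the bounds, which by \myapp{general} are the forward \textsc{kl} divergence $\DKL[\qprop{ir-ais}(\vz^{(1:K)}_{0:T}|\vx)\|\ptgt{ir-ais}(\vz^{(1:K)}_{0:T}|\vx)]$ (for the \gls{ELBO}) and the reverse \textsc{kl} (for the \gls{EUBO}), \emph{without} the index. I would decompose the corresponding index-augmented joint \textsc{kl} in two ways. Conditioning on $\idx$ first, the index marginals are equal so that term vanishes, and the conditional distributions differ only in the single chain indexed by $\idx$ (the remaining $K-1$ backward factors cancel); hence the joint \textsc{kl} collapses to $\DKL[\qprop{ais}(\vz_{0:T}|\vx)\|\ptgt{ais}(\vz_{0:T}|\vx)]$, precisely the gap of single-sample $\gls{ELBO}_{\textsc{ais}}$. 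Conditioning on $\vz^{(1:K)}_{0:T}$ first gives instead the marginal gap plus $\mathbb{E}_{\qprop{ir-ais}}[\DKL[\qprop{ir-ais}(\idx|\vx,\vz^{(1:K)}_{0:T})\|\mathcal{U}(\idx)]]$. Equating the two and translating back through $\gls{ELBO} = \log p(\vx) - (\text{gap})$ yields the first identity; the reverse \textsc{kl} version yields the second, with expectation under $\ptgt{ir-ais}(\vz^{(1:K)}_{0:T}|\vx)$. The bounds on the residual \textsc{kl} terms then follow immediately: for the \gls{ELBO}, $\DKL[\qprop{ir-ais}(\idx|\vx,\vz^{(1:K)}_{0:T})\|\mathcal{U}(\idx)] = \log K - H(\qprop{ir-ais}(\idx|\vx,\vz^{(1:K)}_{0:T})) \leq \log K$ by nonnegativity of discrete entropy; for the \gls{EUBO}, nonnegativity of the marginal reverse \textsc{kl} forces $\mathbb{E}[\DKL[\mathcal{U}(\idx)\|\qprop{ir-ais}(\idx|\vx,\vz^{(1:K)}_{0:T})]] \leq \DKL[\ptgt{ais}(\vz_{0:T}|\vx)\|\qprop{ais}(\vz_{0:T}|\vx)]$.

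The main obstacle I anticipate is bookkeeping the normalization constants ($\mathcal{Z}_\textsc{tgt} = p(\vx)^K$ versus $\mathcal{Z}_\textsc{prop} = p(\vx)^{K-1}$) so that the general lemma of \myapp{general} applies cleanly, and---more conceptually---keeping the \emph{joint} (index-augmented) \textsc{kl} carefully distinguished from the \emph{marginal} \textsc{kl} that actually equals the bound gap. It is exactly the difference between these two, the index-posterior \textsc{kl}, that quantifies the multi-sample improvement; conflating them would erroneously suggest no gain from $K>1$. The cancellation of the $K-1$ shared backward factors in the index-conditioned \textsc{kl} is the one nontrivial computation, but it is routine once the augmented distributions are written down.
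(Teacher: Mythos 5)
Your proposal is correct and takes essentially the same route as the paper: the paper's own proof simply defers to \myprop{iwae_elbo_eubo} and \cref{lemma:giwae_elbo_eubo}, i.e., the same index-augmentation and two-way chain-rule decomposition of the extended-state-space \textsc{kl} divergence that you carry out explicitly, with the forward and backward \gls{AIS} chains $\qprop{ais}(\vz_{0:T}|\vx)$ and $\ptgt{ais}(\vx,\vz_{0:T})$ playing the roles of $\qzx$ and $p(\vx,\vz)$ in the Reverse \gls{IWAE} argument. Your explicit checks (uniform index marginals, cancellation of the $K-1$ shared backward factors, and the bookkeeping of $\mathcal{Z}_\textsc{tgt}=p(\vx)^K$ versus $\mathcal{Z}_\textsc{prop}=p(\vx)^{K-1}$) are precisely the details the paper leaves implicit.
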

\begin{proof}
The proof follows similarly as \myprop{iwae_elbo_eubo} or \cref{lemma:giwae_elbo_eubo}.   
\end{proof}

\section{Coupled Reverse Multi-Sample AIS}\label{app:coupled}

\subsection{Probabilistic Interpretation and Bounds}\label{app:prob_interp_cr_ais}
The Coupled Reverse Multi-Sample \gls{AIS} extended state space target distribution in \cref{fig:ais} initializes $K$ backward chains from a \textit{single} target sample $\vz_T \sim \pi_T(\vz|\vx)$, which makes the bound useful in practical situations.
We denote the remaining transitions as $\ptgt{ais}(\vz_{0:T-1}|\vz_T, \vx)$, since they are identical to standard \gls{AIS} in \myeq{ais_fwd_rev}.  Thus, the \textsc{cr-ais} extended state space target distribution is
\begin{align}
     \ptgt{cr-ais}(\vx, \vz^{(1:K)}_{0:T-1}, \vz_T) &=  \pi_T(\vx, \vz_T) \prod \limits_{k=1}^{K}  \ptgt{ais}(\vz^{(k)}_{0:T-1}| \vz_T, \vx).
     \label{eq:cr_ais_tgt}
\end{align}

 The extended state space proposal is obtained by selecting an index $s$ uniformly at random and running a single forward \gls{AIS} chain.  We then run $K-1$ backward chains, all starting from the last state of the selected forward chain,
\begin{align}
    \qprop{cr-ais}(\vz^{(1:K)}_{0:T-1},\vz_T|\vx) 
    &\coloneqq 
    \frac{1}{K} \sum \limits_{s=1}^K \qprop{ais}(\vz^{(s)}_{0:T-1}, \vz_{T}|\vx) \prod \limits_
    {\myoverset{k=1}{k\neq s}}^{K}  
    \ptgt{ais}( \vz^{(k)}_{0:T-1} | \vz_T, \vx) . \label{eq:cr_ais_prop}
\end{align}
See \myfig{ais} or \myfig{multi-sample-ais} for a graphical model description.

We can construct log-partition bounds using the log ratio of unnormalized densities,
\begin{align}
\log \frac{\ptgt{cr-ais}(\vz^{(1:K)}_{0:T-1}, \vz_T, \vx)}{\qprop{cr-ais}(\vz^{(1:K)}_{0:T-1}, \vz_T| \vx)}&= 
\log \frac{\pi_T(\vz_T, \vx) \prod \limits_{k=1}^K \ptgt{ais}(\vz^{(k)}_{0:T-1}|\vz_T, \vx) }{\frac{1}{K} \sum \limits_{\idx=1}^K \qprop{ais}(\vz_{0:T-1}^{(\idx)},\vz_T|\vx) \prod \limits_{\myoverset{k=1}{k\neq \idx}}^K \ptgt{ais}(\vz^{(k)}_{0:T-1}|\vz_T) } \\
&= - \log  \frac{1}{K} \sum \limits_{k=1}^K  \dfrac{\qprop{ais}(\vz_{0:T-1}^{(k)},\vz_T|\vx)}{ \ptgt{ais}(\vz^{(k)}_{0:T-1}|\vz_T) \pi_T(\vz_T,\vx) } . %
\end{align}

Taking the expected log ratio under the proposal and target yields lower and upper bounds on $\log p(\vx)$,
\small
\begin{align}
   \gls{ELBO}_{\textsc{cr-ais}}(\vx;\pi_0,K,T) &\coloneqq
   - \mathbb{E}_{\myoverset{\vz^{(1)}_{0:T-1}, \vz_T \sim \qprop{ais}(\vz_{0:T}|\vx)}{\vz^{(2:K)}_{0:T-1} \sim \ptgt{ais}(\vz_{0:T-1}|\vz_T, \vx)}} \bigg[ \log  \frac{1}{K} \sum \limits_{k =1}^{K}  \frac{\qprop{ais}(\vz^{(k)}_{0:T-1},\vz_{T}|\vx)}{\ptgt{ais}(\vx,\vz^{(k)}_{0:T-1},\vz_{T})} \bigg] %
   \leq  \log p(\vx) 
   \nonumber \\
     \gls{EUBO}_{\textsc{cr-ais}}(\vx;\pi_0,K,T)&\coloneqq -\mathbb{E}_{\myoverset{\vz_{T} \sim \pi_T(\vz_T|\vx)}{\vz^{(1:K)}_{0:T-1} \sim \ptgt{ais}(\vz_{0:T-1}|\vz_{T}, \vx)}}
     \bigg[ \log  \frac{1}{K} \sum \limits_{k =1}^{K}  \frac{\qprop{ais}(\vz^{(k)}_{0:T-1},\vz_{T}|\vx)}{\ptgt{ais}(\vx,\vz^{(k)}_{0:T-1},\vz_{T})} \bigg] \geq \log \px .
     \nonumber
\end{align}
\normalsize
 \subsection{Proof that CR-AIS Bounds Tighten with Increasing $K$}\label{app:coupled_k_pf}
 In this section, we prove that Coupled Reverse Multi-Sample \gls{AIS} bounds get tighter with increasing $K$.
 Our proof provides an alternative perspective to \citet{burda2016importance,sobolev2019hierarchical} for showing the monotonic improvement of \gls{IWAE} or Independent Multi-Sample \gls{AIS} with $K$.
 We will also characterize the improvement of \textsc{cr-ais} bounds over single-sample \gls{AIS} bounds in \myprop{c_r_multi_sample_ais}, as a direct consequence of this lemma.
 
 \begin{restatable}[{\textsc{CR-AIS} Bounds Tighten with Increasing $K$}]{lemma}{craiswithk}\label{prop:cr_ais_k}
 Coupled Reverse Multi-Sample \gls{AIS} bounds get tighter with increasing number of samples $K$.   In other words, for any $K>1$, 
 \footnotesize
 \begin{align}
     \gls{ELBO}_{\textsc{cr-ais}}(\vx; \pi_0, T, K-1) \leq \gls{ELBO}_{\textsc{cr-ais}}(\vx; \pi_0, T, K) \leq \log p(\vx) ,\\
     \gls{EUBO}_{\textsc{cr-ais}}(\vx; \pi_0, T, K-1) \geq \gls{EUBO}_{\textsc{cr-ais}}(\vx; \pi_0, T, K) \geq \log p(\vx) .
 \end{align}
 \normalsize
 \end{restatable}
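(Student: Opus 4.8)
The plan is to take the per-$K$ sandwiching for free from the general extended-state-space framework of \myapp{general}, and then to prove monotonicity in $K$ by \emph{conditioning on the shared endpoint} $\vz_T$, which decouples the coupled chains into (conditionally) exchangeable families to which a concavity argument applies. First I would write both bounds in the common form $\gls{ELBO}_{\textsc{cr-ais}}(\vx;\pi_0,K,T) = -\,\mathbb{E}_{\qprop{cr-ais}}\big[\log \tfrac1K \sum_{k=1}^K v_k\big]$ and $\gls{EUBO}_{\textsc{cr-ais}}(\vx;\pi_0,K,T) = -\,\mathbb{E}_{\ptgt{cr-ais}}\big[\log \tfrac1K \sum_{k=1}^K v_k\big]$, where $v_k = \qprop{ais}(\vz^{(k)}_{0:T-1},\vz_T|\vx)\big/\big(\ptgt{ais}(\vz^{(k)}_{0:T-1}|\vz_T)\,\pi_T(\vx,\vz_T)\big)$ is the reciprocal of the standard forward \gls{AIS} importance weight for chain $k$, exactly as in \myapp{prob_interp_cr_ais}. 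Since $\int \ptgt{cr-ais} = p(\vx)$ and $\int \qprop{cr-ais} = 1$, \cref{eq:fwd_kl_expand}--(\ref{eq:rev_kl_expand}) already give $\gls{ELBO}_{\textsc{cr-ais}}(K)\le \log p(\vx)\le \gls{EUBO}_{\textsc{cr-ais}}(K)$ for every $K$, so only the monotonicity remains to be shown.

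For the upper bound, the key observation is that under $\ptgt{cr-ais}$, conditioned on $\vz_T\sim\pi_T(\vz|\vx)$, the $K$ chains $\vz^{(k)}_{0:T-1}$ are drawn i.i.d.\ from $\ptgt{ais}(\cdot|\vz_T,\vx)$; hence $v_1,\dots,v_K$ are conditionally i.i.d.\ given $\vz_T$. For such weights the leave-one-out identity $\tfrac1K\sum_{k=1}^K v_k = \tfrac1K\sum_{j=1}^K\big(\tfrac1{K-1}\sum_{k\neq j}v_k\big)$, together with concavity of $\log$ and exchangeability, yields $\mathbb{E}\big[\log \tfrac1K\sum_k v_k \mid \vz_T\big]\ge \mathbb{E}\big[\log \tfrac1{K-1}\sum_{k=1}^{K-1}v_k \mid \vz_T\big]$, exactly the distribution-free Burda-style argument behind \myprop{iwaek} (\citet{burda2016importance}). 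Integrating over $\vz_T$ and negating shows $\gls{EUBO}_{\textsc{cr-ais}}(K)$ is nonincreasing in $K$, as desired.

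For the lower bound I would instead condition on the pair $(s,\vz_T)$ under $\qprop{cr-ais}$. Given these, the selected chain $\vz^{(s)}_{0:T-1}$ is a \emph{forward} chain pinned at endpoint $\vz_T$, while the remaining $K-1$ chains are i.i.d.\ \emph{backward} draws from $\ptgt{ais}(\cdot|\vz_T,\vx)$; thus the weight collection has the form ``one distinguished $v_s$ plus $K-1$ i.i.d.\ weights,'' which is the structure of an \emph{upper}-bound estimator (one privileged sample inside a log-average) rather than the symmetric i.i.d.\ case. I would establish that $\mathbb{E}\big[\log \tfrac1K\sum_k v_k\big]$ is nonincreasing in $K$ for this structure, so that its negation $\gls{ELBO}_{\textsc{cr-ais}}(K)$ is nondecreasing, mirroring the monotonicity of $\gls{EUBO}_{\gls{IWAE}}$ established in \myprop{iwaek} (\citet{sobolev2019hierarchical}).

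The main obstacle is precisely this last step: the single distinguished forward-chain weight $v_s$ breaks the clean leave-one-out averaging used for the upper bound, because no leave-one-out term ever removes $v_s$, so $\tfrac1K\sum_k v_k$ is not a uniform average of correctly distributed $(K-1)$-sample estimators, and the naive marginalization ``drop one chain'' fails to push $\qprop{cr-ais,K}$ onto $\qprop{cr-ais,K-1}$ (a data-processing argument therefore closes the target side but not the proposal side). I would resolve it by the Sobolev-style route: apply the leave-one-out only over the $K-1$ i.i.d.\ backward indices while holding $v_s$ fixed, carefully tracking the $\log K$ versus $\log(K-1)$ normalization and using $v_k\ge 0$, or equivalently identify the conditional law given $(s,\vz_T)$ with an \textsc{riwae}-type proposal/target pair and invoke \myprop{iwaek} directly. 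All remaining manipulations---the normalization checks $\int\ptgt{cr-ais}=p(\vx)$ and $\int\qprop{cr-ais}=1$, and the reduction of the log-ratio to $-\log\tfrac1K\sum_k v_k$---are routine and follow \myapp{prob_interp_cr_ais}.
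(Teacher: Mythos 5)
Your proposal is correct in substance, but it takes a genuinely different route from the paper's proof. The paper proves both inequalities with a single, self-contained marginalization argument: it augments the $K$-sample probabilistic interpretation with $M<K$ index variables $s_{1:M}$ drawn uniformly without replacement, shows that the \textsc{kl} divergence on the enlarged space $(s_{1:M},\vz^{(1:K)}_{0:T-1},\vz_T)$ equals the gap of the $M$-sample bound while the \textsc{kl} divergence of the marginals over $s_{1:M}$ equals the gap of the $K$-sample bound, and concludes because marginalization cannot increase \textsc{kl}. That argument treats $\gls{ELBO}_{\textsc{cr-ais}}$ and $\gls{EUBO}_{\textsc{cr-ais}}$ symmetrically, compares $K$ against any $M<K$ in one stroke, and imports nothing from \citet{burda2016importance} or \citet{sobolev2019hierarchical}; the paper explicitly advertises it as an alternative perspective to those proofs. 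You instead decouple the coupled chains by conditioning on the shared endpoint $\vz_T$. For the \gls{EUBO} your argument is fully elementary and correct: under $\ptgt{cr-ais}$ the weights are conditionally i.i.d.\ given $\vz_T$, Burda's leave-one-out/Jensen step applies, and dropping one backward chain maps the $K$-sample target exactly onto the $(K-1)$-sample target. For the \gls{ELBO}, your reduction to \myprop{iwaek} is also sound: given $(s,\vz_T)$, the forward chain has law $\qprop{ais}(\vz_{0:T-1}|\vz_T,\vx)$, which is precisely the normalization of $w(\vz_{0:T-1})\,\ptgt{ais}(\vz_{0:T-1}|\vz_T,\vx)$ for $w=\qprop{ais}(\vz_{0:T-1},\vz_T|\vx)/\ptgt{ais}(\vz_{0:T-1}|\vz_T,\vx)$, so conditionally you are looking at exactly an \gls{IWAE}-\gls{EUBO} (equivalently, a \textsc{riwae}-type bound) for estimating $\log\qprop{ais}(\vz_T|\vx)$ with proposal $\ptgt{ais}(\cdot|\vz_T,\vx)$, and \myprop{iwaek} applies pointwise in $\vz_T$ (the additive $-\log\pi_T(\vx,\vz_T)$ term is common to $K$ and $K-1$). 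The trade-off: your route is modular and exposes the conditional-independence structure nicely, but its \gls{ELBO} half is only as self-contained as \myprop{iwaek}, which the paper states without proof; the paper's subset-index argument needs no such import and additionally yields an explicit conditional-\textsc{kl} expression for the improvement from $M$ to $K$ samples.

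One concrete warning: your two proposed resolutions for the \gls{ELBO} case are not ``equivalent,'' and the first one fails as written. If you apply leave-one-out only over the $K-1$ i.i.d.\ backward indices while holding $v_s$ fixed, the uniform average of those leave-one-out estimators is
\begin{align}
    \frac{1}{K-1}\sum_{j\neq s}\frac{1}{K-1}\Big(v_s+\sum_{k\neq s,j}v_k\Big)
    \;=\;
    \frac{1}{K-1}\Big(v_s+\frac{K-2}{K-1}\sum_{k\neq s}v_k\Big),
\end{align}
which over-weights $v_s$ (coefficient $\tfrac{1}{K-1}>\tfrac{1}{K}$) and under-weights the i.i.d.\ block (coefficient $\tfrac{K-2}{(K-1)^2}<\tfrac{1}{K}$) relative to $\frac{1}{K}\sum_k v_k$; since neither average dominates the other pointwise, Jensen applied to this decomposition does not compare $\gls{ELBO}_{\textsc{cr-ais}}(\vx;\pi_0,T,K)$ with $\gls{ELBO}_{\textsc{cr-ais}}(\vx;\pi_0,T,K-1)$. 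Commit to the second resolution, i.e.\ the reduction to \myprop{iwaek}.
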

 \begin{proof}
Our proof will proceed by introducing an additional set of $M$ index variables $s_{1:M}$ into the $K$-sample 
probabilistic interpretation of \textsc{cr-ais} in \cref{eq:cr_ais_tgt}-(\ref{eq:cr_ais_prop}), with $M < K$.   We will show that the \textsc{kl} divergence in this  \textit{joint} state space (including $s_{1:M}$) is equal to the gap of the $M$-sample $\gls{ELBO}_{\textsc{cr-ais}}(\vx; \pi_0, T, M)$ or $ \gls{EUBO}_{\textsc{cr-ais}}(\vx; \pi_0, T, M)$.   We then show that marginalizing over $s_{1:M}$ yields the gap of the $K$-sample $\gls{ELBO}_{\textsc{cr-ais}}(\vx; \pi_0, T, K)$ or $ \gls{EUBO}_{\textsc{cr-ais}}(\vx; \pi_0, T, K)$.    Since marginalization cannot increase the \textsc{kl} divergence, we will have shown that for any $M < K$,
$D_{KL}[\qprop{cr-ais}(\vz^{(1:K)}_{0:T-1}, \vz_T|\vx) \| \ptgt{cr-ais}(\vz^{(1:K)}_{0:T-1}, \vz_T|\vx)] ] \leq D_{KL}[\qprop{cr-ais}(\vz^{(1:M)}_{0:T-1}, \vz_T|\vx) \| \ptgt{cr-ais}(\vz^{(1:M)}_{0:T-1}, \vz_T|\vx)] ]$ and thus
$\gls{ELBO}_{\textsc{cr-ais}}(\vx; \pi_0, T, M) \leq \gls{ELBO}_{\textsc{cr-ais}}(\vx; \pi_0, T, K)$.   Identical reasoning holds for the $\gls{EUBO}_{\textsc{cr-ais}}$.

\paragraph{Sub-Sampling Probabilistic Interpretation}
Let $\udist(s_1, ... ,s_M)$ indicate the probability of drawing $M < K$ sample indices uniformly without replacement (i.e. each $s_m$ is distinct).   
In the \textsc{cr-ais} target distribution, there is no distinction between the indices $\{ s_{1:M}\}$ and $k \not \in \{ s_{1:M}\}$ after drawing $\vz_T \sim \pi_T(\vx,\vz)$.  We can write
\footnotesize
\begin{align}
    \ptgt{cr-ais}(\vx, s_{1:M}, \vz^{(1:K)}_{0:T-1}, \vz_T)  &=  \udist(s_1, ... ,s_M)\cdot \pi_T(\vx, \vz_T)  \prod \limits_{k=1}^{K}  \ptgt{ais}(\vz^{(k)}_{0:T-1}| \vz_T, \vx) \label{eq:cr_ais_tgt_extended}
\end{align}
\normalsize
which, after marginalization over $s_{1:M}$, clearly matches $\ptgt{cr-ais}(\vx, \vz^{(1:K)}_{0:T-1}, \vz_T)$.

 We also draw $s_{1:M} \sim \udist(s_1, \ldots ,s_M)$ for the extended state space proposal.  Next, we select an index $m$ uniformly at random from $\{{1,\ldots,M}\}$, which is used to 
to specify
which chain $\vz_{0:T-1}^{(s_m)}$ is run in the forward direction to obtain $\vz_T$, as in \cref{eq:cr_ais_prop}.  After marginalizing over $m$, we obtain the following mixture proposal distribution
\footnotesize
\begin{align}
    \qprop{cr-ais}(s_{1:M}, \vz^{(1:K)}_{0:T-1}, \vz_T|\vx) 
    \coloneqq 
    \udist(s_1, \ldots ,s_M)  
    \cdot \bigg( \frac{1}{M} &\sum \limits_{m=1}^M \qprop{ais}(\vz^{(s_m)}_{0:T-1}, \vz_{T}|\vx) \prod \limits_{\myoverset{j=1}{j \neq m}}^{M} \ptgt{ais}( \vz^{(s_j)}_{0:T-1} | \vz_T, \vx) \bigg) \nonumber  \\
    &\cdot \prod \limits_{\myoverset{k=1}{k \not\in \{s_{1:M} \}}}^{K}  \ptgt{ais}(\vz^{(k)}_{0:T-1}| \vz_T, \vx)\,.
    \label{eq:cr_ais_prop_extended}
\end{align}
\normalsize
We will consider marginalizing over $s_{1:M}$ below, but first write the \textsc{kl} divergence in the extended state space which includes $s_{1:M}$.
For example, the forward \textsc{kl} divergence matches the gap of the $M$-sample $\gls{ELBO}_{\textsc{cr-ais}}(\vx; \pi_0, T, M)$,

\begin{flalign}
   & \DKL[ \qprop{cr-ais}(s_{1:M}, \vz^{(1:K)}_{0:T-1}, \vz_T|\vx)\| \ptgt{cr-ais}(s_{1:M}, \vz^{(1:K)}_{0:T-1}, \vz_T|\vx)] &
\end{flalign}
\tiny
\begin{flalign}
&= \mathbb{E}_{\qprop{cr-ais}}
   \left[ \log \frac{\cancel{\udist(s_{1:M})  }
    \cdot \Bigg( \frac{1}{M} \sum \limits_{m=1}^{M} \qprop{ais}(\vz^{(s_m)}_{0:T-1}, \vz_{T}|\vx) \prod \limits_{\myoverset{j=1}{j \neq m}}^{M} \ptgt{ais}( \vz^{(s_j)}_{0:T-1} | \vz_T, \vx) \bigg) \cdot \cancel{ \prod \limits_{\myoverset{k=1}{k \not\in \{s_{1:M} \}}}^{K}  \ptgt{ais}(\vz^{(k)}_{0:T-1}| \vz_T, \vx)}  }{\cancel{\udist(s_{1:M})} \cdot \pi_T(\vx, \vz_T)  \prod \limits_{k \in \{s_{1:M}\}}  \ptgt{ais}(\vz^{(k)}_{0:T-1}| \vz_T, \vx)
    \cancel{ \prod \limits_{\myoverset{k=1}{k \not\in \{s_{1:M}\}}}^{K}  \ptgt{ais}(\vz^{(k)}_{0:T-1}| \vz_T, \vx) }} \right] \nonumber &
\end{flalign}
\normalsize
\begin{flalign}
&= \DKL\big[ \qprop{cr-ais}( \vz^{{(1:M)}}_{0:T-1}, \vz_T|\vx)\| \ptgt{cr-ais}(\vz^{{(1:M)}}_{0:T-1}, \vz_T|\vx) \big] \,,&\label{eq:m-sample-kl}
\end{flalign}

which matches the $M$-sample probabilistic interpretation of \textsc{cr-ais} from \myapp{prob_interp_cr_ais}.  Identical reasoning holds for the case of $\gls{EUBO}_{\textsc{cr-ais}}(\vx; \pi_0, T, K) \leq \gls{EUBO}_{\textsc{cr-ais}}(\vx; \pi_0, T, M)$ using the reverse \textsc{kl} divergence.

\paragraph{Marginalization over $s_{1:M}$}
We have already seen from \cref{eq:cr_ais_tgt_extended} that the marginal {$\sum_{s_{1:M}} \ptgt{cr-ais}(\vx, s_{1:M}, \vz^{(1:K)}_{0:T-1}, \vz_T)$} matches the $K$-sample target distribution $\ptgt{cr-ais}(\vx, \vz^{(1:K)}_{0:T-1}, \vz_T)$.

We would now like to marginalize over $s_{1:M}$ in $\qprop{cr-ais}(s_{1:M}, \vz^{(1:K)}_{0:T-1}, \vz_T|\vx)$.
Combining the two product terms in \cref{eq:cr_ais_prop_extended},
\footnotesize
\begin{align}
   \sum \limits_{s_{1:M}}  \qprop{cr-ais}(s_{1:M}, \vz^{(1:K)}_{0:T-1}, \vz_T|\vx) 
   &=
    \mathbb{E}_{\udist(s_{1:M})}
    \bigg[ \frac{1}{M} \sum \limits_{m=1}^M \qprop{ais}(\vz^{(s_m)}_{0:T-1}, \vz_{T}|\vx) \prod \limits_{\myoverset{k=1}{k \neq s_m} }^{K} \ptgt{ais}( \vz^{(k)}_{0:T-1} | \vz_T, \vx) \bigg] 
    \label{eq:cr_ais_prop_extended2} \\
    &= \frac{1}{M} \sum \limits_{m=1}^M \mathbb{E}_{\udist(s_{1:M})} \bigg[  \qprop{ais}(\vz^{(s_m)}_{0:T-1}, \vz_{T}|\vx) \prod \limits_{\myoverset{k=1}{k \neq s_m} }^{K} \ptgt{ais}( \vz^{(k)}_{0:T-1} | \vz_T, \vx) \bigg] \\
    &\overset{(1)}{=} \frac{1}{M} \sum \limits_{m=1}^M \mathbb{E}_{\udist(s_{m})} \bigg[  \qprop{ais}(\vz^{(s_m)}_{0:T-1}, \vz_{T}|\vx)  \prod \limits_{\myoverset{k=1}{k \neq s_m} }^{K} \ptgt{ais}( \vz^{(k)}_{0:T-1} | \vz_T, \vx) \bigg] \\
    &\overset{(2)}{=} \frac{1}{M} \sum \limits_{m=1}^M \frac{1}{K} \sum \limits_{j=1}^K \bigg[  \qprop{ais}(\vz^{(j)}_{0:T-1}, \vz_{T}|\vx) \prod \limits_{\myoverset{k=1}{k \neq j} }^{K} \ptgt{ais}( \vz^{(k)}_{0:T-1} | \vz_T, \vx) \bigg] \\
    &= \frac{1}{K} \sum \limits_{j=1}^K \bigg[  \qprop{ais}(\vz^{(j)}_{0:T-1}, \vz_{T}|\vx) \prod \limits_{\myoverset{k=1}{k \neq j} }^{K} \ptgt{ais}( \vz^{(k)}_{0:T-1} | \vz_T, \vx) \bigg] 
\end{align}
\normalsize
where in $(1)$, we can write the marginal $\udist(s_m)$ since the terms inside expectation do not explicitly depend on the other indices in $s_{1:M}$.    In $(2)$, we use the fact that the marginal over any $s_m$ is uniform.

Thus, we have shown that the marginal distributions match the standard $K$-sample target and proposal distributions of \textsc{cr-ais}, with
\tiny
\begin{align}
\DKL \big[ \sum \limits_{s_{1:M}} \qprop{cr-ais}(s_{1:M}, \vz^{(1:K)}_{0:T-1}, \vz_T|\vx)\| \sum \limits_{s_{1:M}} \ptgt{cr-ais}(s_{1:M}, \vz^{(1:K)}_{0:T-1}, \vz_T|\vx) \big] = \underbrace{ \DKL[ \qprop{cr-ais}(\vz^{(1:K)}_{0:T-1}, \vz_T|\vx)\| \ptgt{cr-ais}(\vz^{(1:K)}_{0:T-1}, \vz_T|\vx)]}_{\text{gap of $\gls{ELBO}_{\textsc{cr-ais}}(\vx; \pi_0, T, K)$}} \nonumber
\end{align}
\normalsize
and similar reasoning for the \gls{EUBO} using the reverse \textsc{kl} divergence.

\paragraph{Conclusion of Proof}
Since marginalization can not increase the \textsc{kl} divergence, we have
\scriptsize
\begin{align}
   \underbrace{ \DKL[ \qprop{cr-ais}(\vz^{(1:K)}_{0:T-1}, \vz_T|\vx)\| \ptgt{cr-ais}(\vz^{(1:K)}_{0:T-1}, \vz_T|\vx)]}_{\text{gap of $\gls{ELBO}_{\textsc{cr-ais}}(\vx; \pi_0, T, K)$}} 
   \leq \underbrace{ \DKL[ \qprop{cr-ais}(s_{1:M}, \vz^{(1:K)}_{0:T-1}, \vz_T|\vx)\| \ptgt{cr-ais}(s_{1:M}, \vz^{(1:K)}_{0:T-1}, \vz_T|\vx)]}_{\text{gap of $\gls{ELBO}_{\textsc{cr-ais}}(\vx; \pi_0, T, M)$ (\cref{eq:m-sample-kl})} } . \label{eq:to_show_m}
\end{align}
\normalsize
This shows that $\gls{ELBO}_{\textsc{cr-ais}}(\vx; \pi_0, T, K)$ is tighter than $\gls{ELBO}_{\textsc{cr-ais}}(\vx; \pi_0, T, M)$, with $\gls{ELBO}_{\textsc{cr-ais}}(\vx; \pi_0, T, K) \geq \gls{ELBO}_{\textsc{cr-ais}}(\vx; \pi_0, T, M)$.
Identical reasoning holds for the case of $\gls{EUBO}_{\textsc{cr-ais}}(\vx; \pi_0, T, K) \leq \gls{EUBO}_{\textsc{cr-ais}}(\vx; \pi_0, T, M)$ using the reverse \textsc{kl} divergence.   Choosing $M = K-1$ proves that the bounds tighten as $K$ increases, as desired.
\end{proof}

\paragraph{Characterizing the Improvement of $K$-Sample over $M$-Sample CR-AIS} Note that we can explicitly write the gap of the inequality \cref{eq:to_show_m} using the chain rule for joint probability, for example $\qprop{cr-ais}(s_{1:M},\vz^{(1:K)}_{0:T-1}, \vz_T|\vx)= \qprop{cr-ais}(\vz^{(1:K)}_{0:T-1}, \vz_T|\vx) \qprop{cr-ais}(s_{1:M}|\vz^{(1:K)}_{0:T-1}, \vz_T,\vx)$.  

The gap in \cref{eq:to_show_m} also corresponds to the gap between $\gls{ELBO}_{\textsc{cr-ais}}(\vx; \pi_0, T, K)$ and $\gls{ELBO}_{\textsc{cr-ais}}(\vx; \pi_0, T, M)$, so we can write
\begin{align}
&\gls{ELBO}_{\textsc{cr-ais}}(\vx; \pi_0, T, K) - \gls{ELBO}_{\textsc{cr-ais}}(\vx; \pi_0, T, M) \nonumber \\
&= \DKL[\qprop{cr-ais}(s_{1:M}|\vz^{(1:K)}_{0:T-1}, \vz_T,\vx) \|\ptgt{cr-ais}(s_{1:M}|\vz^{(1:K)}_{0:T-1}, \vz_T,\vx)] . \label{eq:conditional_kl}
\end{align}
Below, we analyze the posterior over the index variable for $M=1$ as a special case.   This allows us to characterize the gap between $\gls{ELBO}_{\textsc{cr-ais}}(\vx; \pi_0, T, K)$ and the single-sample $\gls{ELBO}_{\textsc{ais}}(\vx; \pi_0, T) = \gls{ELBO}_{\textsc{cr-ais}}(\vx; \pi_0, T, K=1)$.

 \subsection{Proof of Logarithmic Improvement in $K$ for CR-AIS ELBO }\label{app:coupled_pf} %

\begin{restatable}[{Improvement of Coupled Reverse Multi-Sample \gls{AIS} over Single-Sample \gls{AIS}}]{proposition}{craiselboeubo}\label{prop:cr_ais_elbo_eubo}\label{prop:c_r_multi_sample_ais}
Let
\begin{align}
\qprop{cr-ais}( \idx |\vx,\vz_{0:T-1}^{(1:K)}, \vz_T)= \dfrac{\dfrac{\qprop{ais}(\vz_{0:T-1}^{(\idx)}, \vz_T|\vx)}{\ptgt{ais}(\vx,\vz_{0:T-1}^{(\idx)}|\vz_T) }}{\sum \limits_{k=1}^K \dfrac{\qprop{ais}(\vz_{0:T-1}^{(k)}, \vz_T|\vx)}{\ptgt{ais}(\vx,\vz_{0:T-1}^{(k)}|\vz_T) }}
\end{align}
denote the normalized importance weights over the \gls{AIS} chains used in Coupled Reverse Multi-Sample \gls{AIS}, and let $\mathcal{U}(s)$ indicate the uniform distribution over $K$ discrete values.  
Then, we can characterize the improvement of the Coupled Reverse Multi-Sample \gls{AIS} bounds on $\log p(\vx)$, $\gls{ELBO}_{\textsc{cr-ais}}(\vx;\pi_0, T,  K)$ and $\gls{EUBO}_{\textsc{cr-ais}}(\vx;\pi_0, T, K)$, over the single-sample \gls{AIS} bounds $\gls{ELBO}_{\gls{AIS}}(\vx;\pi_0, T)$ and $\gls{EUBO}_{\gls{AIS}}(\vx;\pi_0, T)$ using \kl divergences, as follows
\footnotesize
\begin{align}%
\hspace*{-.15cm} \gls{ELBO}_{\textsc{CR-AIS}}(\vx;\pi_0, T,  K)
&= \gls{ELBO}_{\gls{AIS}}(\vx;\pi_0, T) +  \underbrace{\mathbb{E}_{\qprop{cr-ais}(\vz_{0:T-1}^{(1:K)}, \vz_T|\vx)} \bigg[\DKL[ \qprop{cr-ais}( \idx |\vx, \vz_{0:T-1}^{(1:K)}, \vz_T) \| \mathcal{U}(s) ] \vphantom{\frac{1}{2} }  \,  \bigg]} \nonumber
_{\mathclap{\text{\footnotesize $0 \leq  \textsc{kl}  \text{ of uniform from  \gls{SNIS} weights} \leq \log K$}}} \, , \label{eq:crais_over_elbo} \\
\gls{EUBO}_{\textsc{CR-AIS}}(\vx;\pi_0, T,  K)
&= \gls{EUBO}_{\gls{AIS}}(\vx;\pi_0, T) -  \underbrace{\mathbb{E}_{\ptgt{cr-ais}(\vz_{0:T-1}^{(1:K)}, \vz_T|\vx)} \bigg[\DKL[ \mathcal{U}(s) \| \qprop{cr-ais}( \idx |\vx, \vz_{0:T-1}^{(1:K)}, \vz_T)   ] \vphantom{\frac{1}{2} }  \,  \bigg]}_{\mathclap{\text{\scriptsize $0 \leq  \textsc{kl}  \text{ of \gls{SNIS} weights from uniform} \leq \DKL[ \ptgt{ais}(\vz_{0:T}|\vx) \| \qprop{ais}(\vz_{0:T}|\vx) ] $}}} \, . \nonumber 
\end{align}
\normalsize
\end{restatable}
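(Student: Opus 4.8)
The plan is to prove the proposition by viewing Coupled Reverse Multi-Sample \gls{AIS} as Reverse \gls{IWAE} (\myprop{riwae}) with forward and backward \gls{AIS} chains playing the role of the endpoint distributions $\qzx$ and $\pzx$, and then reading off both decompositions from the subsampling argument already developed in the proof of \cref{prop:cr_ais_k}. First I would record the base case: setting $K=1$ in the \textsc{cr-ais} proposal \cref{eq:cr_ais_prop} and target \cref{eq:cr_ais_tgt} collapses each to a single forward or backward chain, which are precisely the single-sample \gls{AIS} distributions of \myeq{ais_fwd_rev}. Hence $\gls{ELBO}_{\textsc{cr-ais}}(\vx;\pi_0,T,1) = \gls{ELBO}_{\gls{AIS}}(\vx;\pi_0,T)$ and $\gls{EUBO}_{\textsc{cr-ais}}(\vx;\pi_0,T,1) = \gls{EUBO}_{\gls{AIS}}(\vx;\pi_0,T)$, so the single-sample \gls{AIS} bounds are just the $K=1$ instances of the \textsc{cr-ais} family.

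Next I would specialize the chain-rule identity \cref{eq:conditional_kl} to $M=1$. With a single index variable $s$, the extended target \cref{eq:cr_ais_tgt_extended} carries the uniform factor $\udist(s)$ independently of the remaining variables, so its posterior is $\ptgt{cr-ais}(s\mid \vz^{(1:K)}_{0:T-1},\vz_T,\vx) = \mathcal{U}(s)$; dividing the numerator and denominator of the extended proposal \cref{eq:cr_ais_prop_extended} by $\prod_{k}\ptgt{ais}(\vz^{(k)}_{0:T-1}\mid \vz_T,\vx)$ shows its posterior $\qprop{cr-ais}(s\mid \vz^{(1:K)}_{0:T-1},\vz_T,\vx)$ is exactly the \gls{SNIS} distribution stated in the proposition. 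Substituting these two posteriors into \cref{eq:conditional_kl} (forward \textsc{kl} for the \gls{ELBO}, reverse \textsc{kl} for the \gls{EUBO}) together with the base case yields the two claimed equalities, with the outer expectations taken over $\qprop{cr-ais}(\vz^{(1:K)}_{0:T-1},\vz_T\mid\vx)$ and $\ptgt{cr-ais}(\vz^{(1:K)}_{0:T-1},\vz_T\mid\vx)$ respectively.

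Finally I would establish the two ranges. For the \gls{ELBO} term, writing $\DKL[\qprop{cr-ais}(s\mid\cdot)\,\|\,\mathcal{U}(s)] = \log K - H\big(\qprop{cr-ais}(s\mid\cdot)\big)$ and using $0 \le H \le \log K$ for a discrete distribution on $K$ outcomes gives the range $0 \le \textsc{kl} \le \log K$. For the \gls{EUBO} term, nonnegativity is immediate since it is an expectation of a \textsc{kl} divergence; for the upper bound I would avoid analyzing the \gls{SNIS} weights directly and instead argue that $\gls{EUBO}_{\textsc{cr-ais}}(\vx;\pi_0,T,K) \ge \log p(\vx)$ because it remains a valid upper bound, while $\gls{EUBO}_{\gls{AIS}}(\vx;\pi_0,T) = \log p(\vx) + \DKL[\ptgt{ais}(\vz_{0:T}\mid\vx)\,\|\,\qprop{ais}(\vz_{0:T}\mid\vx)]$ by the general gap formula \cref{eq:rev_kl_expand}. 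Subtracting shows the \gls{EUBO} improvement, which equals the reverse-\textsc{kl} term, is at most $\DKL[\ptgt{ais}(\vz_{0:T}\mid\vx)\,\|\,\qprop{ais}(\vz_{0:T}\mid\vx)]$, as claimed.

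The main obstacle I anticipate is bookkeeping rather than conceptual: confirming the two index-posteriors have the claimed closed forms requires carefully tracking which $\vz_T$-conditioned backward-chain factors cancel between numerator and denominator, and ensuring the implicit outer expectation in \cref{eq:conditional_kl} is attached to the correct (proposal vs.\ target) distribution in each of the \gls{ELBO} and \gls{EUBO} cases. The upper bound on the \gls{EUBO} term is the only genuinely non-mechanical step, and the ``remains a valid upper bound'' argument dispatches it cleanly, mirroring the corresponding bound $\DKL[\pzx\,\|\,\qzx]$ in \myprop{riwae}.
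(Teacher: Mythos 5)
Your proposal is correct and follows essentially the same route as the paper's proof: specializing the subsampling identity \cref{eq:conditional_kl} to $M=1$, identifying the index posterior as uniform under $\ptgt{cr-ais}$ and as the stated \gls{SNIS} weights under $\qprop{cr-ais}$, and bounding the \gls{ELBO} term by $\log K$ via the discrete-entropy argument and the \gls{EUBO} term by the single-chain gap $\DKL[\ptgt{ais}(\vz_{0:T}|\vx)\|\qprop{ais}(\vz_{0:T}|\vx)]$. The only cosmetic differences are your explicit $K=1$ base case and your spelled-out ``remains a valid upper bound'' justification, both of which the paper uses implicitly.
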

\begin{proof}
Using $M=1$ in \cref{eq:conditional_kl} above, we obtain 
\begin{align}
    \gls{ELBO}_{\textsc{cr-ais}}&(\vx; \pi_0, T, K) - \gls{ELBO}_{\textsc{ais}}(\vx; \pi_0, T, M=1) \nonumber \\
&= \DKL[\qprop{cr-ais}(s|\vz^{(1:K)}_{0:T-1}, \vz_T,\vx) \|\ptgt{cr-ais}(s|\vz^{(1:K)}_{0:T-1}, \vz_T,\vx)] 
\end{align}
From \cref{eq:cr_ais_prop_extended}, we can write the posterior over the index variable as
\begin{align}
    \qprop{cr-ais}(s|\vz^{(1:K)}_{0:T-1}, \vz_T,\vx) = \frac{\frac{\qprop{ais}(\vz^{(s)}_{0:T-1}, \vz_{T}|\vx)}{\ptgt{ais}(\vx, \vz^{(s)}_{0:T-1}| \vz_{T})} }{\sum \limits_{k=1}^K \frac{\qprop{ais}(\vz^{(k)}_{0:T-1}, \vz_{T}|\vx)}{\ptgt{ais}(\vx, \vz^{(k)}_{0:T-1}| \vz_{T})}}
\end{align}
For the target distribution in \cref{eq:cr_ais_tgt_extended}, all $K$ samples are drawn from $\ptgt{ais}(\vz^{(k)}_{0:T-1}|\vx, \vz_{T})\pi(\vx,\vz_T)$ and the posterior $\ptgt{cr-ais}(s|\vz^{(1:K)}_{0:T-1}, \vz_T,\vx)= \mathcal{U}(s)$ is uniform.

Thus, we can characterize the improvement of the $K$-sample \textsc{cr-ais} \gls{ELBO} over its single-chain \gls{AIS} 
\small
\begin{align}
 \gls{ELBO}_{\textsc{cr-ais}}&(\vx; \pi_0, T, K) - \gls{ELBO}_{\textsc{ais}}(\vx; \pi_0, T) = \mathbb{E}_{\qprop{cr-ais}(\vz_{0:T-1}^{(1:K)}, \vz_T|\vx)} \bigg[\DKL[ \qprop{cr-ais}( \idx |\vx, \vz_{0:T-1}^{(1:K)}, \vz_T) \| \mathcal{U}(s) ] \bigg]    
\end{align}
\normalsize
The proof follows identically for the \gls{EUBO}.

For the \gls{ELBO}, the \textsc{kl} divergence with the uniform distribution in the second argument is bounded above by $\log K$.   For the \gls{EUBO}, the improvement of \textsc{cr-ais} is limited by the gap in the single-chain $\gls{EUBO}_{\gls{AIS}}(\vx;\pi_0, T)$, which corresponds to $\DKL[ \ptgt{ais}(\vz_{0:T}|\vx) \| \qprop{ais}(\vz_{0:T}|\vx) ]$.
\end{proof}

\subsection{Linear Bias Reduction in $T$ for CR-AIS}\label{app:coupled_t_pf}

\begin{restatable}[Complexity in $T$ for Coupled Reverse Multi-Sample \gls{AIS} Bound]{corollary}{craiscorr}\label{cor:cr_ais_complexity_t}
Assuming perfect transitions and a 
geometric annealing path with linearly-spaced $\{\beta_t\}_{t=0}^T$, 
the sum of the gaps in the 
Coupled Reverse Multi-Sample
\gls{AIS} sandwich bounds on \gls{MI},  $I_{\textsc{cr-ais}_U}(\pi_0, K, T) - I_{\textsc{cr-ais}_L}(\pi_0, K, T)$,
reduces \textit{linearly} with 
increasing 
$T$.
\end{restatable}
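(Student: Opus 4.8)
The plan is to follow the strategy of \cref{cor:im_ais_complexity_t} essentially verbatim, substituting \cref{prop:cr_ais_k} for the \gls{IWAE} monotonicity argument used there. The crucial starting observation is that Coupled Reverse Multi-Sample \gls{AIS} reduces to single-sample \gls{AIS} at $K=1$: in the probabilistic interpretation of \myapp{prob_interp_cr_ais}, the mixture proposal $\qprop{cr-ais}(\vz^{(1:K)}_{0:T-1},\vz_T|\vx)$ collapses to a single forward chain $\qprop{ais}(\vz_{0:T}|\vx)$ (the product over $k\neq s$ is empty), and the target $\ptgt{cr-ais}(\vx,\vz^{(1:K)}_{0:T-1},\vz_T)$ collapses to a single backward chain $\ptgt{ais}(\vx,\vz_{0:T})$. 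Hence $\gls{ELBO}_{\textsc{cr-ais}}(\vx;\pi_0,T,1)=\gls{ELBO}_{\textsc{ais}}(\vx;\pi_0,T)$ and $\gls{EUBO}_{\textsc{cr-ais}}(\vx;\pi_0,T,1)=\gls{EUBO}_{\textsc{ais}}(\vx;\pi_0,T)$.

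First I would invoke \cref{prop:cr_ais_k}, which shows the bounds on $\log\px$ tighten monotonically in $K$. Combined with the $K=1$ base case, this sandwiches the multi-sample bounds inside the single-sample ones for every $\vx$ and every $K\geq 1$,
\begin{align}
\gls{ELBO}_{\textsc{ais}}(\vx;\pi_0,T) \leq \gls{ELBO}_{\textsc{cr-ais}}(\vx;\pi_0,T,K) \leq \log\px \leq \gls{EUBO}_{\textsc{cr-ais}}(\vx;\pi_0,T,K) \leq \gls{EUBO}_{\textsc{ais}}(\vx;\pi_0,T). \nonumber
\end{align}
Subtracting the inner two quantities shows that the gap of the Coupled Reverse Multi-Sample \gls{AIS} bounds on $\log\px$ is at most the single-sample gap $\gls{EUBO}_{\textsc{ais}}(\vx;\pi_0,T)-\gls{ELBO}_{\textsc{ais}}(\vx;\pi_0,T)$. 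Under perfect transitions and a linearly-spaced geometric path, \myprop{ais} evaluates this single-sample gap exactly as $\tfrac{1}{T}\big(\DKL[\pi_T(\vz|\vx)\|\pi_0(\vz|\vx)]+\DKL[\pi_0(\vz|\vx)\|\pi_T(\vz|\vx)]\big)$.

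Finally I would translate to \gls{MI} as in \mysec{setting}. Since $I_{\textsc{cr-ais}_U}$ and $I_{\textsc{cr-ais}_L}$ are obtained from the $\log\px$ lower and upper bounds by subtracting them from the shared term $\mathbb{E}_{p(\vx,\vz)}[\log p(\vx|\vz)]$, that conditional-entropy term cancels in the difference and
\begin{align}
I_{\textsc{cr-ais}_U}(\pi_0,K,T)-I_{\textsc{cr-ais}_L}(\pi_0,K,T) = \mathbb{E}_{\px}\big[\gls{EUBO}_{\textsc{cr-ais}}(\vx;\pi_0,T,K)-\gls{ELBO}_{\textsc{cr-ais}}(\vx;\pi_0,T,K)\big]. \nonumber
\end{align}
Taking the outer expectation of the per-$\vx$ bound from the previous step gives $I_{\textsc{cr-ais}_U}-I_{\textsc{cr-ais}_L} \leq \tfrac{1}{T}\,\mathbb{E}_{\px}\big[\DKL[\pi_T(\vz|\vx)\|\pi_0(\vz|\vx)]+\DKL[\pi_0(\vz|\vx)\|\pi_T(\vz|\vx)]\big]$, which is $\mathcal{O}(1/T)$ and therefore reduces linearly in $T$, as claimed. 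I do not expect a genuine obstacle, since all the substance is carried by \cref{prop:cr_ais_k} and \myprop{ais}; the only subtlety is that—unlike the single-sample case—the multi-sample statement is an upper bound on the gap rather than an exact equality, so one must track which $\log\px$ bound yields which \gls{MI} bound and note that the linear decay is inherited through the sandwiching inequality.
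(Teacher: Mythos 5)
Your proof is correct and takes essentially the same approach as the paper: both establish that the \textsc{cr-ais} sandwich bounds are nested inside the single-sample \gls{AIS} bounds and then inherit the linear $1/T$ gap from \myprop{ais} under perfect transitions and linear scheduling. The only cosmetic difference is that you obtain the nesting from \cref{prop:cr_ais_k} together with the $K=1$ base case, while the paper cites \myprop{c_r_multi_sample_ais} directly, which is itself the $M=1$ instance of the same sub-sampling argument.
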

\begin{proof} 
Since we have shown in \myprop{cr_ais_elbo_eubo} that $\gls{ELBO}_{\textsc{cr-ais}}(\pi_0, K, T)$ and $\gls{EUBO}_{\textsc{cr-ais}}(\pi_0, K, T)$ 
are tighter than $\gls{ELBO}_{\gls{AIS}}(\pi_0, T)$ and $\gls{EUBO}_{\gls{AIS}}(\pi_0, T)$, the  bias in the \textsc{cr-ais} sandwich bounds  $\gls{EUBO}_{\textsc{cr-ais}}(\pi_0, K, T) -\gls{ELBO}_{\textsc{cr-ais}}(\pi_0, K, T)$ will be less than that of single-sample \gls{AIS}.   Thus, \cref{eq:symm_kl_ineq} will hold with an inequality, and \textsc{cr-ais} will inherit the linear bias reduction under perfect transitions and linear scheduling from \myprop{ais} (see \myapp{ais_pf}).
\end{proof}
\section{Comparison of Multi-Sample AIS Bounds}\label{app:comparison}\label{app:multi-sample-ais-comparison}
In this section, we provide more extensive results for the experiments of \mysec{experiment_multi_comparison}.
In \myfig{multiple-ais-plot}, we compare the performance of our various Multi-Sample \gls{AIS} bounds for different values of $K$ and $T$.
We consider \gls{MI} estimation of Linear \textsc{vae}, \textsc{mnist-vae}, and \textsc{mnist-gan}.

 \begin{figure}[t]
\centering
\subcaptionbox{Linear VAE ($K=1$)}{
\includegraphics[scale=.27]{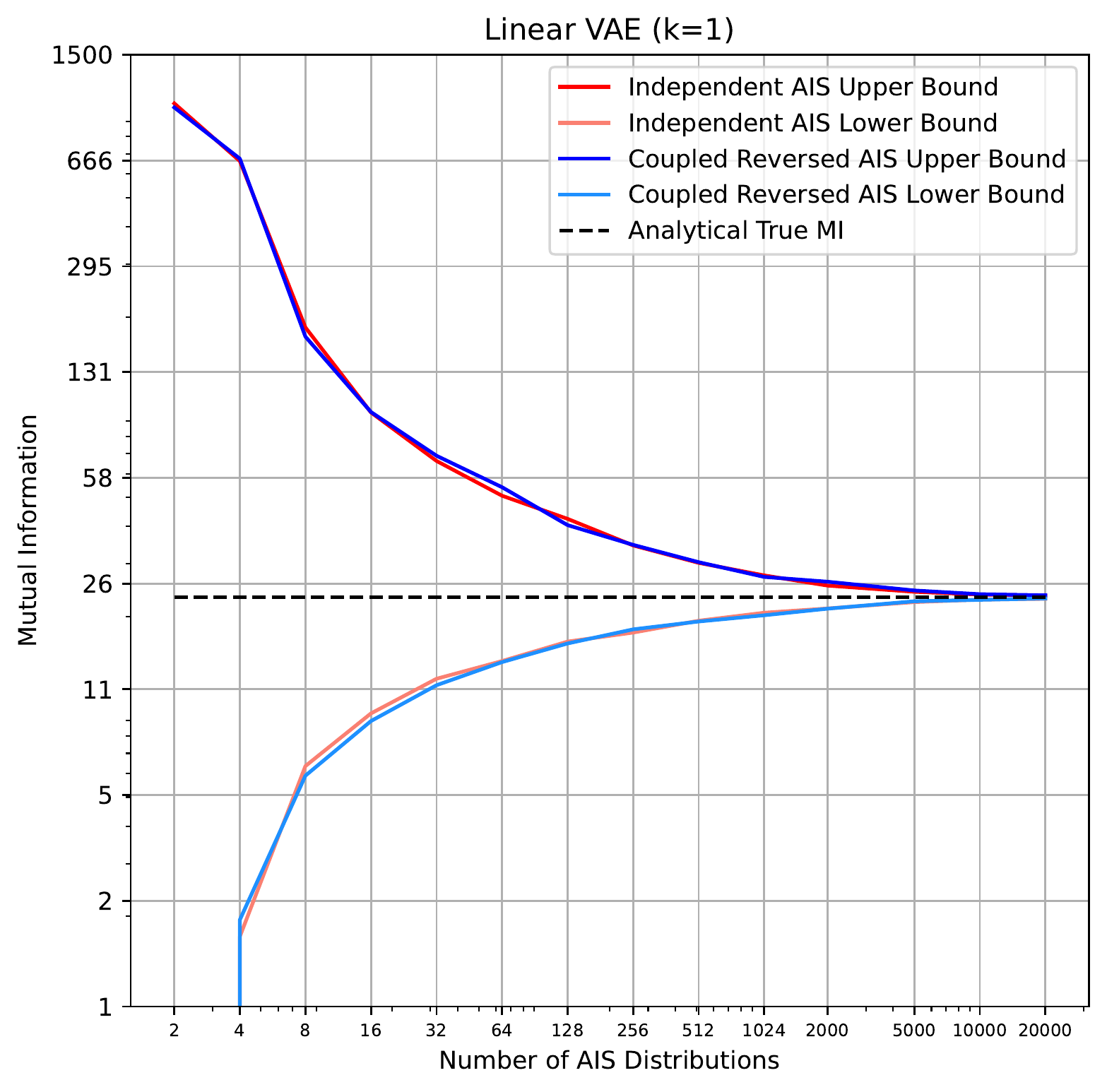}}
\hspace{.1cm}
\subcaptionbox{Linear VAE ($K=100$)}{
\includegraphics[scale=.27]{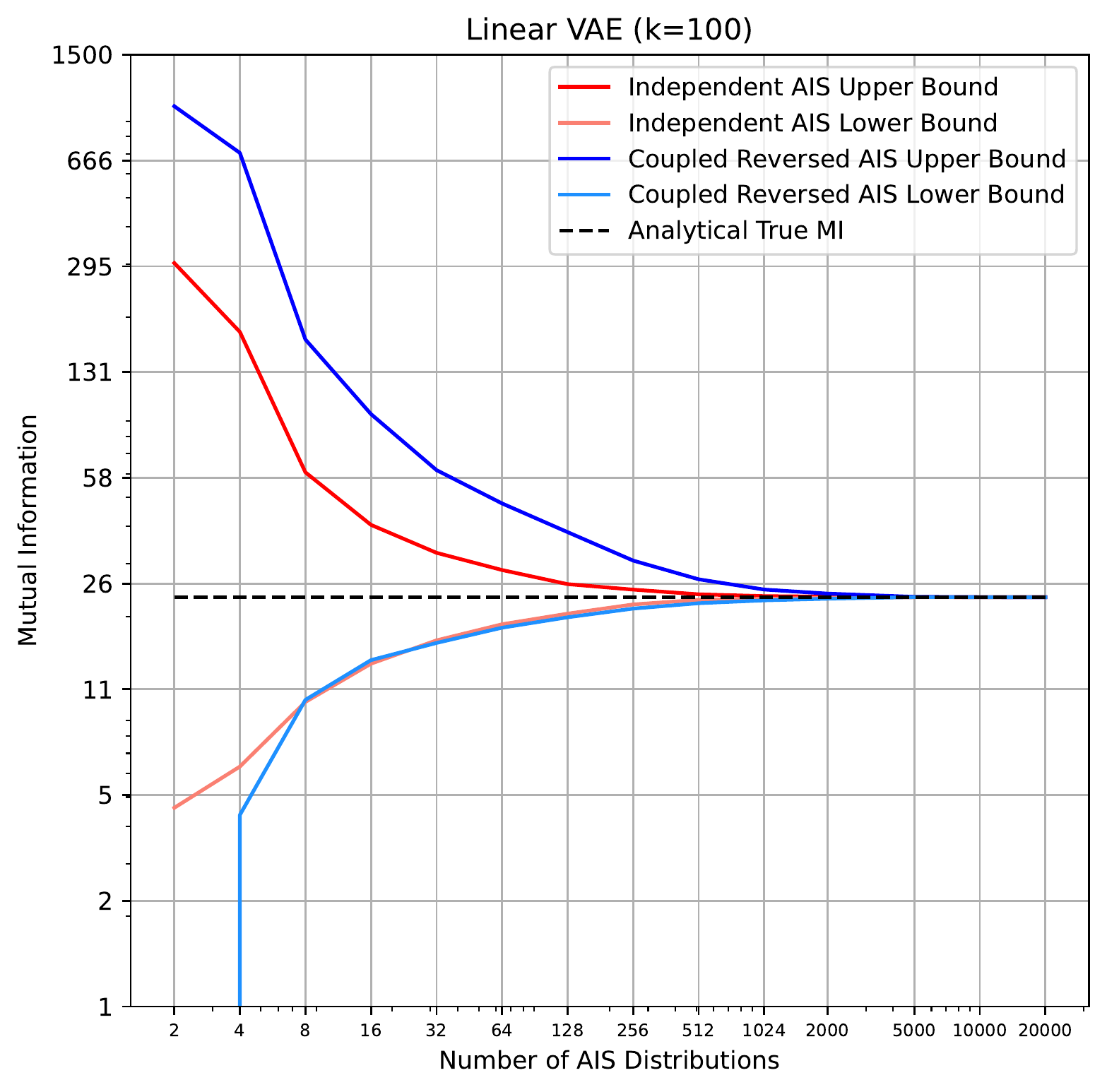}}
\hspace{.1cm}
\subcaptionbox{Linear VAE ($K=10000$)}{
\includegraphics[scale=.27]{fig/fig-multiple-ais-iclr/LinearVAE_10000.pdf}}
\subcaptionbox{VAE ($K=1$)}{
\includegraphics[scale=.27]{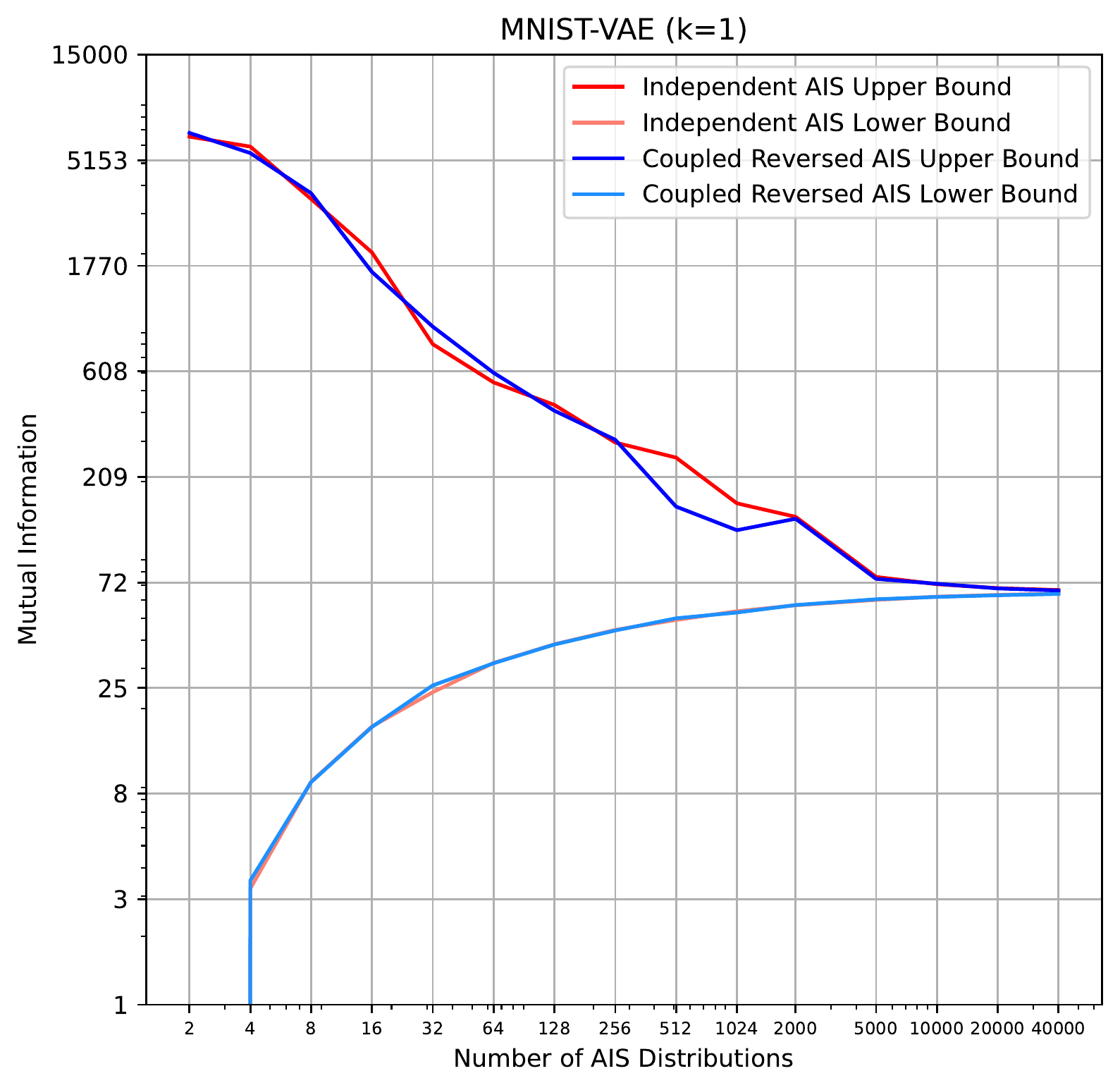}}
\hspace{.1cm}
\subcaptionbox{VAE ($K=100$)}{
\includegraphics[scale=.27]{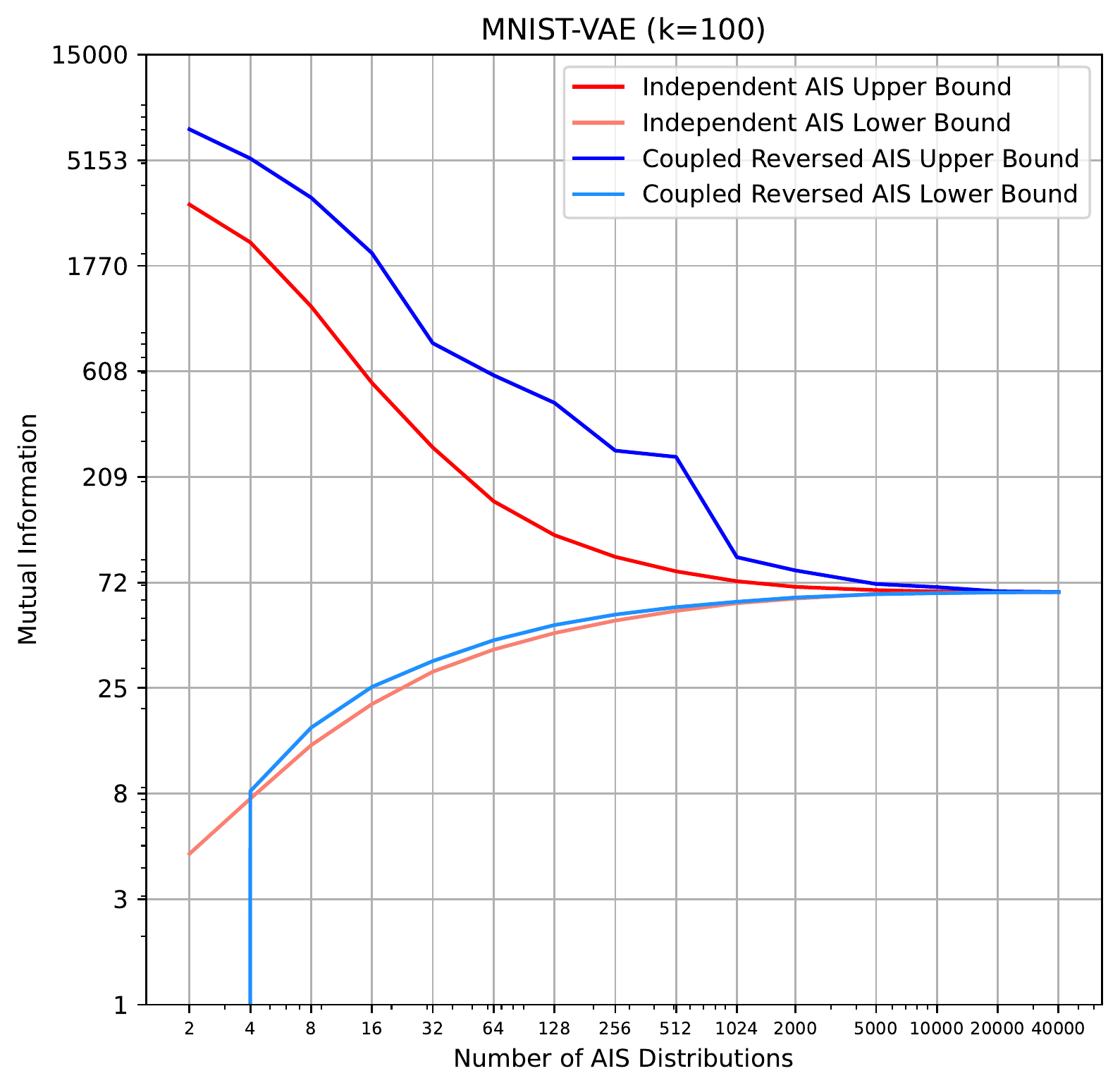}}
\hspace{.1cm}
\subcaptionbox{VAE ($K=1000$)}{
\includegraphics[scale=.27]{fig/fig-multiple-ais-iclr/VAE_1000.pdf}}
\subcaptionbox{GAN ($K=1$)}{
\includegraphics[scale=.27]{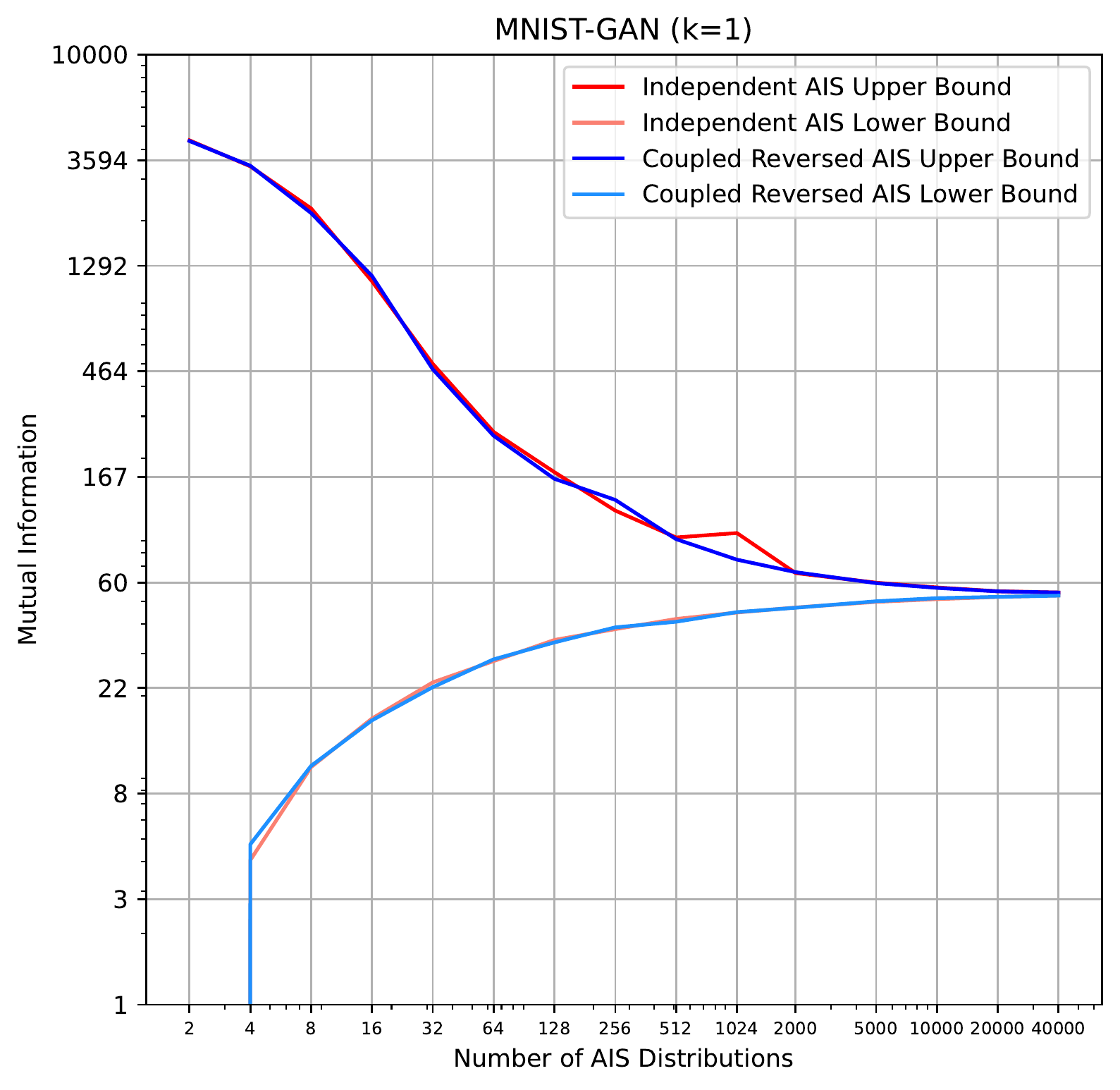}}
\hspace{.1cm}
\subcaptionbox{GAN ($K=100$)}{
\includegraphics[scale=.27]{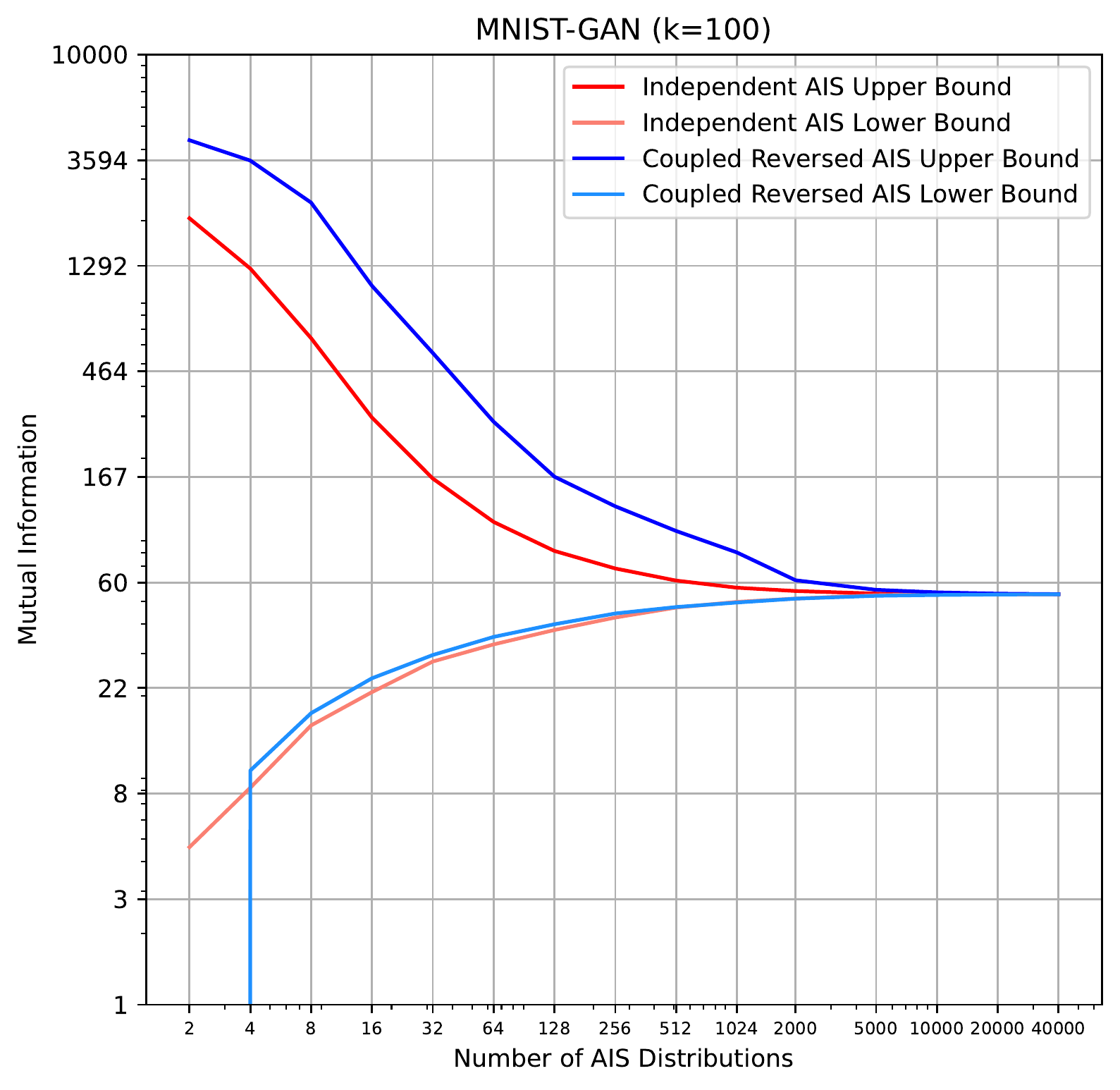}}
 \hspace{.1cm}
\subcaptionbox{GAN ($K=1000$)}{
\includegraphics[scale=.27]{fig/fig-multiple-ais-iclr/GAN_1000.pdf}}
\caption{\label{fig:multiple-ais-plot}
Comparing Multi-Sample AIS sandwich bounds and evaluating the effect of $K$ and $T$ for \gls{MI} estimation in deep generative models.}
\end{figure}
\section{Generalized Mutual Information Neural Estimation (\textsc{gmine})}\label{app:mine}
In this section, we provide probabilistic interpretations for the \gls{MINE} lower bounds on \gls{MI} from \citet{belghazi2018mutual}, which allows us to derive novel \textit{Generalized} \textsc{mine} bounds.   In a similar spirit to \textsc{giwae}, Generalized \gls{MINE} uses a base variational distribution $\qzx$ to tighten the \textsc{mine-dv} or \textsc{mine-f} bound and can be evaluated when $p(\vz)$ is available.     

See \cref{fig:gmine_bounds} for a summary of Generalized \gls{MINE} bounds and their relationships.   We discuss probabilistic interpretations in this section, and provide complementary interpretations in terms of conjugate dual representations of the \kl divergence in \myapp{conjugate_duality}.

We begin by deriving a probabilistic interpretation for the \gls{IBAL} lower bound in \mysec{mine-ais}, which we will show is closely related to the probabilistic interpretations of \gls{MINE}.   Our \textsc{mine-ais} method is designed to optimize and evaluate the \gls{IBAL} lower bound on \gls{MI}, with detailed discussion in \mysec{mine-ais} and \myapp{mine_ais}.

\subsection{Probabilistic Interpretation of IBAL}\label{app:mine-ais_deriv}
For an energy-based posterior approximation 
\begin{align}
    \pi_{\theta,\phi}(\vz|\vx) \coloneqq \frac{1}{\mathcal{Z}_{\theta,\phi}(\vx)} \qzx e^{\giwaeT(\vx,\vz)}\,, \quad \text{where} \quad \mathcal{Z}_{\theta,\phi}(\vx) = \mathbb{E}_{\qzx} \left[ e^{\giwaeT(\vx, \vz)} \right] \,, \label{eq:energy_pi2}
\end{align}
we consider the \gls{BA} lower bound on \gls{MI}, 
\begin{align}
I_{\gls{BA}_L}(\pi_{\theta,\phi}) &= I(\vx,\vz)- \Exp{p(\vx)}{\DKL[\pzx\|\minevar_{\theta,\phi}(\vz|\vx)]} \label{eq:ibal_gap} \\
&= \Exp{p(\vx,\vz)}{\log \frac{p(\vx,\vz)}{p(\vx)p(\vz)}} - \Exp{p(\vx,\vz)}{\log p(\vz|\vx)} + \Exp{p(\vx,\vz)}{T(\vx,\vz)} - \Exp{p(\vx,\vz)}{\log \mathcal{Z}_{\theta, \phi}} \nonumber \\
&= \Exp{p(\vx,\vz)}{\log \frac{\qzx}{p(\vz)}} + \Exp{p(\vx,\vz)}{\giwaeT(\vx,\vz)} - \Exp{p(\vx)}{\log \mathcal{Z}_{\theta, \phi}(\vx)} \nonumber \\
&= \myunderbrace{\Exp{p(\vx,\vz)}{ \log \frac{\qzx}{p(\vz)}} \vphantom{ \Exp{p(\vx,\vz)}{\vphantom{\frac{1}{2}}  \log \frac{e^{\giwaeT(\vx,\vz)}}{\Exp{\qzx}{e^{\giwaeT(\vx,\vz)}}}} } }{I_{\gls{BA}_L}(q_\theta)} + \myunderbrace{\Exp{p(\vx,\vz)}{\vphantom{\frac{1}{2}}  \log \frac{e^{\giwaeT(\vx,\vz)}}{\Exp{\qzx}{e^{\giwaeT(\vx,\vz)}}}}}{\text{contrastive term}} \nonumber \\[1.5ex]
&\eqqcolon \gls{IBAL}(q_{\theta}, \giwaeT),  \nonumber
\end{align}
where the gap in the mutual information bound is $\mathbb{E}_{p(\vx)} \big[ \DKL[\pzx\|\minevar_{\theta,\phi}(\vz|\vx)] \big]$.   Note that $\gls{IBAL}(q_{\theta}, \giwaeT)$ includes $I_{\gls{BA}_L}(q_\theta)$ as one of its terms, where we refer to $\qzx$ as the base variational distribution.   We visualize relationships between various energy-based bounds in \myfiga{different-bounds}{b} and \myfig{gmine_bounds}.

\mineaisopt*
See \myapp{mine-ais-properties} for the proof.

\begin{figure}[t]
    \centering
    \includegraphics[width=\textwidth]{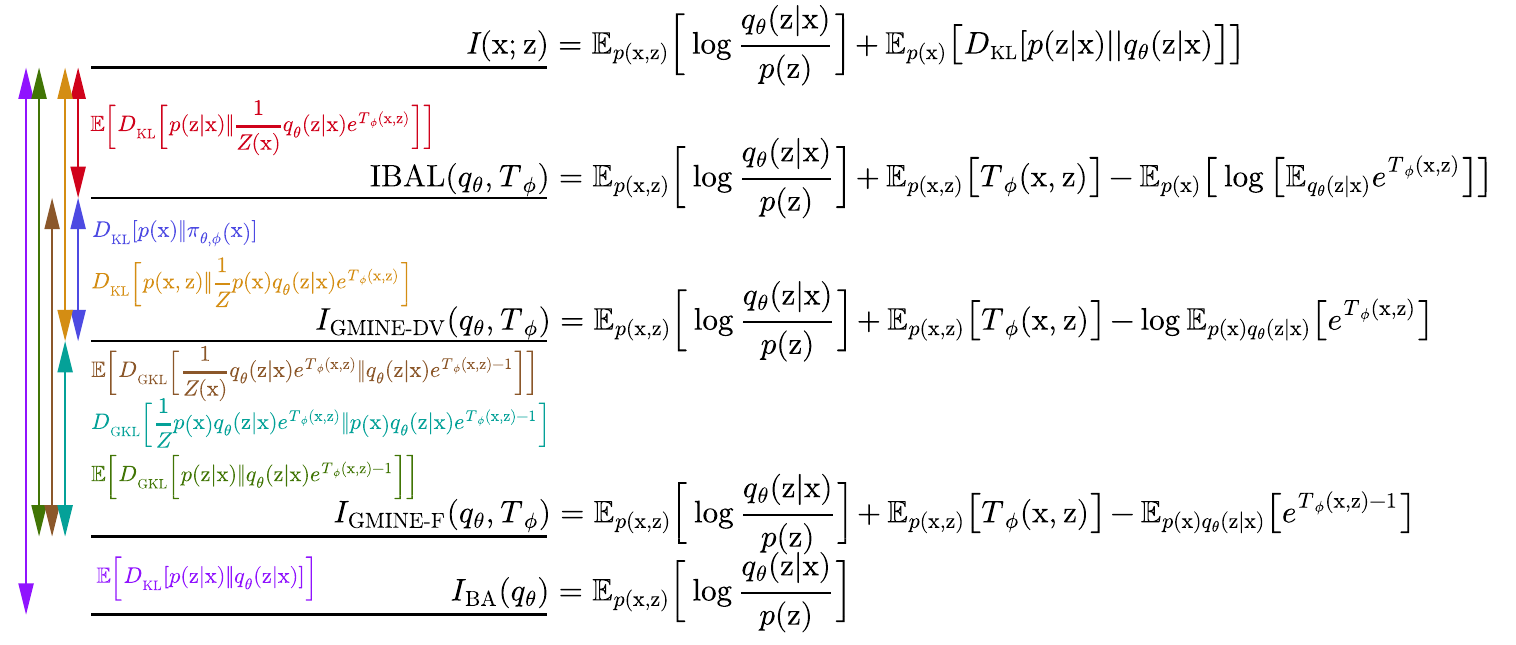}
    \caption{Generalized Energy Based Bounds.   Arrows indicate the gaps in each \gls{MI} lower bound or its relationship to other lower bounds. $D_\textsc{gkl}[\cdot\|\cdot]$ represents the \emph{generalized} KL divergence between two unnormalized densities (see \myapp{mine_f_prob}). All bounds are written in terms of a base variational distribution $\qzx$, which may be chosen to be the marginal $p(\vz)$ as in \textsc{mine-dv} and \textsc{mine-f}.}
    \label{fig:gmine_bounds}
\end{figure}

\subsection{Probabilistic Interpretation of Generalized \textsc{mine-dv}}\label{app:mine_dv_prob}
We can interpret the \textsc{mine-dv} bound of \citet{belghazi2018mutual} as arising from an energy-based variational approximation $\pi_{\theta,\phi}(\vx,\vz)$ of the \textit{full joint} distribution $p(\vx,\vz)$ as follows
\begin{align}
    \pi_{\theta,\phi}(\vx,\vz) \coloneqq \frac{1}{\mathcal{Z}} p(\vx) \qzx e^{\Txz}\,, \quad \text{where} \quad \mathcal{Z} = \mathbb{E}_{p(\vx)\qzx} \left[ e^{T_{\minetparam}(\vx, \vz)} \right] \,, \label{eq:gmine_dv}
\end{align}
$\qzx$ is a base variational distribution, and $T_{\minetparam}$ is a critic or negative energy function. 

Note that for the induced marginal 
$\pi_{\theta,\phi}(\vx) \coloneqq \int \pi_{\theta,\phi}(\vx,\vz) d\vz \neq p(\vx)$ due to the contribution of the critic function.   Instead, we have
\begin{align}
    \pi_{\theta,\phi}(\vx) = \frac{1}{\mathcal{Z}} p(\vx) \mathcal{Z}(\vx) , \label{eq:pi_marginal}
\end{align}
where $\mathcal{Z}(\vx)=\int q(\vz|\vx)e^{\giwaeT(\vx,\vz)} d\vz$ and $\mathcal{Z}=\int p(\vx)q(\vz|\vx)e^{\giwaeT(\vx,\vz)} d\vx d\vz = \mathbb{E}_{p(\vx)}[\mathcal{Z}(\vx)]$.

Subtracting the joint \kl divergence $\DKL[p(\vx,\vz) \| \pi_{\theta, \minetparam}(\vx,\vz)]$ from $\Ixz$, we obtain the \textit{Generalized} \textsc{mine-dv} lower bound on \gls{MI}
\begin{align}
   \hspace*{-.3cm} \Ixz &\geq \Ixz - \DKL \Big[p(\vx,\vz) \Big\| \frac{1}{\mathcal{Z}} p(\vx) \qzx e^{\Txz} \Big] \label{eq:gen_minedv_gap} \\[1.5ex]
    &= \Exp{p(\vx,\vz)}{\log \frac{\qzx}{p(\vz)}} + \Exp{p(\vx,\vz)}{\giwaeT(\vx,\vz)} - \log \Exp{p(\vx)\qzx}{ e^{T_{\minetparam}(\vx, \vz)} } \, \label{eq:gen_minedv} \\
    &\eqqcolon I_{\textsc{gmine-dv}}(q_{\theta}, T_\phi) \nonumber .
\end{align}
By construction, \cref{eq:gen_minedv_gap} shows that the gap in Generalized \textsc{mine-dv} is $\DKL[p(\vx,\vz) \| \frac{1}{\mathcal{Z}} p(\vx) \qzx e^{\Txz}]$, which has a probabilistic interpretation in terms of the approximate joint distribution $\pi_{\theta, \minetparam}(\vx,\vz)$.
\paragraph{Relationship with \textsc{mine-dv}}  
For $\qzx = p(\vz)$, we obtain the \textsc{mine-dv} bound \citep{belghazi2018mutual, poole2019variational} as a special case with $I_{\textsc{mine-dv}}(\giwaeT)=I_{\textsc{gmine-dv}}(\pz, \giwaeT)$.   In particular, the joint base distribution in \cref{eq:gmine_dv} corresponds to the product of marginals $p(\vx)p(\vz)$.
Our probabilistic interpretation shows that the gap in \textsc{mine-dv} corresponds to $\DKL[p(\vx,\vz) \| \frac{1}{\mathcal{Z}} p(\vx) p(\vz) e^{\Txz}]$.

We expect that the Generalized \gls{MINE}, with a learned variational distribution $\qzx$, to obtain tighter bounds than \textsc{mine-dv}.   We can guarantee that  $I_\textsc{gmine-dv}(q^*_{\theta}, T_{\phi}) \geq I_\textsc{mine-dv}(T_{\phi})$ for the optimal $q_{\theta}^*$ with a given $\giwaeT$, so long as $p(\vz)$ is in the variational family (i.e., $\exists \, \theta_0$ such that $q_{\theta_0}(\vz|\vx) = p(\vz)$).

\paragraph{Relationship with BA Bound}  Choosing a constant critic function $T_{\phi_0}(\vx,\vz)=\text{const}$, we can see that 
$\pi_{\theta,\phi_0}(\vx,\vz)= p(\vx) \qzx$ and $I_{\textsc{gmine-dv}}(q_{\theta}, T_{\phi_0}=\text{const}) = I_{\textsc{BA}_L}(q_{\theta})$ for a given $\qzx$.

\paragraph{Optimal Critic Function} For a given $\qzx$, the optimal critic function of Generalized \textsc{mine-dv} corresponds to the log importance weight between the target $p(\vx,\vz)$ and the joint base distribution $p(\vx)\qzx$, plus a constant
\begin{align}
    T^{*}(\vx,\vz) = \log \frac{p(\vx,\vz)}{p(\vx)\qzx} + c \, . \label{eq:optimal_minedv}
\end{align}
For the optimal critic function associated with a given $\qzx$, we have $ I_{\textsc{mine-dv}}(q_\theta,T^{*}) = \Ixz$.

\paragraph{Relationship with IBAL} 
We can use our probabilistic interpretation to show that the \gls{IBAL} is tighter than Generalized \textsc{mine-dv}, with $\textsc{ibal}(q_\theta,\giwaeT) \geq I_\textsc{gmine-dv}(q_\theta,\giwaeT)$.   Subtracting the gaps in the bounds in \cref{eq:ibal_gap} and \cref{eq:gen_minedv_gap}, 
\begin{align}
\textsc{ibal}(q_\theta,\giwaeT) &= I_\textsc{gmine-dv}(q_\theta,\giwaeT) + \underbrace{\DKL[p(\vx,\vz) \| \pi_{\theta, \minetparam}(\vx,\vz)]}_{\text{gap in $\textsc{gmine-dv}(q_\theta,\giwaeT)$ } } - \underbrace{\Exp{p(\vx)}{\DKL[\pzx\|\minevar_{\theta,\phi}(\vz|\vx)]}}_{\text{gap in $\textsc{ibal}(q_\theta,\giwaeT)$ } } \nonumber \\
&= I_\textsc{gmine-dv}(q_\theta,\giwaeT) + \mathbb{E}_{p(\vx)}\Big[\log \frac{p(\vx)}{\minevar_{\theta,\phi}(\vx)} \Big] \\
&=  I_\textsc{gmine-dv}(q_\theta,\giwaeT) +  \KL{p(\vx)}{\pi_{\theta,\phi}(\vx)} \\
& \geq I_\textsc{gmine-dv}(q_\theta, \giwaeT).
\end{align}
We see that the difference between $\textsc{ibal}(q_\theta,\giwaeT)$ and $I_\textsc{gmine-dv}(q_\theta, \giwaeT)$ corresponds to a marginal \kl divergence in the $\vx$ space,
where $\pi_{\theta,\phi}(\vx)$ is defined in \cref{eq:pi_marginal}.   

 As in \citet{poole2019variational}, we can also interpret the gap between the $\textsc{ibal}(q_\theta,\giwaeT)$ and $I_\textsc{gmine-dv}(q_\theta,\giwaeT)$ as an application of Jensen's inequality, with 
 \begin{align}
     \KL{p(\vx)}{\pi_{\theta,\phi}(\vx)} &= \DKL \Big[ p(\vx) \Big\| \frac{1}{\mathcal{Z}} p(\vx)\mathcal{Z}(\vx) \Big] \\
     &=\log \mathcal{Z} - \mathbb{E}_{p(\vx)}[ \log \mathcal{Z}(\vx)]\\
     &= \log \mathbb{E}_{\px}[\mathcal{Z}(\vx)] - \mathbb{E}_{p(\vx)}[ \log \mathcal{Z}(\vx)],
 \end{align}
since
$\log \mathbb{E}_{\px}[\mathcal{Z}(\vx)] \geq \mathbb{E}_{p(\vx)}[ \log \mathcal{Z}(\vx)] $.
\subsection{Probabilistic Interpretation of Generalized \textsc{mine-f}}\label{app:mine_f_prob}
The \gls{BA} lower bound and its corresponding \gls{EUBO} upper bound on $\log p(\vx)$ are derived using a \textsc{kl} divergence between the true and approximate posterior, which are normalized conditional distributions over $\vz$ given $\vx$.   In this section, we interpret the \textsc{mine-f} bound \citep{belghazi2018mutual} as arising from a generalized notion of the \textsc{kl} divergence between possibly \textit{unnormalized} density functions.   In particular, our probabilistic interpretation will involve an unnormalized $\tqzx$ which seeks to approximate the true (normalized) posterior $p(\vz|\vx)$. %

\parhead{Generalized \textsc{KL} Divergence (GKL)} First, we state the definition of the \emph{generalized} \textsc{kl} divergence, which takes unnormalized measures as input arguments and corresponds to the limiting behavior of the $\alpha$-divergence (\citet{ cichocki2010families})%
\begin{align}
    \GKL[\tilde{r}(\vz) \|  \tilde{s}(\vz)] = \int \tilde{r}(\vz) \log \frac{\tilde{r}(\vz)}{\tilde{s}(\vz)} d\vz - \int \tilde{r}(\vz)  d\vz + \int \tilde{s}(\vz)  d\vz \, .
\end{align}
As in the case of the standard \textsc{kl} divergence, this quantity is nonnegative, convex in either argument, and vanishes for $\tilde{r}(\vz) = \tilde{s}(\vz)$.    It is also a member of both the family of Bregman divergences and $f$-divergences \citep{amari2009alpha}.  If $r(\vz)$ and $s(\vz)$ are normalized, then $\GKL[r(\vz) \| s(\vz)] = \KL{r(\vz)}{s(\vz)}$.  Finally, if $r(\vz)$ is normalized and $\tilde{s}(\vz)$ is unnormalized, one can easily confirm that
\begin{align} 
\GKL[r(\vz) \| \tilde{s}(\vz)] &= \KL{r(\vz)}{s(\vz)}+\GKL[s(\vz) \| \tilde{s}(\vz)]  \label{eq:pythagorean} \\
&\geq \KL{r(\vz)}{s(\vz)}  \, . \nonumber
\end{align}

\parhead{Generalized \textsc{EUBO} (GEUBO)}
Since the Generalized \textsc{kl} divergence is always nonnegative, we define a Generalized \gls{EUBO} by adding the Generalized \textsc{kl} divergence between the normalized true posterior and an unnormalized approximate posterior $\tqzxnoparam$.
\begin{align}
    \textsc{geubo}(\vx;\tqzxshort) &\coloneqq  \log p(\vx) + \GKL[p(\vz|\vx) \|  \tilde{\pi}(\vz|\vx)] \\
    &= \Exp{p(\vz|\vx)}{\log \frac{p(\vx,\vz)}{\tqzxnoparam}} - 1 + \normq,
\end{align}
where we define $\normq \coloneqq \int \tqzxnoparam d\vz$. 
In general, we have $\textsc{eubo}(\vx; \pi) \leq \textsc{geubo}(\vx;\tqzxshort)$ where $\pi$ is the normalized distribution of $\tilde{\pi}$, with equality if $\tqzxshort$ is normalized.

\parhead{Generalized \textsc{BA} (GBA)}
Using the Generalized \textsc{EUBO} in place of $\log p(\vx)$, 
we obtain the following lower bound on \gls{MI}, which we denote as the Generalized \gls{BA} lower bound
\begin{align}
    \Ixz &\geq \Ixz - \Exp{p(\vx)}{\GKL[p(\vz|\vx) \| \tqzxnoparam]} \nonumber \\
    &= \Exp{p(\vx,\vz)}{\log \frac{\tqzxnoparam}{p(\vz)}} + 1 - \Exp{p(\vx)}{\normq} \label{eq:ba_unnorm}\\
    &\eqqcolon I_{\textsc{gba}_L}(\tilde{\pi}). \nonumber
\end{align}

\parhead{Generalized MINE-F (GMINE-F)}
We now consider an unnormalized, energy-based approximation to the true posterior, involving a base variational distribution $\qzx$ and a learned critic function $\Txz$
\begin{align}
    \tqzx \coloneqq \baseminef e^{\Txz - 1} \,. \label{eq:unnorm_energy}
\end{align}
Using this unnormalized approximate posterior in the \textsc{gba} lower bound in \cref{eq:ba_unnorm}, we obtain the \textit{Generalized} \textsc{mine-f} lower bound
\small
\begin{align}
    \hspace*{-.2cm}  \Ixz &\geq \Ixz - \Exp{p(\vx)}{\GKL[p(\vz|\vx) \| \tqzx]} \label{eq:gap_minef} \\
    &= \Exp{p(\vx,\vz)}{\log \frac{\baseminef}{p(\vz)}} +  \Exp{p(\vx,\vz)}{\Txz} - \Exp{p(\vx)\baseminef}{ e^{\Txz-1}}  \label{eq:gen_minef}\\
    &\eqqcolon I_{\textsc{gmine-f}}(q_{\theta}, T_{\phi}).
\end{align}
\normalsize

By construction, we can see that the gap in $I_{\textsc{gmine-f}}(q_{\theta}, T_{\phi})$ is equal to the Generalized \textsc{kl} divergence $\Exp{p(\vx)}{\GKL[p(\vz|\vx) \|  \tqzx]}$ in \cref{eq:gap_minef}.  
As in the case of \textsc{mine-dv}, we obtain the standard \textsc{mine-f} lower bound $I_{\textsc{mine-f}}(T_{\phi}) = I_{\textsc{gmine-f}}(p(\vz), T_{\phi})$ when using the marginal $p(\vz)$ as the proposal.

\paragraph{Optimal Critic Function} The optimal critic function of Generalized \textsc{mine-f} is $T^{*}(\vx,\vz) = 1 +\log \frac{p(\vx,\vz)}{p(\vx)q(\vz|\vx)}$ \citep{poole2019variational}.   In this case, we obtain $I_{\textsc{gmine-f}}(q_{\theta}, T^{*}) = \Ixz$ in \cref{eq:gen_minef}.

\paragraph{Relationship with IBAL}%
We would now like 
to relate the gap in $I_{\textsc{gmine-f}}(q_{\theta}, T_{\phi})$ to the gap in $I_{\gls{IBAL}}(q_{\theta}, T_{\phi})$. 
First, note that the normalized distribution corresponding to $\tqzx = \qzx e^{\Txz -1}$ matches the the energy-based posterior in the \gls{IBAL}, $\tqzxnorm = \frac{1}{\mathcal{Z}(\vx)} \qzx e^{\Txz}$.
Using \cref{eq:pythagorean}, we have
\small
\begin{align}
\Exp{p(\vx)}{\GKL[p(\vz|\vx) \|  \tqzx]} = \, \mathbb{E}_{p(\vx)}&\left[ \DKL[p(\vz|\vx)\|\tqzxnorm] \right] + \Exp{p(\vx)}{\GKL[\tqzxnorm \| \tqzx]}  \nonumber .
\end{align}
\normalsize
and substituting in the definition of each term
\footnotesize
\begin{align}
\underbrace{\Exp{p(\vx)}{\GKL\Big[p(\vz|\vx) \Big\|  \qzx e^{\Txz -1}\Big] \vphantom{\frac{1}{\mathcal{Z}(\vx)}}}}
_{\text{\small  gap of \textsc{gmine-f}}} &=
\underbrace{\Exp{p(\vx)}{\DKL \Big[p(\vz|\vx)\Big\|\frac{1}{\mathcal{Z}(\vx)}\qzx e^{\Txz} \Big]}}
_{\text{\small  gap of \gls{IBAL}}}  \\
&\phantom{=}+\underbrace{\Exp{p(\vx)}{\GKL\Big[ \frac{1}{\mathcal{Z}(\vx)}\qzx e^{\Txz} \Big\| \qzx e^{\Txz -1}\Big]}}
_{\text{\small $\geq 0$}}, \nonumber
\end{align}
\normalsize
where the final term  $\GKL[\tqzxnorm \| \tqzx] \geq 0 $ is nonnegative because it is a generalized \kl divergence. 
Thus, we have
\begin{align}
    \textsc{ibal}(q_\theta,\giwaeT) &= I_\textsc{gmine-f}(q_\theta,\giwaeT) + \Exp{p(\vx)}{\GKL[\tqzxnorm \| \tqzx]}   \geq I_\textsc{gmine-f}(q_\theta, \giwaeT) . 
\end{align}
Alternatively, \citet{poole2019variational} use the inequality $\log u \leq u -1$ to show that
\begin{align}
\GKL[\tqzxnorm \| \tqzx] = \Exp{p(\vx)}{\mathcal{Z}(\vx)- 1 - \log \mathcal{Z}(\vx)} \geq 0\,. \nonumber
\end{align}

We visualize the relationship between \gls{IBAL} and Generalized \textsc{mine-f} in \cref{fig:gmine_bounds}.

\paragraph{Relationship with Generalized MINE-DV}
To characterize the gap between Generalized \textsc{mine-dv} and Generalized \textsc{mine-f}, we can again use the equality from \cref{eq:pythagorean}, but this time using divergences over joint distributions.  
\begin{align}
\GKL \Big[p(\vx,\vz) \Big\| \tilde{\pi}_{\theta,\phi}(\vx,\vz) \Big] &= \DKL \Big[p(\vx,\vz)\Big\|\pi_{\theta,\phi}(\vx,\vz)\Big] + \GKL\Big[\pi_{\theta,\phi}(\vx,\vz) \Big\| \tilde{\pi}_{\theta,\phi}(\vx,\vz) \Big] \nonumber \\
\underbrace{\GKL\Big[p(\vx,\vz) \Big\| p(\vx)\qzx e^{\Txz-1}\Big] \vphantom{ \KL{p(\vx,\vz)}{\frac{1}{\mathcal{Z}}p(\vx)\qzx e^{\Txz}}} }
_{\text{\small gap in \textsc{gmine-f}}}  &= 
\underbrace{\DKL \Big[ p(\vx,\vz) \Big\| \frac{1}{\mathcal{Z}}p(\vx)\qzx e^{\Txz}\Big]}
_{\text{\small gap in \textsc{gmine-dv}}} \\
&\phantom{=}+\underbrace{\GKL \Big[\frac{1}{\mathcal{Z}}p(\vx)\qzx e^{\Txz} \Big\| p(\vx)\qzx e^{\Txz-1}\Big]}_{\text{\small $\geq 0$}} . \nonumber
\end{align}
In this case, we have $\GKL[\frac{1}{\mathcal{Z}}p(\vx)\qzx e^{\Txz} \| p(\vx)\qzx e^{\Txz-1}] \geq 0$.   Thus, Generalized \textsc{mine-dv} is tighter than Generalized \textsc{mine-f} (see \cref{fig:gmine_bounds}), which generalizes the finding in \citet{poole2019variational}
that standard \textsc{mine-dv} is tighter than standard \textsc{mine-f}.

\newcommand{\Lone}{L^{1}(X,dx)}
\newcommand{\Linf}{L^{\infty}(X,dx)}
\newcommand{\DEKL}{D_{\textsc{ekl}}}

\section{Conjugate Duality Interpretations}\label{app:conjugate_duality}
In this section, we interpret the energy-based \gls{MI} lower bounds in \textsc{mine-ais}, Generalized \textsc{mine-dv}, Generalized \textsc{mine-f}, \gls{GIWAE}, \gls{IWAE}, and \textsc{InfoNCE} from the perspective of conjugate duality.   In particular, we highlight that the critic or negative energy function in the above bounds arises as a dual variable in the convex conjugate representation of the \textsc{kl} divergence.   
In all cases, the \textsc{kl} divergence of interest corresponds to the gap in the \gls{BA} lower bound
\begin{align}
    \Ixz = \Exp{p(\vx,\vz)}{\log \frac{\qzx}{p(\vz)}} + \Exp{p(\vx)}{\DKL[\pzx\|\qzx]} \,. \label{eq:ba_plus_gap}
\end{align}
For $\qzx = p(\vz)$, the \gls{BA} lower bound term is $0$ and our derivations correspond to taking dual representation of \gls{MI} directly, e.g. $\Exp{p(\vx)}{\DKL[\pzx\|p(\vz)]}$, as in \citet{belghazi2018mutual}.

Our conjugate duality interpretations are complementary to our probabilistic interpretations, with either approach equally valid for deriving lower bounds and characterizing their gaps.  

\subsection{Convex Analysis Background}
Suppose $(X,\Sigma)$ is a measurable space, $B(\Sigma)$ is the space of bounded measurable functions $u:X\to \mathbb{R}$, and $\mathcal{M}(X),\mathcal{M}_+(X),\mathcal{P}(X)$ are the space of finite signed measures, finite positive measures, and probability measures, respectively. Consider the following dual pairing between $\mathcal{M}(X)$ and $B(\Sigma)$
\begin{equation}
    \left\langle \pi, u\right\rangle \coloneqq \int_{X}u d\pi\,, 
    \qquad
    \forall \pi\in \mathcal{M}(X) \,, \quad \forall u\in B(\Sigma) \,.
\end{equation}
The conjugate function of $\Omega:\mathcal{M}(X) \to (-\infty,+\infty]$, denoted by $\Omega^*:B(\Sigma) \to (-\infty,+\infty]$, and the biconjugate function, denoted by $\Omega^{**}:\mathcal{M}(X) \to (-\infty,+\infty]$, are obtained by
\begin{align}
    \Omega^*\left(u \right) &\coloneqq \sup_{\pi \in \mathcal{M}(X)}\left\{ \left\langle \pi , u \right\rangle -\Omega(\pi) \right\}\,, 
    \qquad
    \forall  u \in B(\Sigma) \,, \label{eq:conjugate1} \\
    \Omega^{**}\left(\pi \right) &\coloneqq \sup_{u \in B(\Sigma)}\left\{ \left\langle \pi , u \right\rangle -\Omega^*(u) \right\}\,, 
    \qquad
    \forall \pi \in \mathcal{M}(X) \,. \label{eq:biconjugate}
\end{align}
If $\Omega$ is a convex and lower-semicontinuous function, we have $\Omega=\Omega^{**}$. 
The pair of $(\pi,u)$ are in dual correspondence iff $u \in \partial \Omega(\pi)$, where $\partial$ denotes the subdifferential mapping. Note that the duality mapping between $(\pi,u)$ could generally be a multi-valued mapping.   
However, it can be shown that $(\pi,u)$ are in dual correspondence or $u \in \partial \Omega(\pi)$ iff  $\Omega(\pi) + \Omega^*(u)  = \langle \pi, u \rangle$ (see \cref{eq:fy_gap} below).

Suppose $\pi_1,\pi_2 \in \mathcal{M}(X)$, and $(\pi_1,u_1)$ are in dual correspondence. Using the subdifferential $u_1$, we can define the Bregman divergence generated by a convex function $\Omega$ as follows
\begin{align}
D_{\Omega}^{u_{1}}\left(\pi_{2},\pi_{1}\right)&=\Omega(\pi_2) -\Omega(\pi_1) -\left\langle \pi_{2}-\pi_{1}, u_{1} \right\rangle \,.
\end{align}
This Bregman divergence can be used to characterize the gap of Fenchel-Young inequality
\begin{align}
\Omega\left(\pi_{2}\right)+\Omega^*\left(u_{1}\right)&=\left\langle \pi_{2}, u_{1}\right\rangle +D_{\Omega}^{u_{1}}\left(\pi_{2},\pi_{1}\right) \label{eq:fy_gap} \\
&\geq \left\langle \pi_{2}, u_{1}\right\rangle \,, 
\qquad \qquad \qquad \qquad
\forall \pi_1,\pi_2 \in \mathcal{M}(X),\forall u_1\in\partial \Omega\left(\pi_1\right)\,. \nonumber 
\end{align}
The Fenchel-Young inequality is tight iff $(\pi_2,u_1)$ are in dual correspondence.

\subsubsection{Dual of KL Divergence}\label{app:dual_kl_norm}
The \textsc{kl} divergence from a reference measure $m\in \mathcal{P}(X)$ is denoted by $\Omega_m=\DKL[\cdot \| m]: \mathcal{M}(X) \to(-\infty,+\infty]$ and defined as
\begin{align}\label{eq:relative_entropy}
\Omega^*_m(\pi)=\DKL[\pi \| m] & \coloneqq\begin{cases}
\left\langle \pi, \log\frac{d\pi}{dm} \right\rangle  & \ensuremath{ \pi \ll m},\,\pi \in \mathcal{P}(X) \,, \\ 
+\infty & \text{otherwise} \,.
\end{cases}
\end{align}
The conjugate function of the \textsc{kl} divergence, $\Omega^*_m : B(\Sigma) \to(-\infty,+\infty]$ is 
\begin{align}\label{eq:conj_logsumexp}
\Omega^*_m(u) = \log \int e^{u}dm\,.
\end{align}

See \citet[Lemma 6.2.13]{DZ10} or \citet[Theorem 5.4]{RS15} for proof.
Furthermore, the pair of $(\pi, u)$ are in dual correspondence if  
\begin{align}\label{eq:duality_logsumexp}
\pi=\frac{m e^{u}}{\int e^{u}dm} \quad	\Longleftrightarrow \quad u=\log\frac{d\pi}{dm}+c \,,
\end{align}
where $c$ is an arbitrary constant.

Suppose $\pi_1,\pi_2 \in \mathcal{M}(X)$, and that $(\pi_1,u_1)$ and $(\pi_2,u_2)$ are in dual correspondence. Then, the Bregman divergence generated by the \kl divergence is the \kl divergence:
\begin{align}\label{eq:bregman_logsumexp}
D_{\DKL[\cdot \| m]}^{u_1}\left(\pi_{2},\pi_{1}\right) &= \left\langle \pi_{2}, \log\frac{d\pi_{2}}{d\pi_{1}} \right\rangle \\
&= \DKL[\pi_2 \| \pi_1] \,.
\end{align}
This Bregman divergence characterizes the gap of Fenchel-Young inequality as follows
\begin{align}\label{eq:fenchel_logsumexp}
\DKL[\pi_2 \| m] &=\left\langle \pi_{2}, u_{1} \right\rangle -\log \int e^{u_{1}}dm +\DKL[\pi_2 \| \pi_1] \nonumber  \\
&\geq \left\langle  \pi_{2}, u_{1} \right\rangle -\log \int e^{u_{1}}dm \,.
\end{align}

\subsubsection{Dual of Generalized KL Divergence}
The Generalized \kl (\textsc{gkl}) divergence from a reference measure $m\in \mathcal{M}_+(X)$, is denoted by $\Omega_m=\GKL[\cdot \| m]: \mathcal{M}(X) \to(-\infty,+\infty]$, and defined as
\begin{align}\label{eq:extended_relative_entropy}
\Omega_m(\pi)=\GKL[\pi \| m] & \coloneqq\begin{cases}
\left\langle \pi, \log\frac{d\pi}{dm}  \right\rangle - \pi(X) + m(X) & \ensuremath{\pi \ll m}, \pi \in \mathcal{M}_+(X) \,, \\
+\infty & \text{otherwise} \,.
\end{cases}
\end{align}
Note that unlike the \kl divergence, the \textsc{gkl} divergence could output finite values for non-probability measures.  However, for probability measures, it is equal to the KL divergence.

The conjugate function of the Generalized \kl divergence, $\Omega^*_m : B(\Sigma) \to(-\infty,+\infty]$ is
\begin{align}\label{eq:conj_sumexp}
\Omega^*_m(u) = \int e^{u}dm -m(X) \,.
\end{align}

See \citet[Theorem 7.24]{polyanskiy2022information} for proof.
Furthermore, the pair of $(\pi,u)$ are in dual correspondence if  
\begin{align}\label{eq:duality_sumexp}
d\pi= e^{u}dm	\quad	\Longleftrightarrow \quad u=\log\frac{d\pi}{dm} \,.
\end{align}

Suppose $\pi_1,\pi_2 \in \mathcal{M}(X)$, and that $(\pi_1,u_1)$ and $(\pi_2,u_2)$ are in dual correspondence. Then, the Bregman divergence generated by the \textsc{gkl} divergence is the \textsc{gkl} divergence,
\begin{align}\label{eq:bregman_sumexp}
D_{\GKL[\cdot \| m]}^{u_1}\left(\pi_{2},\pi_{1}\right) &= \left\langle \pi_{2}, \log\frac{d\pi_{2}}{d\pi_{1}}\right\rangle - \pi_2(X)+\pi_1(X)\\
&= \GKL[\pi_2 \| \pi_1] \,.
\end{align}
This Bregman divergence characterizes the gap of Fenchel-Young inequality as follows
\begin{align}\label{eq:fenchel_sumexp}
\GKL[\pi_2 \| m] &=\left\langle \pi_{2}, u_{1} \right\rangle -\int e^{u_{1}}dm + m(X) +\GKL[\pi_2 \| \pi_1] \nonumber \\
&\geq \left\langle \pi_{2}, u_{1}\right\rangle  -\int e^{u_{1}}dm + m(X) \,.
\end{align}
\newcommand{\Tpi}{T_{\pi}}
\newcommand{\Ttpi}{T_{\tpi}}
\newcommand{\piT}{\pi_{T}}
\newcommand{\Topt}{T_p}
\newcommand{\tpiT}{\tpi_{T}}

\newcommand{\pii}{\pi(\vz|\vx)}
\newcommand{\qzz}{\qzx}
\newcommand{\tpii}{\tpi(\vz|\vx)}
\newcommand{\tqzz}{\tilde{q}_\theta(\vz|\vx)}
\newcommand{\TT}{T(\vx,\vz)}
\newcommand{\logratiodom}{B(\Sigma)}%

\subsection{Conjugate Duality Interpretation of IBAL}\label{app:mine_ais_dual}
To obtain an alternative derivation of $\ibal(q_\theta, \giwaeT)$, 
we consider the
\textit{conditional} \textsc{kl} divergence function from a reference probability density $\qzx$ (with respect to the Lebesgue measure),
\small 
\begin{align}
\Omega(\pi) = \DKL[\pii \| \qzz]  \label{eq:cond_kl}\,,
\end{align}
\normalsize
which is a convex, lower semi-continuous function of $\pi$.  To derive its conjugate $\Omega^{*}(T)$, note that our optimization is implicitly restricted to normalized distributions by the definition in \cref{eq:relative_entropy}
\begin{align}
\Omega^*(T) &\coloneqq \sup \limits_{\pii} \int \pii  \TT d\mathbf{z} - \Omega(\pi)  
= \log \int \qzz e^{\TT} d\mathbf{z}
\label{eq:cond_kl_conj}\,.
\end{align}
As in \cref{eq:duality_logsumexp}, the dual correspondence between $\pi_T$ and $T_{\pi}$ is given by
\begin{align}
    \pi_T(\vz|\vx) = \frac{1}{\mathcal{Z}(\vx;T)} \qzx e^{\TT} \quad \Longleftrightarrow   \quad  T_{\pi}(\vx,\vz) \coloneqq \log \frac{\pii}{\qzz} + c(\vx) \,,
\end{align}
where $c(\vx)$ is an arbitrary constant function in $\vx$.  

We leverage this duality to estimate the \kl divergence $\Omega(\pzx) = \DKL[\pzx\|\qzx]$
from $\qzx$ to the true posterior $\pzx$.   In particular, plugging into \cref{eq:biconjugate} for the 
(convex, lower-semicontinuous) 
\kl divergence suggests the following variational representation
\begin{align}
        \hspace*{-.25cm} 
        {\vphantom{\frac{1}{2}} \DKL[\pzx\|\qzx]} &=
        \sup \limits_{\TT} \int p(\vz|\vx)  T(\vx,\vz) d\mathbf{z}  - \log \int \qzz e^{\TT} d\mathbf{z} \,. \label{eq:ibal_dual_rep}
\end{align}
\normalsize
For a suboptimal $T_\pi(\vx,\vz)$, which is in dual correspondence with $\pi_T(\vz|\vx)$ instead of the desired posterior $p(\vz|\vx)$, we can use \cref{eq:fenchel_logsumexp} to obtain a lower bound on  $\DKL[\pzx\|\qzx]$.   
To characterize the gap in this inequality, one can confirm using \cref{eq:bregman_logsumexp} that the Bregman divergence generated by the \kl divergence 
is also the \kl divergence $D_{\DKL[\cdot \| q]}[ p , \pi]=\DKL[p\|\pi]$. Thus, we have
\small
\begin{align}
 \hspace*{-.25cm} 
 \DKL[\pzx\|\qzx] = 
\int p(\vz|\vx)\, T(\vx,\vz) d\vz 
-  
\log \mathbb{E}_{\qzz}\left[e^{\TT}\right]
+ \DKL[\pzx\|\pi_T(\vz|\vx)] \label{eq:mine-ais-vrep}\,.
\end{align}
\normalsize
Finally, the $\gls{IBAL}(q_{\theta}, \giwaeT)$ uses this variational representation 
of the gap in the \gls{BA} lower bound, $\Exp{p(\vx)}{\DKL[p(\vz|\vx)\| \qzx]}$, to obtain a tighter bound on \gls{MI}.
In particular, for any learned critic function $T(\vx,\vz)$, we can use \cref{eq:mine-ais-vrep} to derive the \gls{IBAL} and its gap,
\scriptsize
\begin{align}
       \hspace*{-.2cm} \Ixz &= \underbrace{\Exp{p(\vx,\vz)}{\log \frac{\qzx}{p(\vz)}}}_{I_{\gls{BA}_{\textsc{l}}}(q_{\theta})} + \Exp{p(\vx)}{\DKL[p(\vz|\vx)\| \qzx]}\\[1.5ex]
        &= \underbrace{\Exp{p(\vx,\vz)}{\log \frac{\qzx}{p(\vz)}} + \Exp{p(\vx)}{\Exp{p(\vz|\vx)}{ T(\vx,\vz)} - \log \Exp{\qzx}{e^{T(\vx,\vz)}}}}_{\gls{IBAL}(q_{\theta}, \giwaeT)} + \Exp{p(\vx)}{\DKL[\pzx\|\pi_T(\vz|\vx)]} \,.
        \nonumber
\end{align}
\normalsize
The optimal critic function $T^*(\vx,\vz)$ provides the maximizing argument in \cref{eq:ibal_dual_rep} and is in dual correspondence with the true posterior $\pzx$.  In particular, we have $\pi_{T^*}(\vz|\vx) = \pzx$, resulting in $\Ixz=\gls{IBAL}(q_{\theta}, T^*)$.

\subsection{Conjugate Duality Interpretation of Generalized MINE-DV}\label{app:mine_dv_dual}
To obtain a conjugate duality interpretation of (Generalized) \textsc{mine-dv}, we consider the dual representation of the \textsc{kl} divergence over \textit{joint} distributions.  Choosing $\pi_0(\vx,\vz) = p(\vx) \qzx$ as the reference density, the \kl divergence is a convex function of the first argument
\begin{align}
\Omega(\pi) = \DKL[ \pi(\vx, \vz) \| p(\vx) \qzx] \,.
\end{align}   
This \kl divergence matches the gap in the \gls{BA} bound $\mathbb{E}_{p(\vx)}[\DKL[p(\vz|\vx)\| \qzx]]=\DKL[p(\vx,\vz)\| p(\vx)\qzx]$ after noting the marginal distribution of both $p(\vx,\vz)$ and $p(\vx)\qzx$ is $p(\vx)$. 
However, the duality associated with $\Omega(\cdot) = \DKL[\cdot\| \pi_0(\vx,\vz)]$ holds for arbitrary joint distributions, and we will see that using this divergence leads to looser bound on \gls{MI} than in \myapp{mine_ais_dual}.

As in \cref{eq:conj_logsumexp}, the conjugate function is 
\begin{align}
\Omega^*(T) &\coloneqq \sup \limits_{\pi(\vx,\vz)} \int \pi(\vx, \vz) T(\vx,\vz) d\vx d\vz - \Omega(\pi(\vx,\vz))  = \log 
\mathbb{E}_{p(\vx)\qzx}\left[ e^{T(\vx,\vz)} \right] \,,
\label{eq:joint_kl_conj} 
\end{align}
\normalsize
where we have used $p(\vx)\qzx$ as the reference density in the definition \cref{eq:relative_entropy}.
Note that the expectation over $p(\vx)$ now appears inside the $\log$ in $\Omega^*(T)$, compared with the conjugate for the conditional \kl divergence in \cref{eq:cond_kl_conj}.
We have the following dual correspondence
\begin{align}
    \pi_T(\vx,\vz) &\coloneqq \frac{1}{\mathcal{Z}(T)} p(\vx) \qzx e^{T(\vx,\vz)} \quad \Longleftrightarrow   \quad T_\pi(\vx,\vz) = \log \frac{\pi(\vx,\vz)}{p(\vx)\qzx} + c\,,
\end{align}
where $c$ is an arbitrary constant. 

Finally, we use \myeq{biconjugate} to write the dual representation of joint \kl divergence as
\begin{align}
   \hspace*{-.3cm} \DKL[p(\vx,\vz)\|p(\vx)\qzx] = \sup \limits_{T(\vx,\vz)} \int p(\vx, \vz) T(\vx,\vz) d\vx d\vz  - \log \mathbb{E}_{p(\vx)\qzx}\left[ e^{T(\vx,\vz)}\right] \label{eq:dual_of_kl}.
\end{align}
As in the previous section and \cref{eq:fenchel_logsumexp},  a suboptimal $T(\vx,\vz)$, which is in dual correspondence with  $\pi_T(\vz|\vx)$ instead of the desired posterior $p(\vz|\vx)$, yields a lower bound on  $\DKL[p(\vx,\vz)\|p(\vx)\qzx]$, with the gap equal to the Bregman divergence (or \kl divergence)
\footnotesize
\begin{align}
\hspace*{-.2cm} \DKL[p(\vx,\vz)\|p(\vx)\qzx] = \int p(\vx, \vz) T(\vx,\vz) d\vx d\vz - \log  \mathbb{E}_{p(\vx)\qzx}\left[e^{T(\vx,\vz)}\right] + \DKL[p(\vx,\vz) \| \pi_T(\vx,\vz) ]\,. \label{eq:minedv_vrep}
\end{align}
\normalsize
The Generalized \textsc{mine-dv} bound $I_{\textsc{gmine-dv}}(q_{\theta}, T_{\phi})$ uses this variational representation of the gap in the \gls{BA} lower bound to obtain a tighter bound on \gls{MI}.   For a learned critic $T(\vx,\vz)$, we can use \cref{eq:minedv_vrep} to write
\scriptsize
\begin{align}
       \hspace*{-.2cm} \Ixz &= \underbrace{\Exp{p(\vx,\vz)}{\log \frac{\qzx}{p(\vz)}}}_{I_{\gls{BA}_{\textsc{l}}}(q_{\theta})} + \Exp{p(\vx)}{\DKL[p(\vz|\vx)\| \qzx]}\\[1.5ex]
        &= \underbrace{\Exp{p(\vx,\vz)}{\log \frac{\qzx}{p(\vz)}} + \Exp{p(\vx)p(\vz|\vx)}{ T(\vx,\vz)} - \log \Exp{p(\vx)\qzx}{e^{ T(\vx,\vz)}}}_{I_{\textsc{gmine-dv}}(q_{\theta}, T_{\phi})} + {\DKL[p(\vx) \pzx\|\pi_T(\vx,\vz)]} 
        \nonumber
\end{align}
\normalsize
As noted in \myapp{mine_dv_prob}, the Generalized \textsc{mine-dv} bound is looser than the \gls{IBAL}.

\subsection{Conjugate Duality Interpretation of Generalized MINE-F}\label{app:mine_f_dual}
\citet{nguyen2010estimating} consider the conjugate duality associated with the family of $f$-divergences, of which the \kl divergence is a special case.   
We will show that this dual representation corresponds to taking the conjugate function of the Generalized \kl divergence, which can take unnormalized densities as input (see the definition in \cref{eq:extended_relative_entropy}),
\begin{align}
\Omega(\tpi) = \GKL[ \tpii \| \tqzz] .
\end{align}
To calculate the conjugate, we do not need to restrict to normalized distributions and obtain, as in \cref{eq:conj_sumexp},
\begin{align}
\Omega^{*}(T^{\prime})&\coloneqq \sup_{\tilde{\pi}(\vz|\vx)} \int \tilde{\pi}(\vz|\vx) T^{\prime}(\vx,\vz) d\vz - \GKL[\tilde{\pi}(\vz|\vx)\| \tilde{q}_\theta(\vz|\vx)] \nonumber \\
&=  \int \tilde{q}_\theta(\vz|\vx) e^{T^{\prime}(\vx,\vz)} d\vz - \int \tilde{q}_\theta(\vz|\vx) d\vz  \label{eq:dual_minef}
\end{align} 
Solving for the optimizing argument or writing the dual correspondence in \cref{eq:duality_sumexp}, we obtain
\begin{align}
    \tpi_{T^\prime}(\vz|\vx) =  \tilde{q}_\theta(\vz|\vx) e^{T^{\prime}(\vx,\vz)} \quad \Longleftrightarrow   \quad T^{\prime}_{\tilde{\pi}}(\vx,\vz) = \log \frac{ \tilde{\pi}(\vz|\vx)}{\tilde{q}_\theta(\vz|\vx)} . \label{eq:minef_primal_dual}
\end{align}
The dual representation of the Generalized \kl divergence now matches \citet{nguyen2010estimating}, %
\small  
\begin{align}
  \GKL[\tilde{p}(\vz|\vx)\| \tilde{q}(\vz|\vx)] = \sup \limits_{T^{\prime}(\vx,\vz)} \int \tilde{p}(\vz|\vx) T^{\prime}(\vx,\vz) d\vz - \int \tilde{q}(\vz|\vx) e^{T^{\prime}(\vx,\vz)} d\vz + \int \tilde{q}(\vz|\vx) d\vz \,. \label{eq:gkl_dual}
\end{align}
\normalsize
We now consider the reparameterization $T^{\prime}(\vx,\vz) = T(\vx,\vz)- 1$. Assuming a normalized $\tilde{q}(\vz|\vx)=q(\vz|\vx)$ and $\tilde{p}(\vz|\vx)=p(\vz|\vx)$, and noting that $\GKL[p(\vz|\vx)\| q_{\theta}(\vz|\vx)]=\DKL[p(\vz|\vx)\| q_{\theta}(\vz|\vx)]$ for normalized distributions, we obtain
\begin{align}
    \DKL[p(\vz|\vx)\| q_{\theta}(\vz|\vx)] = \sup \limits_{T(\vx,\vz)} \int p(\vz|\vx) T(\vx,\vz) d\vz -  \int \qzx e^{T(\vx,\vz) - 1} d\vz  \,, \label{eq:gkl_dual_reparam}
\end{align}
which matches dual representation of the \kl divergence found in \citet{belghazi2018mutual, nowozin2016f}.  

For a suboptimal $T_\pi(\vx,\vz)$ which is in dual correspondence with $\pi_T(\vz|\vx)$ instead of the desired posterior $p(\vz|\vx)$, we can use \cref{eq:fenchel_sumexp} to obtain a lower bound on  $\GKL[\pzx\|\qzx]=\DKL[p(\vz|\vx)\| q_{\theta}(\vz|\vx)]$.   
To characterize the gap of this lower bound, note from \cref{eq:bregman_sumexp} that the Bregman divergence generated by the Generalized \kl divergence is also the Generalized \kl divergence $D_{\GKL[\cdot \| \tilde{q}]}[ \tilde{p} \| \tpi]=\GKL[\tilde{p}\|\tpi]$.   Thus, we have
\small
\begin{align}
   \hspace*{-.2cm} \DKL[p(\vz|\vx)\| q_{\theta}(\vz|\vx)] = \int p(\vz|\vx) \, T(\vx,\vz) d\vz - \int \qzx e^{T(\vx,\vz) - 1} d\vz + \GKL[\pzx\|\qzx e^{T(\vx,\vz)-1}] \,.  \label{eq:dual_opt_mine_f}
\end{align}
\normalsize
Finally, we obtain the  Generalized \textsc{mine-f} bound by evaluating the dual representation of the Generalized \kl divergence for normalized $p(\vz|\vx)$ and $q_\theta(\vz|\vx)$.
In particular, $I_{\textsc{gmine-f}}(q_{\theta}, T_{\phi})$ uses the critic function $T_{\phi}$ to tighten the gap in $I_{\gls{BA}_{\textsc{l}}}(q_{\theta})$ via the dual optimization in \cref{eq:dual_opt_mine_f},
\scriptsize
\begin{align}
        \Ixz &= \underbrace{\Exp{p(\vx,\vz)}{\log \frac{\qzx}{p(\vz)}}}_{I_{\gls{BA}_{\textsc{l}}}(q_{\theta})} + \Exp{p(\vx)}{\DKL[p(\vz|\vx)\| \qzx]}\\[1.5ex]
        &= \underbrace{\Exp{p(\vx,\vz)}{\log \frac{\qzx}{p(\vz)}} + \Exp{p(\vx)}{\Exp{p(\vz|\vx)}{ T(\vx,\vz)} - \Exp{\qzx}{e^{T(\vx,\vz)-1}}}}_{I_{\textsc{gmine-f}}(q_{\theta}, T_{\phi})} + \Exp{p(\vx)}{\GKL[\pzx\|\tpi_{T-1}(\vz|\vx)]}\,.
\end{align}
\normalsize
The optimal critic function is the dual variable of the true posterior $\pzx$, which can be found using \cref{eq:minef_primal_dual} as $T(\vx,\vz) = 1 + \log \frac{{p}(\vz|\vx)}{{q}_{\theta}(\vz|\vx)}$. With this optimal critic, the Generalized \textsc{mine-f} bound is tight.

\newcommand{\mst}{\mathcal{T}} 
\newcommand{\TK}{\mst(\vx,\vz^{(1:K)}, \idx)} 
\newcommand{\mstlim}{\mathcal{T}_{\textsc{giwae}}}
\newcommand{\TKlim}{\mstlim(\vx,\vz^{(1:K)}, \idx)}
\newcommand{\TKlimK}{\mstlim(\vx,\vz^{(1:K)}, \idx;T_\phi,K)}
\newcommand{\pks}{ p_{\textsc{tgt}}( \vz^{(1:K)}, \idx |\vx) }
\newcommand{\qks}{ q_{\textsc{prop}}( \vz^{(1:K)}, \idx |\vx) }

\subsection{Conjugate Duality Interpretation of GIWAE, IWAE, and InfoNCE}\label{app:giwae_dual}
In this section, we use conjugate duality to derive the \gls{GIWAE}, \gls{IWAE}, and \textsc{Info-NCE} bounds on mutual information and characterize their gaps.  Our approach extends that of \citet{poole2019variational}, where the \textsc{mine-f} dual representation (\myapp{mine_f_dual}) 
was used to derive \textsc{Info-NCE}.    
We provide an alternative derivation using the dual representation associated with the conditional \kl divergence and \gls{IBAL} in \myapp{mine_ais_dual}.
For either dual representation, \gls{GIWAE}, \gls{IWAE}, and \textsc{Info-NCE} arise from limiting the family of the critic functions $\giwaeT$ in order to 
eliminate the intractable log partition function term.

We start from the decomposition of $\Ixz$ into $I_{\gls{BA}_L}(q_\theta)$ and its gap
\begin{align}
\Ixz = \underbrace{\Exp{p(\vx,\vz)}{\log \frac{\qzx}{p(\vz)}}}_{I_{\gls{BA}_L}(q_\theta)} + \Exp{p(\vx)}{\DKL[\pzx\|\qzx]} \, . \label{eq:mi_as_ba_infonce}
\end{align} 
We will focus on the dual representation of the normalized, conditional \kl divergence 
$\Omega(\cdot) = \DKL[\cdot \| \qzx]$, as 
in \myapp{mine_ais_dual}.  

\paragraph{Multi-Sample IBAL}
Consider extending the state space of the posterior or target distribution $\pzx$, by using an additional $K-1$ samples 
from a base variational distribution $\qzx$ to construct 
\begin{align}
    \pks &\coloneqq \udist(s) p(\vz^{(s)}|\vx) \prod \limits_{\myoverset{k\neq s}{k=1}}^K q_{\theta}(\vz^{(k)}|\vx) , 
\end{align}
where $\idx$ is an index variable $\idx \sim \udist(s)$ drawn uniformly at random from $1 \leq s \leq K$, which specifies the index of the posterior sample $\pzx$.
We similarly expand the state space of the base variational distribution to write
\begin{align}
    \qks &\coloneqq \udist(s) \prod \limits_{k=1}^K q_{\theta}(\vz^{(k)}|\vx) .
\end{align}
It can be easily verified that this construction does not change the value of the \kl divergence 
\small
\begin{align}
\DKL[\pks\|\qks]
&=\mathbb{E}_{\pks}
\left[\log 
\frac{\pks}{\qks}
\right] \nonumber \\
&=\mathbb{E}_{\pks} 
\left[\log \frac{\cancel{\udist(\idx)}p(\vz^{(\idx)}|\vx)\prod_{k\neq\idx}q_{\theta}(\vz^{(k)}\vert \vx)}{\cancel{\mathcal{U}(\idx)} \prod \limits_{k=1}^K q_{\theta}(\vz^{(k)}|\vx)}
\right] \nonumber \\
&=\mathbb{E}_{\udist(\idx)p(
\vz^{(\idx)}|\vx)}\left[\log \frac{p(\vz^{(\idx)}|\vx)}{q_{\theta}(\vz^{(\idx)} \vert \vx)}\right] \nonumber \\
&=\mathbb{E}_{\pzx}\left[\log \frac{\pzx}{\qzx}\right] \nonumber \\
&= \DKL[\pzx\|\qzx] \nonumber \,.
\end{align}
\normalsize
Consider the convex function 
\begin{align}
   \Omega(\cdot) \coloneqq \DKL\big[ \cdot \| \qks \big],
\end{align}
where the primal variable is a distribution in the extended-state space of $\left( \vz^{(1:K)}, s\right)$, and the dual variable is a critic function $\TK$. 
We derive a conjugate optimization in similar fashion to \myapp{mine_ais_dual}, but now over the extended state space.  
For this $\Omega(\cdot)$, the conjugate function $\Omega^{*}(\mst)$  takes a log-mean-exp form analogous to \cref{eq:conj_logsumexp} \footnote{We consider the conjugate with restriction to normalized distributions as in \myapp{dual_kl_norm}.}. 
We can write the variational representation of $\Omega(p_{\textsc{tgt}})$ as 
\footnotesize
\begin{align}  
 \hspace*{-.2cm}
 \Omega(p_{\textsc{tgt}}) =  \sup \limits_{\mst} \int \sum \limits_{s=1}^K 
\pks 
 \TK d\vz^{(1:K)}
 -  \log \Exp{\qks}{e^{\TK}}.
\label{eq:ksample_dual}
\end{align}
\normalsize
For a particular choice of $\TK$, we can use \cref{eq:ksample_dual} to obtain a lower bound on the \kl divergence $\DKL[\pzx\|\qzx]$ (as in \cref{eq:fenchel_logsumexp}).  This lower bound translates to the 
\textit{Multi-Sample \gls{IBAL}} lower bound on \gls{MI}, $I_{\textsc{ms-ibal}}(q_\theta, \mst)$ via \cref{eq:mi_as_ba_infonce} . 
\scriptsize
\begin{align}
    \Ixz &\geq \Exp{p(\vx,\vz)}{\log \frac{\qzx}{p(\vz)}} + \mathbb{E}_{p(\vx)}\Bigg[\mathbb{E}_{\pks}\Big[ \TK \Big] - \log \underbrace{\mathbb{E}_{\qks}\Big[ e^{\TK} \Big]}_{\mathcal{Z}(\vx; \mst)} \Bigg]\nonumber \\[1.5ex]
    &\eqqcolon \text{\small $I_{\textsc{ms-ibal}}(q_\theta, \mst)$},
\end{align}
\normalsize
where $\mathcal{Z}(\vx; \mst)$ is the normalization constant of the dual distribution $\pi_{\mst}(\vz^{(1:K)}, \idx|\vx)$ corresponding to $\mst$,
\begin{align}
   \pi_{\mst}(\vz^{(1:K)}, \idx|\vx) = \frac{1}{\mathcal{Z}(\vx; \mst) }\qks e^{\TK} . 
\end{align}
As in \cref{eq:fenchel_logsumexp}, we can write the gap of $I_{\textsc{ms-ibal}}(q_\theta, \mst)$ as a Bregman divergence or \kl divergence 
\begin{align}
    \Ixz = I_{\textsc{ms-ibal}}(q_\theta, \mst) + \mathbb{E}_{p(\vx)}\Big[ \DKL\big[\pks \|  \pi_{\mst}(\vz^{(1:K)}, \idx|\vx)\big] \Big] \, . \label{eq:gap_in_ext_ibal}
\end{align}
The optimal $K$-sample energy function in \cref{eq:ksample_dual} should 
result in $\pi_{\mst^*}(\vz^{(1:K)}, \idx|\vx) = \pks$.   Using similar reasoning as in 
\myapp{pf_giwae} or \myapp{mine-ais-properties}, this occurs for $\mst^{*}(\vx,\vz^{(1:K)}, \idx) = \log \frac{p(\vz^{(\idx)}|\vx)}{q_\theta(\vz^{(\idx})|\vx)} + c(\vx)$, for which we have $\Ixz = I_{\textsc{ms-ibal}}(q_\theta, \mst^*)$.

\paragraph{GIWAE is a Multi-Sample IBAL with a Restricted Function Family} 
Although extending the state space did not change the value of the \kl divergence or alter the optimal critic function, it does allow us to consider a restricted class of multi-sample energy functions that yield tractable, low variance estimators.  
In particular, \gls{GIWAE} and \textsc{Info-NCE} arise from choosing a restricted family of functions $\TKlim$, under which the problematic $\log \mathcal{Z}(\vx;\mst)$ term evaluates to $0$.
This function family is defined as
\vspace{-.2cm}
\begin{align}
    \TKlim &\coloneqq \log \frac{e^{\giwaeT(\vx,\vz^{(\idx)})}}{\frac{1}{K} \sum \limits_{k=1}^K  e^{\giwaeT(\vx,\vz^{(k)})}} \label{eq:restricted_t} ,
\end{align}
where $\TKlim$ is specified by an arbitrary single-sample critic function $\giwaeT(\vx,\vz)$. 
We can now see that $\log \mathcal{Z}(\vx;\mstlim)=0$,
\scriptsize
\begin{align}
\log \mathcal{Z}(\vx;\mstlim) = {\log \Exp{\qks}{e^{\TKlim}} } 
&=  
    \log \mathbb{E}_{\udist(s) \prod \limits_{k=1}^K q_{\theta}(\vz^{(k)}|\vx)}\left[{ \frac{e^{\giwaeT(\vx,\vz^{(\idx)})}}{\frac{1}{K} \sum \limits_{k=1}^K  e^{\giwaeT(\vx,\vz^{(k)})}}}\right] %
    =0\,, \nonumber 
\end{align}
\normalsize
using the fact that $\vz^{(1:K)}\sim \prod \limits_{k=1}^K q_{\theta}(\vz^{(k)}|\vx)$ is invariant to re-indexing.
With this simplification, $I_{\textsc{ms-ibal}}(q_\theta, \mstlim)$ recovers the \gls{GIWAE} lower bound on \gls{MI}
\begin{align}
    \Ixz &\geq I_{\textsc{ms-ibal}}(q_\theta, \mstlim) \\
    &= \Exp{p(\vx,\vz)}{\log \frac{\qzx}{p(\vz)}} +  \mathbb{E}_{\frac{1}{K} p(\vx)p(\vz^{(\idx)}|\vx) \prod \limits_{\myoverset{k\neq s}{k=1}}^K q_{\theta}(\vz^{(k)}|\vx)} \left[\log  \frac{e^{\giwaeT(\vx,\vz^{(\idx)})}}{\frac{1}{K} \sum \limits_{k=1}^K  e^{\giwaeT(\vx,\vz^{(k)})}} \right] \\
    &= \Exp{p(\vx,\vz)}{\log \frac{\qzx}{p(\vz)}} +  \mathbb{E}_{ p(\vx)p(\vz^{(1)}|\vx) \prod \limits_{{k=2}}^K q_{\theta}(\vz^{(k)}|\vx)} \left[\log  \frac{e^{\giwaeT(\vx,\vz^{(1)})}}{\frac{1}{K} \sum \limits_{k=1}^K  e^{\giwaeT(\vx,\vz^{(k)})}} \right] \\
    &=  I_{\gls{GIWAE}}(q_\theta, \giwaeT, K) . \nonumber
\end{align}

Finally, we can use \cref{eq:gap_in_ext_ibal} to recover the probabilistic interpretation of \gls{GIWAE} and the gap in the lower bound on \gls{MI}.  
As we saw above, the dual distribution $\pi_{\mstlim}(\vz^{(1:K)},s|\vx)$ is normalized with $\mathcal{Z}(\vx; \mstlim)=1$.  In particular, we can write 
\begin{align}
   \pi_{\mstlim}(\vz^{(1:K)}, \idx|\vx) &= \frac{1}{\mathcal{Z}(\vx; \mstlim) }\qks e^{\TKlim} \\
&=\frac{1}{\mathcal{Z}(\vx; \mstlim) }  \cancel{\udist(s)} \prod \limits_{k=1}^K q_{\theta}(\vz^{(k)}|\vx) \frac{e^{\giwaeT(\vx,\vz^{(\idx)})}}{\cancel{ \frac{1}{K}} \sum \limits_{k=1}^K  e^{\giwaeT(\vx,\vz^{(k)})}} \\
&= \prod \limits_{k=1}^K q_{\theta}(\vz^{(k)}|\vx) \frac{e^{\giwaeT(\vx,\vz^{(\idx)})}}{\sum \limits_{k=1}^K  e^{\giwaeT(\vx,\vz^{(k)})}}
\end{align}
which recovers $\qprop{giwae}(\vz^{(1:K)}, \idx|\vx)$ from the probabilistic interpretation of \gls{GIWAE} in \cref{eq:giwae_forward_app} or \myapp{giwae_prob}.
We can write the gap of $I_{\gls{GIWAE}}(q_\theta, \giwaeT, K) = I_{\textsc{ms-ibal}}(q_\theta, \mst_{\textsc{giwae}})$ as the Bregman divergence or \kl divergence as in \cref{eq:fenchel_logsumexp}
\begin{align}
    \Ixz = I_{\textsc{ms-ibal}}(q_\theta, \mst_{\textsc{giwae}}) + \Exp{p(\vx)}{\DKL\Big[\pks \|  \pi_{\mstlim}(\vz^{(1:K)}, \idx|\vx)\Big]} \, , \label{eq:gap_in_ext_ibal2}
\end{align}
which matches the reverse \kl divergence $\Exp{p(\vx)}{\DKL[\ptgt{giwae}(\vz^{(1:K)},s|\vx)\| \qprop{giwae}(\vz^{(1:K)},s|\vx)]}$ derived from the probabilistic approach in \myapp{giwae_prob}.
Recall that \textsc{Info-NCE} is a special case of \gls{GIWAE} with $\qzx = p(\vz)$ (\mysec{gen_iwae}).

\paragraph{Conjugate Duality Interpretation of \gls{IWAE}} 
We can gain alternative perspective on \gls{IWAE} from this conjugate duality interpretation.   In particular, \gls{IWAE} is a special case of \gls{GIWAE}, where the optimal single-sample critic function $T^{*}(\vx,\vz) = \log \frac{p(\vx,\vz)}{\qzx} + c(\vx)$  (see \mysec{gen_iwae}) is used in \cref{eq:restricted_t} to construct the optimal multi-sample function $\mathcal{T}_{\textsc{iwae}}$ from within the \textsc{giwae} restricted multi-sample function family. Thus, we have $I_{\textsc{ms-ibal}}(q_\theta, \mathcal{T}_{\textsc{iwae}})= I_{\gls{IWAE}}(q_\theta, K)$.

Although \gls{IWAE} uses the optimal critic function, the restriction to the function family in \cref{eq:restricted_t} is necessary to obtain a tractable bound on the 
\kl divergence $\DKL[\pzx\|\qzx]$ and mutual information. 
Without the restricted function family, the intractable log partition term in \cref{eq:ksample_dual} would require \gls{MCMC} methods such as \gls{AIS} for accurate estimation, as we saw for the single-sample \gls{IBAL} in \mysec{mine-ais}.  
\section{Properties of the IBAL}\label{app:ibal}
Our \textsc{mine-ais} method in \mysec{mine-ais} and \myapp{mine_ais} optimizes the \textit{Implicit Barber Agakov} lower bound (\gls{IBAL}) on \gls{MI} from \cref{eq:mine-ais}.   We first recall the probabilistic interpretation of the \gls{IBAL} bound from \myapp{mine-ais_deriv}.
For a posterior approximation with a learned negative energy function $\giwaeT$ and base variational distribution $\qzx$,
\begin{align}
    \pi_{\theta,\phi}(\vz|\vx) = \frac{1}{\mathcal{Z}_{\theta,\phi}(\vx)} \qzx e^{\giwaeT(\vx,\vz)}, \quad \text{where} \quad \mathcal{Z}_{\theta,\phi}(\vx) = \mathbb{E}_{\qzx} \left[ e^{\giwaeT(\vx, \vz)} \right] \, , \label{eq:energy_pi2}
\end{align}
we consider the \gls{BA} lower bound on \gls{MI}, 
\small
\begin{align}
I(\vx,\vz) &\geq  I_{\gls{BA}_L}(\pi_{\theta,\phi}) \\
&= I(\vx,\vz)- \Exp{p(\vx)}{\DKL[\pzx\|\minevar_{\theta,\phi}(\vz|\vx)]} \\
&= \myunderbrace{\Exp{p(\vx,\vz)}{ \log \frac{\qzx}{p(\vz)}} \vphantom{ \Exp{p(\vx,\vz)}{\vphantom{\frac{1}{2}}  \log \frac{e^{\giwaeT(\vx,\vz)}}{\Exp{\qzx}{e^{\giwaeT(\vx,\vz)}}}} } }{\text{\small $I_{\gls{BA}_L}(q_\theta)$}} + \myunderbrace{\Exp{p(\vx,\vz)}{\vphantom{\frac{1}{2}}  \log \frac{e^{\giwaeT(\vx,\vz)}}{\Exp{\qzx}{e^{\giwaeT(\vx,\vz)}}}}}{\text{\small contrastive term}} \label{eq:ibal_appendix_expression} \\
&\eqqcolon \gls{IBAL}(q_{\theta}, \giwaeT) .
\end{align}
\normalsize
The gap of this lower bound on mutual information is $\mathbb{E}_{p(\vx)} \big[ \DKL[\pzx\|\minevar_{\theta,\phi}(\vz|\vx)] \big]$, as in \mysec{ba}.   

\subsection{Proofs for IBAL Optimal Critic Function (\myprop{mine-ais-optimal} and \ref{prop:mineaisproperties})}\label{app:mine-ais-properties}
\mineaisopt*
\begin{proof}  Recall 
that the gap in $\IBAL(q_\theta, T_{\phi^{*}})$ is $\mathbb{E}_{p(\vx)} \big[ \DKL[\pzx\|\minevar_{\theta,\phi}(\vz|\vx)] \big] = \Ixz - \IBAL(q_\theta, T_{\phi^{*}})$.   This implies that the bound will be tight iff $p(\vz|\vx) = \minevar_{\theta,\phi}(\vz|\vx) \propto p(\vx,\vz)$.   We can easily show that the true log importance weights (plus a constant)  satisfy this property 
\begin{align}
    \minevar_{\theta,\phi}(\vz|\vx) = \frac{1}{\mathcal{Z}_{\theta,\phi}(\vx)} \qzx e^{\log \frac{p(\vx,\vz)}{\qzx} + c(\vx)}=  \frac{e^{c(\vx)}}{\mathcal{Z}_{\theta,\phi}(\vx)} p(\vx,\vz) = p(\vz|\vx) ,
\end{align}
where $e^{c(\vx)}$ is absorbed into the normalization constant.  
Conversely, using $g(\vx,\vz)$ which depends on $\vz$ in $T(\vx,\vz) = \log \frac{p(\vx,\vz)}{\qzx} + g(\vx, \vz)$ would change the density over $\vz$ to no longer match $p(\vz|\vx)$.
At this value, the \gls{IBAL} is exactly equal to $\Ixz$
\scriptsize
\begin{align*}
    \ibal(q_\theta,T^{*}) &= \Exp{p(\vx,\vz)}{\log \frac{\qzx}{p(\vz)}} + \Exp{p(\vx,\vz)}{\giwaeT(\vx,\vz)} - \Exp{p(\vx)}{\log \mathcal{Z}_{\theta, \phi}(\vx)} \\
    &=\Exp{p(\vx,\vz)}{\log \frac{\qzx}{p(\vz)}} + \Exp{p(\vx,\vz)}{\log \frac{p(\vx,\vz)}{\qzx} + \log c(\vx)} 
    - \Exp{p(\vx)}{\log \mathbb{E}_{\qzx} \frac{p(\vx,\vz)}{\qzx} + \log c(\vx)} \\
    &= \Exp{p(\vx,\vz)}{\log \frac{p(\vx,\vz)}{p(\vz)p(\vx)}} = \Ixz .
\end{align*}
\normalsize
\end{proof}

\begin{restatable}{proposition}{mineaisproperties}
\label{prop:mineaisproperties}
Suppose the critic function $T_{\phi}(\vx,\vz)$ is parameterized by $\phi$, and that $\exists \, \phi_0 \, \, s.t. \,    T_{\phi_0}(\vx,\vz) = \text{const}$. For a given $\qzx$,
let $T_{\phi^*}(\vx,\vz)$ denote the critic function that maximizes $\ibal(q_\theta,T_{\phi})$. %
Then,
\begin{align}
    I_{\textsc{BA}_L}(q_{\theta})  \leq \ibal(q_\theta,T_{\phi^*}) \leq \Ixz =  I_{\textsc{BA}_L}(q_{\theta})  + \Exp{\px}{\DKL[p(\vz|\vx)\|\qzx]}.
\end{align}
In particular, the contrastive term in \myeq{mine-ais} is upper bounded by $\Exp{\px}{\DKL[p(\vz|\vx)\|\qzx]}$.  
\end{restatable}
\begin{proof} 
The \gls{BA} bound is a special case of the $\ibal(q_\theta,T_{\phi})$ with constant $T_{\phi_0}=c$ and the contrastive term equal to $0$.  Since $\phi_0$ is a possible parameterization, we can only improve upon $I_{\textsc{BA}_L}(q_{\theta})$ by learning $\giwaeT$.   The parameterized family of $\giwaeT$ may not be expressive enough to match the true log importance weights, so  $\ibal(q_\theta,T_{\phi^{*}}) \leq \ibal(q_\theta,T^{*}) =\Ixz$ using \myprop{mine-ais-optimal}. 
\end{proof}

\subsection{Proof of IBAL as Limiting Behavior of the GIWAE Objective as $K\rightarrow \infty$ (\myprop{mine-ais-limit})}\label{app:mine-ais-limit-pf}
\begin{restatable}[\gls{IBAL} as Limiting Behavior of \gls{GIWAE}]{proposition}{mineaislimit}
\label{prop:mine-ais-limit}
For given $\qzx$ and $T_{\phi}(\vx,\vz)$, we have
\begin{align}
    \lim \limits_{K\rightarrow \infty} I_{\textsc{GIWAE}_L}( q_{\theta}, T_{\phi}, K) =  \ibal(q_\theta,T_{\phi}) \, .
\end{align}
\end{restatable}
\begin{proof}
Comparing the form of $I_{\gls{GIWAE}_L}(q_\theta, \giwaeT, K)$ to the $\gls{IBAL}(q_\theta, \giwaeT)$ for a fixed $q_\theta$ and $\giwaeT$, 
\small
\begin{align}
I_{\gls{GIWAE}}(q_{\theta}, \giwaeT, K)  &= \Exp{p(\vx,\vz)}{\log \frac{q(\vz|\vx)}{p(\vz)}} +
\Exp{p(\vx, \vz^{(1)})\qzxi{2:K}}{ \log 
\frac{e^{\giwaeT(\vx,\vz^{(1)})}}{\frac{1}{K} \sum_{i=1}^K e^{\giwaeT(\vx,\vz^{(k)})}}} \nonumber \\
\gls{IBAL}(q_{\theta}, \giwaeT) &= 
{\Exp{p(\vx,\vz)}{ \log \frac{\qzx}{p(\vz)}} \vphantom{ \Exp{p(\vx,\vz)}{\vphantom{\frac{1}{2}}  \log \frac{e^{\giwaeT(\vx,\vz)}}{\Exp{\qzx}{e^{\giwaeT(\vx,\vz)}}}} } }
+ 
{\Exp{p(\vx,\vz)}{\vphantom{\frac{1}{2}}  \log \frac{e^{\giwaeT(\vx,\vz)}}{\Exp{\qzx}{e^{\giwaeT(\vx,\vz)}}}}} .
\nonumber
\end{align}
\normalsize
we can see that  both bounds include the same first term $I_{\gls{BA}_L}(q_\theta)$ and the same numerator of the contrastive term. 
To prove the proposition, we can thus focus on characterizing the limiting behavior of the denominator
\begin{align}
\forall \vx : \, \, \,  \lim \limits_{K\rightarrow \infty} \underbrace{  \Exp{p(\vz^{(1)}|\vx) \prod \limits_{k=2}^K q_{\theta}(\vz^{(k)}|\vx)}{\log \frac{1}{K} \sum \limits_{k=1}^K e^{\giwaeT(\vx,\vz)} } }_{ \text{\normalsize $= \logZgiwae$ } } = \underbrace{\vphantom{\prod \limits_{k=2}^K q_{\theta}(\vz^{(k)}|\vx)} \log \Exp{\qzx \vphantom{\prod \limits_{k=2}^K}}{e^{\giwaeT(\vx,\vz)}} \vphantom{\left[ \log \frac{1}{K} \sum \limits_{k=1}^K e^{\giwaeT(\vx,\vz)}\right]}}_{\text{\normalsize $= \logZmine$}} , \, \,  \label{eq:giwae_to_mine_partition}
\end{align}
where we introduce the notation $\logZgiwae$ for convenience and the right hand side is the log partition function for the \gls{IBAL} energy-based posterior $\pi_{\theta,\phi}(\vz|\vx)$.   
Intuitively, we expect \myeq{giwae_to_mine_partition} to hold since the contribution of the single posterior sample $p(\vz|\vx)$ in the \gls{GIWAE} expectation will vanish as $K \rightarrow \infty$.  

More formally, we consider the sequence of values $\logZgiwae$ as a function of $K$.   
We derive lower and upper bounds on the value of $\logZgiwae$ for fixed $K$ and show that each of these sequences of lower and upper bounds converge to $\logZmine$ in the limit as $K \rightarrow \infty$.
Using the squeeze theorem for sequences, this is sufficient to demonstrate the claim that $\lim_{K\rightarrow \infty} \logZgiwae= \logZmine$ in \myeq{giwae_to_mine_partition}.   Since the other terms in $I_{\gls{GIWAE}_L}(q_\theta, \giwaeT, K)$ and $\gls{IBAL}(q_\theta, \giwaeT)$ are identical, this is sufficient to prove the proposition.

\textit{Lower Bound on $\logZgiwae$}:
We rely on the fact that the exponential function $e^{\giwaeT(\vx,\vz)} \geq 0$ to simply ignore the contribution of the $p(\vz|\vx)$ term.
\begin{align}
   \logZgiwae &= \Exp{p(\vz^{(1)}|\vx)\prod \limits_{k=2}^K q_{\theta}(\vz^{(k)}|\vx)}{\log \sum \limits_{k=1}^K e^{\giwaeT(\vx,\vz^{(k)})} }- \log K \\
   &\overset{(1)}{\geq} \Exp{\prod \limits_{k=2}^K q_{\theta}(\vz^{(k)}|\vx)}{\log \sum \limits_{k=2}^K e^{\giwaeT(\vx,\vz^{(k)})} } - \log K \\
    &= \Exp{\prod \limits_{k=2}^K q_{\theta}(\vz^{(k)}|\vx)}{\log \frac{1}{K-1} \sum \limits_{k=2}^K e^{\giwaeT(\vx,\vz^{(k)})}} + \log \frac{K-1}{K} \\
    &\eqqcolon \logZgiwaelb ,
\end{align}
where in $(1)$, we obtain a lower bound by ignoring the contribution of the positive sample from $p(\vz|\vx)$.
The first term 
is the $k$-sample \gls{IWAE} lower bound with the proposal $q_{\theta}(\vz)$ and target $\pi_{\theta,\phi}(\vx,\vz)$:  $\mathbb{E}_{q_{\theta}(\vz^{1:K})}[\log \frac{1}{K} \sum_k \frac{\pi_{\theta,\phi}(\vx,\vz^{(k)})}{q_{\theta}(\vz^{(k)}|\vx)} ]$, and thus converges to $\log \mathcal{Z}_{\theta,\phi}$ as $K\rightarrow \infty$.  As $K\rightarrow \infty$, we also have $\log \frac{K-1}{K} \rightarrow 0$, so that the limit of their sum is 
\begin{align}
   \lim \limits_{K \rightarrow \infty} \logZgiwaelb &= \log \Exp{q_{\theta}(\vz|\vx)}{e^{\giwaeT(\vx,\vz)}} = \logZmine .
\end{align}

\textit{Upper Bound on $\logZgiwae$}:
To upper bound $\logZgiwae$, we separately consider terms arising from $p(\vz|\vx)$ samples and $\qzx$ samples.   Noting that $\ptgt{giwae}(\vz^{(1:K)}, s|\vx)$ in \cref{eq:giwae_reverse} is invariant to the index $s$, we can assume $\vz^{(1)} \sim p(\vz|\vx)$ and write
\small
\begin{align}
    \logZgiwae &= \Exp{p(\vz^{(1)}|\vx)\prod \limits_{k=2}^K q_{\theta}(\vz^{(k)}|\vx)}{\log \left( \frac{1}{K} e^{\giwaeT(\vx,\vz^{(1)})} + \frac{1}{K} \sum  \limits_{k=2}^K e^{\giwaeT(\vx,\vz^{(k)})} \right) }  \nonumber \\
    &= \Exp{p(\vz^{(1)}|\vx)\prod \limits_{k=2}^K q_{\theta}(\vz^{(k)}|\vx)}{\log \left( \frac{1}{K} e^{\giwaeT(\vx,\vz^{(1)})} + \frac{K-1}{K} \cdot \frac{1}{K-1} \sum  \limits_{k=2}^K e^{\giwaeT(\vx,\vz^{(k)})} \right) } \nonumber  \\
    &\leq \log 
    \left( \frac{1}{K} \Exp{p(\vz^{(1)}|\vx)}{ e^{\giwaeT(\vx,\vz^{(1)})}} +  \Exp{\prod \limits_{k=2}^K q_{\theta}(\vz^{(k)}|\vx)}{\frac{K-1}{K} \cdot \frac{1}{K-1} \sum  \limits_{k=2}^K e^{\giwaeT(\vx,\vz^{(k)})}} \right) \nonumber \\
        &= \log 
    \left( \frac{1}{K}  \Exp{p(\vz^{(1)}|\vx)}{ e^{\giwaeT(\vx,\vz^{(1)})}} + \frac{K-1}{K} \cdot 
    \frac{1}{K-1} \sum \limits_{k=2}^{K} \Exp{q(\vz^{(k)}|\vx)}{
    e^{\giwaeT(\vx,\vz^{(k)})}} \right) \nonumber \\
     &= \log  \left( \frac{1}{K} \Exp{p(\vz^{(1)}|\vx)}{ e^{\giwaeT(\vx,\vz^{(1)})}} + \frac{K-1}{K}  \mathcal{Z}_{\theta,\phi}(\vx)  \right) \label{eq:196}\\
     &\eqqcolon \logZgiwaeub.
\end{align}
\normalsize
Since $\log(u)$ is a continuous function, we know that $\lim_{K\rightarrow \infty} \log(f(K)) = \log \lim_{K\rightarrow \infty} f(K)$.   We thus reason about the limiting behavior of the terms inside the logarithm in \cref{eq:196}.
 As $K \rightarrow \infty$, we have $\frac{1}{K} \rightarrow 0$, and thus the first term inside the $\log$ goes to $0$.  For the second term, we also have $\frac{K-1}{K} \rightarrow 1$. Thus, we have
\begin{align}
   \lim \limits_{K \rightarrow \infty} \logZgiwaeub &= 
   \log \Exp{q_{\theta}(\vz|\vx)}{e^{\giwaeT(\vx,\vz)}} = \logZmine .
\end{align}
As reasoned above, the convergence of the sequence of both upper and lower bounds to $\logZmine$ implies that $ \lim_{K \rightarrow \infty} \logZgiwae = \logZmine$.  By the reasoning surrounding \cref{eq:giwae_to_mine_partition}, this implies $\lim_{K \rightarrow \infty} I_{\gls{GIWAE}_L}(q_{\theta}, \giwaeT) = \gls{IBAL}(q_\theta, \giwaeT)$ as desired.
\end{proof}

\newcommand{\giwaesnis}{{\qprop{giwae}}(\vz|\vx; K)}%
\subsection{Convergence of GIWAE SNIS Distribution to IBAL Energy-Based Posterior}\label{app:convergence}
In this section, we consider the marginal \gls{SNIS} distribution of \gls{GIWAE}, which is induced by sampling $K$ times from $\qzx$ and returning the sample in index $s$ with probability $\qprop{giwae}(\idx|\vx,\vz^{(1:K)}) \propto e^{\giwaeT(\vx,\vz^{(\idx)})}$.   As $K \rightarrow \infty$, we show that this distribution converges to the single-sample energy-based posterior approximation underlying the \gls{IBAL} and \textsc{mine-ais}.   A similar observation is made in Sec. 3.2 of \citet{lawson2019energy}.
This result regarding the probabilistic interpretations of \gls{GIWAE} and the \textsc{ibal} is complementary to the result in \myprop{mine-ais-limit} regarding the limiting behavior of the bounds.

\begin{proposition} \label{prop:giwae_mine_distribution}
Define the marginal \gls{SNIS} distribution of \gls{GIWAE}, $\giwaesnis$, using the following sampling procedure
\begin{enumerate}
    \item Sample from $\idx, \vz^{(1:K)} \sim \qprop{giwae}(\idx, \vz^{(1:K)} | \vx)$ according to \cref{eq:giwae_forward_app} 
    \item Return $\vz = \vz^{(s)}$.
\end{enumerate}
Then, as $K \rightarrow \infty$, the \textsc{kl} divergence (in either direction) between the marginal \gls{SNIS} distribution of \gls{GIWAE} and the energy-based variational distribution of \textsc{ibal}, $\pi_{\theta, \phi}(\vz|\vx) \propto \qzx e^{\giwaeT(\vx,\vz)}$, goes to zero.
\begin{align*}
\hspace*{-.2cm} \lim_{K \rightarrow \infty} &\KL{\giwaesnis}{\pi_{\theta, \phi}(\vz|\vx)} = 0 \\
\, \, \text{and} \, \, \lim_{K \rightarrow \infty} &\KL{\pi_{\theta, \phi}(\vz|\vx)}{\giwaesnis} = 0 .
\end{align*}
\end{proposition}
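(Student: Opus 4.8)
The plan is to show that the marginal \gls{SNIS} distribution $\giwaesnis$ converges pointwise to the \textsc{ibal} energy-based posterior $\pi_{\theta,\phi}(\vz|\vx)$, and then argue that this pointwise convergence (together with uniform integrability/boundedness) forces both \textsc{kl} divergences to vanish. First I would derive an explicit closed form for the density $\giwaesnis$. Sampling $\vz^{(1:K)} \sim \prod_k \qzx$ and returning index $s$ with probability proportional to $e^{\giwaeT(\vx,\vz^{(s)})}$, the marginal density of the returned sample $\vz$ is
\begin{align}
\giwaesnis = \Exp{\prod_{k=2}^K q_\theta(\vz^{(k)}|\vx)}{\qzx \, \frac{e^{\giwaeT(\vx,\vz)}}{\frac{1}{K}e^{\giwaeT(\vx,\vz)} + \frac{1}{K}\sum_{k=2}^K e^{\giwaeT(\vx,\vz^{(k)})}}} \nonumber
\end{align}
where I have singled out the returned sample as $\vz$ with density $\qzx$ and written the remaining $K-1$ samples as i.i.d.\ draws from $\qzx$ inside the expectation.

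The key step is a law-of-large-numbers argument on the normalizing sum. As $K\to\infty$, the average $\frac{1}{K}\sum_{k=2}^K e^{\giwaeT(\vx,\vz^{(k)})} \to \Exp{\qzx}{e^{\giwaeT(\vx,\vz)}} = \mathcal{Z}_{\theta,\phi}(\vx)$ by the strong law, while the singleton term $\frac{1}{K}e^{\giwaeT(\vx,\vz)} \to 0$. Hence the denominator converges to $\mathcal{Z}_{\theta,\phi}(\vx)$, giving the pointwise limit
\begin{align}
\lim_{K\to\infty} \giwaesnis = \qzx \, \frac{e^{\giwaeT(\vx,\vz)}}{\mathcal{Z}_{\theta,\phi}(\vx)} = \pi_{\theta,\phi}(\vz|\vx). \nonumber
\end{align}
To make this rigorous inside the expectation I would invoke dominated convergence: the integrand is bounded since the ratio $\frac{e^{\giwaeT(\vx,\vz)}}{\frac1K e^{\giwaeT(\vx,\vz)}+\frac1K\sum_{k\geq2}e^{\giwaeT}}$ is at most $K$ but, more usefully, the product $\qzx \cdot (\text{ratio})$ integrates to $1$ for every $K$ (it is a normalized density), so I can apply Scheffé's lemma rather than fight for a uniform dominating function. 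Scheffé's lemma states that pointwise convergence of densities to a density, combined with the fact that each is a probability density integrating to one, implies convergence in total variation, i.e. $\int |\giwaesnis - \pi_{\theta,\phi}(\vz|\vx)|\, d\vz \to 0$.

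From total-variation convergence, the two \textsc{kl} divergences follow with mild regularity. The cleanest route is to combine total-variation convergence with the explicit pointwise convergence of the log-ratio $\log \frac{\giwaesnis}{\pi_{\theta,\phi}(\vz|\vx)} \to 0$. For $\KL{\giwaesnis}{\pi_{\theta,\phi}(\vz|\vx)}$, the log-ratio is $\log \frac{\mathcal{Z}_{\theta,\phi}(\vx)}{\frac1K e^{\giwaeT} + \frac1K \sum_{k\geq 2} e^{\giwaeT}}$, which is controlled above and below once the sum concentrates around $\mathcal{Z}_{\theta,\phi}(\vx)$; a dominated-convergence argument then sends the integral to zero. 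The same holds for the reverse direction $\KL{\pi_{\theta,\phi}(\vz|\vx)}{\giwaesnis}$. The main obstacle will be the integrability bookkeeping: justifying the interchange of limit and expectation and guaranteeing the log-ratios are dominated uniformly in $K$, which requires assuming $\giwaeT(\vx,\vz)$ is such that $\mathcal{Z}_{\theta,\phi}(\vx)$ is finite and the SNIS denominator stays bounded away from zero in expectation. I would state these as the "mild conditions" (e.g.\ $e^{\giwaeT(\vx,\cdot)} \in L^1(\qzx)$ and $\Exp{\qzx}{\giwaeT^2}<\infty$) under which the Scheffé and dominated-convergence steps go through.
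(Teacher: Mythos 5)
Your proposal takes a genuinely different route from the paper, and its outline is viable, but two steps need repair. The paper's proof is structural rather than analytic: it observes that $\qprop{giwae}(\idx, \vz^{(1:K)}|\vx)$ together with the modified target $\ptgtgiwae(\idx, \vz^{(1:K)}, \vx) = \frac{1}{K}\, \pi_{\theta,\phi}(\vz^{(\idx)}|\vx) \prod_{k\neq\idx} q_{\theta}(\vz^{(k)}|\vx)$ is exactly an \gls{IWAE} proposal/target pair in which $\pi_{\theta,\phi}(\vz|\vx)$ plays the role of the posterior (the \gls{GIWAE} \gls{SNIS} weights are the \emph{true} normalized importance weights for this pair). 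Since $\vz^{(\idx)}$ has marginal $\giwaesnis$ under the proposal and exactly $\pi_{\theta,\phi}(\vz|\vx)$ under the target, the marginalization argument of \citet{domke2018importance} (Thm.~2) and \citet{cremer2017reinterpreting} --- i.e.\ the data-processing inequality --- bounds each marginal \textsc{kl} by the corresponding extended-state-space \textsc{kl}, which is an \gls{IWAE} gap and hence vanishes as $K \rightarrow \infty$. This buys brevity, both \textsc{kl} directions at once, and regularity conditions inherited silently from the cited \gls{IWAE} results. Your route --- closed-form marginal density, law of large numbers on the normalizer, Scheff\'e for total variation, dominated convergence for \textsc{kl} --- is self-contained and gives explicit pointwise limits, at the cost of doing the integrability bookkeeping by hand.

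The first gap is a circularity in your Scheff\'e step as written. Scheff\'e's lemma takes pointwise a.e.\ convergence of the densities as a \emph{hypothesis}, but $\giwaesnis$ is itself an expectation over $\vz^{(2:K)}$, so establishing its pointwise convergence requires interchanging the $K\rightarrow\infty$ limit with that inner expectation --- exactly the step you claim Scheff\'e lets you avoid. The fix is standard: since the integrand is nonnegative, Fatou's lemma gives $\liminf_{K} \giwaesnis \geq \pi_{\theta,\phi}(\vz|\vx)$ pointwise with no domination needed, hence $\big(\pi_{\theta,\phi}(\vz|\vx) - \giwaesnis\big)^{+} \rightarrow 0$ pointwise, and
\begin{align}
\int \big(\pi_{\theta,\phi}(\vz|\vx) - \giwaesnis\big)^{+} \, d\vz \;\rightarrow\; 0 \nonumber
\end{align}
by dominated convergence (the integrand is dominated by $\pi_{\theta,\phi}(\vz|\vx)$); because both densities integrate to one, the negative part has the same integral, which forces $L^1$ (total-variation) convergence without ever proving full pointwise convergence.

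The second and more important gap is the passage from total variation to \textsc{kl}: \textsc{kl} divergence is \emph{not} continuous with respect to total variation, so this step carries real content, and the ``mild conditions'' you propose ($e^{\giwaeT} \in L^1(\qzx)$, $\Exp{\qzx}{\giwaeT^2} < \infty$) do not obviously dominate $\log\big(\giwaesnis / \pi_{\theta,\phi}(\vz|\vx)\big)$ uniformly in $K$. The clean sufficient hypothesis is a bounded critic: if $|\giwaeT(\vx,\vz)| \leq M$, then the \gls{SNIS} denominator and $\mathcal{Z}_{\theta,\phi}(\vx)$ both lie in $[e^{-M}, e^{M}]$, so $e^{-2M} \leq \giwaesnis / \pi_{\theta,\phi}(\vz|\vx) \leq e^{2M}$ uniformly in $K$ and $\vz$, and dominated convergence then sends both \textsc{kl} divergences to zero. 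In fairness, the paper's proof requires comparable implicit assumptions (the \gls{IWAE} gap results it cites need control of the importance weights $e^{\giwaeT(\vx,\vz)}/\mathcal{Z}_{\theta,\phi}(\vx)$), so your approach is not weaker --- but you should state boundedness of $\giwaeT$ (or uniform log-ratio control) as the hypothesis rather than the $L^1$ and second-moment conditions you listed.
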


\begin{proof}
To prove the proposition, we consider a mixture target distribution similar to the case of \gls{IWAE}. However, in this case, we take a single sample from $\pi_{\theta,\phi}(\vz|\vx)$ instead of $p(\vz|\vx)$
\begin{align}
    \ptgtgiwae(\idx, \vz^{(1:K)}, \vx) = \frac{1}{K} \pi_{\theta,\phi}(\vz^{(\idx)}|\vx) \prod \limits_{\myoverset{k=1}{k\neq \idx}}^K q_{\theta}(\vz^{(\idx)}|\vx) .
\end{align}
Note that the probabilistic interpretations $\qprop{giwae}(\idx, \vz^{(1:K)} | \vx)$ and $\ptgtgiwae(\idx, \vz^{(1:K)}, \vx)$ exactly match the 
\gls{IWAE} proposal and target distributions in \myapp{iwae_prob} where, in the target distribution, the posterior $\pzx$ has been replaced by $\pi_{\theta,\phi}(\vz|\vx)$.  
Thus, we can analyze the 
\kl divergences $\DKL[ \qprop{giwae}(\idx, \vz^{(1:K)}| \vx) \| \ptgtgiwae(\idx, \vz^{(1:K)}| \vx) ]$ and $\DKL[ \ptgtgiwae(\idx, \vz^{(1:K)}| \vx) \| \qprop{giwae}(\idx, \vz^{(1:K)}| \vx)  ]$ using techniques from previous work on \gls{IWAE}.
Following similar arguments as in \citet{domke2018importance} (Thm 2) and \citet{cremer2017reinterpreting}, these extended state space \kl divergences upper bound the \kl divergence between the marginal \gls{SNIS} distribution and energy-based target, e.g. $\DKL[\giwaesnis \| \pi_{\theta,\phi}(\vz|\vx)] \leq \DKL[ \qprop{giwae}(\idx, \vz^{(1:K)}| \vx) \| \ptgtgiwae(\idx, \vz^{(1:K)}| \vx) ]$.   
As $K \rightarrow \infty$, $\DKL[ \qprop{giwae}(\idx, \vz^{(1:K)}| \vx) \|\ptgtgiwae(\idx, \vz^{(1:K)}| \vx) ] \rightarrow 0$ since it is an instance of the \gls{IWAE} gap.
Thus, the \kl divergence $\DKL[\giwaesnis \| \pi_{\theta,\phi}(\vz|\vx)]$ also vanishes.
Similar reasoning applies for the reverse \kl divergence.
\end{proof}

\section{MINE-AIS}\label{app:mine_ais}

\subsection{Multi-Sample AIS Evaluation of the IBAL}\label{app:ais_eval_ibal}\label{app:reverse_annealing_ais}\label{app:approximate} %

After training the variational base distribution $\qzx$ and the critic function $\giwaeT(\vx,\vz)$ using the \textsc{mine-ais} training procedure above, 
we still need to evaluate the $\ibal$ lower bound on \gls{MI}, $\gls{IBAL}(q_{\theta},\giwaeT)$.  We can easily upper bound $\gls{IBAL}(q_{\theta},\giwaeT)$ using a Multi-Sample \gls{AIS} lower bound on $\logZmine$ with expectations under the forward sampling procedure $\qprop{ais}(\vz_{0:T}|\vx)$.   However, an upper bound on $\gls{IBAL}(q_{\theta},\giwaeT)$ is not guaranteed to preserve a lower bound on \gls{MI}.

In order to obtain a lower bound on the \gls{IBAL}, we would need to obtain an upper bound on $\logZmine = \log \Exp{\qzx}{e^{\giwaeT(\vx,\vz)}}$, the log partition function of $\pi_{\theta,\phi}(\vz|\vx)$.  However, considering this to be the target distribution $\pi_T(\vz|\vx)$ in Multi-Sample \gls{AIS}, we would require exact samples from $\pi_{\theta,\phi}(\vz|\vx)$ to guarantee an upper bound on $\logZmine$.
Since these samples are unavailable, we demonstrate conditions under which we can preserve an upper bound on $\logZmine$ (and lower bound on  $\ibal(q_\theta,T_\phi)$) by sampling from $\pzx$ instead of $\pi_{\theta,\phi}(\vz|\vx)$ to initialize our backward annealing chains in \myprop{mine-ais-sampling-ais} below.

Using the single sample \gls{AIS} bounds to estimate $\logZmine$, we have the following extended state space proposal and target distributions, 
\begin{align}
   \hspace*{-.2cm} \ptgtenergy(\vz_{0:T}|\vx) \coloneqq \pi_{\theta,\phi}(\vz_T|\vx) \prod \limits_{t=1}^T \trevphi(\vz_{t-1}|\vz_t)\,, \quad \, \,  \qpropenergy(\vz_{0:T}|\vx) \coloneqq q_{\theta}(\vz_0|\vx) \prod \limits_{t=1}^T \tfwdphi(\vz_t|\vz_{t-1})\,. \label{eq:energy_prop_tgt}
\end{align}
We emphasize that in $\ptgtenergy, \qpropenergy$, the transition kernels and intermediate densities are based on the critic $\giwaeT(\vx,\vz)$ and energy-based posterior $\pi_{\theta,\phi}(\vz|\vx)$ whose log partition function we seek to estimate. 
Recall from \mysec{general} that taking the expected log importance weights under $ \ptgtenergy(\vz_{0:T}|\vx)$ yields an upper bound on $\logZmine$. 
However, since it is difficult to draw exact samples from $\pi_{\theta,\phi}(\vz|\vx)$ to initialize backward annealing chains and sample from $\ptgtenergy(\vz_{0:T}|\vx)$, we instead consider sampling from the posterior $\pzx$.  
Using the same transition kernels $\trevphi$ as above, we first define the conditional distribution $\ptgtenergy(\vz_{0:T-1}|\vx, \vz_T)$ of the backward chain for a given $\vz_T$
\begin{align}
\ptgtenergy(\vz_{0:T-1}|\vx, \vz_T) \coloneqq \prod_{t=1}^T  \trevphi(\vz_{t-1}|\vz_t)\,. \label{eq:conditional_backward}
\end{align}
Using this notation, the target distribution in \cref{eq:energy_prop_tgt} can also be rewritten as $\ptgtenergy(\vz_{0:T}|\vx) = \pi_{\theta,\phi}(\vz_T|\vx) \ptgtenergy(\vz_{0:T-1}|\vx, \vz_T)$.

We now define an \emph{approximate} extended state space target distribution in which backward chains are initialized with samples from $\vz_T \sim \pzx$,
\begin{align}
    \ptgtpost(\vz_{0:T}|\vx) \coloneqq p(\vz_T|\vx) \prod \limits_{t=1}^T \trevphi(\vz_{t-1}|\vz_t) = p(\vz_T|\vx) \ptgtenergy(\vz_{0:T-1}|\vx, \vz_T) .\label{eq:ptgt_post}
\end{align}

In the following proposition, we characterize the conditions under which sampling from $\ptgtpost(\vz_{0:T}|\vx)$ preserves an upper bound on $\logZmine$.

\begin{proposition} \label{prop:mine-ais-sampling-ais} 
Define the 
\gls{AIS} marginal distribution $\qpropenergy(\vz_{T}|\vx)$ over the final state 
in the extended state space proposal as $\qpropenergy(\vz_T|\vx) \coloneqq \int \qpropenergy(\vz_{0:T}|\vx) d\vz_{0:T-1}$.  If we have
\begin{align}
\DKL[p(\vz_T|\vx) \| \qpropenergy(\vz_T|\vx)] \geq \DKL[p(\vz_T|\vx) \| \pi_{\theta,\phi}(\vz_T|\vx)]\,,
\end{align}
then initializing the backward \gls{AIS} chain using $\vz_T \sim \pzx$ (i.e., sampling under $\ptgtpost(\argsais|\vx)$), yields an upper bound on $\logZmine$,
\begin{align}
    \Exp{\ptgtpost(\argsais|\vx)}{\log \frac{\ptgtenergy(\vx,\argsais)}{\qpropenergy(\argsais|\vx)}} \geq \logZmine \label{eq:approx-ub-ais}.
\end{align}
\end{proposition}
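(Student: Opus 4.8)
The plan is to evaluate the expectation on the left-hand side of \cref{eq:approx-ub-ais} by comparing it to the exact \gls{AIS} upper bound on $\logZmine$ obtained from the proper target $\ptgtenergy(\argsais|\vx)$. The essential observation is that the approximate target $\ptgtpost(\argsais|\vx)$ and the exact target $\ptgtenergy(\argsais|\vx)$ share the identical conditional backward chain $\ptgtenergy(\vz_{0:T-1}|\vx,\vz_T)$ from \cref{eq:conditional_backward}, and differ only in the marginal over the terminal state $\vz_T$, namely $p(\vz_T|\vx)$ versus $\pi_{\theta,\phi}(\vz_T|\vx)$. The whole argument therefore reduces to controlling a single term living in the $\vz_T$-marginal, which is exactly what the hypothesis is designed to do.

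First I would factor the proposal as $\qpropenergy(\argsais|\vx) = \qpropenergy(\vz_T|\vx)\,\qpropenergy(\vz_{0:T-1}|\vx,\vz_T)$, where $\qpropenergy(\vz_T|\vx)$ is the forward marginal defined in the proposition and $\qpropenergy(\vz_{0:T-1}|\vx,\vz_T)$ is the induced forward conditional. Combining this with $\ptgtpost(\argsais|\vx) = p(\vz_T|\vx)\,\ptgtenergy(\vz_{0:T-1}|\vx,\vz_T)$ and the factorization of the unnormalized target joint from \cref{eq:energy_prop_tgt}, the log importance ratio splits into a terminal piece and a conditional piece,
\[
\log \frac{\ptgtenergy(\vx,\argsais)}{\qpropenergy(\argsais|\vx)}
= \log \frac{q_\theta(\vz_T|\vx)\,e^{\giwaeT(\vx,\vz_T)}}{\qpropenergy(\vz_T|\vx)}
+ \log \frac{\ptgtenergy(\vz_{0:T-1}|\vx,\vz_T)}{\qpropenergy(\vz_{0:T-1}|\vx,\vz_T)}.
\]

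Next I would take the expectation under $\ptgtpost(\argsais|\vx)$. Since both targets use the same backward conditional, averaging the conditional piece over $\vz_{0:T-1}\sim\ptgtenergy(\vz_{0:T-1}|\vx,\vz_T)$ at fixed $\vz_T$ yields the conditional divergence $\DKL[\ptgtenergy(\vz_{0:T-1}|\vx,\vz_T) \| \qpropenergy(\vz_{0:T-1}|\vx,\vz_T)] \ge 0$, which can be dropped to obtain a lower bound. For the terminal piece, I would substitute $q_\theta(\vz_T|\vx)\,e^{\giwaeT(\vx,\vz_T)} = \mathcal{Z}_{\theta,\phi}(\vx)\,\pi_{\theta,\phi}(\vz_T|\vx)$, pulling out $\logZmine$, and then rewrite $\Exp{p(\vz_T|\vx)}{\log \pi_{\theta,\phi}(\vz_T|\vx)/\qpropenergy(\vz_T|\vx)}$ by inserting $p(\vz_T|\vx)$ into the ratio. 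This produces exactly the difference of divergences
\[
\DKL[p(\vz_T|\vx) \| \qpropenergy(\vz_T|\vx)] - \DKL[p(\vz_T|\vx) \| \pi_{\theta,\phi}(\vz_T|\vx)],
\]
which is nonnegative by the hypothesis. Combining, the left-hand side equals $\logZmine$ plus a nonnegative conditional \textsc{kl} plus this nonnegative difference, establishing \cref{eq:approx-ub-ais}.

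The main subtlety is the bookkeeping in the terminal-state decomposition: one must check that the forward conditional $\qpropenergy(\vz_{0:T-1}|\vx,\vz_T)$ is well-defined (positivity of the forward marginal $\qpropenergy(\vz_T|\vx)$), and, crucially, verify that the \emph{only} discrepancy between the approximate and exact bounds is localized in the $\vz_T$-marginal, so that the hypothesized inequality between the two terminal divergences cleanly controls the sign of the overall gap. Once that localization is established, everything else follows from nonnegativity of (conditional) \textsc{kl} divergences.
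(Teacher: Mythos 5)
Your proof is correct and takes essentially the same route as the paper's: both rely on the time-reversed factorization of the forward proposal to isolate the \gls{AIS} marginal $\qpropenergy(\vz_T|\vx)$, exploit the fact that $\ptgtpost$ and $\ptgtenergy$ share the backward conditional $\ptgtenergy(\vz_{0:T-1}|\vx,\vz_T)$, and decompose the gap into a nonnegative conditional \textsc{kl} term plus the difference $\DKL[p(\vz_T|\vx)\|\qpropenergy(\vz_T|\vx)] - \DKL[p(\vz_T|\vx)\|\pi_{\theta,\phi}(\vz_T|\vx)]$, which the hypothesis makes nonnegative. The only difference is cosmetic (you split the log ratio before taking expectations, whereas the paper pulls out $\logZmine$ first and inserts $p(\vz_T|\vx)/p(\vz_T|\vx)$ afterward), so the arguments are the same.
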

\begin{proof}
We begin by writing several definitions, which will allow us to factorize the extended state space proposal $\qpropenergy(\vz_{0:T}|\vx)$ in the time-reversed direction.   This factorization will include the final \gls{AIS} marginal $\qpropenergy(\vz_T|\vx)$.   

Starting from \cref{eq:energy_prop_tgt}, the forward transitions $\qpropenergy(\vz_{0:T}|\vx) = q_{\theta}(\vz_0|\vx) \prod_{t=1}^T \tfwdphi(\vz_t|\vz_{t-1})$ induce the marginal distributions $\qpropenergy(\vz_{t}|\vx)$ at each step.
The forward transitions and marginals induce a \textit{posterior} kernel $\tqrev(\vz_{t-1}|\vz_t)$, which allows us to rewrite  $\qpropenergy(\vz_{0:T}|\vx)$ using a reverse factorization %
\begin{align}
    \qpropenergy(\vz_{0:T}|\vx) 
    = \qpropenergy(\vz_T|\vx) \prod \limits_{t=1}^T &\tqrev(\vz_{t-1}|\vz_t) , \label{eq:qprop_rev}
\end{align}
\begin{align}
        \text{where} \quad &\tqrev(\vz_{t-1}|\vz_t) = \frac{\qpropenergy(\vz_{t-1}|\vx) \tfwdphi(\vz_t|\vz_{t-1})}{\qpropenergy(\vz_{t}|\vx)}. \label{eq:q_rev_kernel}
\end{align}
The posterior reverse transitions $\tqrev(\vz_{t-1}|\vz_t)$ are intractable in practice, and cannot be simplified to match the kernels in the target distribution $\ptgtenergy(\vz_{0:T-1}|\vx, \vz_T) = \prod_{t=1}^T  \trevphi(\vz_{t-1}|\vz_t)$ using the invariance or detailed balance conditions.  \citet{doucet2022annealed} provide a promising approach using score matching to approximate these posterior transitions $\tqrev(\vz_{t-1}|\vz_t)$. 

Finally, we write the posterior reverse process conditioned on a particular $\vz_T$ as 
\begin{align}
    \qpropenergy(\vz_{0:T-1}|\vx,\vz_T) \coloneqq \prod_{t=1}^T \tqrev(\vz_{t-1}|\vz_t) .
\end{align}

With the goal of upper bounding $\log \mathcal{Z}_{\theta,\phi}(\vx) = \int \qzx e^{\giwaeT(\vx,\vz)}d\vz$, we consider the log importance weights with expectations under the target distribution $\ptgtpost(\argsais|\vx) = p(\vz_T|\vx) \ptgtenergy(\vz_{0:T-1}|\vx, \vz_T)$, as in \cref{eq:approx-ub-ais}.

Using the above notation, we have
\small
\begin{align}
   \hspace*{-.3cm} & \Exp{\ptgtpost(\argsais|\vx)}{\log \frac{\ptgtenergy(\vx,\argsais)}{\qpropenergy(\argsais|\vx)}} = \logZmine + \Exp{\ptgtpost(\argsais|\vx)}{\log \frac{\ptgtenergy(\argsais|\vx)}{\qpropenergy(\argsais|\vx)}}  \\[3ex]
    &\phantom{\frac{\ptgtenergy(\vx,\argsais)}{\qpropenergy(\argsais|\vx)}} = \logZmine + \Exp{\ptgtpost(\argsais|\vx)}{\log \frac{\pi_{\theta,\phi}(\vz_T|\vx) \ptgtenergy(\vz_{0:T-1}|\vx, \vz_T)}{\qpropenergy(\vz_T|\vx) \qpropenergy(\vz_{0:T-1}|\vx,\vz_T)}\frac{p(\vz_T|\vx)}{p(\vz_T|\vx)} } \\[2ex]
  &\phantom{\frac{\ptgtenergy(\vx,\argsais)}{\qpropenergy(\argsais|\vx)}}= \logZmine + \DKL\big[p(\vz_T|\vx)\|\qpropenergy(\vz_T|\vx) \big] - \DKL\big[p(\vz_T|\vx)\|\pi_{\theta,\phi}(\vz_T|\vx) \big]  \\
  &\phantom{\frac{\ptgtenergy(\vx,\argsais)}{\qpropenergy(\argsais|\vx)}=  \logZmine} 
  + \Exp{p(\vz_T|\vx)}{\DKL[\ptgtenergy(\vz_{0:T-1}|\vx, \vz_T) \|  \qpropenergy(\vz_{0:T-1}|\vx,\vz_T)]}  \nonumber 
\end{align}
\normalsize
The intractable \kl divergence $\DKL[\ptgtenergy(\vz_{0:T-1}|\vx, \vz_T) \|  \qpropenergy(\vz_{0:T-1}|\vx,\vz_T)]$ compares the reverse kernels in the target distribution $\trevphi(\vz_{t-1}|\vz_t)$ against the posterior $\tqrev(\vz_{t-1}|\vz_t)$.   Ignoring this nonnegative term, we can lower bound the expectation on the \textsc{lhs}
\footnotesize
\begin{align}
  \Exp{\ptgtpost}{\log \frac{\ptgtenergy(\vx,\argsais)}{\qpropenergy(\argsais|\vx)}} &\geq  \logZmine + \DKL\big[p(\vz_T|\vx)\|\qpropenergy(\vz_T|\vx) \big] - \DKL\big[p(\vz_T|\vx)\|\pi_{\theta,\phi}(\vz_T|\vx) \big]  . \nonumber
  \end{align}
    \normalsize
{Finally, under the assumption of the proposition that $ \DKL\big[p(\vz_T|\vx)\|\qpropenergy(\vz_T|\vx) \big] \geq  \DKL\big[p(\vz_T|\vx)\|\pi_{\theta,\phi}(\vz_T|\vx) \big]$, we have the desired result.}
\end{proof}
\textbf{KL Divergence Condition}
In \myprop{mine-ais-sampling-ais}, we have shown that we can preserve an upper bound on $\logZmine$ by initializing the reverse chain using a posterior sample, under a condition on the \kl divergence, 
\begin{align}
    \DKL\big[p(\vz_T|\vx)\|\qpropenergy(\vz_T|\vx) \big] \geq \DKL\big[p(\vz_T|\vx)\|\pi_{\theta,\phi}(\vz_T|\vx) \big]. \label{eq:kl_cond}
\end{align}
While we cannot guarantee this condition, we intuitively expect \cref{eq:kl_cond} to hold in practice since $\pi_{\theta,\phi}(\vz|\vx)$ has been directly trained to match $\pzx$.  
By contrast, $\qpropenergy(\vz_T|\vx)$ is the final state of an \gls{AIS} procedure, which approximates $\pi_{\theta,\phi}(\vz|\vx)$ and does not have access to information about $\pzx$.
\citet{burda2015accurate} use a similar approach for lower bounding the log likelihood in \textsc{ebm}s, but give an example of a \gls{RBM} model (in their Sec. 5) in which \cref{eq:kl_cond} does not hold.

As desired, we find in our experiments in 
\myfig{mine-ais}
that our approximate reverse annealing procedure underestimates the \gls{IBAL} in all of the \textsc{mine-ais}, \textsc{giwae} and \textsc{InfoNCE} experiments, for all numbers of intermediate distributions $T$.

\paragraph{$\mathbf{T=1}$ Special Case}  
For $T=1$, we have $\qpropenergy(\vz_1|\vx) = q_{\theta}(\vz_1|\vx)$. %
In particular, \myprop{mine-ais-sampling-ais} will provide an upper bound on $\logZmine$ if 
\begin{align}
\KL{\pzx}{q_{\theta}(\vz|\vx)} \geq \KL{\pzx}{\pi_{\theta,\phi}(\vz|\vx)}   \label{eq:kl_condition_sis} \,.
\end{align}
This condition is guaranteed under the assumptions of \myprop{mineaisproperties}, where $\gls{IBAL}(q_{\theta}, T_{\phi^*}) = I_{\gls{BA}_L}(\pi_{\theta, \phi^*})$ improves
upon $I_{\gls{BA}_L}(q_{\theta})$ for an energy function $T_{\phi^*}(\vx,\vz)$ which has been trained to maximize the \gls{IBAL}.  
In \cref{eq:kl_condition_sis} the \kl divergence on the left-hand side corresponds to the gap in the \gls{BA} lower bound, which is larger than the gap in the \gls{IBAL} on the right-hand side.
We can further show that the lower bound on $\gls{IBAL}(q_\theta, T_{\phi})$ resulting from \myprop{mine-ais-sampling-ais} with $T=1$ is the \gls{BA} lower bound,
\begin{align*}
    \gls{IBAL}(q_\theta, \giwaeT) &= \mathbb{E}_{p(\vx,\vz)}\left[ \log \frac{\qzx}{p(\vz)} \right] + \mathbb{E}_{p(\vx,\vz)}[ T_{\phi}(\vx,\vz) ] - \mathbb{E}_{p(\vx)}[ \log \mathcal{Z}_{\theta,\phi}(\vx) ]\\
    &\geq\mathbb{E}_{p(\vx,\vz)}\left[ \log \frac{\qzx}{p(\vz)} \right] + \mathbb{E}_{p(\vx,\vz)}[ T_{\phi}(\vx,\vz) ] - \Exp{\ptgtpost}{\log \frac{\ptgtenergy(\vx,\argsais)}{\qpropenergy(\argsais|\vx)}} \nonumber  \\
    &=\mathbb{E}_{p(\vx,\vz)}\left[ \log \frac{\qzx}{p(\vz)} \right] + \mathbb{E}_{p(\vx,\vz)}[ T_{\phi}(\vx,\vz) ] - \Exp{p(\vx,\vz)}{\log \frac{\qzx e^{T_{\phi}(\vx,\vz)}}{\qzx}} \nonumber \\
    &= \mathbb{E}_{p(\vx,\vz)}\left[ \log \frac{\qzx}{p(\vz)} \right] + \cancel{\mathbb{E}_{p(\vx,\vz)}[ T_{\phi}(\vx,\vz) ] } -  \cancel{\mathbb{E}_{p(\vx,\vz)}[ T_{\phi}(\vx,\vz) ] } \nonumber \\
    &= I_{\gls{BA}_L}(q_\theta)\,.
\end{align*} 
We can confirm this in \myfig{mine-ais}, where for $T=1$, the approximate lower bounds on $\gls{IBAL}(q_\theta, \giwaeT)$ begin from the appropriate \gls{BA} lower bound.   For example, in the case of \gls{IBAL} evaluation for \gls{GIWAE} ($K=100$), the light green curve starts from the \gls{BA} lower bound term reported in the decomposition of the \gls{GIWAE} ($K=100$) lower bound in \myfiga{different-bounds}{a}.    %
Using more intermediate distributions ($T > 1$) for the \gls{AIS} approximate bound in \myprop{mine-ais-sampling-ais}, the estimates in  \myfig{mine-ais} approach the true value of $\gls{IBAL}(q_\theta, \giwaeT)$ in all cases.

\section{Applications to Mutual Information Estimation without Known Marginals}\label{app:representation}
\newcommand{\param}{\theta}
\newcommand{\map}{g_{\param}}
\newcommand{\paramenc}{\psi}
\newcommand{\qenc}{q_{\paramenc}(\vz|\vx)}
\newcommand{\iqenc}{q_{\paramenc}(\vx|\vz)}
\newcommand{\qdata}{q_{\text{data}}(\vx)}
\newcommand{\qdatay}{q_{\text{data}}(\vy)}
\newcommand{\mapenc}{g_{\paramenc}}
\newcommand{\igen}{I}
\newcommand{\ienc}{I}

While the focus in of our work is evaluating the mutual information $I(\vx;\vz)$ in settings where at least a single marginal is available,  we are often interested in estimating or optimizing the mutual information where no marginal distribution is available. A natural setting where no marginal distribution is available is representation learning where the goal is to maximize the mutual information between the data distribution $\qdata$ and the representation induced by a stochastic mapping $\qenc$ parameterized by $\paramenc$,
\begin{align}
    \ienc(\vx;\vz) = \Exp{\qdata \qenc}{\log \frac{q_\paramenc{(\vx,\vz)}}{\qdata q_\paramenc{(\vz)}}} 
    = \Exp{\qdata \qenc}{\log \frac{q_\paramenc{(\vx|\vz)}}{\qdata}}  \label{eq:ienc}
\end{align}
where $q_\paramenc{(\vz)} = \int \qdata \qenc d\vx$ represents the ``aggregated posterior'' \citep{makhzani2015adversarial} or induced marginal distribution over $\vz$.
\subsection{BA Lower Bound}
Using the notation of \myeq{ienc}, we can write the \textsc{ba} lower bound as
\begin{align}
    \ienc(\vx;\vz) &\geq 
    \ienc(\vx;\vz) -
    \underbrace{\vphantom{\Exp{\qdata \qenc}{ \log \frac{q_\paramenc(\vx|\vz)}{\qdata} } } \mathbb{E}_{q_{\paramenc}(\vz)}\bigg[ \KL{q_\paramenc(\vx|\vz)}{p_{\theta}(\vx|\vz)} \bigg]}_{\text{gap}} \nonumber \\
    &= \mathbb{E}_{\qdata} \mathbb{E}_{\qenc}\bigg[ \log \frac{p_{\theta}(\vx|\vz)}{\qdata} \bigg] \\
    &= \mathbb{E}_{\qdata} \underbrace{\mathbb{E}_{\qenc}\bigg[ \log p_{\theta}(\vx|\vz) \bigg] \vphantom{\bigg[ \frac{1}{2} \bigg]}}_{\text{negative reconstruction loss of } \vx} + \underbrace{H_{\text{data}}(\vx) \vphantom{\bigg[ \frac{1}{2} \bigg]}}_{\text{constant}} \label{eq:ba_encoding} \\
     & \eqqcolon I_{\gls{BA}_L}(p_{\theta}(\vx|\vz)) 
\end{align}
where $p_{\theta}(\vx|\vz)$ is a variational distribution, parameterized by $\theta$, that tries to match to the \emph{inverse encoding distribution} $\iqenc \propto \qdata \qenc$. The first term in \myeq{ba_encoding} can be interpreted as the reconstruction term, and the second term is the entropy of the data distribution, which is constant.

\paragraph{Evaluating MI up to a Constant} 
Note that the gradient of $I_{\gls{BA}_L}(p_{\theta}(\vx|\vz))$ with respect to the parameters $\theta$ does not depend on the marginal $\qdata$, and thus we may still optimize the variational distribution when the data distribution is unknown.   We can then use the resulting $p_{\theta}(\vx|\vz)$ to estimate the \gls{BA} lower bound on mutual information \textit{up to a constant}, ${H_{\text{data}}(\vx)}$.
This is useful in comparing the \gls{MI} induced by two different representations $q_{\paramenc_1}(\vz|\vx)$ and $q_{\paramenc_2}(\vz|\vx)$ of the same data distribution.

\paragraph{Optimizing MI with the BA Lower Bound}
The \gls{BA} lower bound is also amenable to backpropagation through the parameters of the encoding distribution $\qenc$, even when analytic marginal densities for $\qdata$ or $q_{\paramenc}(\vz)$ are not available. 
Maximization of the \gls{BA} lower bound appears in various settings, including in representation learning \citep{alemi2016deep, alemi2018fixing}, reinforcement learning \citep{mohamed2015variational}, improving interpretability in \textsc{gan}s \citep{chen2016infogan}, and variational information bottleneck methods \citep{tishby2000information, alemi2016deep, alemi2018fixing}.

\subsection{GIWAE Lower Bound}
In this section, we discuss the applicability of our \gls{GIWAE} lower bound in mutual information maximization settings.   
Rewriting the \gls{GIWAE} lower bound in \cref{eq:giwae_lb_mi_main} for the case of estimating $\ienc(\vx;\vz)$ in \cref{eq:ienc}, we have
\scriptsize
\begin{align}
    \ienc(\vx;\vz) &\geq  \underbrace{\mathbb{E}_{\qdata \qenc}\big[ \log p_{\theta}(\vx|\vz) \vphantom{\Exp{q_\paramenc(\vx,\vz) \prod \limits_{k=2}^K p_{\theta}(\vx^{(k)}| \vz)}{\log \frac{e^{\giwaeT(\vx,\vz)}}{\frac{1}{K} (e^{\giwaeT(\vx,\vz)}}}} \big] \vphantom{\frac{1}{2}}}_{\text{negative reconstruction loss of } \vx}  + \underbrace{H_{\text{data}}(\vx)  \vphantom{\Exp{q_\paramenc(\vx,\vz) \prod \limits_{k=2}^K p_{\theta}(\vx^{(k)}| \vz)}{\log \frac{e^{\giwaeT(\vx,\vz)}}{\frac{1}{K} (e^{\giwaeT(\vx,\vz)}}}}\vphantom{\frac{1}{2}}}_{\text{constant}} 
     + \underbrace{\Exp{q_\paramenc(\vx,\vz) \prod \limits_{k=2}^K p_{\theta}(\vx^{(k)}| \vz)}{\log \frac{e^{\giwaeT(\vx,\vz)}}{\frac{1}{K} (e^{\giwaeT(\vx,\vz)} + \sum_{k=2}^K e^{\giwaeT(\vx^{(k)},\vz)})}}}_{\small \text{contrastive term} \leq \log K} \label{eq:giwae_encoding_mi} 
    \\
    &\eqqcolon I_{\textsc{giwae}_L}(p_{\theta}(\vx|\vz), \giwaeT, K) \nonumber
\end{align}
\normalsize
Note that we can use a single joint sample from $\qdata \qenc = q_{\paramenc}(\vz) q_{\paramenc}(\vx|\vz)$ to obtain a positive sample from the inverse encoding distribution $\vx \sim q_{\paramenc}(\vx|\vz)$ for a particular $\vz \sim q_{\paramenc}(\vz)$, in a similar fashion to our ancestral sampling in \mysec{setting}.
For a given $\vz$, the negative samples correspond to $K-1$ samples from the stochastic decoder $\vx^{(2:K)} \sim p_{\theta}(\vx|\vz)$.

From the importance sampling perspective, we can view the \gls{GIWAE} lower bound as corresponding to an upper bound on the ``log partition function'' $\log q_\paramenc(\vz)$ for a particular $\vz \sim q_{\paramenc}(\vz)$.
The contrastive term in \cref{eq:giwae_encoding_mi} arises from \gls{SNIS} sampling of a single $\vx^{(s)}$ in the extended state space proposal, with $\qprop{giwae}(s|\vz, \vx^{(1:K)}) = e^{\giwaeT(\vx^{(s)},\vz)}/ \sum_{k=1}^K e^{\giwaeT(\vx^{(k)},\vz)}$.   
Finally, we note that optimization over the energy function $\giwaeT(\vx,\vz)$ improves upon $I_{\gls{BA}_L}(p_{\theta}(\vx|\vz))$ in \cref{eq:ba_encoding} by at most $\log K$ nats using the contrastive term.   
\paragraph{Evaluating MI up to a Constant}

Similar to the \gls{BA} bound, the \textsc{giwae} lower bound can be used to evaluate the mutual information up to the constant data entropy when $\qdata$ is unknown.   This may be useful in comparing the \gls{MI} induced by two different representations $q_{\paramenc_1}(\vz|\vx)$ and $q_{\paramenc_2}(\vz|\vx)$ of the same data distribution.   
Note that the reconstruction and contrastive terms in \cref{eq:giwae_encoding_mi} depend on samples from $\qdata$ and not the density.   Thus, our ability to take gradients with respect to the parameters of the variational distribution $p_{\theta}(\vx|\vz)$ or energy function $\giwaeT(\vx,\vz)$ are not affected by the fact that the marginal distribution is unknown.

\paragraph{Optimizing MI with the GIWAE Lower Bound}
The \gls{GIWAE} lower bound may also be used for mutual information maximization with respect to the parameters of $\qenc$, since each term in \cref{eq:giwae_encoding_mi} is amenable to backpropagation.  
Since \gls{GIWAE} generalizes both the \gls{BA} and \textsc{Info-NCE} lower bounds, it can be used as a drop-in replacement for either of these bounds for optimizing \textsc{mi} (e.g., see \citet{oord2018representation}).

\subsection{MINE-AIS / IBAL Lower Bound}
Recall that \textsc{mine-ais} estimation would involve an energy based variational approximation to the inverse encoding distribution $q_{\paramenc}(\vx|\vz)$,
\begin{align}
    \pi_{\theta,\phi}(\vx|\vz) = \frac{1}{\mathcal{Z}(\vz)} p_{\theta}(\vx|\vz) e^{\giwaeT(\vx,\vz)} \, . \label{eq:energy_enc}
\end{align}
Note that \gls{IBAL} lower bound on $\ienc(\vx;\vz)$ involves an intractable log partition function term $
\mathbb{E}_{q_{\paramenc}(\vz)}[\log  \mathcal{Z}(\vz)]=  \mathbb{E}_{q_{\paramenc}(\vz)}\big[ \log \Exp{p_\theta(\vx|\vz)}{e^{\giwaeT(\vx,\vz)}} \big]$. 
\footnotesize
\begin{align}
    \ienc(\vx;\vz) &\geq  \mathbb{E}_{\qdata} \underbrace{\mathbb{E}_{\qenc}\big[ \log p_{\theta}(\vx|\vz) \big] \vphantom{\left[\log \frac{e^{\giwaeT(\vx,\vz)}}{ \Exp{p_{\theta}(\vx| \vz)}{e^{\giwaeT(\vx,\vz)}} }\right]}}_{\text{negative reconstruction loss of } \vx}  + \underbrace{H_{\text{data}}(\vx) \vphantom{\left[\log \frac{e^{\giwaeT(\vx,\vz)}}{ \Exp{p_{\theta}(\vx| \vz)}{e^{\giwaeT(\vx,\vz)}}}\right]}}_{\text{constant}}   + \underbrace{\Exp{q_\paramenc(\vx,\vz)}{\log \frac{e^{\giwaeT(\vx,\vz)}}{ \Exp{p_{\theta}(\vx| \vz)}{e^{\giwaeT(\vx,\vz)}} }}}_{ \text{contrastive term} \leq \Exp{q_{\paramenc}(\vz)}{\DKL[q_{\paramenc}(\vx|\vz)\|p_{\theta}(\vx|\vz) ] }} \label{eq:mine_ais_encoding_mi} \\
    &\eqqcolon \gls{IBAL}(p_{\theta}(\vx|\vz), \giwaeT) \nonumber 
\end{align}
\normalsize
where the term in the denominator is the partition function $\mathcal{Z}(\vz)= \Exp{p_{\theta}(\vx| \vz)}{e^{\giwaeT(\vx,\vz)}}$.

When taking gradients as in \cref{eq:mine-asi-derv}-(\ref{eq:mine-asi-derv2}), we obtain 
\begin{align}
\fpartialf{\theta} \ibal(p_\theta,T_\phi) &= \Exp{\qdata \qenc}{\fpartialf{\theta} \log p_\theta(\vx|\vz)} - \Exp{q_{\paramenc}(\vz) \pi_{\theta,\phi}(\vx|\vz) }{\fpartial \log p_\theta(\vx|\vz)} ,   \label{eq:mine-asi-derv-app}\\
\fpartialf{\phi}  \ibal(p_\theta,T_\phi) &=  \Exp{\qdata \qenc}{\fpartialf{\phi} \Txz} - \Exp{q_{\paramenc}(\vz) \pi_{\theta,\phi}(\vx|\vz)}{\fpartialf{\phi} \Txz} .  
\label{eq:mine-asi-derv2-app}
\end{align}
To obtain approximate negative samples from $\pi_{\theta,\phi}(\vx|\vz)$ for a given $\vz$, we can use \gls{MCMC} 
transition kernels since the unnormalized target density $\tpi_{\theta,\phi}(\vx|\vz)= p_{\theta}(\vx|\vz) e^{\giwaeT(\vx,\vz)}$ is tractable.

\paragraph{Evaluating MI up to a Constant}
Since only samples from $\qdata$ are required for the \textsc{mine-ais} training procedure in \cref{eq:mine-asi-derv-app}-(\ref{eq:mine-asi-derv2-app}), we can learn the base variational distribution $p_{\theta}(\vx|\vz)$ and energy function $\giwaeT(\vx,\vz)$ in cases when the marginal distribution $\qdata$ is unknown.   

As in the case of \gls{GIWAE} and \gls{BA} lower bounds above, we can also evaluate the \gls{IBAL} lower bound in \cref{eq:mine_ais_encoding_mi} up to a constant.   As in the main text, this involves using multi-sample \gls{AIS} techniques to bound the intractable log partition function term $\log \mathcal{Z}(\vz) = \log \Exp{p_{\theta}(\vx| \vz)}{e^{\giwaeT(\vx,\vz)}}$.

\paragraph{Optimizing MI with the MINE-AIS Lower Bound}
If we are interested in optimizing the \textsc{mine-ais} lower bound with respect to the parameters of a stochastic encoder $\qenc$, 
we would need to backpropagate through 
the 
Multi-Sample \gls{AIS} procedure used to estimate the 
$\log \mathcal{Z}(\vz)$ term, 
but computing the
gradients are intractable.
We thus conclude that, among our proposed methods, \gls{GIWAE} is the most directly applicable in settings of \gls{MI} for representation learning.

\section{Experimental Details}\label{app:experiment_details}
\subsection{Experiment Details of \mysec{exp_ais}}
In this section, we provide the experiment details used in \mysec{exp_ais}. For more details, see the public Github repository, \href{https://github.com/huangsicong/ais_mi_estimation}{https://github.com/huangsicong/ais\_mi\_estimation}.
\subsubsection{Datasets and Models}
\label{app:models}
We used \textsc{mnist}~\citep{lecun1998gradient} and \textsc{cifar}-10~\citep{krizhevsky2009learning} datasets in our experiments.

\parhead{Real-Valued MNIST} For the \textsc{vae} experiments on the real-valued \textsc{mnist} dataset (\mytable{mnist}), the encoder's architecture is $784 - 1024 - 1024 - 1024 - z$, where $z$ is the latent code size shown in the row header of \mytable{mnist}. The decoder architecture is the reverse of the encoder architecture. The decoder variance is learned scalar. 

For the \textsc{gan} experiments on \textsc{mnist} (\mytable{mnist}), we used the same decoder architecture as our \textsc{vae}s. In order to stabilize the training dynamics, we used the gradient penalty (GP)~\citep{salimans2016improved}. 

The network was trained for 300 epochs with the learning rate of 0.0001 using the Adam optimizer~\citep{adam}, and the checkpoint with the best validation loss was used for the evaluation. 

\parhead{CIFAR-10} For the \textsc{cifar}-10 experiments (\mytable{mnist}), we experimented with a smaller version of \textsc{dcgan}~\citep{radford2015unsupervised} (see the public code). The number at the end of each model name in \mytable{mnist} indicates the latent code size.  

\subsubsection{Experiment Details of \mysec{exp_ais}}
\label{app:ais_settings}
For the \textsc{ais} temperature schedule, We used sigmoid schedules as used in~\citet{wu2016quantitative}. The step size of \textsc{hmc} was adaptively tuned to achieve an average acceptance probability of $65\%$ as suggested in~\citet{neal2001annealed}. For all \textsc{mnist} experiments in \mytable{mnist}, we evaluated on a single batch size of 128 simulated data. For all \textsc{cifar} experiments in \mytable{mnist} we used a single batch of 32 simulated data. All experiments are run on on Tesla P100 or Quadro RTX 6000 or 
Tesla T4 GPUs. 

\subsection{Runtime Comparison}
\label{app:runtime}

We benchmarked the runtime on Tesla P100 GPUs. For \textsc{mnist}, it took about 35 minutes to run \textsc{iwae} with $K=1M$, about 8 hours to run \textsc{ais} with $T=30K$. For \textsc{cifar}, it took about 45 minutes to run \textsc{iwae} with $K=1M$, and about 12 hours for the \textsc{ais} with $T=100K$. 

In \myfig{runtime_mnist}, we evaluate the tradeoff between runtime and bound tightness for evaluating the generative \gls{MI} for \textsc{vae} and \textsc{gan} models with 100-dimensional latent codes trained on the \textsc{mnist} dataset.   We compare \gls{IWAE} and multi-sample \gls{AIS} evaluation, with the same experimental settings as in \myapp{ais_settings} and an initial distribution $\qzx$.
We plot wall clock time on the $x$-axis, where increasing runtime reflects increasing $K$ for \gls{IWAE} and increasing $T$ for \gls{AIS}.

 \begin{figure}[h]
     \centering
     \subcaptionbox{\label{fig:aa} VAE-100}{
     \includegraphics[width=.45\textwidth]{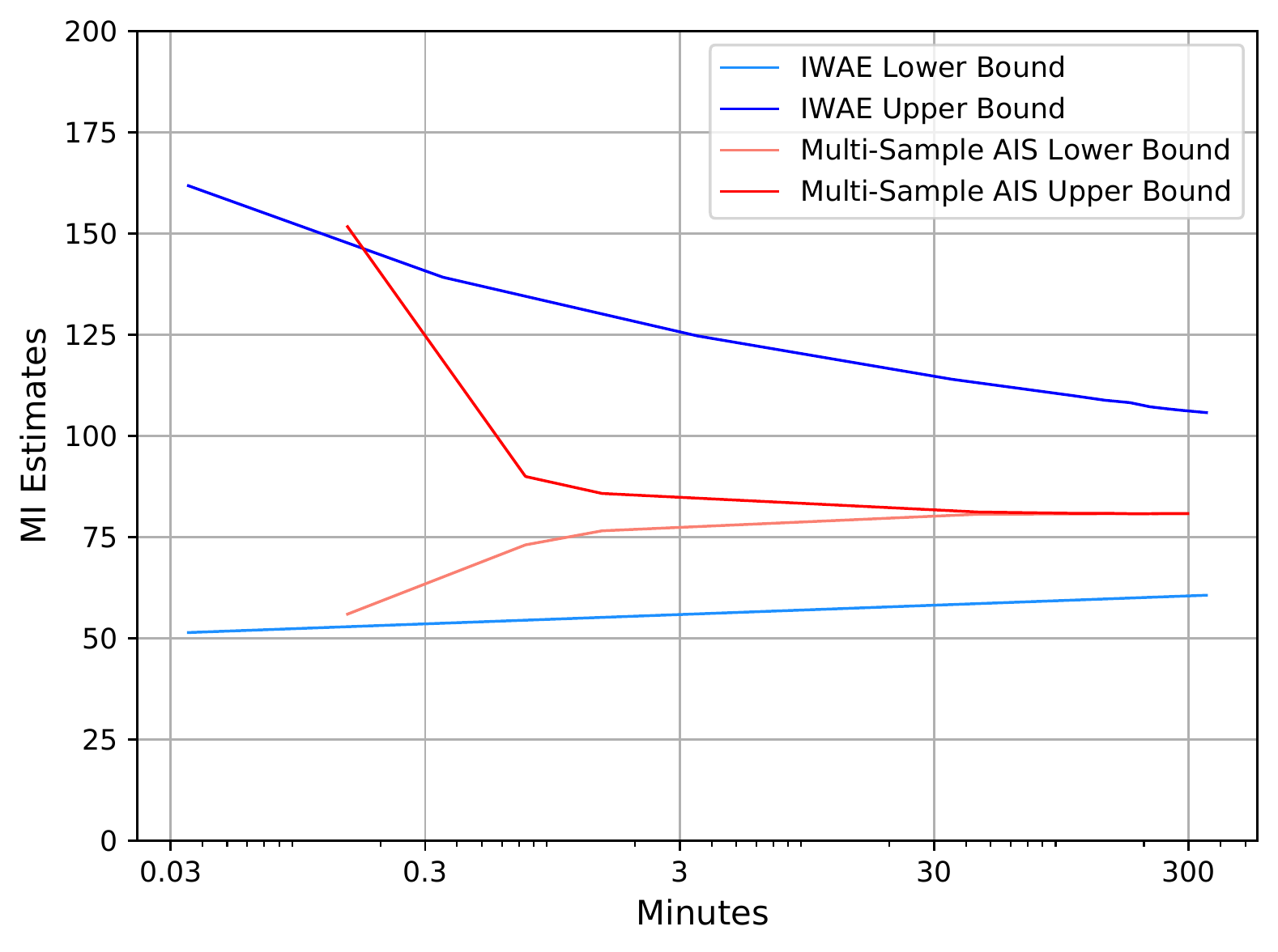}}
     \subcaptionbox{\label{fig:aa} GAN-100}{
     \includegraphics[width=.45\textwidth]{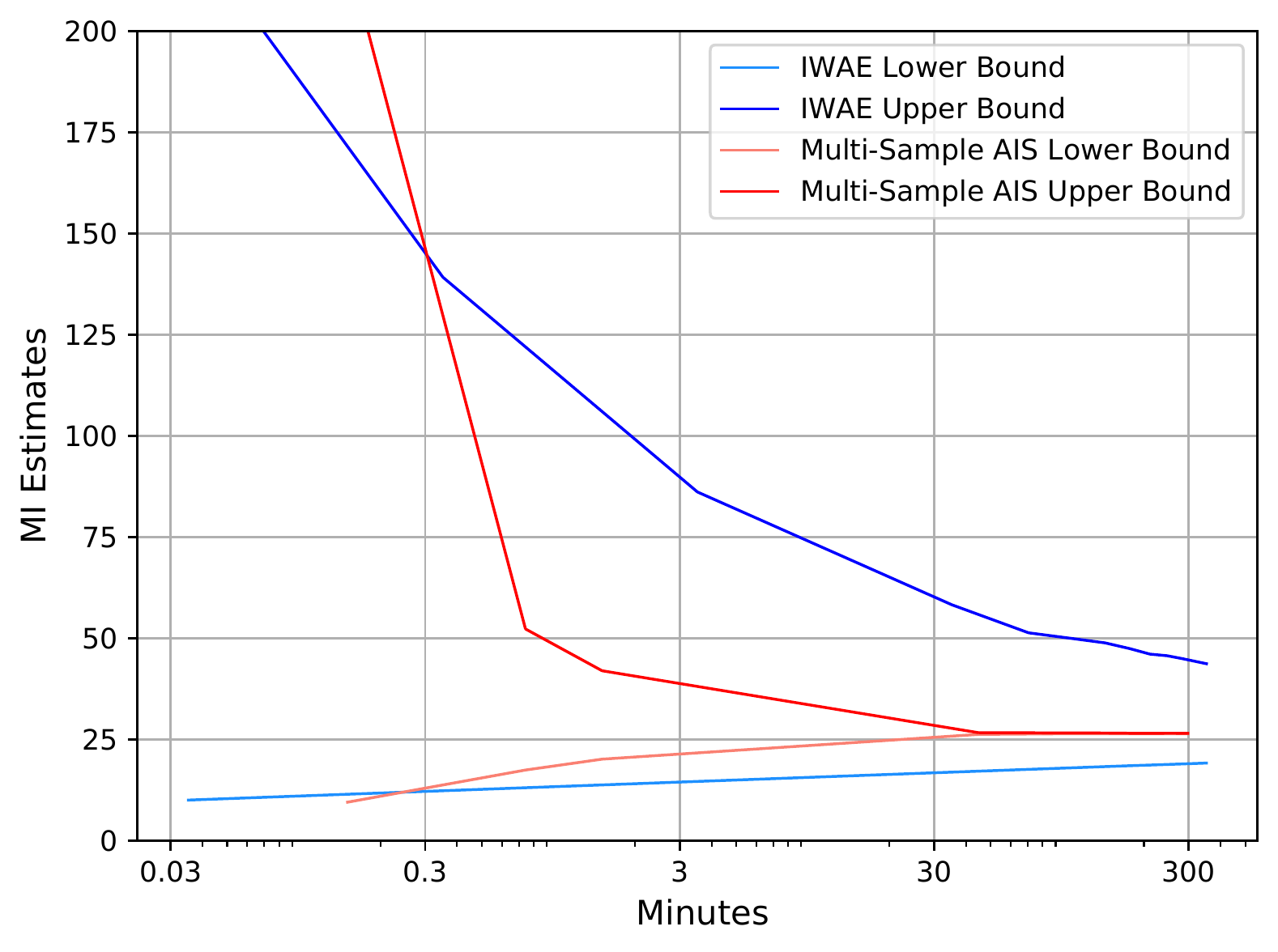}}
     \caption{
     {Runtime vs. Bound Tightness for \textsc{vae} (left) and \textsc{gan} (right) models on \textsc{mnist}. } }
     \label{fig:runtime_mnist}
 \end{figure}

\subsection{Experiment Details for Energy-Based Bounds (\mysec{exp_giwae})}

\parhead{Models and Data}The linear \textsc{vae} model has a Gaussian prior $\vz \sim \mathcal{N}(0,I)$ with the dimension of 10. The dimension of the output $\vx$ is $100$. The weights are sampled randomly from a Gaussian distribution with the standard deviation of $1$, and the standard deviation of the Gaussian observation noise at the output is $1$. The \textsc{mnist-vae}20, is a \textsc{vae} model that is trained on the real-valued MNIST dataset. It has a Gaussian prior $\vz \sim \mathcal{N}(0,I)$ with the dimension of 20. The decoder has one layer of ReLU non-linearity of size $1000$, followed by a linear layer that predicts the mean of a Gaussian observation model with the fixed standard deviation of $0.1$. \textsc{mnist-gan}20 decoder uses one layer of ReLU non-linearity of size $1000$ followed by a sigmoid layer that predicts the mean of a Gaussian observation model with the fixed standard deviation of $0.1$. In all the experiments, we use a fixed batch of $100$ data points.

\parhead{BA, IWAE and GIWAE}
For the \textsc{ba} bound, $\qzx$ is parameterized using a neural network that predicts the mean and log-std of a diagonal Gaussian distribution. The neural network has two layers of ReLU non-linearity of size $2000$. The \textsc{iwae} experiments used the same architecture for $\qzx$. The \textsc{giwae} experiments, in addition to $\qzx$ with the same architecture, used a critic function $\giwaeT(\vx,\vz)$ that is parameterized by a neural network that concatenates $(\vx,\vz)$ and pass them through two layers of ReLU non-linearity of size $2000$, followed by a linear layer that outputs a scalar value. The parameters of $\qzx$ and $\giwaeT(\vx,\vz)$ are trained jointly.

\parhead{Multi-Sample AIS} For the Multi-Sample \textsc{ais} bounds, for all models, we used up to $K=1000$ chains (see \myfig{multiple-ais-plot}), and up to $T=50K$ intermediate distributions with linear schedule (see \myfig{multiple-ais-plot} and \myfig{mine-ais}). We used \textsc{hmc} as the \textsc{ais} kernel, with $\epsilon=0.02$ and $L=20$ leap frog steps.

\parhead{MINE-AIS} For the training of the \textsc{mine-ais}, we used a critic function $\giwaeT(\vx,\vz)$ that is parameterized by a neural network that concatenates $(\vx,\vz)$ and pass them through three layers of ReLU non-linearity of size $2000$, followed by a linear layer that outputs a scalar value. We chose the Gaussian prior $\mathcal{N}(0,I)$ as the base distribution $\qzx$. In order to take a sample from $\pi_{\theta,\phi}(\vz|\vx)$, we used the \textsc{hmc} method that is initialized with a true posterior $p(\vz|\vx)$ sample, with $M=10$ iterations each with $L=20$ leapfrog steps. For the step size of \textsc{hmc}, we used $\epsilon=0.05$ for the linear \textsc{vae} model and $\epsilon=0.02$ for the \textsc{mnist-vae}20 and \textsc{mnist-gan}20. For the evaluation of \textsc{mine-ais}, we used Multi-Sample \textsc{ais} with the same parameters as the \textsc{ais} evaluation experiments described above.

\subsection{Analytical Solution of the Mutual Information on the Linear \textsc{mnist-vae}} 
\label{app:analytical} 
In order to verify our implementations, we have derived the \textsc{mi} analytically for the linear VAEs and verified that it matches the \textsc{mi} estimated by \textsc{ais}. For simplicity, we assume a fixed identity covariance matrix $\bI$ at the output of the conditional likelihood of the linear \textsc{vae} decoder, i.e., the decoder of the \textsc{vae} is simply: $\vx=\bW \bz+\bbb+\boldsymbol{\epsilon}$, where $\vx$ is the observation, $\vz$ is the latent code vector  $\vz \sim \mathcal{N} (\bzero,\,\bI)$, $\bb{W}$ is the decoder weight matrix and $\bbb$ is the bias. The observation noise of the decoder is $\boldsymbol{\epsilon} \sim \mathcal{N} (\bzero,\,\bI)$. It is easily shown that the conditional likelihood is $p(\vx | \vz) = \mathcal{N} (\vx | \bW \vz + \bbb, \bI)$ and thus we can solve for the marginal  
\begin{align}
p(\bx) = \mathcal{N} (\bx | \bmu_\bx = \bbb, \bSigma_\bx = \bI + \bW \bWT) .
\label{eq:px}
\end{align}
The differential entropy of $\vx$ is: 
\begin{align}
H(\bx) &= \frac{k}{2} +\frac{k}{2} \log(2 \pi) + \frac{1}{2} \log(\det \bSigma_\bx),
\label{eq:hx}
\end{align}
where k is the dimension of the observation. The conditional entropy is  
\begin{align}
H(\bx | \bz) &= \frac{k}{2} +\frac{k}{2} \log(2 \pi) + \frac{1}{2} \log(\det \bI)\\  &= \frac{k}{2} +\frac{k}{2} \log(2 \pi) .
\label{eq:hxz}
\end{align}

Thus, the mutual information is
\begin{align}
I(\bx; \bz) &= H(\bx) - H(\bx | \bz)\\
&=  \frac{1}{2} \log(\det \bSigma_\bx).
\label{eq:hx}
\end{align}

\subsection{Confidence Intervals for Multi-Sample AIS Experiments}\label{app:experiment-ci}
\mytable{mnistconf} and \mytable{cifarconf} provides the $95\%$ confidence intervals for the \textsc{ais} results reported in \mytable{mnist}. The confidence intervals were computed over confidence interval over 8 batches each with 16 data points for \textsc{mnist}; and over 8 batches 4 data points for \textsc{cifar}.
\begin{center}
\begin{table}[!ht]
\resizebox{9cm}{!}{
\begin{tabularx}{.99\textwidth}{c|cc|ccc|ccc}%
\cmidrule{1-9}
\textbf{Method}&\textbf{Proposal}&\textbf{}&\textbf{VAE2}&\textbf{VAE10}&\textbf{VAE100}&\textbf{GAN2}&\textbf{GAN10}&\textbf{GAN100}\\%
\cmidrule{1-9}%
\multirow{4}{*}{\shortstack[c]{\textbf{AIS} \\ T=1}}&\multirow{2}{*}{\textbf{$ p(\vz)$}}&UB&$(213.54, 286.11)$&$(1849.03, 2010.66)$&$(5564.53, 6096.51)$&$(665.25, 787.30)$&$(745.68, 826.55)$&$(795.81, 926.94)$\\%
&&LB&$(0.00, 0.00)$&$(0.00, 0.00)$&$(0.00, 0.00)$&$(0.00, 0.00)$&$(0.00, 0.00)$&$(0.00, 0.00)$\\%
\cmidrule{2-9}%
&\multirow{2}{*}{\textbf{$q(\vz|\vx)$}}&UB&$(8.77, 9.72)$&$(59.34, 66.66)$&$(345.45, 378.80)$&$(5.32, 32.95)$&$(293.60, 335.83)$&$(482.40, 544.26)$\\%
&&LB&$(7.47, 7.71)$&$(60.09, 66.61)$&$(32.46, 36.52)$&$(7.01, 7.41)$&$(3.41, 3.94)$&$(2.40, 2.83)$\\%
\cmidrule{1-9}%

\multirow{4}{*}{\shortstack[c]{\textbf{AIS} \\ T=500}}&\multirow{2}{*}{\textbf{$ p(\vz)$}}&UB&$(8.78, 9.45)$&$(38.61, 39.57)$&$(92.32, 98.03)$&$(10.52, 11.13)$&$(21.91, 23.04)$&$(26.89, 28.22)$\\%
&&LB&$(8.31, 8.95)$&$(33.69, 34.41)$&$(77.77, 82.02)$&$(8.89, 9.53)$&$(21.18, 21.96)$&$(25.36, 26.36)$\\%
\cmidrule{2-9}%
&\multirow{2}{*}{\textbf{$q(\vz|\vx)$}}&UB&$(8.77, 9.41)$&$(33.94, 34.63)$&$(79.96, 84.71)$&$(10.51, 11.07)$&$(22.53, 23.58)$&$(28.82, 30.24)$\\%
&&LB&$(8.78, 9.40)$&$(33.80, 34.53)$&$(78.01, 82.37)$&$(10.40, 10.98)$&$(21.19, 22.00)$&$(25.09, 26.06)$\\%
\cmidrule{1-9}%

\multirow{4}{*}{\shortstack[c]{\textbf{AIS} \\ T=30K}}&\multirow{2}{*}{\textbf{$ p(\vz)$}}&UB&$(8.74, 9.39)$&$(33.82, 34.60)$&$(78.62, 83.07)$&$(10.51, 11.11)$&$(21.54, 22.50)$&$(25.97, 27.07)$\\%
&&LB&$(8.65, 9.28)$&$(33.86, 34.56)$&$(78.51, 83.05)$&$(10.23, 10.88)$&$(21.50, 22.45)$&$(25.96, 26.98)$\\%
\cmidrule{2-9}%
&\multirow{2}{*}{\textbf{$q(\vz|\vx)$}}&UB&$(8.77, 9.40)$&$(33.85, 34.58)$&$(78.56, 83.03)$&$(10.52, 11.10)$&$(21.54, 22.47)$&$(26.01, 27.07)$\\%
&&LB&$(8.77, 9.41)$&$(33.86, 34.56)$&$(78.52, 83.02)$&$(10.51, 11.09)$&$(21.55, 22.48)$&$(26.02, 27.03)$\\%
\cmidrule{1-9}\morecmidrules\cmidrule{1-9}%

\multirow{4}{*}{\shortstack[c]{\textbf{IWAE} \\ K=1}}&\multirow{2}{*}{\textbf{$ p(\vz)$}}&UB&$(735.94, 863.16)$&$(3628.86, 4026.29)$&$(10705.98, 12297.86)$&$(1543.54, 1732.65)$&$(1556.00, 1704.00)$&$(1680.50, 1800.28)$\\%
&&LB&$(0.00, 0.00)$&$(0.00, 0.00)$&$(0.00, 0.00)$&$(0.00, 0.00)$&$(0.00, 0.00)$&$(0.00, 0.00)$\\%
\cmidrule{2-9}%
&\multirow{2}{*}{\textbf{$q(\vz|\vx)$}}&UB&$(8.83, 9.56)$&$(34.82, 35.86)$&$(92.82, 98.44)$&$(5.71, 29.45)$&$(38.80, 76.14)$&$(243.34, 278.41)$\\%
&&LB&$(8.47, 8.79)$&$(24.54, 25.86)$&$(41.53, 47.54)$&$(8.57, 9.10)$&$(3.85, 4.61)$&$(2.94, 3.52)$\\%
\cmidrule{1-9}%

\multirow{4}{*}{\shortstack[c]{\textbf{IWAE} \\ K=1K}}&\multirow{2}{*}{\textbf{$ p(\vz)$}}&UB&$(24.32, 34.49)$&$(1132.54, 1262.97)$&$(3987.51, 4480.86)$&$(101.75, 142.03)$&$(430.40, 463.19)$&$(462.95, 526.51)$\\%
&&LB&$(6.74, 6.87)$&$(6.91, 6.91)$&$(6.91, 6.91)$&$(6.84, 6.93)$&$(6.91, 6.91)$&$(6.91, 6.91)$\\%
\cmidrule{2-9}%
&\multirow{2}{*}{\textbf{$q(\vz|\vx)$}}&UB&$(8.78, 9.41)$&$(33.87, 34.61)$&$(83.15, 87.46)$&$(10.06, 12.74)$&$(34.11, 71.36)$&$(182.82, 219.55)$\\%
&&LB&$(8.79, 9.39)$&$(31.08, 32.30)$&$(48.44, 54.45)$&$(10.46, 11.02)$&$(10.75, 11.52)$&$(9.84, 10.43)$\\%
\cmidrule{1-9}%

\multirow{4}{*}{\shortstack[c]{\textbf{IWAE} \\ K=1M}}&\multirow{2}{*}{\textbf{$ p(\vz)$}}&UB&$(8.76, 9.43)$&$(352.91, 400.87)$&$(2078.30, 2417.17)$&$(10.62, 11.36)$&$(71.93, 91.09)$&$(100.27, 127.75)$\\%
&&LB&$(8.77, 9.41)$&$(13.82, 13.82)$&$(13.82, 13.82)$&$(10.50, 11.03)$&$(13.79, 13.82)$&$(13.82, 13.82)$\\%
\cmidrule{2-9}%
&\multirow{2}{*}{\textbf{$q(\vz|\vx)$}}&UB&$(8.77, 9.40)$&$(33.87, 34.57)$&$(81.42, 85.36)$&$(10.53, 11.10)$&$(28.51, 33.26)$&$(52.41, 63.68)$\\%
&&LB&$(8.78, 9.41)$&$(33.67, 34.53)$&$(55.35, 61.36)$&$(10.53, 11.10)$&$(17.31, 18.21)$&$(16.63, 17.32)$\\%
\cmidrule{1-9}%

\end{tabularx} 
}
\hspace{-.5cm}
\caption{Confidence intervals of \textsc{ais} and \textsc{iwae} estimates of \textsc{mi} on \textsc{mnist}. UB stands for Upper Bound, and LB stands for Lower Bound.}
\label{table:mnistconf}
\end{table} 
\end{center}

\begin{table}[t!]
\begin{center}
\resizebox{10cm}{!}{
\hspace{-2.5cm}
\begin{tabularx}{\textwidth}{c|cc|ccc}%
\cmidrule{1-6}%
\textbf{Model}&\textbf{Proposal}&\textbf{}&\textbf{GAN5}&\textbf{GAN10}&\textbf{GAN100}\\%
\cmidrule{1-6}%
\multirow{4}{*}{\shortstack[c]{\textbf{AIS} \\ (T=1)}}&\multirow{2}{*}{\textbf{$ p(\vz)$}}&UB&$(2875853.34, 4326658.66)$&$(3593559.12, 4477712.38)$&$(4038603.09, 5668217.91)$\\%
&&LB&$(0.00, 0.00)$&$(0.00, 0.00)$&$(0.00, 0.00)$\\%
\cmidrule{2-6}%
&\multirow{2}{*}{\textbf{$q(\vz|\vx)$}}&UB&$(30896.74, 54151.34)$&$(241342.62, 566015.82)$&$(1874938.58, 2881576.42)$\\%
&&LB&$(13.24, 14.64)$&$(15.70, 18.89)$&$(18.41, 21.93)$\\%
\cmidrule{1-6}%

\multirow{4}{*}{\shortstack[c]{\textbf{AIS} \\ (T=500)}}&\multirow{2}{*}{\textbf{$ p(\vz)$}}&UB&$(205.11, 313.44)$&$(24683.10, 41496.70)$&$(25453.02, 101127.79)$\\%
&&LB&$(17.44, 19.30)$&$(28.41, 30.63)$&$(98.37, 110.65)$\\%
\cmidrule{2-6}%
&\multirow{2}{*}{\textbf{$q(\vz|\vx)$}}&UB&$(59.40, 79.68)$&$(116.02, 156.28)$&$(1603.13, 3969.92)$\\%
&&LB&$(31.63, 33.30)$&$(45.45, 50.88)$&$(134.35, 156.02)$\\%
\cmidrule{1-6}%

\multirow{4}{*}{\shortstack[c]{\textbf{AIS} \\ (T=100K)}}&\multirow{2}{*}{\textbf{$ p(\vz)$}}&UB&$(40.55, 41.57)$&$(72.17, 75.80)$&$(479.04, 497.10)$\\%
&&LB&$(39.01, 40.15)$&$(70.19, 73.55)$&$(470.05, 490.47)$\\%
\cmidrule{2-6}%
&\multirow{2}{*}{\textbf{$q(\vz|\vx)$}}&UB&$(39.55, 40.56)$&$(71.87, 75.22)$&$(475.07, 494.61)$\\%
&&LB&$(38.64, 39.80)$&$(71.35, 74.75)$&$(468.64, 489.89)$\\%
\cmidrule{1-6}\morecmidrules\cmidrule{1-6}%
\multirow{4}{*}{\shortstack[c]{\textbf{IWAE} \\ (K=1)}}&\multirow{2}{*}{\textbf{$ p(\vz)$}}&UB&$(6115557.66, 8075210.34)$&$(6019901.15, 9511489.86)$&$(7339411.95, 12492792.05)$\\%
&&LB&$(0.00, 0.00)$&$(0.00, 0.00)$&$(0.00, 0.00)$\\%
\cmidrule{2-6}%
&\multirow{2}{*}{\textbf{$q(\vz|\vx)$}}&UB&$(39.68, 40.95)$&$(75.00, 80.03)$&$(-1810.70,12504.41)$\\%
&&LB&$(14.15, 14.90)$&$(16.28, 18.62)$&$(18.39, 21.60)$\\%
\cmidrule{1-6}%

\multirow{4}{*}{\shortstack[c]{\textbf{IWAE} \\ (K=1K)}}&\multirow{2}{*}{\textbf{$ p(\vz)$}}&UB&$(921258.02, 1209846.48)$&$(1673333.03, 2415008.47)$&$(2181660.84, 3531768.16)$\\%
&&LB&$(6.91, 6.91)$&$(6.91, 6.91)$&$(6.91, 6.91)$\\%
\cmidrule{2-6}%
&\multirow{2}{*}{\textbf{$q(\vz|\vx)$}}&UB&$(39.40, 40.06)$&$(72.31, 75.68)$&$(-1872.12, 12438.39)$\\%
&&LB&$(21.05, 21.81)$&$(22.18, 24.99)$&$(25.32, 28.64)$\\%
\cmidrule{1-6}%

\multirow{4}{*}{\shortstack[c]{\textbf{IWAE} \\ (K=1M)}}&\multirow{2}{*}{\textbf{$ p(\vz)$}}&UB&$(76776.88, 116619.32)$&$(582880.62, 838142.63)$&$(1479829.44, 2327879.56)$\\%
&&LB&$(13.82, 13.82)$&$(13.82, 13.82)$&$(13.82, 13.82)$\\%
\cmidrule{2-6}%
&\multirow{2}{*}{\textbf{$q(\vz|\vx)$}}&UB&$(39.26, 40.17)$&$(71.63, 75.08)$&$(-1883.22
, 12426.34)$\\%
&&LB&$(27.96, 28.72)$&$(29.86, 31.60)$&$(32.21, 35.42)$\\%
\cmidrule{1-6}%
\end{tabularx}

}
\end{center}
\caption{Confidence intervals of \textsc{ais} and \textsc{iwae} estimates of \textsc{mi} on \textsc{cifar}. UB stands for Upper Bound, and LB stands for Lower Bound.
}

\label{table:cifarconf}
\end{table}

\end{document}